\documentclass{article}

\usepackage{xcolor} 
\usepackage[preprint]{neurips_2026}

\usepackage[utf8]{inputenc} %
\usepackage[T1]{fontenc}    %
\usepackage{hyperref}       %
\usepackage{url}            %
\usepackage{booktabs}       %
\usepackage{amsfonts}       %
\usepackage{nicefrac}       %
\usepackage{microtype}      %
\usepackage{amsmath}
\usepackage{amssymb}
\usepackage{mathtools}
\usepackage{amsthm}
\usepackage{thmtools}
\usepackage[capitalize,noabbrev]{cleveref}
\usepackage{xargs}
\usepackage{natbib} 
\usepackage{datatool}
\usepackage{tikz}
\usepackage{xparse}
\usepackage{dsfont}
\usepackage{mathrsfs}
\usepackage{algorithm}
\usepackage{algorithmic}
\usepackage{wrapfig}
\usepackage{placeins}
\usepackage{subcaption}
\usepackage{multirow}
\usepackage[dvipsnames]{xcolor}

\DeclareMathOperator*{\argmax}{arg\,max}
\DeclareMathOperator*{\argmin}{arg\,min}

\newcommand{\E}{\mathbb{E}}
\newcommand{\prob}{\mathbb{P}}

\newcommand{\R}{\mathbb{R}}
\newcommand{\spaceg}{G^*}
\newcommand{\simplexg}{\ensuremath{\bar{G^*}}}
\newcommand{\localrad}[2]{\ensuremath{R_\ndata(#1, #2)}}
\def\state{\overrightarrow{X}} %

\def\ausiliar{\rho}
\def\barg{\bar{g}}

\def\rmd{\mathrm{d}} %

\def\dbrown{\rmd\!\brown}
\def\brown{\operatorname{B}}
\def\rset{\mathbb{R}}
\def\rsetpos{\mathbb{R}_{\geq 0}}
\def\statedim{d}

\def\eqsp{\;}

\def\realint{\int_{\R^d}}

\newcommand{\nofrac}[2]{#1/#2}
\newcommand{\kl}[2]{\operatorname{D}_{\operatorname{KL}}\left(#1||#2\right)}

\newcommand{\dens}[2]{#1\left(#2\right)}

\newcommand{\fwmarg}[1]{\vec{p}_{#1}} %
\newcommand{\fwmargd}[2]{\dens{\vec{p}_{#1}}{#2}} %

\newcommand{\mudata}{\mu_{\text{data}}}
\newcommand{\hypref}[1]{\textbf{H}\ref{#1}}
\newcommand{\PE}[2]{\mathbb{E}_{#1}\left[#2\right]} %
\newcommand{\PP}[1]{\mathbb{P}\left(#1\right)} %

\newcommand{\gausspdf}{\varphi}

\newcommand{\eqdef}{:=}
\newcommand{\param}{\theta}
\newcommand{\paramdim}{d_\theta}
\def\paramsp{\Theta}

\newcommand{\pforward}[1]{\vec{
p}_{#1}}

\newcommand{\trainedsolution}[1]{X^{\param}_{#1}}
\newcommand{\trainedsolutioninit}[2]{X^{\theta,#2}_{#1}}
\newcommand{\trainedscorex}[1]{s_{\param}(T- #1, \back{#1})}
\newcommand{\trainedscoreforward}[1]{s_{\param}(t, \state_{#1})}

\newcommand{\pforwardx}[1]{\vec{p}_{#1}}

\newcommand{\trainedscore}[1]{s_{\param}(T- #1, \trainedsolution{#1})}

\newcommand{\back}[1]{\overleftarrow{X}_{#1}}
\newcommand{\backscoretilde}[1]{\nabla \log \tilde{p}_{T - #1} (\back{#1})}

\newcommand{\backscore}[1]{\nabla \log \pforward{T - #1} (\back{#1})}

\newcommand{\scorep}[1]{\nabla \log \pforward{#1}}

\newcommand{\discsolution}[2][\param]{p_{#2}^{#1}}
\newcommand{\schedulerVE}{g}
\newcommand{\information}{\mathcal{I}}
\newcommand{\pinf}{\pi_{\infty}}

\newcommand{\ind}{\mathds{1}}
\newcommand{\Id}{\mathrm{I}}
\newcommand{\scfc}{\mathsf{s}}

\newcommand{\initerror}{\mathrm{E}_{\operatorname{init}}}
\newcommand{\trainerror}{\mathrm{E}_{\operatorname{train}}}
\newcommand{\discerror}{\mathrm{E}_{\operatorname{disc}}}

\newcommand{\calF}{\mathcal{F}}

\def\ndata{n}

\makeatletter
\NewDocumentCommand{\distn}{g g}{%
  d_{\ndata}\IfValueTF{#1}{\left( #1 , #2 \right)}{}%
}
\def\maxray{r_{\ndata}}
\NewDocumentCommand{\distnparam}{g g}{%
  \bar{d}_{\ndata}\IfValueTF{#1}{\left( #1 , #2 \right)}{}%
}
\newcommand{\nballs}[3]{N(#1, #2, #3)}

\newcommand{\emin}[1]{\widehat{#1}^{\ndata}}
\newcommand{\tmin}[1]{#1^{\star}}
\NewDocumentCommand{\mfun}{o o}{\operatorname{m}\IfValueTF{#1}{(#2; #1)}{}}
\NewDocumentCommand{\zeromfun}{o o}{\operatorname{g}_{\IfValueTF{#1}{#1}{}}\IfValueTF{#2}{\left(#2\right)}{}}
\NewDocumentCommand{\eintmfun}{o o}{\operatorname{M}_{\ndata}\IfValueTF{#2}{\IfValueTF{#1}{(#2; #1)}{(#2)}}{}}
\NewDocumentCommand{\tintmfun}{o}{\operatorname{M}_{\star}\IfValueTF{#1}{(#1)}{}}
\NewDocumentCommand{\empmeas}{o o}{\mu_{\operatorname{emp}}\IfValueTF{#1}{\IfValueTF{#2}{\left(#1; #2\right)}{\left(#1\right)}}{}}
\NewDocumentCommand{\lpnorm}{m o o}{\left\|#1\right\|_{\operatorname{L}^{#2}(#3)}}
\NewDocumentCommand{\onorm}{m o}{\left\|#1\right\|_{\psi_#2}}
\NewDocumentCommand{\flow}{o o}{\operatorname{T}\IfValueTF{#1}{_{#1}}{}\IfValueTF{#2}{(#2)}{}}
\NewDocumentCommand{\jflow}{o o}{\operatorname{J}_{\operatorname{T}\IfValueTF{#1}{_{#1}}{}}\IfValueTF{#2}{(#2)}{}}
\NewDocumentCommand{\invflow}{o o}{\operatorname{T}^{-1}\IfValueTF{#1}{_{#1}}{}\IfValueTF{#2}{(#2)}{}}
\NewDocumentCommand{\jinvflow}{o o}{\operatorname{J}_{\operatorname{T}^{-1}\IfValueTF{#1}{_{#1}}{}}\IfValueTF{#2}{(#2)}{}}

\NewDocumentCommand{\mlip}{o}{\operatorname{b}\IfValueTF{#1}{\left(#1\right)}{}}

\def\refnfmeas{u}
\newcommand{\pushforward}[2]{{#1}_{\# #2}}
\newcommand{\worstconcentrationvar}[2]{\operatorname{Z}_{\ndata}(#1, #2)}

\newcounter{hypH}
\newenvironment{hypH}{
    \refstepcounter{hypH}
    \begin{itemize}
    \item[{\bf H\arabic{hypH}}]
    }
{\end{itemize}}
\def\densratioub{\overline{r}}
\newcommand{\indi}[1]{\ind_{#1}}

\title{Generalizing Score-based generative models for Heavy-tailed Distributions}

\author{%
  Tiziano Fassina \\
  STIM, Mines Paris \\
  Paris, France \\
  \texttt{tiziano.fassina@minesparis.psl.eu} \\
  \And
  Gabriel Cardoso \\
  STIM, Mines Paris \\
  Paris, France \\
  \texttt{gabriel.victorino\_cardoso@minesparis.psl.eu} \\
  \AND 
  Sylvain Le Corff \\
  LPSM, Sorbonne Université \\
  Paris, France \\
  \texttt{sylvain.le\_corff@sorbonne-universite.fr} \\
  \And
  Thomas Romary \\
  STIM, Mines Paris \\
  Paris, France \\
  \texttt{thomas.romary@minesparis.psl.eu} \\
}

\begin{document}
\theoremstyle{plain}
\newtheorem{theorem}{Theorem}[section]
\newtheorem{proposition}[theorem]{Proposition}
\newtheorem{lemma}[theorem]{Lemma}
\newtheorem{corollary}[theorem]{Corollary}
\theoremstyle{definition}
\newtheorem{definition}[theorem]{Definition}
\newtheorem{assumption}[theorem]{Assumption}
\theoremstyle{remark}
\newtheorem{remark}[theorem]{Remark}
\crefname{assumption}{assumption}{assumptions}
\maketitle

\begin{abstract}
Score-based generative models (SGMs) have achieved remarkable empirical success, motivating their application to a broad range of data distributions. 
However, extending them to heavy-tailed targets remains a largely open problem. 
Although dedicated models for heavy-tailed distributions have been proposed, their generative fidelity remains unclear and they lack solid theoretical foundations, leaving important questions open in this regime.
In this paper, we address this gap through two theoretical contributions. First, we show that 
combining early stopping with a suitable initialization is sufficient to extend the diffusion 
framework to any target distribution; in particular, we establish the well-posedness of 
the backward process and prove convergence of the approximated diffusion in KL divergence. 
Second, we derive novel theoretical guarantees for generation with normalizing flows, obtaining 
convergence results that hold under mild conditions on the flow family and without any assumption 
on the tail behavior of the target distribution. Building on these results, we propose a unified 
generative framework for heavy-tailed distributions: a normalizing flow is first trained to capture 
the tail behavior and is then used as an initialization prior for an SGM, which refines the samples 
by recovering fine-grained structural details. This design leverages the complementary strengths of 
the two model classes within a theoretically principled pipeline, overcoming the limitations of existing approaches.
\end{abstract}

\section{Introduction}
Score-based generative models (SGMs)~\citep{ho2020denoising, song2021scorebased} have become central in machine learning for sampling high-dimensional distributions, achieving state-of-the-art results in image and video generation, outperforming GANs, VAEs, and Normalizing Flows~\citep{dhariwal2021diffusionbeatGan, muller2022medfusion}. They underpin major models such as Stable Diffusion~\citep{rombach2022high}, DALL·E~3~\citep{Betker2023dalle3}, Imagen~3~\citep{Baldridge2024Imagen3}, and FLUX~\citep{blackForestLabs_FLUX1dev_2025}, and extend effectively to audio and text generation~\citep{chen2020wavegrad, kong2021diffwave, li2022diffusionlm, zou2023survey}.

The core idea of SGMs is twofold: the data distribution is gradually corrupted by Gaussian noise, and a neural network is trained to reverse this process. New samples are then generated by starting from Gaussian noise and iteratively denoising it with the trained network. The empirical success of SGMs has motivated their application to increasingly challenging regimes. A setting of growing interest is the generation of heavy-tailed data~\citep{bingham1987regular, resnick2007heavy}. Accurately modeling extreme events and distributional tails is central in climate science, insurance, and finance, where data routinely exhibit heavy-tailed behavior~\citep{mardani2025residual, beirlant2004statistics}.

To our knowledge, the existing literature on SGMs for heavy-tailed distributions consists of the works~\citep{yoon2023levy, shariatian2025heavy, pandey2025heavytailed}, all built on the same intuition: replacing the Gaussian noising process with a heavy-tailed one. While these methods report empirical improvements in tail generation, the evidence available so far is mostly limited to specific settings, and a systematic assessment of their effectiveness across distributions is still missing. Their theoretical justification also remains partial, which is particularly relevant in the cited applications, where statistical accuracy rather than visual quality is the primary concern.

On the theoretical side, existing analyses of SGMs aim to prove convergence of the approximate backward process to the target and to quantify its rate, under assumptions on the initialization, the score, and the SDE approximation. Such guarantees typically rely on strong conditions on the data distribution: bounded support~\citep{debortoli2022convergence}, log-concavity~\citep{gao2025wasserstein}, or finite moments of high order~\citep{conforti2023kl}. Related analyses impose analogous hypotheses~\citep{bruno2024wasserstein, silveri2025beyond, bortoli2021diffusion, strasman2025analysis}. Heavy-tailed distributions violate most of these assumptions by design: log-concavity fails, high-order moments may not exist, and the support is typically unbounded. As a result, it remains unclear under which conditions the approximate backward process converges to the target when the latter is heavy-tailed. To our knowledge, the only theoretical work linking SGMs and heavy tails~\citep{yu2026diffusion} addresses the convergence of the empirical score to the true score, leaving the global convergence question open.

In this paper, building on~\citep{conforti2023kl, strasman2025analysis}, we argue that the key to extending SGMs to arbitrary, and in particular heavy-tailed, targets lies in the use of two complementary features: a flexible initialization of the backward process and early stopping.
A proper initialization for the backward denoising is necessary because the Gaussian is not always a suitable approximation of the noised distribution. Specifically, for heavy-tailed distributions, the noised distribution remains heavy-tailed at all finite times, whereas the Gaussian is inherently light-tailed, so any finite-time initialization based on it is bound to mismatch the true noised distribution in the tails. Early stopping, in turn, is beneficial because convolving any distribution with small Gaussian noise induces strong regularity properties.

These findings naturally suggest a two-stage pipeline: a generative model tailored to heavy-tailed distributions provides the initialization, while an SGM refines the fine-scale structure via early-stopped denoising. This does not merely shift the problem from learning a distribution through diffusion to learning an appropriate initialization: as shown in~\citep{silveri2025beyond}, the noising process acts as a progressive smoothing that simplifies the target distribution, eventually reaching a unimodal regime. Replacing the standard Gaussian prior with a heavy-tailed model capable of capturing simple, unimodal heavy-tailed distributions is therefore a tractable objective.

The idea of improving generation through non-standard initialization has been explored from different perspectives: \citet{zheng2023truncated} learn a prior at a truncated diffusion time via an Adversarial Autoencoder, \citet{zand2024diffusion} use a Glow Normalizing Flow to approximate the noised distribution at an arbitrary diffusion step, and \citet{DBLP:journals/corr/abs-2306-12360} combine a score-based model with an Energy-Based Model at a single noise level; further works include~\citep{lee2022priorgrad, shen2025information, scholz2026warm}. While these methods aim at improving generation efficiency and sample quality, our focus differs fundamentally: we enable SGMs for heavy-tailed distributions and extend the theoretical foundations of the framework itself.

The landscape of generative models for heavy-tailed data extends well beyond SGMs, encompassing dedicated approaches based on Normalizing Flows~\citep{mcdonald2022cometflows, hickling2025flexibletails, laszkiewicz2022marginal}, VAEs~\citep{lafon2023vae}, and GANs~\citep{allouche2022evgan, huster2021pareto}. Among these candidates for the initialization stage, Normalizing Flows stand out as a natural choice, thanks to their tractable likelihood and compositional structure. However, the theoretical understanding of Normalizing Flows in the heavy-tailed regime is still limited. Existing analyses have mainly focused on universal approximation properties of invertible neural networks and on convergence rates for standard normalizing flows~\citep{teshima2020couplinguniversal, ishikawa2023universal, huang2020augmented, koehler2021representational, bogachev2005triangular}, under assumptions that typically exclude heavy-tailed targets. Moreover, several works have documented the inability of Lipschitz normalizing flows with Gaussian priors to model heavy-tailed distributions~\citep{jaini2020, laszkiewicz2022marginal}, yet no convergence guarantees are currently available in this setting. Building on~\citep{tang2021empirical}, which establishes convergence to the class minimum for variational autoencoders via M-estimation, we show that analogous results hold for normalizing flows across a broad class of distributions, including heavy-tailed ones.

\paragraph{Contributions.} Specifically, this paper makes the following contributions.
\begin{itemize}
\item We prove that early stopping with a flexible initialization replacing the standard Gaussian prior suffices to guarantee well-posedness of the backward diffusion and convergence of the approximate process to the target in KL divergence, \emph{without any assumption on the target distribution}, generalizing~\citet[Theorem~1]{conforti2023kl}. 
\item We show that, under mild assumptions on the target and on the flow class, Normalizing Flows can provably learn heavy-tailed distributions, providing a theoretically grounded choice for the initialization stage and extending~\citet[Theorem~2]{tang2021empirical}.
\item We instantiate the resulting two-stage pipeline, Normalizing Flow initialization followed by SGM refinement, on synthetic data exhibiting both heavy tails and multimodality, providing a theoretically principled framework for heavy-tailed generative modeling.
\end{itemize}

\section{Convergence of Variance Exploding SGMs for General Distributions}
We study in this section the convergence of the approximate backward process to the target distribution in the Variance Exploding (VE) framework, without imposing any assumption on the data distribution. 
Let $(\Omega, \calF, \prob)$ be a probability space and let $\mudata$ denote the unknown data distribution on $\R^d$.
\subsection{Forward and Backward Process}
Given $\state_0 \sim \mudata$ the distribution of interest and the standard forward Variance Exploding (VE) equation \cite{song2021scorebased}
\begin{equation}\label{eq:VE_SDE}
\rmd \state_t = \schedulerVE_t \, \dbrown_t\eqsp.
\end{equation}
where $\schedulerVE_t : \R^+ \to \R^+$ is assumed to be bounded, increasing and infinitely differentiable, we can state by applying \citet[Theorem 2.5, Theorem 2.9]{karatzas1991brownian} without any assumption on the data distribution that the forward solution $(\state_t)_{t\in[0,T]}$ is defined on the whole interval for $T>0$.
Defining 
\[
\sigma_t^2 \coloneqq \int_0^t \schedulerVE_s^2 \, \rmd s\eqsp.
\]
the solution satisfies 
\[
\state_t = \state_0 + \int_{0}^{t} \schedulerVE_s \, \dbrown_s  \overset{\mathcal{L}}{=} \state_0 + \sigma_t Z \eqsp . 
\]
In particular, for $t>0$, $\state_t$ admits a density $\pforward{t}(x)$ such that $ (t,x)\mapsto\pforward{t}(x) \in C^{\infty}([\delta, T] \times \R^d)$, as shown in \cref{lem:regularity_VE}. 

To introduce the backward process, the usual references such as \cite{anderson1982reverse, song2021scorebased} requires the base distribution to be almost surely bounded (i.e., supported on a ball of finite radius), which, of course, excludes most distributions, including heavy-tailed ones.
To handle this setting, we instead rely on \cite{haussmann1985time}, which provides time-reversal results under weaker integrability conditions. We verify in \cref{lem:existence-backward} that these conditions are satisfied for any distribution $\mudata$ on any interval $[\delta, T]$ with $\delta > 0$.
Thanks to this result, we can state that there exists a Brownian motion $\tilde{\brown}_t$ in the defined probability space such that $\back{} = \{\back{t}\}_{t \in [0, T-\delta]} = \{\state_{T-t}\}_{t \in [\delta, T]}$ is a solution to
\begin{equation}\label{eq:VE_back_sde}
    \rmd \back{t} = \barg_t^2 \backscore{t} \rmd t + \barg_t \, \rmd \tilde{\brown}_t \eqsp,
\end{equation}
where $\barg_t \coloneqq \schedulerVE_{T-t}$.
\subsection{KL control on approximated process.}
In practice, $\mudata$ is unknown and only i.i.d. samples $\{\state_0^i\}_{i=1,...,\ndata} \sim \mudata$ are available, rendering analytical formulas for both the score and the terminal distribution $\pforwardx{T}$ inaccessible, and the backward SDE \cref{eq:VE_back_sde} must be solved numerically.
For the purpose of theoretical analysis, we may rely on either the Euler–Maruyama scheme \cite{gao2025wasserstein, silveri2025beyond} or the Exponential Euler–Maruyama scheme \cite{conforti2023kl, strasman2025analysis}. In the VE setting these two schemes coincide, allowing for a unified treatment of the KL bound.
\paragraph{Score approximation and discretization.}
By replacing the real score function $\nabla \log \pforward{T - t}$ using a trained neural network $s_{\param}(T- t,\cdot)$, we
introduce the discretized backward dynamics
\begin{equation}
\rmd \trainedsolution{t}=  \barg_t^2 \trainedscore{t_k} \, \rmd t + \barg_t \, \rmd\tilde{\brown}_t\eqsp,
\end{equation}
for $t \in [t_k, t_{k+1}]$, where $t_{k+1} = t_k + h_{k+1}$, with $t_0 = 0$ and $t_N = T-\delta$.
Integrating over $[t_k, t_{k+1}]$, with initialization $\trainedsolution{0} \sim \discsolution[\param]{0}$, yields
\begin{equation}\label{eq:discsolution}
\trainedsolution{t_{k+1}}= \trainedsolution{t_{k}} + (\sigma^2_{T-t_{k}}-\sigma^2_{T-t_{k+1}}) \trainedscore{t_k} + \left(\sigma^2_{T-t_{k}}-\sigma^2_{T-t_{k+1}}\right)^{\frac{1}{2}}Z \eqsp,
\end{equation}
$k = 0,...,N-1$, $Z \sim \mathcal{N}(0, \Id_d)$. 
We stress that the use of an independent Gaussian noise $Z$ in \cref{eq:discsolution} is not immediate: \cref{eq:VE_back_sde} has a specific Brownian motion adapted to a particular filtration, and the equivalence with a discretization driven by fresh Gaussian increments requires a non-trivial argument. We address this point in \cref{lem:existence-backward}.

For probability measures $\mu, \nu$, recall the KL divergence $\kl{\mu}{\nu} = \int \log \frac{\rmd \mu}{\rmd \nu} \, \rmd \mu$ and the Fisher information $\information(\nu) := \int \| \nabla \log \frac{\rmd \nu}{\rmd \lambda_{\R^d}} \|^2 \, \rmd \nu$. 
\paragraph{KL bound for Variance Exploding SGMs.}
We make the following two assumptions. The first concerns  the initialization. 
\begin{hypH}
\label{assum:kl_finite}
The initialization satisfies $\kl{\pforward{T}}{\discsolution[\param]{0}} < \infty$.
\end{hypH}
The second assumption requires the score to be well approximated uniformly over the discretization grid:
\begin{hypH}
\label{assum:train}
There exists $\varepsilon > 0$ such that for all $0 \leq k \leq N-1$,
\[
    \E\Big[\|\backscore{t_k} - s_{\param}(T-t_k, \back{t_k})\|^2\Big] \leq \varepsilon \eqsp.
\]
\end{hypH}
We also set $\scfc_t(x) \coloneqq \nabla \log \pforward{t}(x)$ and, for $0 \leq k \leq N-1$ and $t \in [0,T]$, $\gamma_k \coloneqq \int_{t_k}^{t_{k+1}} \barg_t^2 \, \rmd t$.
\begin{theorem}\label{thm:KL_bound}
For any distribution $\mudata$, for all $\delta > 0$,
\[
    \kl{\pforward{\delta}}{\discsolution{T-\delta}} \leq \initerror(\theta) + \trainerror(\theta) + \discerror \eqsp,
\]
where
\begin{align}
    \initerror(\theta) &= \kl{\pforward{T}}{\discsolution{0}} \label{eq:err:mixVE} \eqsp, \\
    \trainerror(\theta) &= \sum_{k=0}^{N-1} \gamma_k \, \E\Big[\|\scfc_{T-t_k}(\back{t_k}) - s_\theta(T-t_k,\back{t_k})\|^2\Big] \label{eq:err:approxVE} \eqsp, \\
    \discerror &= \sum_{k=0}^{N-1} \int_{t_k}^{t_{k+1}} \barg_t^2 \, \E\Big[\|\scfc_{T-t_k}(\back{t_k}) - \scfc_{T-t}(\back{t})\|^2\Big] \, \rmd t \label{eq:err:discrVE} \eqsp . 
\end{align}
Moreover, the Fisher information of $\pforward{t}$ is finite $\forall t>0$ and $\discerror$ is upper bounded by
\begin{equation}\label{eq:information}
    \discerror' = \max_{k=0,\dots,N-1} \gamma_k \left\{ \information(\pforward{\delta}) - \information(\pforward{T}) \right\} \eqsp.
\end{equation}
Thanks to \hypref{assum:train}, the bound simplifies to
\begin{equation}\label{eq:thm_simplified}
    \kl{\pforward{\delta}}{\discsolution{T-\delta}} \leq \kl{\pforward{T}}{\discsolution{0}} + \sigma_T^2 \, \varepsilon + \discerror' \eqsp.
\end{equation}
\end{theorem}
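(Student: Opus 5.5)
The strategy is the path-space argument of \citet[Theorem~1]{conforti2023kl}, adapted to the VE setting on the early-stopped interval $[0,T-\delta]$. Let $\mathbb{P}$ be the law on $C([0,T-\delta],\R^d)$ of the true backward process $\back{}$ from \cref{eq:VE_back_sde}, which is well defined for arbitrary $\mudata$ by \cref{lem:existence-backward}. Let $\mathbb{Q}^\param$ be the law of the interpolated scheme $\trainedsolution{}$. By the data-processing inequality applied to the time-$(T-\delta)$ marginal,
\[
\kl{\pforward{\delta}}{\discsolution{T-\delta}} \le \kl{\mathbb{P}}{\mathbb{Q}^\param}.
\]
The chain rule for KL, conditioning on the initial point, then splits the right-hand side into $\kl{\pforward{T}}{\discsolution{0}}$, which is $\initerror$ and finite by \hypref{assum:kl_finite}, plus the expected KL between the two path laws started from the same point. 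The two processes share the diffusion coefficient $\barg_t$. Their drifts are $\barg_t^2\scfc_{T-t}(\back{t})$ and the piecewise-constant $\barg_t^2 s_\param(T-t_k,\back{t_k})$. Girsanov's theorem therefore gives the conditional term as
\[
\tfrac12\sum_k\int_{t_k}^{t_{k+1}}\barg_t^2\,\E\big\|\scfc_{T-t}(\back{t})-s_\param(T-t_k,\back{t_k})\big\|^2\rmd t .
\]
Adding and subtracting $\scfc_{T-t_k}(\back{t_k})$ and using $\|a+b\|^2\le 2\|a\|^2+2\|b\|^2$ cancels the factor $1/2$ and produces exactly $\trainerror+\discerror$.

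The delicate point is justifying Girsanov without a Novikov condition, since no moment assumptions are available. I would handle this with the standard localization argument of \citet{conforti2023kl}. First stop at $\tau_R=\inf\{t:\|\back{t}\|\ge R\}$, where the drifts are bounded thanks to the smoothness in \cref{lem:regularity_VE} on $[\delta,T]\times\R^d$. Then pass to the limit $R\to\infty$ by lower semicontinuity of KL. This only requires the right-hand side to be finite, and finiteness follows from the Fisher information bound below together with \hypref{assum:train}. Early stopping at $\delta>0$ is what makes every score involved smooth.

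For \cref{eq:information}, I would first show $\information(\pforward{t})<\infty$ for every $t>0$. Write $\pforward{t}=\mudata*\mathcal N(0,\sigma_t^2\Id)$. Then $\scfc_t(x)=\E[(\state_0-x)/\sigma_t^2\mid \state_t=x]$ by Tweedie's formula, so Jensen gives
\[
\information(\pforward{t})\le \E\|\state_t-\state_0\|^2/\sigma_t^4=d/\sigma_t^2 .
\]
This bound needs no moment of $\mudata$. Next, following \citet{conforti2023kl}, I would use that $M_t=\scfc_{T-t}(\back{t})$ is a martingale along the backward VE dynamics; in the VE case there is no drift term in the forward SDE, which is what makes this hold. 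This gives
\[
\E\|M_t-M_{t_k}\|^2=\E\|M_t\|^2-\E\|M_{t_k}\|^2=\information(\pforward{T-t})-\information(\pforward{T-t_k}).
\]
Since $\information(\pforward{T-t})$ is nondecreasing in $t$, this is at most $\information(\pforward{T-t_{k+1}})-\information(\pforward{T-t_k})$. Multiplying by $\barg_t^2$, integrating over $[t_k,t_{k+1}]$, and bounding $\gamma_k$ by its maximum makes the sum telescope to $\max_k\gamma_k\{\information(\pforward{\delta})-\information(\pforward{T})\}$. The martingale property itself would be proved by applying It\^o's formula to $\nabla\log\pforward{T-t}$ using the Fokker--Planck equation $\partial_t \pforward{t}=\tfrac{g_t^2}{2}\Delta\pforward{t}$. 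Square-integrability of $M$, which comes from finite Fisher information, upgrades the resulting local martingale to a true martingale.

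Finally, \cref{eq:thm_simplified} follows by inserting \hypref{assum:train} into $\trainerror$: $\sum_k\gamma_k\varepsilon=\varepsilon\int_0^{T-\delta}\barg_t^2\rmd t\le\sigma_T^2\varepsilon$. The main obstacles are the two integrability steps, namely the Girsanov localization and the true-martingale property, without tail assumptions. For both I would rely on the $d/\sigma_t^2$ Fisher bound, which holds uniformly for $t\ge\delta$.
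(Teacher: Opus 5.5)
The step that fails as written is the upgrade of $M_t=\scfc_{T-t}(\back{t})$ from a local martingale to a true martingale.

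It\^o's formula combined with the Fokker--Planck equation only shows that $M$ has no drift, $\rmd M_t=\barg_t\nabla^2\log\pforward{T-t}(\back{t})\,\rmd\tilde{\brown}_t$. That makes $M$ a local martingale. You then argue that finiteness of $\E\|M_t\|^2=\information(\pforward{T-t})$ makes it a true martingale. That implication is false in general: the reciprocal of the norm of a three-dimensional Brownian motion started away from the origin is a local martingale, bounded in $L^2$ on $[0,1]$, that is not a martingale.

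The orthogonality identity $\E\|M_t-M_{t_k}\|^2=\E\|M_t\|^2-\E\|M_{t_k}\|^2$, which drives your telescoping, requires $M$ to be a genuine martingale. So your argument does not yet establish $\discerror\le\discerror'$. The first-order bound $\information(\pforward{t})\le d/\sigma_t^2$ cannot by itself close the It\^o route.

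What the It\^o route needs is control of the quadratic variation, $\E\int_0^{T-\delta}\barg_s^2\|\nabla^2\log\pforward{T-s}(\back{s})\|_{\mathrm{Fr}}^2\,\rmd s<\infty$. The paper supplies this in \cref{lem:martingVE}. It writes $\nabla^2\log\pforward{t}(x)=\sigma_t^{-4}\mathrm{Cov}(\state_0\mid\state_t=x)-\sigma_t^{-2}\Id$ and applies conditional Jensen with Gaussian moments of $\state_t-\state_0$. This bounds $\E\|\nabla^2\log\pforward{t}(\state_t)\|_{\mathrm{Fr}}^2$ by $C_d\,\sigma_t^{-4}$, uniformly for $t\ge\delta$; the paper in fact bounds fourth moments.

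Alternatively, you can stay at first order and avoid It\^o entirely. For $s<s'$, differentiate $\pforward{s'}=\pforward{s}*\mathcal{N}(0,(\sigma_{s'}^2-\sigma_s^2)\Id)$ and integrate by parts to get $\scfc_{s'}(y)=\E[\scfc_s(\state_s)\mid\state_{s'}=y]$. The Markov property of the forward process turns this into $M_{t_k}=\E[M_t\mid\sigma(\back{u}:u\le t_k)]$, and square-integrability comes from the $d/\sigma_t^2$ bound. This yields both the orthogonality identity and the monotonicity of $t\mapsto\information(\pforward{T-t})$ that you invoke without justification.

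Apart from this, your proof follows the paper's: data processing, the KL chain rule on the initial point, Girsanov with the square triangle inequality, the Tweedie--Jensen Fisher bound, and the telescoping. The one difference is how Girsanov is justified. You use localization and lower semicontinuity; the paper invokes \citet[Theorem~7.20]{liptser2001statistics}, which needs only almost-sure finiteness of the drift integrals. Your route is a valid substitute. For the network drift you also need local boundedness of $s_\param$, or an extra stopping rule. In return, it delivers the inequality without having to check separately that the stochastic integral in the log-likelihood ratio has zero mean.
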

Crucially, \hypref{assum:kl_finite} and \hypref{assum:train} concern \emph{only} the score network and the initialization family. No regularity, moment, or support condition is imposed on $\mudata$. 
\paragraph{The role of initialization}
By \cref{thm:KL_bound}, the initialization error $\kl{\pforward{T}}{\discsolution{0}}$ directly controls the final approximation error. In the heavy-tailed regime,
 a Gaussian initialization is not admissible, as $\pforward{T}$ 
 remains heavy-tailed and \hypref{assum:kl_finite} fails. 
 This motivates the use of flexible initialization schemes with 
 explicit control over tail behavior; we focus on Normalizing Flows~\citep{rezende2015variational, dinh2017realnvp}, which admit dedicated constructions for heavy-tailed distributions~\citep{hickling2025flexibletails, mcdonald2022cometflows, laszkiewicz2022marginal} and whose statistical analysis as initialization estimators is the subject of the next section.

\section{Flexible Prior Initialization for SGMs through Normalizing Flows}
By \cref{thm:KL_bound}, the approximation error critically depends on the initialization $\discsolution{0}$, which we choose by minimizing $\kl{\pforward{T}}{\cdot}$ over a parametric family of Normalizing Flows given samples from $\pforward{T}$. Adapting the M-estimation arguments of~\citet{tang2021empirical} from Variational Autoencoders, we derive non-asymptotic excess-risk guarantees and characterize how the tail behavior of $\pforward{T}$ enters the statistical complexity.
\paragraph{Estimation problem and Normalizing Flows.}
Given a parametric family $\{\discsolution[\param]{0}\}_{\param \in \paramsp}$, we aim to solve
\[
\tmin{\param} \in \argmin_{\param \in \paramsp} \kl{\pforward{T}}{\discsolution[\param]{0}} \eqsp ,
\]
which is equivalent to
\[
\tmin{\param} \in \argmin_{\param \in \paramsp} \E\big[\mfun[\param][\state_T]\big],
\quad \text{where} \quad
\mfun[\param][x] = \log \pforward{T}(x) - \log \discsolution[\param]{0}(x).
\]
In practice, given i.i.d.\ samples $\{\state_T^i\}_{i=1}^{\ndata}$ from $\pforward{T}$, this reduces to maximum likelihood estimation:
\begin{equation}\label{eq:empirical_problem}
    \emin{\param} \in \argmax_{\param \in \paramsp} \ndata^{-1}\sum_{i=1}^{\ndata} \log \discsolution[\param]{0}(\state_T^i).
\end{equation}
We model $\discsolution[\param]{0}$ as a Normalizing Flow. Let $\refnfmeas$ be a reference density on $\R^d$ and let $\flow: \R^d \times \paramsp \to \R^d$ be a family of invertible transformations. The model distribution is the pushforward $\discsolution[\param]{0} = \pushforward{\flow[\param]}{\refnfmeas}$, with log-density given by the change of variables formula:
\[
\log \discsolution[\param]{0}(x) = \log \refnfmeas(\invflow[\param][x]) - \log \det \jflow[\param][x].
\]
This formulation provides explicit control over both the likelihood and the tail behavior through the choice of $\refnfmeas$ and $\flow[\param]$.
\subsection{KL non asymptotic control for Normalizing Flows}
\paragraph{Non-asymptotic guarantees via M-estimation.}
The statistical difficulty of~\cref{eq:empirical_problem} depends on how well the model class captures the tails of $\pforward{T}$. One way to quantify the tails of a distribution is through the Orlicz norm~\citep[Section 5.6]{wainwright2019highdimensional}, which is defined for a random variable $X$ as
\[
\onorm{X}[1] = \inf \left\{ K > 0 : \E \left[ \exp \left( \frac{|X|}{K} \right) \right] \leq 2 \right\}.
\]
We introduce two assumptions. The first controls the compatibility between the model class and the tails of $\pforward{T}$; the second is a standard regularity condition on the parametric family.
\begin{hypH}
\label{assump:tang2021empirical:A}
There exists $D > 0$ such that
\[
    \onorm{\sup_{\param \in \paramsp}|\mfun[\param][\state_T]|}[1] \leq D.
\]
\end{hypH}
\begin{hypH}
\label{assump:tang2021empirical:C1}
$\paramsp \subset \R^{\paramdim}$ is compact, and there exists $\mlip: \R^d \to \R_+$ such that for all $x$ and all $\param, \tilde{\param} \in \paramsp$,
\[
    |\mfun[\param][x] - \mfun[\tilde{\param}][x]| \leq \mlip[x]\|\param - \tilde{\param}\|_2,
\]
with $\onorm{\mlip[\state_T]}[1] < \infty$.
\end{hypH}
Under these assumptions, we obtain the following non-asymptotic guarantee on the excess risk.
\begin{theorem}
\label{thm:tang2021empirical:thm3main}
Under \hypref{assump:tang2021empirical:A} and \hypref{assump:tang2021empirical:C1}, there exist constants $c_0, c_1, c_2 > 0$ such that, with probability
$ 1 - c_0 \exp\Bigl(-c_1 \min \left\{ \paramdim \log(\ndata \paramdim) , \frac{\sqrt{\ndata \paramdim \log(\ndata \paramdim)}}{D \log \ndata} \right\}\Bigr)$ it holds
\begin{equation}\label{eq:nf_bound}
    \kl{\pforward{T}}{\discsolution[\emin{\theta}]{0}}
    \leq \inf_{\gamma > 0} \Big\{ 2(1+\gamma)\, \kl{\pforward{T}}{\discsolution[\tmin{\theta}]{0}} + c_2 (1+\gamma^{-1}) \frac{\paramdim D \log(\ndata) \log(\ndata \paramdim)}{\sqrt{\ndata}} \Big\}.
\end{equation}
\end{theorem}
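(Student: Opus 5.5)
The plan is to follow the M-estimation route of \citet[Theorem~2]{tang2021empirical}: rewrite the excess KL risk as an empirical-process deviation, then control that deviation with a chaining argument adapted to sub-exponential ($\psi_1$) envelopes.

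\textbf{Step 1: reduction to a uniform deviation.} Write $\tintmfun[\param] = \E[\mfun[\param][\state_T]] = \kl{\pforward{T}}{\discsolution[\param]{0}}$ and $\eintmfun[\param] = \ndata^{-1}\sum_i \mfun[\param][\state_T^i]$. Since the term $\log \pforward{T}$ does not depend on $\param$, $\emin{\param}$ minimizes $\eintmfun$. Hence
\[
\tintmfun[\emin{\param}] - \tintmfun[\tmin{\param}] \leq \bigl(\tintmfun[\emin{\param}] - \eintmfun[\emin{\param}]\bigr) + \bigl(\eintmfun[\tmin{\param}] - \tintmfun[\tmin{\param}]\bigr).
\]
To obtain the $(1+\gamma)$ multiplicative form rather than a purely additive one, I would use a localized, self-normalized version. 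For each $\param$, bound $|\eintmfun[\param]-\tintmfun[\param]|$ by $\sqrt{\mathrm{Var}(\mfun[\param])\cdot r_\ndata} + D\log \ndata\, r_\ndata$, with $r_\ndata \asymp \paramdim\log(\ndata\paramdim)/\sqrt{\ndata}$ up to logs. Then relate the variance to the mean, as in Tang--Yang. This relation is what produces the factor $2$ in front of $\kl{\pforward{T}}{\discsolution[\tmin{\theta}]{0}}$. Finally, AM--GM with parameter $\gamma$ splits $\sqrt{a b}\le \gamma a + \gamma^{-1} b$, which yields $(1+\gamma)$ and $(1+\gamma^{-1})$. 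Taking the infimum over $\gamma$ is free because the event does not depend on $\gamma$.

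\textbf{Step 2: truncation and finite net.} \hypref{assump:tang2021empirical:A} gives a $\psi_1$ envelope $F=\sup_\param|\mfun[\param]|$. Truncate at level $\tau\asymp D\log \ndata$. By the Orlicz bound and a union bound, $\max_i F(\state_T^i)\le \tau$ with probability $1-\ndata e^{-\tau/D}$. The bias of truncation in expectation is of order $D/\ndata$, which is negligible.

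Next, cover the compact $\paramsp\subset\R^{\paramdim}$ by an $\epsilon$-net of cardinality $(C/\epsilon)^{\paramdim}$ with $\epsilon=1/\ndata$. By \hypref{assump:tang2021empirical:C1}, the discretization error is at most $\epsilon(\ndata^{-1}\sum_i \mlip[\state_T^i] + \E \mlip)$. Bernstein's inequality for the $\psi_1$ variable $\mlip[\state_T]$ makes this $O(1/\ndata)$ with high probability.

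On the net, apply Bernstein's inequality for bounded variables (bound $\tau$) to each truncated $\mfun[\param]$, followed by a union bound over $(C\ndata)^{\paramdim}$ points. This gives deviations of order $\sqrt{\mathrm{Var}\cdot \paramdim\log(\ndata\paramdim)/\ndata} + \tau\paramdim\log(\ndata\paramdim)/\ndata$. The failure probability combines $e^{-c\,\paramdim\log(\ndata\paramdim)}$ from the union bound with a Bernstein tail. Choosing the deviation level $t\asymp\sqrt{\paramdim\log(\ndata\paramdim)/\ndata}$ against the scale $\tau = D\log\ndata$ produces the second exponent $\sqrt{\ndata\paramdim\log(\ndata\paramdim)}/(D\log\ndata)$. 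This matches the stated probability.

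\textbf{Main obstacle.} The delicate point is the variance-to-mean step needed for the multiplicative constant. $\mfun[\param]$ is a log-ratio and is not nonnegative, so $\mathrm{Var}(\mfun[\param])\lesssim \tintmfun[\param]$ does not hold directly. I would first try the crude bound $\mathrm{Var}\le \E F^2\lesssim D^2$. This only yields an additive $D\sqrt{\cdot}$ rate, which is essentially the stated $D\log \ndata\,\paramdim\log(\ndata\paramdim)/\sqrt{\ndata}$ term. The factor $2(1+\gamma)$ would then come merely from symmetrizing the two deviation terms in Step 1 and applying AM--GM to a cross term. If this crude route suffices to match \eqref{eq:nf_bound}, I would keep it. Otherwise I would pass to the Hellinger-type bound $\E[(\log\frac{p}{q})^2\wedge \tau^2]\lesssim \tau\,\kl{p}{q}$ on the truncated event, following Tang--Yang.
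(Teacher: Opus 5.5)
Your plan takes a genuinely different route from the paper's, and the route can be made to work. The paper goes through the localized bound of \cref{thm:1_tang}, which has three ingredients:
\begin{itemize}
\item \cref{lem:tang_11} ties the second moment of $\mfun[\param][\state_T]$ to the KL.
\item \cref{lem:tang_12} controls the self-normalized deviation by peeling over $L^2(\pforward{T})$ shells of the star hull $\simplexg$ and applying Adamczak's inequality to the unbounded $\psi_1$ envelope.
\item The proof of \cref{thm:3_tang} verifies $\localrad{\delta_\ndata}{\simplexg}\le\delta_\ndata^2/(D\log\ndata)$ for $\delta_\ndata=c_3D\log\ndata\sqrt{\paramdim\log(\ndata\paramdim)/\ndata}$. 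It uses truncation inside an expectation (\cref{lem:tang_13}), covering numbers from \hypref{assump:tang2021empirical:C1}, and Dudley's integral (\cref{lem:tang_14}).
\end{itemize}
You replace symmetrization, chaining and the fixed point with a $1/\ndata$-net of $\paramsp$, the Lipschitz transfer, Bernstein and a union bound. For a finite-dimensional class this costs only a logarithm, which \eqref{eq:nf_bound} already contains. The localization machinery is what a fast, $\delta_\ndata^2$-type term would require. The rate in \eqref{eq:nf_bound} is $\ndata^{-1/2}$, so the statement as written does not need it.

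\textbf{The step that fails as written is the high-probability truncation.}
\begin{itemize}
\item With $\tau=cD\log\ndata$ and $c$ fixed, the event $\{\max_iF(\state_T^i)\le\tau\}$ fails with probability of order $\ndata^{1-c}$.
\item For large $\ndata$, the target failure probability is $c_0(\ndata\paramdim)^{-c_1\paramdim}$.
\item With constants not depending on $\paramdim$, these are incompatible unless $\tau\gtrsim D(\log\ndata+\paramdim\log(\ndata\paramdim))$. So your claim that the probability matches is not justified.
\end{itemize}
The paper avoids this by truncating only inside an expectation and letting Adamczak's inequality pay $\onorm{\max_iF(\state_T^i)}[1]\lesssim D\log\ndata$.

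\textbf{The simplest repair is to drop truncation.}
\begin{itemize}
\item Set $a=\paramdim\log(\ndata\paramdim)$. Since $c_0e^{-c_1\min\{a,b\}}\ge c_0e^{-c_1a}$, failure probability $e^{-a}$ suffices, and the second exponent is irrelevant.
\item By \hypref{assump:tang2021empirical:A}, each centered $\mfun[\param][\state_T]$ has $\psi_1$-norm at most $2D$.
\item Apply sub-exponential Bernstein at each net point with $t=cD\sqrt{a/\ndata}$, then a union bound over the $(C\sqrt{\paramdim}\,\ndata)^{\paramdim}$ points. This gives $\max_{\text{net}}|M_\ndata-M^*|\le t$ with probability $1-e^{-a}$ whenever $t\le D$.
\item If $t>D$, the claim is trivial. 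For every $\param$, $\kl{\pforward{T}}{\discsolution[\param]{0}}=\int\mfun[\param][x]\,\pforward{T}(x)\,\rmd x\le\E[F(\state_T)]\le D\log 2$. This lies below the second term of \eqref{eq:nf_bound} once $c_2$ is large.
\item Your discretization step extends the bound to $\sup_\param|M_\ndata(\param)-M^*(\param)|$. The basic inequality then gives $\kl{\pforward{T}}{\discsolution[\emin{\param}]{0}}\le\kl{\pforward{T}}{\discsolution[\tmin{\param}]{0}}+2\sup_\param|M_\ndata(\param)-M^*(\param)|$.
\item KL is nonnegative and $2(1+\gamma)$, $1+\gamma^{-1}$ are both at least $1$. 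Hence \eqref{eq:nf_bound} follows for all $\gamma$ at once, with no AM--GM and no variance-to-mean step.
\end{itemize}

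So your \emph{main obstacle} is not an obstacle for this statement: the crude route suffices. Your fallback $\E[(\log\frac{p}{q})^2\wedge\tau^2]\lesssim\tau\,\kl{p}{q}$ is correct; take $r=q/p$ and compare $\log^2r$ with $r-1-\log r$. It is essentially the form of \cref{lem:tang_11} that the paper uses when proving \cref{thm:1_tang}. It only matters if you want a fast rate. In that case, Bernstein plus the union bound gives $r_\ndata\asymp\paramdim\log(\ndata\paramdim)/\ndata$, not $\paramdim\log(\ndata\paramdim)/\sqrt{\ndata}$, and the truncation level must grow as indicated above.
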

The bound decomposes the excess risk into two terms. The first measures how well the best-in-class model $\discsolution[\tmin{\theta}]{0}$ approximates $\pforward{T}$; it vanishes whenever $\pforward{T}$ belongs to the model class, in which case~\cref{eq:nf_bound} reduces to the statistical term and the estimator is consistent. The second term scales as $\paramdim D / \sqrt{\ndata}$ up to logarithmic factors, with $\gamma > 0$ trading off the two contributions.
\paragraph{Tail complexity and the choice of the flow family.}
The constant $D$ in \hypref{assump:tang2021empirical:A} captures the \emph{tail complexity} of the estimation problem: heavier tails of $\pforward{T}$ relative to those of the model class yield larger $D$ and slow down convergence.
Conversely, $D$ remains controlled whenever the model class successfully matches the tails of $\pforward{T}$.
We arrived to derive in \cref{appendix:sec:estimating_D} detailled upper bounds for $D$ in the case of sub-exponential and sub-gaussian $\mudata$ distributions when for the case of a RealNVP flow \citep{dinh2017realnvp} respectively in \cref{prop:nf:onormratio_bound:heavy} and \cref{prop:nf:onormratiobound:gaussian}.

The bound of \cref{prop:nf:onormratiobound:gaussian}  decrease with the increase of the time-horizon $\sigma_T^2$, showing that indeed it is easier to learn a flow for $\pforward{T}$ as $T$ grows.
This is in itself expected, as the forward diffusion can be interpreted as a smoothing operator: $\pforward{T}$ is a regularized version of $\mudata$, and is therefore typically closer to a tractable parametric family than $\mudata$ itself. 
This makes the approximation problem on $\pforward{T}$ substantially more tractable than directly modeling $\mudata$, rendering the design of a sufficiently expressive flow class a realistic objective. 
In the ideal case where $\pforward{T}$ belongs to the model class, the approximation error vanishes and the estimator $\discsolution[\emin{\theta}]{0}$ is consistent.

\subsection{Choice of $T$ and training scheme for the Flow}
We propose training a flow on the noised distribution $\pforward{T}$
and recovering the data distribution by running a standard diffusion
denoiser over the final $T$ steps. The horizon $T$ should be
small enough that $\pforward{T}$ remains close to $\pforward{0}$,
yet large enough that $\pforward{T}$ is unimodal, where
flows perform reliably. 
\paragraph{Choosing $T$ via empirical unimodality.}
We exploit the fact that joint unimodality implies unimodality of
every one-dimensional projection, and that unimodality, once reached,
is preserved for all $T' > T$.
In order to find unimodality we propose the following method : given samples $\{\state_T^i\}_{i=1}^{\ndata}$, we draw a battery of
random unit directions, project the samples onto each, and use
Hartigan's dip test~\citep{hartigan1985dip} to locate, per direction,
the smallest horizon $T$ for which the projection is accepted as
unimodal. The worst case across directions yields our estimate
$\sigma_T$, validated on a held-out battery of directions.
\paragraph{Two training schemes for the flow.}
\cref{eq:empirical_problem} approximates the optimization problem of
$\kl{\pforward{T}}{\discsolution[\param]{0}}$ up to an additive
constant on a fixed noised dataset
$\{\state_0^i + \sigma_T Z^{(i)}\}_{i=1}^{\ndata}$.
Equivalently, the same KL objective can be empirically reformulated as 
\[
  \argmax_{\param \in \paramsp} \sum_{i=1}^{\ndata}
    \E_{Z}\!\left[
      \log \discsolution[\param]{0}\!\left(\state_0^i + \sigma_T Z\right)
    \right],
  \quad Z\sim \mathcal{N}(0, \mathbf{I}),
\]
which suggests resampling $Z$ at every epoch instead of
fixing the noised dataset upfront, an approach reminiscent of data
augmentation. We compare both variants, that we name fixed and dynamic
(\cref{alg:training}), empirically in \cref{sec:experiments_toy}.
\begin{algorithm}[t]
\caption{\textbf{Fixed-Noise} (left) vs.\ \textbf{Dynamic-Noise} (right) Flow Training.}
\label{alg:training}
\begin{minipage}[t]{0.48\linewidth}
\begin{algorithmic}
\STATE {\bfseries Input:} $\mathcal{D} = \{\mathbf{x}_0^{(i)}\}_{i=1}^N$, noise level $\sigma_T$
\STATE {\bfseries Output:} Optimized $\discsolution[\param]{0}$
\STATE Sample $\mathbf{z}^{(i)} \sim \mathcal{N}(0, \mathbf{I})$, $i=1,\dots,N$
\STATE Build $\mathcal{D}_T = \{\mathbf{x}_0^{(i)} + \sigma_T \mathbf{z}^{(i)}\}_{i=1}^N$
\FOR{each epoch}
    \STATE Update $\param$ on $-\log \discsolution[\param]{0}(\mathbf{x}_T)$
\ENDFOR
\end{algorithmic}
\end{minipage}%
\hfill\hfill
\begin{minipage}[t]{0.48\linewidth}
\begin{algorithmic}
\STATE {\bfseries Input:} $\mathcal{D} = \{\mathbf{x}_0^{(i)}\}_{i=1}^N$, noise level $\sigma_T$
\STATE {\bfseries Output:} Optimized $\discsolution[\param]{0}$
\FOR{each epoch}
\STATE Sample $\mathbf{z}^{(i)} \sim \mathcal{N}(0, \mathbf{I})$, $i=1,\dots,N$
\STATE Build $\mathcal{D}_T = \{\mathbf{x}_0^{(i)} + \sigma_T \mathbf{z}^{(i)}\}_{i=1}^N$
    \STATE Update $\param$ on $-\log \discsolution[\param]{0}(\mathbf{x}_T)$ 
\ENDFOR
\end{algorithmic}
\end{minipage}
\end{algorithm}

\section{Numerical Illustration}\label{sec:experiments_toy}
We present a synthetic experiment illustrating the potential of our 
approach. The score of $\pforward{t}$ is unavailable in closed form for 
any heavy-tailed targets, so we consider two distributions to isolate 
different aspects of our method. We consider a 
Gaussian mixture model (GMM), for which the score is analytically tractable: this lets us isolate and study the
initialization. We then consider 
\begin{wrapfigure}{r}{0.51\textwidth} 
    \centering
    \begin{minipage}{0.25\textwidth}
        \centering
        \includegraphics[width=\textwidth]{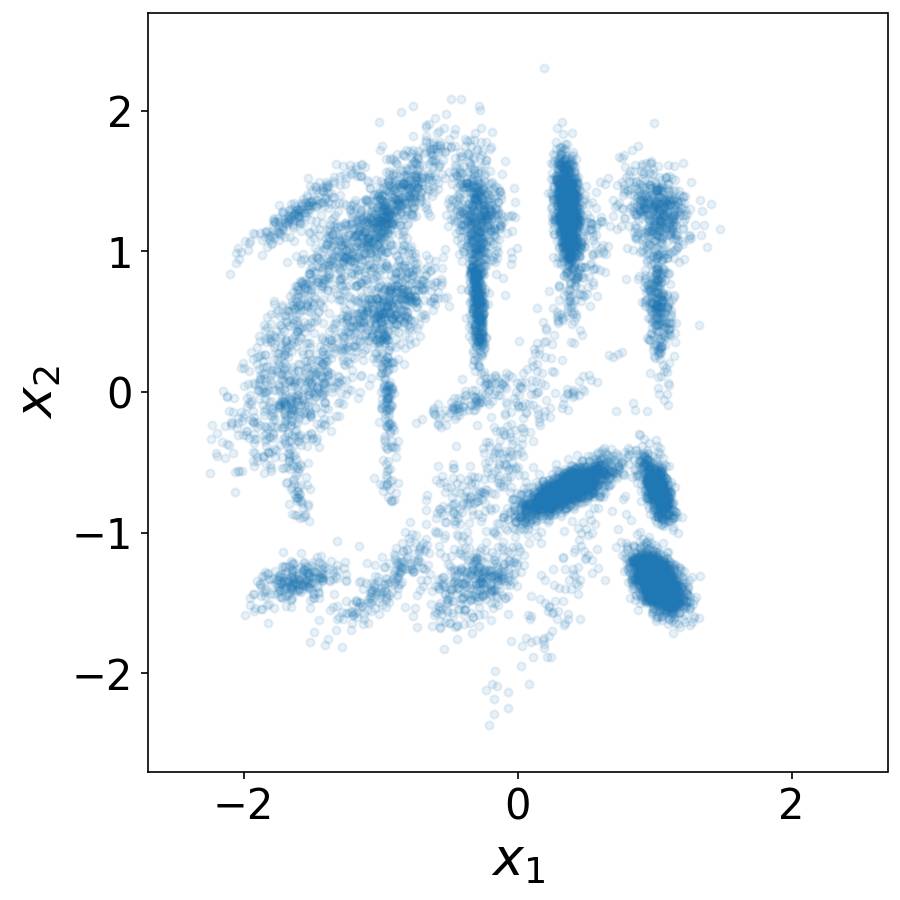}
    \end{minipage}
    \hfill %
    \begin{minipage}{0.25\textwidth}
        \centering
        \includegraphics[width=\textwidth]{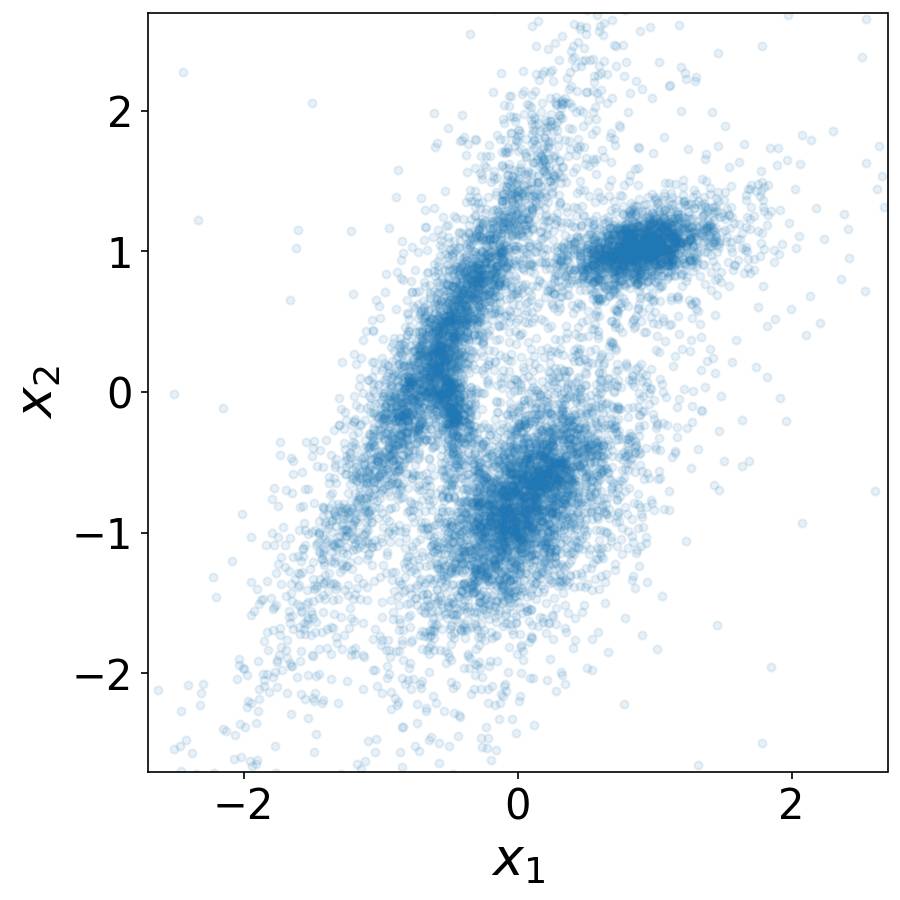}
    \end{minipage}
    \caption{GMM (left) and HT (right) targets.}
    \label{fig:gmm_ht_targets}
\end{wrapfigure}
a heavy-tailed (HT) target, where the score must be approximated. For HT we evaluate two families of estimators: MCMC samplers, namely HMC 
\cite{neal2011hmc}, the Barker proposal \cite{livingstone2020barker}, 
and NUTS \cite{hoffman2011nuts}, and a trained neural-network denoiser 
$\mathrm{NN}_{\theta}$ implemented as an MLP with EDM-style 
preconditioning. 
Using multiple MCMC samplers alongside the learned 
denoiser ensures that the observed gains are not an artefact of any 
specific approximation. We report NUTS and $\mathrm{NN}_{\theta}$ in 
the main text and defer HMC and Barker to 
\cref{app:appendix_experiments}.
We work in dimension $d=2$. The MCMC-based score estimator variance grows quickly with $d$, 
and for higher dimensions the resulting score 
becomes too noisy to support a clean isolation of the initialization 
effect. 
The GMM target is a 25-component mixture, with anisotropic covariances and 
Dirichlet-sampled weights. The HT target
is a mixture of $n=4$ multivariate $t$-Student components with $\nu=3$ 
degrees of freedom, anisotropic covariances, and Dirichlet-sampled 
weights.
We use fewer components for the HT target, since many modes amplify 
MCMC-based score approximation bias. Both targets are made to have mean zero and global standard deviation $1$.

For both targets, denoising follows the EDM scheduler \cite{karras2022elucidating} with 
$\sigma_{\max}=3$, $\sigma_{\min}=2\!\times\!10^{-4}$, $\rho=2$, and 
$40$ discretization steps. 
We evaluate the method at intermediate truncation points $\sigma_T$ along this schedule.
We use a first-order 
stochastic sampler in the VE parameterization \cite{ho2020denoising, karras2022elucidating}.
The flow $\discsolution[\param]{0}$ is a 
coupling flow with triangular-affine layers and ReLU6 activations; the 
base distribution is a standard multivariate Gaussian for the GMM 
target and a multivariate $t$-Student with $\nu=3$ for the HT target.

Further details on MCMC, tail generation, and flow training, together with tests on real-world architectures~\cite{zhai2025normalizing} and datasets (FFHQ-64 and two subsets of ImageNet~\cite{karras2022elucidating,Karras2023TrainingDiffusion}) assessing the potential of this architecture for modeling $\pforward{T}$ in high-dimensional settings, are deferred to \cref{app:appendix_experiments}.

\subsection{The Role of Initialization}
\begin{figure*}[t]
    \centering
    \begin{subfigure}[b]{0.32\textwidth}
        \centering
        \includegraphics[width=\textwidth]{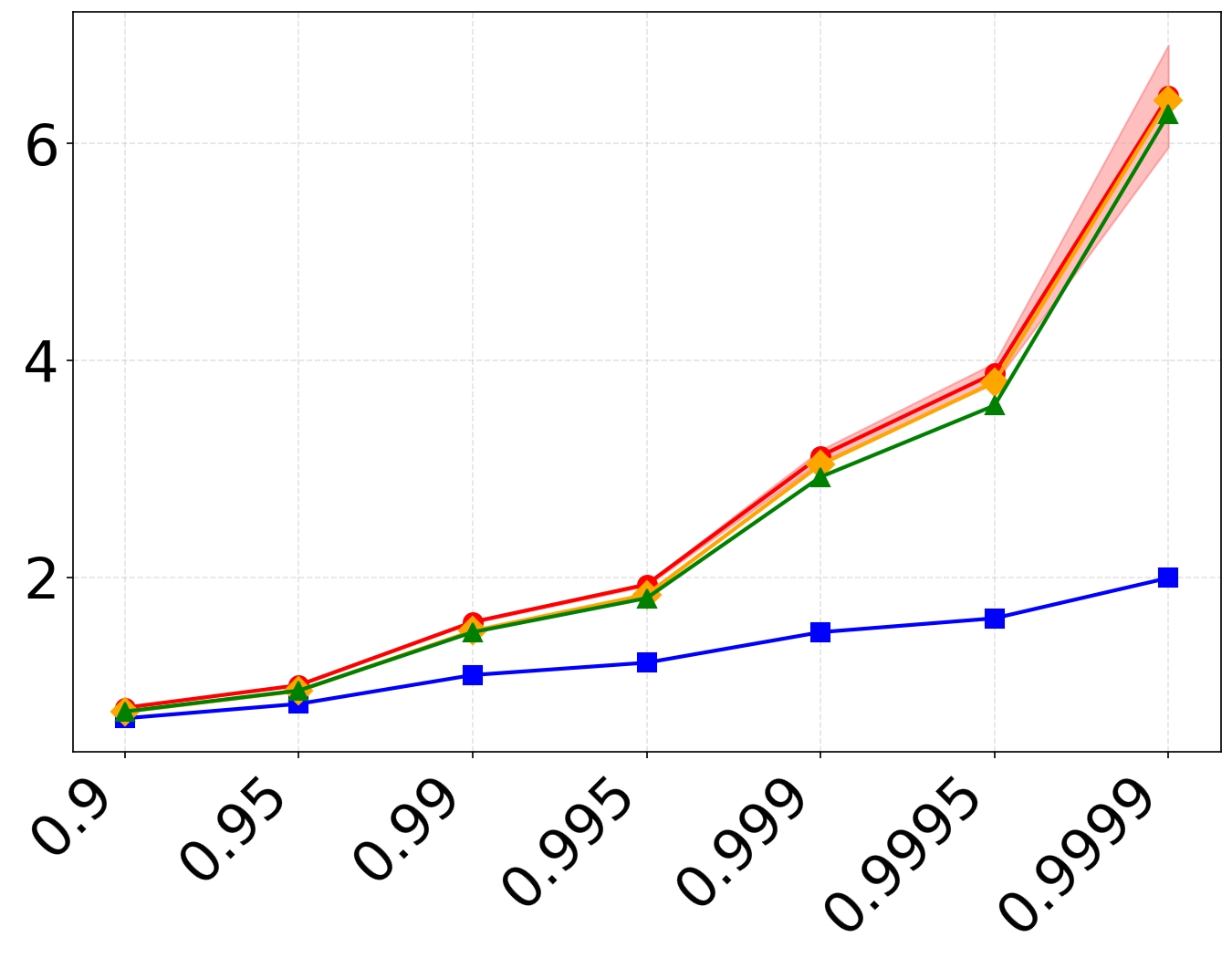}
        \caption{$\sigma_T = 0.801$}
    \end{subfigure}
    \hfill
    \begin{subfigure}[b]{0.32\textwidth}
        \centering
        \includegraphics[width=\textwidth]{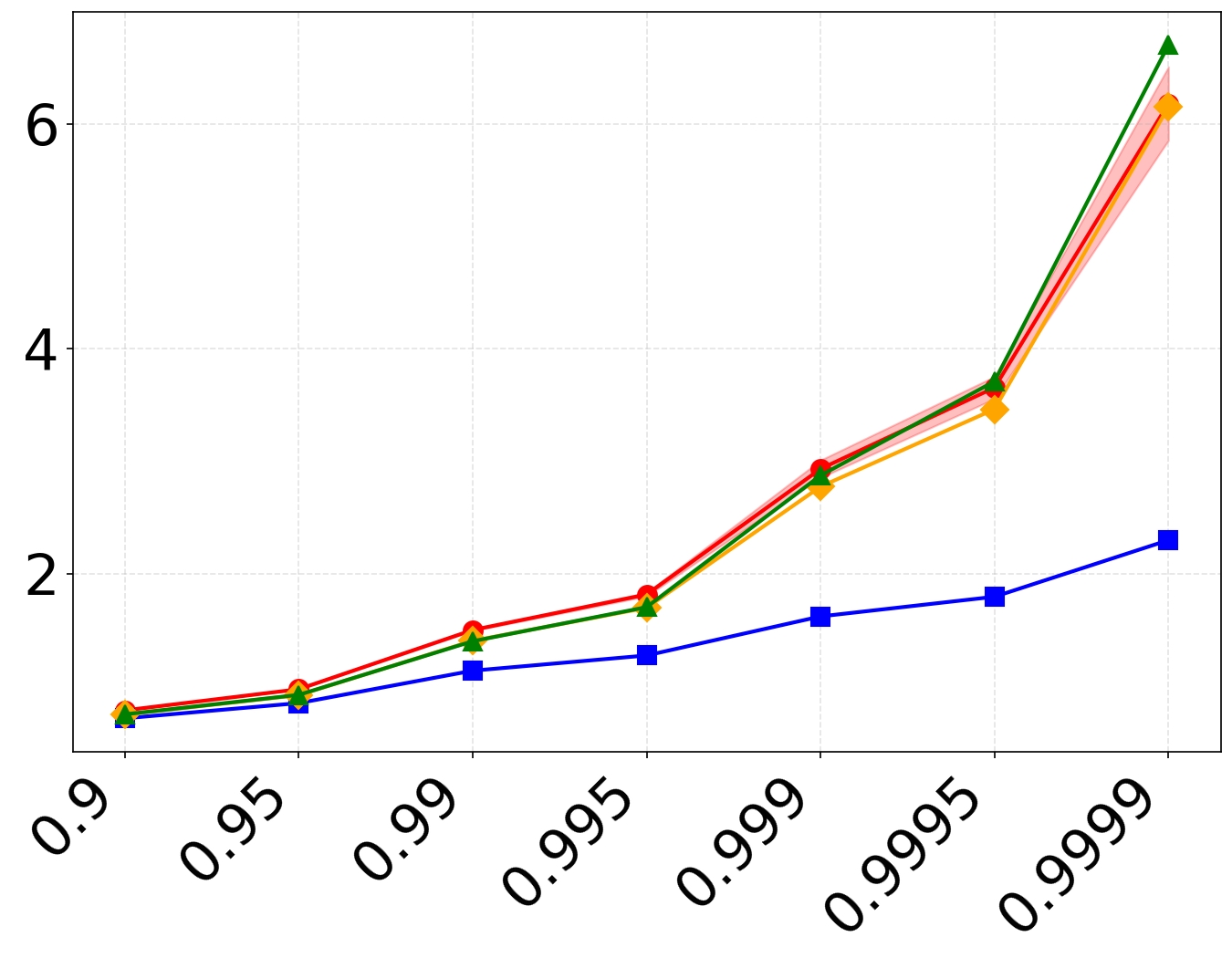}
        \caption{$\sigma_T = 1.055$}
    \end{subfigure}
    \hfill
    \begin{subfigure}[b]{0.32\textwidth}
        \centering
        \includegraphics[width=\textwidth]{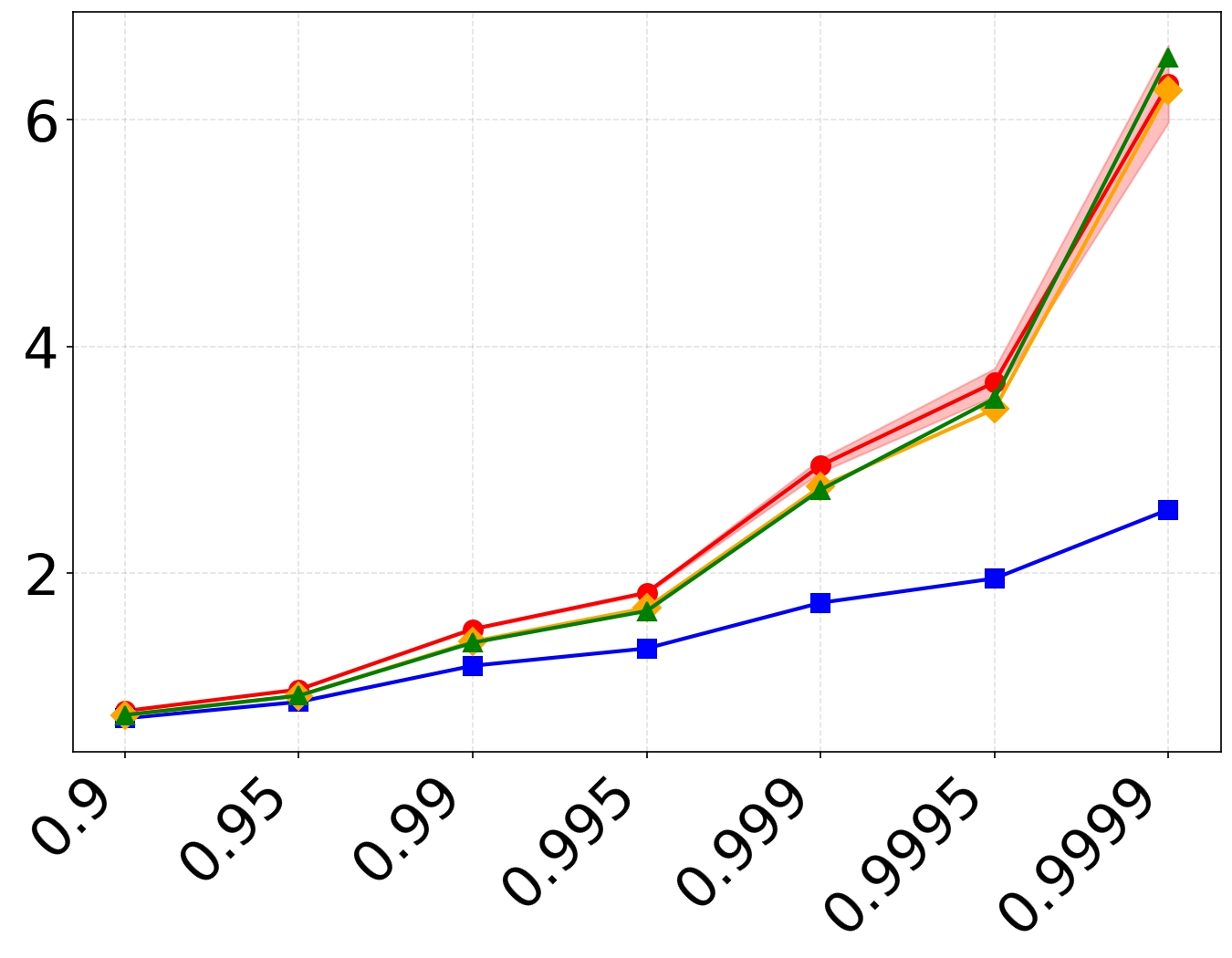}
        \caption{$\sigma_T = 1.244$}
    \end{subfigure}
    \caption{Tail quantiles (0.9--0.9999). \textcolor{blue}{$p_\infty$}, 
    \textcolor{orange}{$p_T$}, \textcolor{OliveGreen}{$p_\theta$}, \textcolor{red}{$\mudata$}. 
    NUTS denoiser.}
    \label{fig:ht_quantile_main_NUTS}
\end{figure*}
For both targets, we consider three noise levels $\sigma_T \in \{0.801, 1.055, 1.244\}$, 
corresponding to $\{21, 24, 26\}$ remaining EDM denoising steps. We 
compare four sampling strategies: denoising initialized from the standard 
Gaussian prior $\pinf \sim \mathcal{N}(0, \sigma_T^2 I_d)$, from $\pforward{T}$, or from $\discsolution[\param]{0}$, and 
direct sampling from a flow trained on clean data. For the first three strategies
we denoise with the analytic GMM score. The flow $\discsolution[\param]{0}$ 
and the flow trained on clean data are both trained on $10^4$ samples. $\discsolution[\param]{0}$ is trained in the dynamic-noise regime from \cref{alg:training}. 
For each strategy we generate $10^6$ samples. 

In the GMM case, we evaluate the global quality of the generated distribution
by computing the Max-Sliced Wasserstein (MSW) distance between generated data and 
$10^6$ reference samples, repeating the MSW computation $10$ times with 
independent draws from the target, and reporting mean and standard deviation. 
The MSW is performed via gradient-based optimization. 

In the HT case, we evaluate bulk and tail quality separately.
Bulk accuracy is measured by restricting both generated and reference samples to 
the $[0.1, 0.9]$ inter-quantile region along each dimension and computing the MSW 
distance between the two. We repeat the MSW computation $10$ times against 
independent reference draws and report mean and standard 
deviation over these runs.
Tail fidelity is assessed along a fixed worst-case projection direction, identified 
\begin{wraptable}{r}{0.65\textwidth}
  \centering
  \footnotesize
  \setlength{\tabcolsep}{4pt}
  \renewcommand{\arraystretch}{1.05}
   \caption{MSW (mean $\pm$ std). Global for GMM, bulk for HT. Flow baseline does not depend on $\sigma_T$}
  \label{tab:msw_combined}
  \begin{tabular}{cllccc}
    \toprule
    Target & Method & Init & $\sigma_T{=}0.801$ & $\sigma_T{=}1.055$ & $\sigma_T{=}1.244$ \\
    \midrule
    \multirow{4}{*}{GMM}
    & \multirow{3}{*}{Analytic}
      & $\pinf$       & $0.169\,{\pm}\,0.013$ & $0.105\,{\pm}\,0.007$ & $0.081\,{\pm}\,0.005$ \\
    & & $\discsolution[\param]{0}$ &  $0.027\,  \pm \, 0.004$ & $0.031 \,\pm\, 0.003$ & $0.034 \, \pm \, 0.003$ \\
    & & $\vec{p}_T$   & $0.024\,{\pm}\,0.004$ & $0.026\,{\pm}\,0.003$ & $0.030\,{\pm}\,0.003$ \\
    \cmidrule(lr){2-6}
    & Flow & -- & \multicolumn{3}{c}{$0.067\,{\pm}\,0.005$} \\
    \midrule
    \multirow{7}{*}{HT}
    & \multirow{3}{*}{NUTS}
      & $\pinf$       & $0.074\,{\pm}\,0.001$ & $0.037\,{\pm}\,0.006$ & $0.025\,{\pm}\,0.003$ \\
    & & $p_\theta$    & $0.014\,{\pm}\,0.002$ & $0.014\,{\pm}\,0.002$ & $0.012\,{\pm}\,0.004$ \\
    & & $\vec{p}_T$   & $0.012\,{\pm}\,0.002$ & $0.012\,{\pm}\,0.003$ & $0.012\,{\pm}\,0.003$ \\
    \cmidrule(lr){2-6}
    & \multirow{3}{*}{$\mathrm{NN}_\theta$}
      & $\pinf$       & $0.074\,{\pm}\,0.001$ & $0.035\,{\pm}\,0.009$ & $0.027\,{\pm}\,0.001$ \\
    & & $\discsolution[\param]{0}$    & $0.012\,{\pm}\,0.001$ & $0.011\,{\pm}\,0.002$ & $0.012\,{\pm}\,0.001$ \\
    & & $\vec{p}_T$   & $0.011\,{\pm}\,0.002$ & $0.011\,{\pm}\,0.002$ & $0.011\,{\pm}\,0.002$ \\
    \cmidrule(lr){2-6}
    & Flow & -- & \multicolumn{3}{c}{$0.015\,{\pm}\,0.001$} \\
    \bottomrule
  \end{tabular}
\end{wraptable}

as follows: for each $\sigma_T$, we run the MSW maximization between $10^6$ samples from 
$\pforward{T}$ and $10$ independent reference draws, retaining the direction achieving the 
highest MSW. Along this direction, we report quantiles at levels $\{0.9, ..., 0.9999\}$ 
for all sampling strategies and compare them against the reference mean $\pm 2$ standard deviations, 
estimated from $10$ independent reference draws.

MSW results are reported jointly in \cref{tab:msw_combined}; 
tail quantile plots are shown in \cref{fig:ht_quantile_main_NUTS} for NUTS and \cref{fig:ht_quantile_main_NN} for $\mathrm{NN}_{\theta}$.
Our method outperforms the flow on the bulk in the GMM case (where the strong multimodality challenges the flow), achieving better mode reconstruction. In the HT case, we still observe a bulk improvement over the flow, although the gap is smaller because the HT distribution has a simpler structure. Our method also yields substantially improved tail reconstruction compared to standard diffusion, and the bulk quality improves as a consequence: mass that $\pinf$ fails to place in the tail is otherwise redistributed into the bulk, degrading it.
\subsection{Flow training}
We investigate how the interplay between training data size and noise level affects the learning dynamics of normalizing flows, empirically validating the theoretical results of \cref{thm:tang2021empirical:thm3main} and the smoothing effect of noise (\cref{appendix:sec:estimating_D}). 
\begin{wrapfigure}[13]{r}{0.71\textwidth}
    \centering
    \begin{subfigure}[t]{0.35\textwidth}
        \centering
        \includegraphics[width=\textwidth]{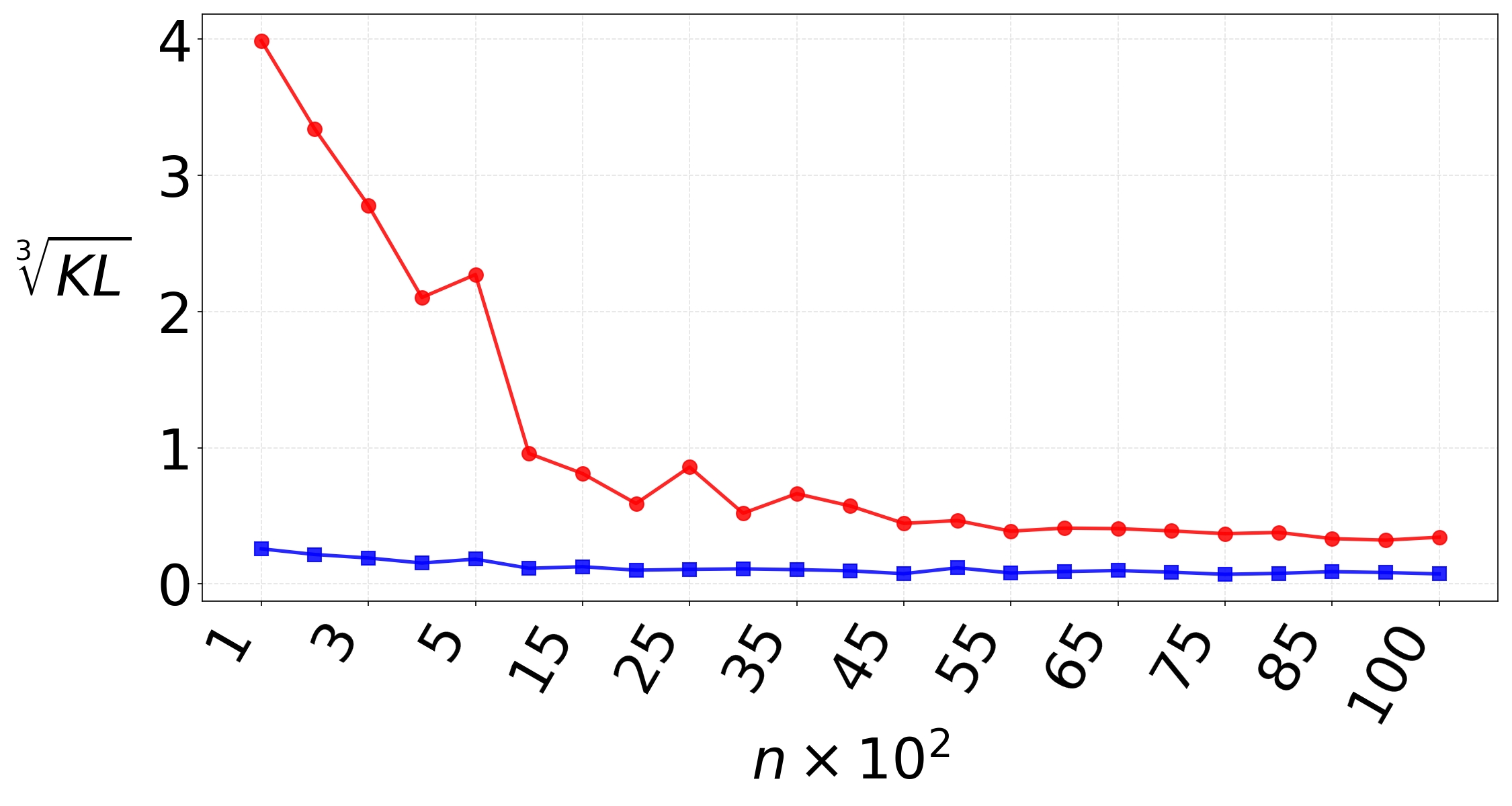}
        \caption{KL vs. training size at $\sigma = 1.055$ for \textcolor{blue}{dynamic} and \textcolor{red}{fixed} strategies.}
        \label{fig:kl_trainsize}
    \end{subfigure}
    \hfill
    \begin{subfigure}[t]{0.35\textwidth}
        \centering
        \includegraphics[width=\textwidth]{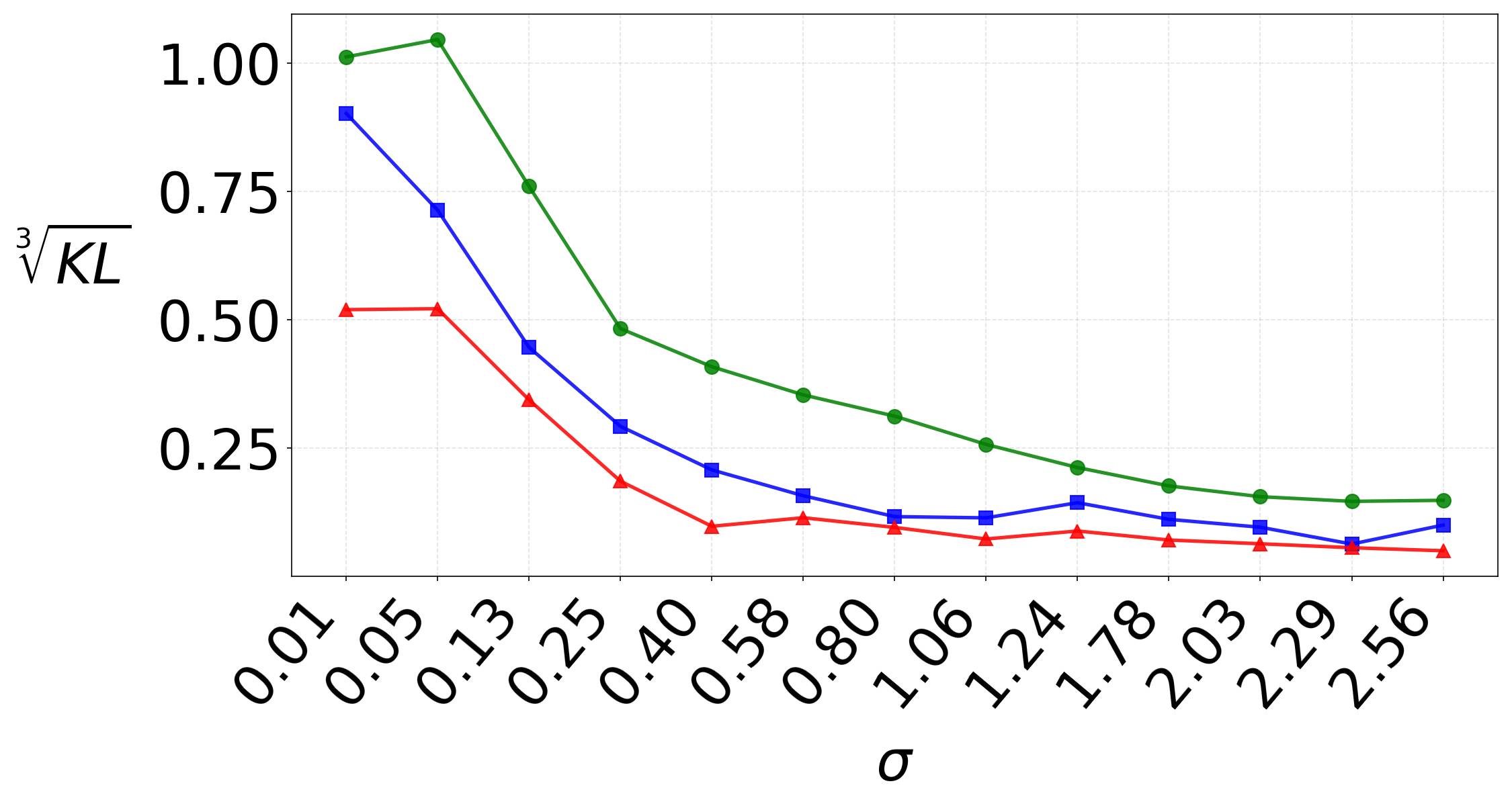}
        \caption{KL vs. noise level $\sigma$ for dynamic training, $\ndata \in \{\textcolor{OliveGreen}{10^2}, \textcolor{blue}{10^3}, \textcolor{red}{10^4}\}$.}
        \label{fig:kl_sigma}
    \end{subfigure}
    \caption{ $\kl{\pforward{T}}{\discsolution[\emin{\param}]{0}}$ for different $\ndata$ and $\sigma_T$.}
    \label{fig:flow_training}
\end{wrapfigure}
Specifically, we train flows on noised GMM samples for varying training sizes $\ndata$ and noise levels $\sigma_T$ and we evaluate $\kl{\pforward{T}}{\discsolution[\emin{\param}]{0}}$ (for GMM, $\pforward{T}$ is in closed form) using standard Monte Carlo.
We measure both the effect of increasing training set size and varying noise levels. 
\cref{fig:kl_trainsize} shows the KL divergence versus training size at fixed noise level $\sigma = 1.055$ for both training modalities in \cref{alg:training} (fixed and dynamic noise). The results reveal the strong regularization effect of dynamic noise training, with consistent improvement for both strategies as data increases.
\cref{fig:kl_sigma} shows the KL divergence versus noise level $\sigma_T$ for dynamic training across different training sizes $\ndata \in \{10^2, 10^3, 10^4\}$. The results demonstrate improved learning on smoothed distributions. We visualize the KL divergence as its cube root for improved readability. 
Recall also that the KL divergence is invariant under invertible reparameterizations; in particular, it is unaffected by scaling of the variables.
Further details on flow training and tests on real-world architectures and datasets are deferred to \cref{app:appendix_experiments}.
\begin{figure*}[t]
    \centering
    \begin{subfigure}[b]{0.32\textwidth}
        \centering
        \includegraphics[width=\textwidth]{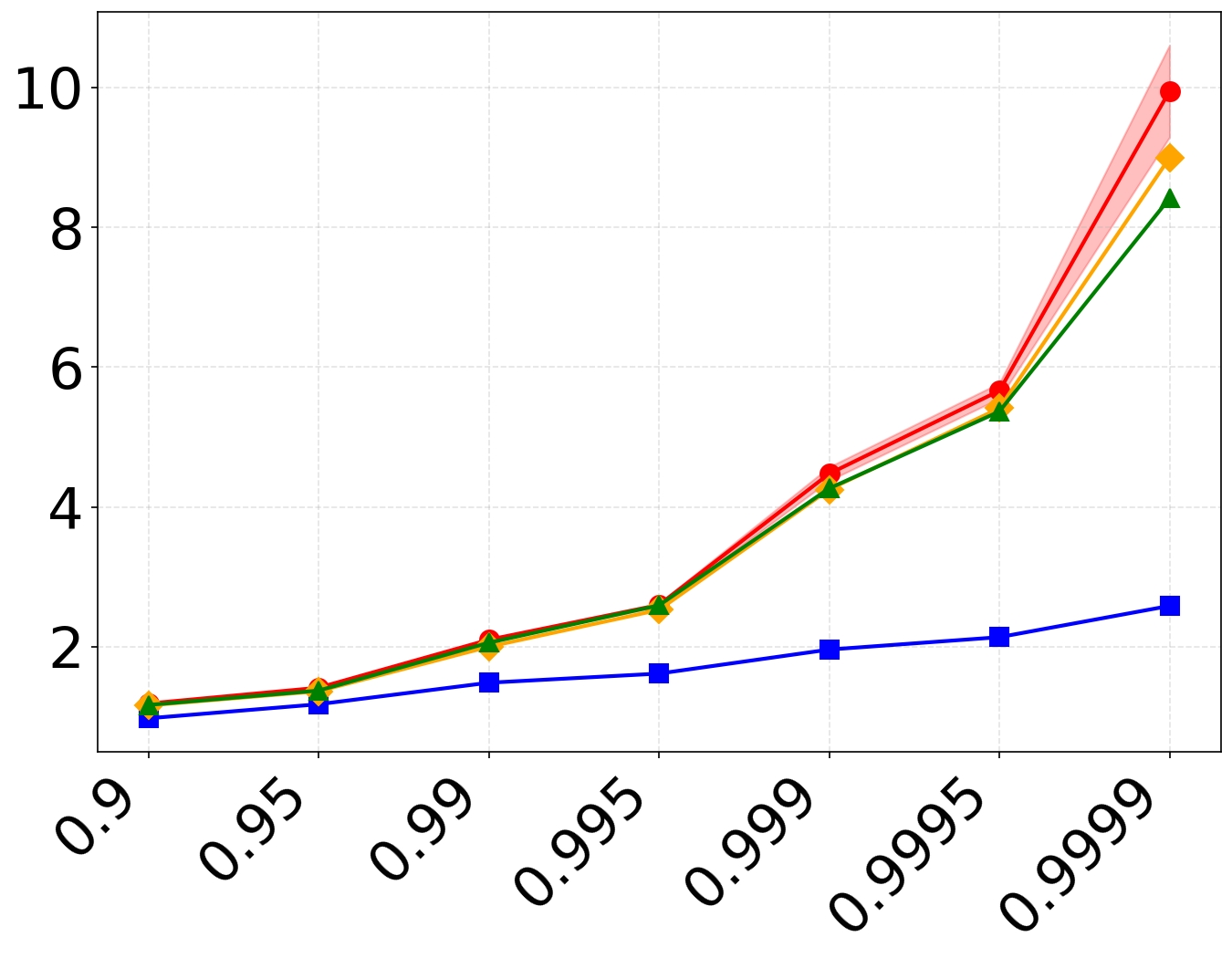}
        \caption{$\sigma_T = 0.801$}
    \end{subfigure}
    \hfill
    \begin{subfigure}[b]{0.32\textwidth}
        \centering
        \includegraphics[width=\textwidth]{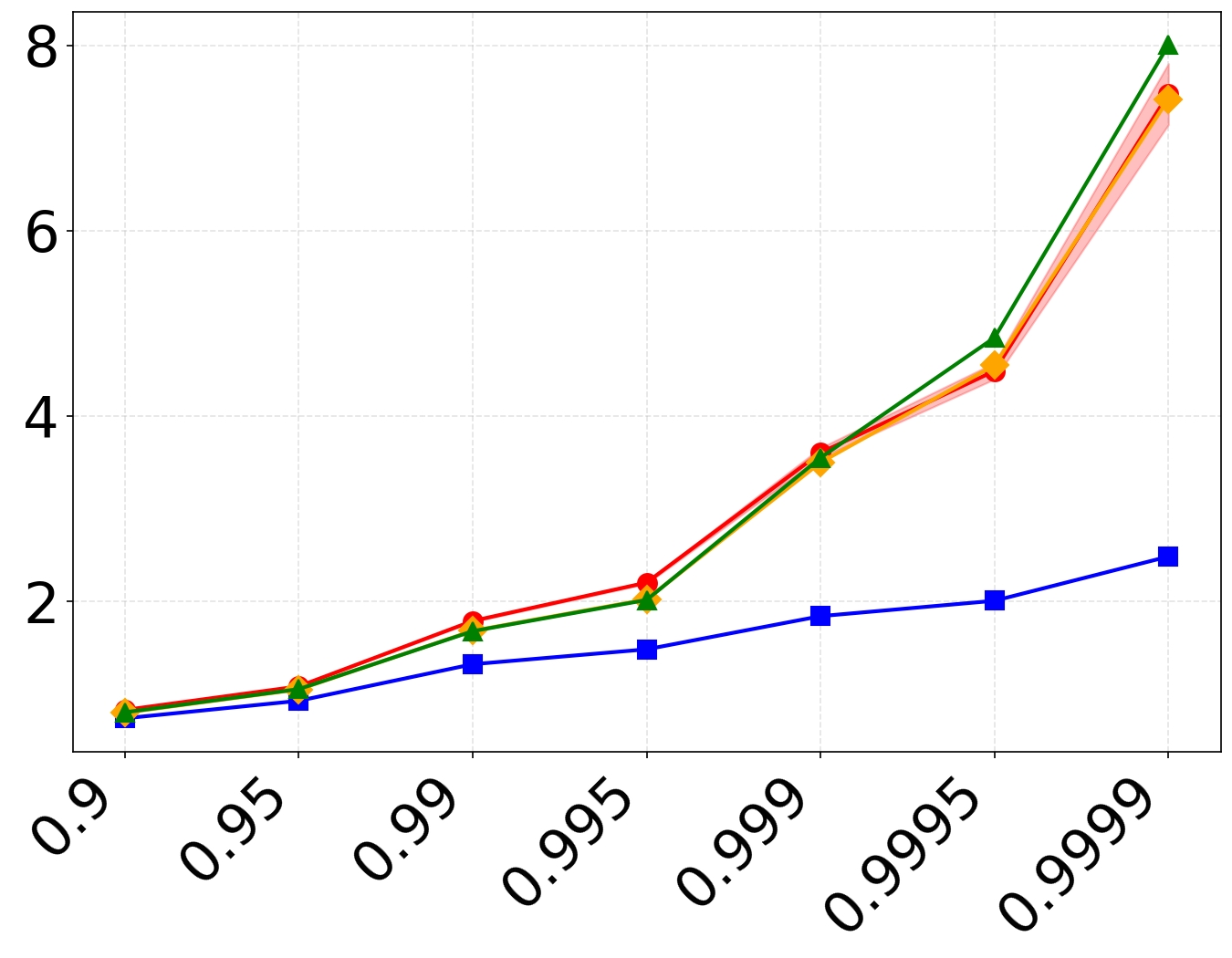}
        \caption{$\sigma_T = 1.055$}
    \end{subfigure}
    \hfill
    \begin{subfigure}[b]{0.32\textwidth}
        \centering
        \includegraphics[width=\textwidth]{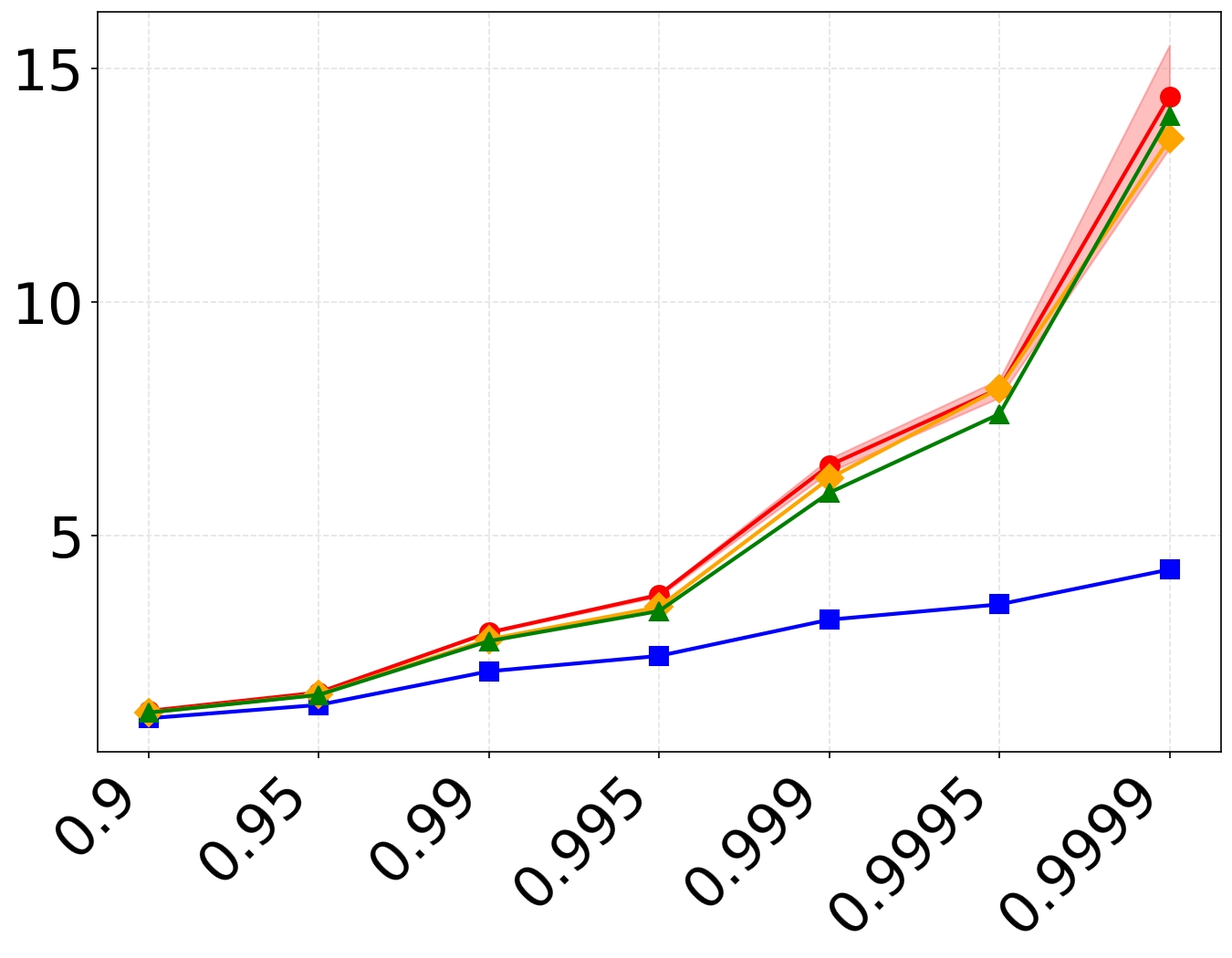}
        \caption{$\sigma_T = 1.244$}
    \end{subfigure}
    \caption{Tail quantiles (0.9--0.9999). \textcolor{blue}{$p_\infty$}, 
    \textcolor{orange}{$p_T$}, \textcolor{OliveGreen}{$p_\theta$}, \textcolor{red}{$\mudata$}. 
    Trained $\text{NN}_{\theta}$ denoiser, $10^4$ samples.}
    \label{fig:ht_quantile_main_NN}
\end{figure*}

\section{Discussion and Conclusion}
We introduced a theoretical framework for diffusion denoising that operates without
assumptions on the target distribution, identifying initialization and early stopping
as the two ingredients that decouple the convergence guarantees from the regularity of
the data. As a corollary, score-based generative models become applicable to heavy-tailed
targets, a property of growing relevance in scientific domains where diffusion models are
increasingly used. Along the way, we also established a convergence result for normalizing
flows that holds across a broad class of target distributions, including heavy-tailed ones.

This work provides a principled justification for initialization-aware sampling, a framework
previously employed heuristically, and opens several directions for future research:
(i) understanding when and how the denoiser can be accurately learned under heavy-tailed
targets; (ii) designing normalizing flows better suited to learning smoothed heavy-tailed
distributions; and (iii) deepening the theoretical understanding of flow training on
convolved distributions.

Concerning (i), our synthetic experiments show that for short-horizon diffusion sampling, a conventionally
trained denoiser achieves strong performance in reproducing both the bulk and the tail of
the target distribution we chose. However, whether this extends beyond the specific setting
considered (time horizon, network architecture, and low-dimensional regime) remains an
open question requiring further investigation.

Concerning (ii), the literature on normalizing flows for heavy-tailed distributions is already broad, yet most
existing approaches have been validated only on low-dimensional toy datasets.
Among existing methods, Comet Flow~\cite{mcdonald2022cometflows} stands out for its
tail-aware design: it maps data into a tractable domain $[0,1]^d$ before fitting a flow,
achieving sophisticated tail reproduction, though its main limitation lies in handling complex
multimodal or discrete distributions.
Our framework substantially simplifies the target distribution that the flow must learn.
Combining this with a tail-aware architecture such as Comet Flow, further empowered by a
high-capacity backbone such as TarFlow~\cite{zhai2025normalizing}, may represent a viable
path toward unconditional heavy-tailed generation in high-dimensional and image-like settings.

Concerning (iii), our experiments suggest that the dynamic-noise training strategy consistently improves flow
quality over fixed-noise training. We conjecture that noise-based smoothing benefits
training not only by regularizing the target density, but also by enabling effective data
augmentation. A theoretical understanding of this gap remains an open question.

\bibliographystyle{plainnat} 
\bibliography{biblio}
\clearpage

\section{Proofs of Results of \cref{thm:KL_bound}}
\label{appendix:proofs}

In this section, we provide the proofs of our main results. The presentation is organized as follows.
\begin{itemize}
    \item Theorem~\ref{thm:kl-control} in Section~\ref{sec:klcontrol} provides a bound on the KL divergence between solutions of SDEs. Variants of this result have been used in prior work (see, e.g., \cite{conforti2023kl,strasman2025analysis}). Since the precise statement and proof are not always presented in full detail in the literature, we provide a complete and self-contained proof tailored to our framework.    \item We discuss the existence and regularity of the forward and backward solutions in Section~\ref{sec:technical:res} and establish Lemmas~\ref{lem:regularity_VE} to \ref{lem:martingVE} required for the proof of our main result, Theorem~\ref{thm:KL_bound}.
    The explicit statement of the proof is also necessary to relax any sufficient hypothesis that is not necessary. 
    \item We prove Theorem~\ref{thm:KL_bound} in Section~\ref{sec:mainproof}.
    \item Additional results (Lemma~\ref{lem:conditional-properties} and Proposition~\ref{prop:KL-conditional}) required for the proof of Theorem \ref{thm:kl-control} are given in Section~\ref{sec:additional:technical}.
\end{itemize}

\paragraph{Additional notations. } If $\mu$ and $\nu$ are two measures on $ (\mathbb{R}^d, \mathcal{B}(\mathbb{R}^d))$, we write $\mu \ll \nu$ to state that $\mu$ is dominated by $\nu$. For all random variables $X$, we write $\mu_X$ the law of $X$ when there is no possible confusion.

\subsection{KL control of SDE solutions}
\label{sec:klcontrol}
\begin{theorem}(KL control for stochastic differential equation solutions)\label{thm:kl-control}
Consider the space of continuous functions
\( 
\mathcal{C}_T = \mathcal{C}([0,T],\mathbb{R}^d),
\)
endowed with the $\sigma$-algebra induced by the uniform convergence topology 
on the full product space $\prod_{t\in[0,T]} \R^d$. We consider the following SDE 
\begin{equation}\label{eq:SDE_X}
\rmd X_t = A_t(X)\,\rmd t + b_t\,\dbrown_t\eqsp,
\end{equation}
and
\begin{equation}\label{eq:SDE_Y}
\rmd Y_t = a_t(Y)\,\rmd t + b_t\,\dbrown_t\eqsp,
\end{equation}
where $A_t(X) = A(t,X_{[0,t]}), a_t(Y) = a(t,Y_{[0,t]})$ and $b_t$ is a strictly positive noise schedule.
We suppose the existence of strong solutions $X$ and $Y$ to respectively \cref{eq:SDE_X} and \cref{eq:SDE_Y} initialized with $X_0, Y_0$ on $ (\mathbb{R}^d, \mathcal{B}(\mathbb{R}^d))$ such that
\(
\mu_{X_0} \ll \mu_{Y_0}\eqsp.
\)
For all $x\in\R^d$, we also suppose the existence of $X^x$ and $Y^x$,  strong solutions to \cref{eq:SDE_X} and \cref{eq:SDE_Y} initialized at $x$. Assume that for $\mu_{X_0}$-almost every $x$
\begin{equation}
\label{eq:nov:shyriaev}
\mathbb{P}\!\left(
\int_0^T \frac{1}{b_s^2}
\bigl(\|A_s(X^x)\|^2 + \|a_s(X^x)\|^2\bigr)\,\rmd s
< \infty
\right) 
= 1\eqsp.
\end{equation}
Then, \(\mu_X \ll \mu_Y\)  and 
\begin{align*}
\kl{\mu_X}{\mu_Y} = \kl{\mu_{X_0}}{\mu_{Y_0}}  + \E \Bigg[\frac{1}{2} \int_0^T \frac{1}{b_s^2}
\,\!\|A_s(X) - a_s(X)\|^2\,\rmd s \Bigg].
\end{align*}
\end{theorem}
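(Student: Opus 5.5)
We will prove the statement in two stages. First we handle paths started from a fixed point $x$ using Girsanov's theorem in Liptser--Shiryaev form. Then we integrate over the initial law by the chain rule for KL divergence, i.e. disintegration with respect to $X_0$. The remaining results flagged for \cref{sec:additional:technical} (\cref{lem:conditional-properties} and \cref{prop:KL-conditional}) play exactly this role: they identify the regular conditional laws of $X$ and $Y$ given the initial point with the laws of $X^x$ and $Y^x$. This gives $\kl{\mu_X}{\mu_Y} = \kl{\mu_{X_0}}{\mu_{Y_0}} + \int \kl{\mu_{X^x}}{\mu_{Y^x}}\,\mu_{X_0}(\rmd x)$ as soon as $\mu_{X^x}\ll\mu_{Y^x}$ for $\mu_{X_0}$-a.e.\ $x$. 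The paths $X^x$ and $Y^x$ are well defined by the assumed existence of strong solutions, and measurability in $x$ comes from uniqueness in law of strong solutions.

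\textbf{Fixed initial point.} We work with $x$ fixed. Since $b_t>0$ is deterministic, we set $\beta_s(\omega) := b_s^{-1}(A_s(\omega)-a_s(\omega))$ on the canonical space $\mathcal{C}_T$. We invoke the Liptser--Shiryaev absolute continuity theorem for diffusion-type processes with the same diffusion coefficient. Its hypothesis is that $\int_0^T b_s^{-2}(\|A_s\|^2+\|a_s\|^2)\,\rmd s<\infty$ almost surely under the law of $X^x$, which is exactly \cref{eq:nov:shyriaev}. It gives $\mu_{X^x}\ll\mu_{Y^x}$ with density
\[
\frac{\rmd\mu_{X^x}}{\rmd\mu_{Y^x}}(X^x)=\exp\Big(\int_0^T \beta_s(X^x)^\top \dbrown_s+\tfrac12\int_0^T\|\beta_s(X^x)\|^2\rmd s\Big),
\]
where the stochastic integral is driven by the Brownian motion of the equation for $X^x$.

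Taking the logarithm and the expectation under $\mu_{X^x}$ gives the KL. The stochastic integral should contribute zero, but it is only a local martingale. We localize with $\tau_n=\inf\{t:\int_0^t\|\beta_s\|^2\rmd s\ge n\}$, so that the stopped integrals are true martingales with zero mean. This yields $\kl{\mu_{X^x}|_{\mathcal{F}_{\tau_n}}}{\mu_{Y^x}|_{\mathcal{F}_{\tau_n}}} = \tfrac12\E\int_0^{\tau_n}\|\beta_s(X^x)\|^2\rmd s$. We then let $n\to\infty$. The right-hand side converges by monotone convergence. On the left, $\tau_n\uparrow T$ almost surely by \cref{eq:nov:shyriaev}, and the KL on the increasing $\sigma$-algebras $\mathcal{F}_{\tau_n}$ converges to the KL on $\mathcal{F}_{T}$. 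This convergence holds by lower semicontinuity together with monotonicity of KL under restriction to sub-$\sigma$-algebras, i.e.\ the martingale convergence of relative entropy. The resulting identity is valid even when both sides are infinite.

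\textbf{Integration.} Substituting the fixed-point identity into the chain rule, and using Fubini with the conditional representation $X\mid X_0=x \sim X^x$, gives the stated formula $\kl{\mu_X}{\mu_Y}=\kl{\mu_{X_0}}{\mu_{Y_0}}+\E[\tfrac12\int_0^T b_s^{-2}\|A_s(X)-a_s(X)\|^2\rmd s]$. Absolute continuity $\mu_X\ll\mu_Y$ follows from $\mu_{X_0}\ll\mu_{Y_0}$ combined with a.e.\ conditional absolute continuity.

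\textbf{Main obstacle.} We expect the main obstacle to be the passage from the local martingale to the exact identity without a Novikov-type condition. We need equality, not just an upper bound, and must allow the infinite case. The localization-plus-entropy-convergence argument handles this, but it must be carried out carefully on filtrations of the canonical path space. A secondary technical point is the measurable selection of $x\mapsto\mu_{X^x}$. We plan to obtain it from pathwise uniqueness via Yamada--Watanabe, which would be packaged in \cref{lem:conditional-properties}.
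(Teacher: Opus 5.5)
Your proposal is correct and follows the same route as the paper: Liptser--Shiryaev (Theorem 7.20) for each fixed starting point $x$, then the KL chain rule of Proposition~\ref{prop:KL-conditional} applied to the projection onto the initial value, then Fubini. One correction to your reading of the appendix: Lemma~\ref{lem:conditional-properties} and Proposition~\ref{prop:KL-conditional} only give the abstract disintegration identity. The paper simply asserts the identification $\mu_{X\mid X_0}(\cdot\mid x)=\mu_{X^x}$ in the main proof, implicitly relying on the uniqueness that you make explicit.

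The one substantive difference is your localization by $\tau_n$. The paper skips it and sets $\E\big[\int_0^T \beta_s^\top \dbrown_s\big]=0$ directly, which as written requires $\E\int_0^T\|\beta_s\|^2\,\rmd s<\infty$, i.e.\ that the stochastic integral be a true martingale. Your version is more rigorous and also covers the infinite case. When you write it out, note that $\tau_n\uparrow T$ holds only $\mu_{X^x}$-a.s. The KL on $\bigvee_n\mathcal{F}_{\tau_n}$ still equals the KL on $\mathcal{F}_T$, because the density $\rmd\mu_{X^x}/\rmd\mu_{Y^x}$ vanishes $\mu_{Y^x}$-a.s.\ on the complementary event $\{\tau_n<T\ \forall n\}$.
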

\begin{proof}
Using that the condition \( b_t > 0 \) ensures that \( (A_t(X) - a_t(X))/b_t \) is well defined, that there exist strong solutions to \eqref{eq:SDE_X} and \eqref{eq:SDE_Y} and by  \eqref{eq:nov:shyriaev}, for $\mu_{X_0}$-almost every $x$, we can apply \citet[Theorem 7.20]{liptser2001statistics} with $\xi = X^x$ and $\eta = Y^x$. Therefore,  $\mu_{X^x} \ll \mu_{Y^x}$ and
\[
\frac{\rmd \mu_{X^x}}{\rmd \mu_{Y^x}}(X^x)
= \exp\Biggl\{
\int_0^T \frac{A_t(X^x) - a_t(X^x)}{b_t^2} \, \rmd X^x_t - \int_0^T \frac{(A_t(X^x) - a_t(X^x))^T \cdot (A_t(X^x) + a_t(X^x))}{2 \, b_t^2} \rmd t
\Biggr\}\eqsp.
\]
The conditional law of $X$ given $X_0 = x$, 
written \ $\mu_{X\mid X_0}(\cdot \mid x)$, coincides with $\mu_{X^x}$, 
and similarly $\mu_{Y\mid Y_0}(\cdot \mid x)$ coincides with $\mu_{Y^x}$.  
Hence, for $\mu_{X_0}$-almost every $x$, we have
\[
\mu_{X\mid X_0}(\cdot \mid x) \ll \mu_{Y\mid Y_0}(\cdot \mid x)\eqsp.
\]
From Proposition \ref{prop:KL-conditional} applied to $E = \mathcal{C}_T$, $F = \R^d$ 
and $T$ the projection that associates to the solution its initialization we can state that $\mu_X \ll \mu_Y$ and 
\[
\kl{\mu_X}{\mu_Y} = \kl{\mu_{X_0}}{\mu_{Y_0}} 
+
\E\left[
\log\left(
\frac{\mathrm d\mu_{X\mid X_0}}{\mathrm d\mu_{Y\mid Y_0}}(X\mid X_0)
\right)
\right]\eqsp.
\]
In order to conclude the proof we have to show that 
\[
\E\left[
\log\left(
\frac{\mathrm d\mu_{X\mid X_0}}{\mathrm d\mu_{Y\mid Y_0}}(X\mid X_0)
\right)
\right] =  \E \Bigg[\frac{1}{2} \int_0^T \frac{1}{b_s^2}
\|A_s(X) - a_s(X)\|^2\,\rmd s \Bigg]\eqsp.
\]
We have
\begin{align}
\E\left[
\log\left(
\frac{\mathrm d\mu_{X\mid X_0}}{\mathrm d\mu_{Y\mid Y_0}}(X\mid X_0)
\right)
\right]
&= \int_{\mathcal{C}_T \times \R^d} 
\log\left(
\frac{\mathrm d\mu_{X\mid X_0}}{\mathrm d\mu_{Y\mid Y_0}}(x \mid x_0)
\right) \, \rmd \mu_{X, X_0}(x, x_0) \nonumber\\
&= \int_{\R^d} \int_{\mathcal{C}_T} 
\log\left(
\frac{\mathrm d\mu_{X\mid X_0}}{\mathrm d\mu_{Y\mid Y_0}}(x \mid x_0)
\right) \, \rmd \mu_{X\mid X_0}(x \mid x_0) \, \rmd \mu_{X_0}(x_0) \nonumber\\
&= \int_{\R^d} 
\E \left[
\log \frac{\mathrm d\mu_{X^{x_0}}}{\mathrm d\mu_{Y^{x_0}}}(X^{x_0})
\right] \, \rmd \mu_{X_0}(x_0)\eqsp.
\end{align}
We can also write
\begin{align}
\E \left[
\log \frac{\rmd \mu_{X^{x_0}}}{\rmd \mu_{Y^{x_0}}}(X^{x_0})
\right]
&= \E \left[ 
\int_0^T \frac{(A_t(X^x) - a_t(X^x))^T A_t(X^x)}{b_t^2} \, \rmd t
+ \int_0^T \frac{A_t(X^x) - a_t(X^x)}{b_t} \, \dbrown_t \right.\nonumber\\
&\hspace{4cm} \left.- \int_0^T \frac{(A_t(X^x) - a_t(X^x))^T(A_t(X^x) + a_t(X^x))}{2 \, b_t^2} \, \rmd t
\right] \nonumber\\
&= \E \left[
\int_0^T \frac{\|A_t(X^x) - a_t(X^x)\|^2}{2 \, b_t^2} \, \rmd t
\right]\eqsp.
\end{align}
So, we have
\begin{align}
\E\left[
\log\left(
\frac{\rmd \mu_{X\mid X_0}}{\rmd \mu_{Y\mid Y_0}}(X\mid X_0)
\right)
\right]
&= \int_{\R^d} 
\E \left[
\int_0^T \frac{\|A_t(X^x) - a_t(X^x)\|^2}{2 \, b_t^2} \, \rmd t
\right] \, \rmd \mu_{X_0}(x_0) \nonumber \\
&= \int_{\R^d}  \int_{\mathcal{C}_T} \int_0^T \frac{\|A_t(x) - a_t(x)\|^2}{2 \, b_t^2} \, \rmd t\, \rmd \mu_{X^{x_0}}(x) \, \rmd \mu_{X_0}(x_0) \nonumber \\
&= \int_{\R^d}  \int_{\mathcal{C}_T} \int_0^T \frac{\|A_t(x) - a_t(x)\|^2}{2 \, b_t^2} \, \rmd t \,\rmd \mu_{X \mid X_0}(x \mid x_0) \, \rmd \mu_{X_0}(x_0) \nonumber \\
&= \int_{\mathcal{C}_T \times \R^d}  \int_0^T \frac{\|A_t(x) - a_t(x)\|^2}{2 \, b_t^2} \, \rmd t \,\rmd \mu_{X , X_0}(x , x_0)  \nonumber \\
&= \int_{\mathcal{C}_T}  \int_0^T \frac{\|A_t(x) - a_t(x)\|^2}{2 \, b_t^2} \, \rmd t \,\rmd \mu_{X}(x)  \nonumber \\
&= \E \left[\frac{1}{2} \int_0^T \frac{1}{b_s^2}\,\!\|A_s(X) - a_s(X)\|^2\,\rmd s \right]\eqsp,
\end{align}
which concludes the proof.
\end{proof}

\subsection{Lemmas required for  Theorem \ref{thm:KL_bound}}
\label{sec:technical:res}

Following Theorem 5.2.1 in \citet[Theorem 2.5, Theorem 2.9]{karatzas1991brownian}, it is easy to show that for any random variable $\state_0: \Omega \rightarrow \R^d$ 
the existence and uniqueness of a solution to \cref{eq:VE_SDE} are guaranteed. 
Concerning the regularity of the forward density $\pforward{t}$ of \cref{eq:VE_SDE}, for our work, we need it to be in $C^2 ( [\delta, T ] \times \R^d )$ for any $\delta>0$ . We show in Lemma \ref{lem:regularity_VE} the regularity of the solution density. 

\begin{lemma} \label{lem:regularity_VE}
    In the VE setting \eqref{eq:VE_SDE}, $(t,x)\mapsto \pforward{t}(x)$ is in $C^{\infty}((0,T] \times \R^d)$.
\end{lemma}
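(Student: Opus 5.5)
The plan is to write the density explicitly as a Gaussian convolution and differentiate under the integral sign. Since $\state_t \overset{\mathcal{L}}{=} \state_0 + \sigma_t Z$ with $Z \sim \mathcal{N}(0,\Id_d)$ independent of $\state_0$, for every $t>0$ (where $\sigma_t>0$ because $\schedulerVE$ is increasing and not identically zero; if $\schedulerVE_0=0$ we still have $\sigma_t>0$ for $t>0$ unless $\schedulerVE$ vanishes on $[0,t]$, which we exclude) we get
\[
\pforward{t}(x) = \int_{\R^d} \gausspdf_{\sigma_t^2}(x-y)\,\mudata(\rmd y), \qquad \gausspdf_{s}(z) \eqdef (2\pi s)^{-d/2}\exp\!\left(-\frac{\|z\|^2}{2s}\right).
\]
This is the density of $\state_t$, by Fubini applied to the joint law of $(\state_0,Z)$. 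The map $t\mapsto\sigma_t^2=\int_0^t\schedulerVE_s^2\,\rmd s$ is $C^\infty$ because $\schedulerVE$ is, and it is strictly positive on $(0,T]$. Thus $(t,x)\mapsto \pforward{t}(x) = F(\sigma_t^2,x)$ with
\[
F(s,x) \eqdef \int \gausspdf_s(x-y)\,\mudata(\rmd y),
\]
and by the chain rule it suffices to prove $F\in C^\infty((0,\infty)\times\R^d)$.

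To prove this, I observe that the integrand $(s,x)\mapsto\gausspdf_s(x-y)$ is $C^\infty$ on $(0,\infty)\times\R^d$ for each fixed $y$. Any mixed derivative $\partial_s^j\partial_x^\alpha \gausspdf_s(z)$, with $z=x-y$, has the form $P_{j,\alpha}(s^{-1},z)\,\gausspdf_s(z)$, where $P_{j,\alpha}$ is a polynomial. This follows by induction: each $\partial_s$ or $\partial_x$ produces polynomial factors in $z$ and $s^{-1}$ (including from the prefactor $s^{-d/2}$).

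The key step, and the only real obstacle, is obtaining a dominating function that is integrable with respect to $\mudata$ uniformly on compact sets $K=[s_0,s_1]\times \overline{B(0,R)}$ with $s_0>0$. This must work with no moment assumption on $\mudata$. Since $\mudata$ is a probability measure, a \emph{bounded} dominating function suffices. On $K$, we have $|P_{j,\alpha}(s^{-1},z)|\le C(1+\|z\|^{m})$ for some constants $C,m$ depending only on $s_0,s_1,j,\alpha$. We also have $\gausspdf_s(z)\le (2\pi s_0)^{-d/2}e^{-\|z\|^2/(2s_1)}$. Hence
\[
\sup_{(s,x)\in K}\big|\partial_s^j\partial_x^\alpha \gausspdf_s(x-y)\big| \le C\sup_{z\in\R^d}(1+\|z\|^m)e^{-\|z\|^2/(2s_1)} <\infty ,
\]
uniformly in $y$.

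With this bound, dominated convergence justifies differentiating under the integral repeatedly, with mean-value arguments for the difference quotients on $K$. Dominated convergence also gives continuity of each derivative. Therefore all partial derivatives of $F$ exist and are continuous on $(0,\infty)\times\R^d$, so $F\in C^\infty$, and composing with the smooth map $t\mapsto(\sigma_t^2,x)$ yields the claim on $(0,T]\times\R^d$.

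Positivity of $\pforward{t}$, needed later for the score, also follows immediately from the integral representation.
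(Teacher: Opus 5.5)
Your proposal is correct and follows essentially the same route as the paper. Both arguments write $\pforward{t}$ as a Gaussian convolution against $\mudata$ and observe that every derivative of the heat kernel is a polynomial times the kernel, hence bounded uniformly in $y$ once $\sigma_t$ stays away from zero; they then differentiate under the integral by dominated convergence, using only that $\mudata$ is a probability measure. Your reparametrization through $s=\sigma_t^2$ and your explicit handling of mixed derivatives and of the continuity of each derivative are more careful than the paper's sketch, but the argument is the same.
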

\begin{proof}
    The density admits the representation
    \[
        \pforward{t}(x) = \int_{\mathbb{R}^d} \, q_t(x \mid y) \, \rmd \mudata(y),
    \]
    where $q_t(x \mid y) = (2\pi\sigma_t^2)^{-d/2} \exp\!\left(-\frac{\|x-y\|^2}{2\sigma_t^2}\right)$.
    We show $\pforward{t}(x)$ is $C^1$ in $(t,x)$; higher-order regularity follows by the same argument, since every higher partial derivative of $q_t(x\mid y)$ is again a product of $q_t(x\mid y)$ with a polynomial in $(x-y)/\sigma_t$ and a power of $\sigma_t^{-1}$, all of which remain bounded by the same reasoning.
    For showing the differentiability in $x$, direct computation gives
    \[
        \partial_{x_i} q_t(x \mid y) = -\frac{x_i - y_i}{\sigma_t^2}\, q_t(x \mid y).
    \]
    On any compact set $K \subset (0,T]$, $\sigma_t$ is bounded away from zero, so
    \[
        \left|\partial_{x_i} q_t(x \mid y)\right|
        \leq \frac{|x_i - y_i|}{\sigma_t}\cdot \frac{1}{\sigma_t} q_t(x\mid y)
        \leq \frac{C}{\sigma_t^{d+1}},
    \]
    using that $z \mapsto z\, e^{-z^2/2}$ is bounded. Since $\mudata$ is a probability measure, dominated convergence applies and $x \mapsto \pforward{t}(x)$ is $C^1$.
    For showing differentiability in $t$, a direct computation gives
    \[
        \partial_t q_t(x \mid y) = \left(\frac{\|x-y\|^2}{2\sigma_t^3} - \frac{d}{\sigma_t}\right)\dot\sigma_t\, q_t(x \mid y).
    \]
    The same bound applies: $z \mapsto z^2 e^{-z^2/2}$ is bounded, so $|\partial_t q_t(x\mid y)|$ is uniformly bounded in $y$, and dominated convergence gives differentiability in $t$.
    The same argument applies to all higher-order derivatives, using that $z \mapsto z^k e^{-z^2/2}$ is bounded for every $k \in \mathbb{N}$, which concludes the proof.
\end{proof}
\begin{proposition}\label{lem:existence-backward}
On the fixed space $(\Omega, \mathcal{F}, \mathbb{P})$, for any $\mudata,  T>0$, it always exists a strong solution $\state = \{\state_t\}_{t \in [0,T]}$ of \cref{eq:VE_SDE}. Furthermore for any $\delta>0$ it exists a brownian $\tilde{B}_t$ such that the process $\back{}=\{\back{t}\}_{t \in [0, T-\delta]}=\{\state_t\}_{t \in [\delta, T]}$ satisfies 
\begin{equation}
\back{t} = \back{0} + \barg_t^2\, \backscore{t} \rmd t + \barg_t\, \rmd \tilde{B}_t \eqsp ,
\end{equation}
for all $t \in [0, T-\delta]$.
Therefore $\back{}$ it is a weak solution to the backward equation 
\begin{equation}
    \rmd \back{t}= \barg_t^2\, \backscore{t} \rmd t + \barg_t\, \rmd W_t \eqsp ,
\end{equation}
initialized with $\back{0}=\state_T$.
Furthermore given a trained score $\trainedscore{t}$ and the discretized 
\begin{equation}
\rmd \trainedsolution{t}=  \barg_t^2 \trainedscore{t_k} \, \rmd t + \barg_t \, \rmd\tilde{\brown}_t\eqsp,
\end{equation}
for $t \in [t_k, t_{k+1}]$, where $t_{k+1} = t_k + h_{k+1}$, with $t_0 = 0$ and $t_N = T-\delta$ initialized with an independent $\trainedsolution{t_0} \sim \discsolution[\param]{0}$ 
the solution can be written in law as
\begin{equation}
\trainedsolution{t_{k+1}}= \trainedsolution{t_{k}} + (\sigma^2_{T-t_{k}}-\sigma^2_{T-t_{k+1}}) \trainedscore{t_k} + \left(\sigma^2_{T-t_{k}}-\sigma^2_{T-t_{k+1}}\right)^{\frac{1}{2}}Z \eqsp,
\end{equation}
with $k = 0,...,N-1$, $Z \sim \mathcal{N}(0, \Id_d)$.
\end{proposition}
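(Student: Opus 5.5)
Three parts, handled in order: existence of the forward solution, time reversal on $[\delta,T]$, and the law of the discretized scheme.

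\textbf{Forward solution.} \cref{eq:VE_SDE} has zero drift and a deterministic, bounded diffusion coefficient $\schedulerVE_t$. The coefficients are therefore trivially Lipschitz and of linear growth, and \citet[Theorems 2.5, 2.9]{karatzas1991brownian} give a pathwise unique strong solution for any $\mathcal{F}_0$-measurable $\state_0$. No integrability of $\state_0$ is needed, since the solution is explicitly $\state_t=\state_0+\int_0^t \schedulerVE_s\,\dbrown_s$. So $\state_t=\state_0+\sigma_t Z$ in law with $Z$ independent of $\state_0$, and \cref{lem:regularity_VE} gives the smoothness of $\pforward{t}$ on $(0,T]\times\R^d$.

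\textbf{Time reversal.} I would invoke the time-reversal theorem of \cite{haussmann1985time} for diffusions $\rmd X=b\,\rmd t+\sigma\,\rmd B$ on an interval. Applied on $[\delta,T]$, its key hypothesis is local integrability of the density and its gradient:
\[
\int_{\delta}^{T}\int_{K} \bigl(\pforward{t}(x)^2+\|\nabla\pforward{t}(x)\|^2\bigr)\,\rmd x\,\rmd t<\infty
\quad\text{for every bounded open set } K,
\]
in its formulation with diagonal, $x$-independent diffusion.

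To verify this I would reuse the bounds in the proof of \cref{lem:regularity_VE}:
\[
\pforward{t}(x)\le(2\pi\sigma_\delta^2)^{-d/2},\qquad
\|\nabla\pforward{t}(x)\|\le C\sigma_\delta^{-d-1},
\]
uniformly in $(t,x)\in[\delta,T]\times\R^d$, valid for any $\mudata$ because $\mudata$ is a probability measure. In fact even global $L^1$ bounds hold: $\int\|\nabla\pforward{t}\|\le \E\|Z\|/\sigma_t$. These bounds give the integrability condition directly, and starting at $\delta>0$ is exactly what makes them uniform. The theorem then says that $\back{t}=\state_{T-t}$ is a Markov diffusion with drift
\[
-b+\frac{1}{\pforward{}}\nabla\cdot\bigl(\schedulerVE^2\pforward{}\bigr)=\barg_t^2\,\nabla\log\pforward{T-t},
\]
and diffusion $\barg_t$. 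Hence the process
\[
\tilde{B}_t:=\int_0^t \barg_s^{-1}\Bigl(\rmd\back{s}-\barg_s^2\,\backscore{s}\,\rmd s\Bigr)
\]
is a continuous local martingale with quadratic variation $t\,\Id$ in the filtration of $\back{}$. Since $\barg>0$ on $[0,T-\delta]$ (increasing and positive away from $0$), Lévy's characterization makes $\tilde B$ a Brownian motion on the same $(\Omega,\calF,\prob)$. This yields the backward equation and the weak-solution statement with $\back{0}=\state_T$.

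\textbf{Discretized scheme.} On $[t_k,t_{k+1}]$ the drift $\barg_t^2\,\trainedscore{t_k}$ is frozen at $\trainedsolution{t_k}$. Integrating exactly gives the drift term $\bigl(\int_{t_k}^{t_{k+1}}\barg_t^2\,\rmd t\bigr)s_\theta(T-t_k,\trainedsolution{t_k})$, with
\[
\int_{t_k}^{t_{k+1}}\barg_t^2\,\rmd t=\sigma^2_{T-t_k}-\sigma^2_{T-t_{k+1}}
\]
by the change of variables $u=T-t$, plus the Wiener integral $\int_{t_k}^{t_{k+1}}\barg_t\,\rmd\tilde B_t$.

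The point that needs care, and the main obstacle, is that $\tilde B$ is a Brownian motion only with respect to the filtration generated by $\back{}$ (and the independent initialization). I would show that $\tilde B$ is independent of $\back{0}=\state_T$, which follows from the martingale characterization: $\tilde B$ is Brownian in a filtration to which $\back{0}$ is measurable at time $0$. This same property makes each increment $\int_{t_k}^{t_{k+1}}\barg\,\rmd\tilde B$ independent of $\mathcal{F}_{t_k}$, hence of $\trainedsolution{t_k}$. The increment is Gaussian with covariance $(\sigma^2_{T-t_k}-\sigma^2_{T-t_{k+1}})\Id$.

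For $\trainedsolution{}$ one enlarges the filtration by the independent initialization $\trainedsolution{0}\sim\discsolution[\param]{0}$. Independence keeps $\tilde B$ Brownian under the enlarged filtration, so a strong solution exists; it is explicit, given by the recursion. Induction over $k$ then shows the joint law of $(\trainedsolution{t_k})_k$ equals that of the recursion driven by fresh i.i.d.\ $Z\sim\mathcal{N}(0,\Id_d)$, which is the claimed representation.
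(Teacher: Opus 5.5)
Your proposal is correct and follows the paper's proof. In both, the forward existence comes from Karatzas--Shreve. The time reversal comes from Haussmann--Pardoux, whose local $L^2$ integrability condition on $\vec{p}_t$ and $\nabla\vec{p}_t$ is checked via the uniform bounds on $[\delta,T]$ from the regularity lemma. The discretized scheme is handled by the same filtration/independence argument for the Gaussian increments.

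Your explicit construction of $\tilde{B}$ through L\'evy's characterization, using $\bar{g}>0$, only spells out what the paper leaves implicit in the citation to Haussmann--Pardoux.
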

\begin{proof}
The strong existence and uniqueness of the forward is a basic consequence of \citep[Theorem 2.5, Theorem 2.9]{karatzas1991brownian}
To show this result we use \citet[Theorem 2.1, Corollary 2.2]{haussmann1985time}. First, the term $\schedulerVE_t$ in Equation \eqref{eq:VE_SDE} trivially satisfies lipschitzianity in $x$ and sublinearity because it is independent of $x$ and bounded. 
Moreover, its forward solution admits a density $\pforward{t}(x)$ for all $0 < t \leq T$ that it is strongly regular for all $\delta>0$ on $[\delta, T]\times \R^d$. 
To satisfy the other hypothesis of \cite{haussmann1985time}, it suffices to prove that, for any $t_0 > 0$ and any bounded open set $A \subset \mathbb{R}^d$, $\int_{t_0}^T \int_A \pforward{t}(x)^2 + \schedulerVE_t^2 \|\nabla \pforward{t}(x)\|^2 \, \rmd x \, \rmd t < +\infty$. As in Lemma \ref{lem:regularity_VE}, $\pforward{t}(x)$ and $\nabla \pforward{t}(x)$ are bounded, and $\schedulerVE_t$ is bounded on any compact interval $[t_0,T]$ because it is continuous. Hence, the integral is finite, and the conclusion follows.
The non-trivial point that requires a precise discussion on the discretized backward is that, although $\trainedsolution{t_k}$ is built using the same Brownian motion $\tilde{B}$ that drives the noise increment on $[t_k, t_{k+1}]$, one can still replace the stochastic integral with an independent Gaussian sample $Z$: this requires checking that $\trainedsolution{t_k}$ and $\int_{t_k}^{t_{k+1}} \barg_t \, \rmd \tilde{B}_t$ are independent, which we now verify via a filtration argument. Concerning the discretization, let $\tilde{\mathcal{F}}_t := \sigma(\back{s} : s \le t) \vee \sigma(\trainedsolution{t_0})$ be the filtration generated 
by the backward process and the (independent) initialization, 
with respect to which $\tilde{B}_t$ remains a Brownian motion. 
By induction, $\trainedsolution{t_k}$ is 
$\tilde{\mathcal{F}}_{t_k}$-measurable: $\trainedsolution{t_0}$ is so by construction, and each step adds only $\tilde{\mathcal{F}}_{t_{k+1}}$-measurable quantities. The integral $\int_{t_k}^{t_{k+1}} \barg_t \, \rmd \tilde{B}_t$ has a deterministic integrand, hence by It\^o's isometry it is centered Gaussian with variance $\int_{t_k}^{t_{k+1}} \barg_t^2 \, \rmd t = \sigma_{T-t_k}^2 - \sigma_{T-t_{k+1}}^2$, and depending only on the increments $\{\tilde{B}_s - \tilde{B}_{t_k} : s \in [t_k, t_{k+1}]\}$ it is independent of $\tilde{\mathcal{F}}_{t_k}$, and therefore of $\trainedsolution{t_k}$. It can thus be written in law as $(\sigma_{T-t_k}^2 - \sigma_{T-t_{k+1}}^2)^{1/2} Z$ with $Z \sim \mathcal{N}(0, \Id_d)$ independent of $\trainedsolution{t_k}$, which yields the claimed recursion.
\end{proof}
The regularity result is necessary to obtain this generalized version of the Fokker-Planck equation, necessary for the analysis of the dynamics of the score.
\begin{lemma}[Fokker–Planck Equation]\label{lem:F-P}
We have, for all $t \in (0,T]$ and $x \in \R^d$, 
\[
\partial_t \pforward{t}(x) = \frac{\schedulerVE_t^2}{2} \Delta \pforward{t}(x) \eqsp .
\]
\end{lemma}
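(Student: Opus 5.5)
The plan is to verify the heat-type equation directly on the convolution representation used in Lemma~\ref{lem:regularity_VE}. Writing
\[
\pforward{t}(x) = \int_{\R^d} q_t(x\mid y)\,\rmd\mudata(y),\qquad q_t(x\mid y) = (2\pi\sigma_t^2)^{-d/2}\exp\!\Big(-\tfrac{\|x-y\|^2}{2\sigma_t^2}\Big),
\]
the first step is a pointwise computation on the Gaussian kernel. Using $\frac{\rmd}{\rmd t}\sigma_t^2 = \schedulerVE_t^2$, we have
\[
\partial_t q_t(x\mid y) = \frac{\schedulerVE_t^2}{2}\Big(\frac{\|x-y\|^2}{\sigma_t^4} - \frac{d}{\sigma_t^2}\Big) q_t(x\mid y).
\]
Differentiating $\partial_{x_i} q_t = -\frac{x_i-y_i}{\sigma_t^2} q_t$ once more and summing over $i$ gives
\[
\Delta_x q_t(x\mid y) = \Big(\frac{\|x-y\|^2}{\sigma_t^4} - \frac{d}{\sigma_t^2}\Big) q_t(x\mid y).
\]
Hence, for every fixed $y$ and every $t>0$, $\partial_t q_t(x\mid y) = \frac{\schedulerVE_t^2}{2}\Delta_x q_t(x\mid y)$. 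This identity is the heat equation in the time variable $\sigma_t^2/2$.

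The second step is to integrate this identity against $\mudata(\rmd y)$ and exchange derivatives and integral on both sides. This exchange is the only delicate point, because $\mudata$ is arbitrary and has no moments. The same domination argument as in Lemma~\ref{lem:regularity_VE} handles it. Fix $t\in(0,T]$ and a compact neighbourhood $[t_1,t_2]\subset(0,T]$ of $t$. On it $\sigma_s\geq\sigma_{t_1}>0$, and $\schedulerVE_s$ and $\sigma_s$ are bounded above. Both $|\partial_t q_s(x\mid y)|$ and $|\partial_{x_i}\partial_{x_j} q_s(x\mid y)|$ are therefore bounded by $C\sigma_{t_1}^{-d-2}\sup_{z}(1+z^2)e^{-z^2/2}$, uniformly in $y$ and $x$. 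A constant is integrable against the probability measure $\mudata$, so dominated convergence (differentiation under the integral) gives
\[
\partial_t\pforward{t}(x)=\int\partial_t q_t(x\mid y)\,\rmd\mudata(y),\qquad \Delta\pforward{t}(x)=\int\Delta_x q_t(x\mid y)\,\rmd\mudata(y).
\]
Combining these with the kernel identity yields the claim for all $t\in(0,T]$ and $x\in\R^d$.

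I expect no genuine obstacle beyond the justification of the interchange, which Lemma~\ref{lem:regularity_VE} already essentially provides via uniform bounds in $y$. The key observation is that no moment condition on $\mudata$ is ever needed, since all bounds are uniform in $y$ once $t$ stays away from $0$. At $t=T$ one uses a one-sided neighbourhood, or simply extends $\schedulerVE$ slightly past $T$.
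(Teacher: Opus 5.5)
Your argument is correct, but it takes a different route from the paper.

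The paper proves the lemma in weak form. It applies It\^o's formula to $\phi(\state_t)$ for a test function $\phi$ and differentiates $\E[\phi(\state_t)]$ in $t$. It then moves the time derivative inside $\int \phi\,\pforward{t}$ by dominated convergence, transfers the Laplacian from $\phi$ onto $\pforward{t}$ by Green's formula, and concludes because the resulting identity holds for every test function.

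You instead check the heat equation on the Gaussian kernel $q_t(\cdot\mid y)$ itself; your formulas for $\partial_t q_t$ and $\Delta_x q_t$ are right. You then carry the identity through the mixture over $\mudata$ by differentiating under the integral, using bounds uniform in $x$ and $y$ on a window $[t_1,t_2]$ with $\sigma_{t_1}>0$.

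Your route is elementary and self-contained, with no stochastic calculus. It gives the identity directly at every $(t,x)$, and it makes clear why no moment or tail condition on $\mudata$ is needed.

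The paper's weak argument relies on the same ingredients that you make explicit:
\begin{itemize}
\item a domination of $\phi\,\partial_s \pforward{s}$, to differentiate under the integral;
\item compact support of $\phi$, so that Green's formula leaves no boundary terms;
\item continuity of $\partial_t \pforward{t}$ and $\Delta \pforward{t}$ (from Lemma~\ref{lem:regularity_VE}), to pass from an identity tested against all $\phi$ to a pointwise one.
\end{itemize}
Its real advantage is that it is the generic generator-based derivation of Fokker--Planck. That derivation would still work for diffusions without an explicit transition density. For the drift-free VE equation, where $\pforward{t}$ is an explicit Gaussian convolution of $\mudata$, your direct computation is the cleaner proof.
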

\begin{proof}
We report Fokker Planck equation proof for completeness. Consider the forward equation \eqref{eq:VE_SDE}.
We know that the density of the solution is sufficiently regular and consider a test function $\phi : \R^d \to \R$. By Ito's Formula, 
\begin{equation}
    \E[\phi(\state_t)] - \E[\phi(\state_{0})] = \int_{0}^t  \E\left[\frac{\schedulerVE_t^2}{2} \Delta \phi(\state_t)\right]\rmd t \eqsp .
\end{equation}
Therefore,
\begin{equation}
    \frac{\rmd}{\rmd t} \E[\phi(\state_t)] = \E\left[\frac{\schedulerVE_t^2}{2} \Delta \phi(\state_t) \right] \eqsp .
\end{equation}
Using the dominated convergence theorem yields
\begin{equation}
\realint \phi(x) \, \frac{\partial}{\partial t} \pforward{t}(x) \, \rmd x
=  \frac{\schedulerVE_t^2}{2} \realint \Delta \phi(x) \, \pforward{t}(x)\rmd x\eqsp.
\end{equation}
Thanks to regularity of $\pforward{t}$ and using integration by parts for the first term on the right hand side and Green's Formula for the second we can write
\begin{equation}
    \realint \phi(x) \, \frac{\partial}{\partial t} \pforward{t}(x) \, \rmd x
    = \frac{\schedulerVE_t^2}{2} \realint  \phi(x) \, \Delta \pforward{t}(x)\rmd x\eqsp.
\end{equation}
The last equality is true for any test function which concludes the proof.
\end{proof}
Consider a sufficiently regular function $\ausiliar:\R^d \rightarrow \R^+/\{0\}$ and define for all $x\in\R^d$,
\begin{equation}
\label{eq:def:modifiedscore}
\tilde{p}_{t}(x) = \frac{\pforward{t}(x)}{\ausiliar(x)}\eqsp.
\end{equation}
\begin{lemma}
\label{lem:fokker-tilde}
For all $x \in \R^d$ and all $0 \leq t < T$,
\begingroup
\begin{multline*}
 \partial_t \tilde{p}_{T-t}(x)\, \ausiliar(x) + \frac{\barg_t^2}{2} \Big[
    \ausiliar(x)\, \Delta \tilde{p}_{T-t}(x) 
    + \tilde{p}_{T-t}(x)\, \Delta \ausiliar(x) 
    + 2\, \nabla \tilde{p}_{T-t}(x) \cdot \nabla \ausiliar(x)
\Big] = 0 \eqsp.
\end{multline*}
\endgroup
\end{lemma}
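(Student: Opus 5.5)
This follows directly from the Fokker–Planck equation of \cref{lem:F-P} after a time reversal and the substitution $\pforward{t} = \ausiliar\,\tilde{p}_t$. By \cref{lem:regularity_VE}, $(t,x)\mapsto \pforward{t}(x)$ is $C^\infty$ on $(0,T]\times\R^d$. Since $\ausiliar$ is sufficiently regular (at least $C^2$) and strictly positive, $\tilde{p}_t = \pforward{t}/\ausiliar$ is $C^2$ in $x$ and $C^1$ in $t$ on the same domain. All derivatives below are therefore classical. For $0\le t<T$ we have $T-t\in(0,T]$, so the time-reversed density stays in the region where \cref{lem:F-P} applies.

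\textbf{Step 1 (time reversal).} Set $u(t,x) \coloneqq \pforward{T-t}(x)$. The chain rule gives $\partial_t u(t,x) = -(\partial_s \pforward{s})(x)\big|_{s=T-t}$. Evaluating \cref{lem:F-P} at $s=T-t$ and using $\schedulerVE_{T-t} = \barg_t$, we get
\[
\partial_t \pforward{T-t}(x) + \frac{\barg_t^2}{2}\,\Delta \pforward{T-t}(x) = 0 .
\]

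\textbf{Step 2 (substitution).} Write $\pforward{T-t} = \ausiliar\,\tilde{p}_{T-t}$. Because $\ausiliar$ does not depend on time, $\partial_t \pforward{T-t} = \ausiliar\,\partial_t \tilde{p}_{T-t}$. For the Laplacian we use the product rule
\[
\Delta(\ausiliar f) = \ausiliar\,\Delta f + f\,\Delta\ausiliar + 2\,\nabla f\cdot\nabla\ausiliar ,
\]
with $f=\tilde{p}_{T-t}$. This identity is obtained by summing over coordinates $\partial_{ii}(\ausiliar f) = \ausiliar\,\partial_{ii} f + 2\,\partial_i\ausiliar\,\partial_i f + f\,\partial_{ii}\ausiliar$. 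Substituting both expressions into the equation of Step 1 yields exactly the claimed identity.

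\textbf{Main obstacle.} The computation itself is routine. The only point needing care is regularity: the statement holds pointwise only if $\partial_t$ and $\Delta$ act classically on $\tilde{p}_{T-t}$. This is ensured by \cref{lem:regularity_VE} together with the standing assumption that $\ausiliar$ is $C^2$ and nonvanishing, which is what "sufficiently regular" must mean here. One should also note that $t=T$ is excluded, since \cref{lem:regularity_VE} gives nothing at $\pforward{0}$.
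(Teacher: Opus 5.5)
Your proposal is correct and follows the same route as the paper, which simply substitutes $\pforward{t} = \ausiliar\,\tilde{p}_t$ into the Fokker--Planck equation of \cref{lem:F-P}. You spell out the two steps the paper leaves implicit: the time reversal $s = T-t$, which produces the sign change and the identification $\schedulerVE_{T-t} = \barg_t$, and the product rule used to expand $\Delta(\ausiliar\,\tilde{p}_{T-t})$.
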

\begin{proof}
From Lemma \ref{lem:F-P} we know the form of Fokker Planck equation for the density $\pforward{t}$ solution to \eqref{eq:VE_SDE}. 
We also know that for all $x \in \R^d$,  $\pforward{t}(x) = \tilde{p}_{t}(x)\rho(x)$. 
By simply injecting this last formula in the Fokker Planck equation, we get the result. 
\end{proof}
\begin{lemma}\label{lem:general_y_dynamic}
Let $Y_t = \backscoretilde{t}$,  where $\tilde{p}_{t}$ is defined in \eqref{eq:def:modifiedscore} and $\back{t}$ is solution to \cref{eq:VE_back_sde}. 
Then, under the hypotheses of  Lemma \ref{lem:fokker-tilde}, for all $\delta > 0$, on the interval $[0, T - \delta]$, the score satisfies the SDE:
\begin{equation}\label{eq:general_y_dynamic}
dY_t = - \barg_t^2 \left[\nabla^2 \log \ausiliar(\back{t})\, Y_t + \frac{1}{2} \nabla \left( \frac{\Delta \ausiliar(\back{t})}{\ausiliar(\back{t})} \right)\right]\, \rmd t  + \barg_t\, Z_t\, \dbrown_t \eqsp,
\end{equation}
where $Z_t = \nabla Y_t$, which is a.s. well defined.
\end{lemma}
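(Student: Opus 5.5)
The plan is to apply It\^o's formula to the $\R^d$-valued process $Y_t = \nabla \log \tilde{p}_{T-t}(\back{t})$. This is legitimate because, by \cref{lem:regularity_VE}, $(t,x)\mapsto \pforward{T-t}(x)$ is $C^\infty$ on $[0,T-\delta]\times\R^d$ and strictly positive. Since $\ausiliar$ is assumed regular and positive, $\phi(t,x) := \nabla \log \tilde{p}_{T-t}(x) = \nabla\log\pforward{T-t}(x) - \nabla\log\ausiliar(x)$ is $C^{1,2}$. It\^o's formula along the backward dynamics \cref{eq:VE_back_sde} then gives
\[
\rmd Y_t = \Big[\partial_t \phi + \barg_t^2\, \nabla\phi\, \nabla\log\pforward{T-t} + \tfrac{\barg_t^2}{2}\Delta\phi\Big](t,\back{t})\,\rmd t + \barg_t\, \nabla\phi(t,\back{t})\,\rmd\tilde{\brown}_t .
\]
The martingale part is already of the claimed form, with $Z_t = \nabla Y_t := \nabla^2\log\tilde{p}_{T-t}(\back{t})$. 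This is well defined everywhere, hence a.s., by the smoothness and positivity of $\pforward{T-t}$ and $\ausiliar$.

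The core step is computing the drift using \cref{lem:fokker-tilde}. Write $u = \log \tilde{p}_{T-t}$ and $r = \log\ausiliar$, so that $\log \pforward{T-t} = u + r$. Dividing the identity of \cref{lem:fokker-tilde} by $\tilde{p}_{T-t}\ausiliar$ gives a PDE for $u$:
\[
\partial_t u + \tfrac{\barg_t^2}{2}\Big[\Delta u + \|\nabla u\|^2 + \tfrac{\Delta\ausiliar}{\ausiliar} + 2\nabla u\cdot\nabla r\Big] = 0 .
\]
Here I use $\Delta \tilde p/\tilde p = \Delta u + \|\nabla u\|^2$. Taking the gradient yields
\[
\partial_t \nabla u = -\tfrac{\barg_t^2}{2}\Big[\nabla\Delta u + 2\nabla^2u\,\nabla u + \nabla\big(\tfrac{\Delta\ausiliar}{\ausiliar}\big) + 2\nabla^2 u\,\nabla r + 2\nabla^2 r\,\nabla u\Big].
\]
Substituting this into the It\^o drift, with $\phi=\nabla u$, $\nabla\phi = \nabla^2 u$, $\Delta\phi = \nabla\Delta u$, and backward drift $\barg_t^2(\nabla u + \nabla r)$, the third-order terms $\pm\frac{\barg_t^2}{2}\nabla\Delta u$ cancel. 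The terms $\pm\barg_t^2\nabla^2u\,\nabla u$ and $\pm\barg_t^2\nabla^2u\,\nabla r$ cancel as well. What survives is $-\barg_t^2\big[\nabla^2 r\,\nabla u + \frac12\nabla(\Delta\ausiliar/\ausiliar)\big]$, which is exactly \cref{eq:general_y_dynamic} since $\nabla u(\back{t}) = Y_t$ and $\nabla^2 r = \nabla^2\log\ausiliar$.

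I expect the main obstacle to be justifying It\^o's formula and the interchange $\partial_t\nabla = \nabla\partial_t$ rigorously, on $[0,T-\delta]$ only, without any assumption on $\mudata$. Both follow from the $C^\infty$ regularity of \cref{lem:regularity_VE}: joint smoothness makes mixed partials commute, and It\^o's formula for $C^{1,2}$ functions needs no integrability. The stochastic integral is at least a local martingale, since $\nabla\phi$ is continuous and $\back{}$ has continuous paths. The only delicate point is the Brownian motion: the SDE holds with respect to the $\tilde{\brown}$ given by \cref{lem:existence-backward}, in its filtration. The identity is therefore pathwise in that setting, which is all the statement requires.
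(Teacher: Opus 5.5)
Your proposal is correct and is essentially the paper's proof. Both apply It\^o's formula to $\nabla \log \tilde{p}_{T-t}(\back{t})$ along \eqref{eq:VE_back_sde} and remove the time derivative using the PDE for $\log \tilde{p}_{T-t}$ from Lemma~\ref{lem:fokker-tilde}; the same cancellations then leave $-\barg_t^2[\nabla^2\log\ausiliar\, Y_t + \frac12 \nabla(\Delta\ausiliar/\ausiliar)]$. The only difference is bookkeeping: the paper writes the It\^o drift as the gradient of $\partial_t\Phi_t + \frac{\barg_t^2}{2}(\|\nabla\Phi_t\|^2 + \Delta\Phi_t)$ plus $\barg_t^2\nabla^2\Phi_t\nabla\log\ausiliar(\back{t})$, whereas you differentiate the PDE first. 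Your identity $\Delta\tilde{p}/\tilde{p} = \Delta u + \|\nabla u\|^2$ is the correct form of the one the paper states loosely.
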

\begin{proof}
We can reformulate the result of  Lemma~\ref{lem:fokker-tilde} as:
\begingroup
\begin{multline*}
 \partial_t \log \tilde{p}_{T-t}(x)\, \ausiliar(x) 
+ \frac{\barg_t^2}{2} \Big[
    \ausiliar(x)\, \frac{\Delta \tilde{p}_{T-t}(x) }{\tilde{p}_{T-t}(x)}
    + \Delta \ausiliar(x) 
    + 2\, \nabla \log \tilde{p}_{T-t}(x) \cdot \nabla \ausiliar(x)
\Big] = 0 \eqsp.
\end{multline*}
\endgroup
Using the general property
$$
\frac{\Delta f(x)}{f(x)} = \Delta f(x) + \|\nabla f(x) \|^2\eqsp,
$$
we can write 
\begingroup
\begin{multline*}
 \partial_t \log \tilde{p}_{T-t}(x)\, \ausiliar(x) 
+ \frac{\barg_t^2}{2} \Big[
    \ausiliar(x)\, [\Delta \log \tilde{p}_{T-t}(x) + \|\nabla \log \tilde{p}_{T-t}(x) \|^2]
    + \Delta \ausiliar(x) 
    + 2\, \nabla \log \tilde{p}_{T-t}(x) \cdot \nabla \ausiliar(x)
\Big] = 0 \eqsp ,
\end{multline*}
\endgroup
and so 
\begingroup
\begin{multline*}
 \partial_t \log \tilde{p}_{T-t}(x)
+ \frac{\barg_t^2}{2} \Big[
     \Delta \log \tilde{p}_{T-t}(x) + \|\nabla \log \tilde{p}_{T-t}(x) \|^2\Big]  \\
     = - \frac{\barg_t^2}{2}\Big[
     \frac{\Delta \ausiliar(x) }{\ausiliar (x)}
    + 2\, \nabla \log \tilde{p}_{T-t}(x) \cdot \nabla \log \ausiliar(x)
\Big]   \eqsp .
\end{multline*}
\endgroup
Using $\back{t}$ in the last equation and defining  $\Phi_t = \log \tilde{p}_{T-t}(\back{t})$ we finally write
\begin{equation}\label{eq:fp-phi}
\partial_t \Phi_t 
+ \frac{\barg_t^2}{2} \left( \| \nabla \Phi_t \|^2 + \Delta \Phi_t \right) 
= 
- \frac{\barg_t^2}{2} \left[ \frac{\Delta \rho}{\rho}(\back{t}) + 2 \nabla \log \rho \cdot  \nabla \Phi_t \right]\eqsp.
\end{equation}
The score function $Y_t = \backscoretilde{t}$ is well defined and we can apply Ito's Formula 
to Equation \eqref{eq:VE_back_sde} and noting that  $Y_t = \nabla \Phi_t $ yields
\begin{align}\label{eq:ito-phi}
\rmd Y_t
&= \left( \partial_t \nabla \Phi_t + \barg_t^2 \nabla^2 \Phi_t \left( \nabla \log \rho(\back{t}) + \nabla \Phi_t \right) + \frac{\barg_t^2}{2} \Delta \nabla \Phi_t \right) \rmd t + \barg_t \nabla^2 \Phi_t \rmd B_t \nonumber \\
&= \Bigg( \partial_t \nabla \Phi_t + \barg_t^2 \nabla^2 \Phi_t \nabla \Phi_t + \frac{\barg_t^2}{2} \Delta \nabla \Phi_t + \barg_t^2 \nabla^2 \Phi_t \nabla \log \rho(\back{t}) \Bigg) \rmd t + \barg_t \nabla^2 \Phi_t \rmd B_t \nonumber \\
&= \nabla \left\{ \partial_t \Phi_t + \frac{\barg_t^2}{2} \|\nabla \Phi_t\|^2 + \frac{\barg_t^2}{2} \Delta \Phi_t \right\} \rmd t + \barg_t^2 \nabla^2 \Phi_t \nabla \log \rho(\back{t}) \, \rmd t + \barg_t \nabla^2 \Phi_t \rmd B_t \nonumber\eqsp.
\end{align}
Combining Equation \eqref{eq:fp-phi} and the last equation concludes the proof.
\end{proof}

We established a general result for analyzing the dynamics of the score function in the setting of \cref{eq:VE_SDE}. 
Thanks to Lemma \ref{lem:general_y_dynamic} using $\rho(x) = 1$ we have that for all $\delta >0$, on the interval $[0, T-\delta]$,  $Y_t = \backscore{t}$ satisfies:
\[
\rmd Y_t = \barg_t Z_t\, \rmd\tilde{\brown}_t \eqsp.
\]
By applying Itô's formula like in \cite{conforti2023kl}, we also obtain:
\[
\rmd \|Y_t\|^2 = \barg_t^2 \|Z_t\|_{Fr}^2 \rmd t + 2 \barg_t^2 Y_t^\top Z_t\, \rmd\tilde{\brown}_t \eqsp.
\]
We show here that the norm of the score function $\backscore{t}$ is a martingale.
\begin{lemma}\label{lem:martingVE}
    For any $\mudata$, it holds
    \[
    \E\left[ \int_0^{T-\delta} \| Y_s^\top Z_s\|^2 \rmd s\right] < \infty \eqsp .
    \]
\end{lemma}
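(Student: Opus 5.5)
The plan is to express both $Y_s = \nabla \log \pforward{T-s}(\back{s})$ and $Z_s = \nabla^2 \log \pforward{T-s}(\back{s})$ as conditional moments of the \emph{Gaussian noise} rather than of $\state_0$. Once that is done, no moment assumption on $\mudata$ is needed, and the expectation can be bounded uniformly on $[0,T-\delta]$.

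\textbf{Step 1 (Tweedie-type representations).} Fix $t \in [\delta, T]$ and write $\state_t = \state_0 + \sigma_t W$ with $W \sim \mathcal{N}(0,\Id_d)$ independent of $\state_0$. Starting from the mixture representation $\pforward{t}(x) = \int q_t(x\mid y)\,\rmd\mudata(y)$ of \cref{lem:regularity_VE}, differentiate under the integral; this is justified by the dominated-convergence bounds in that lemma. Using $\nabla_x q_t = -\frac{x-y}{\sigma_t^2} q_t$ and the analogous formula for the Hessian, and then dividing by $\pforward{t}$, I expect to obtain
\[
\nabla \log \pforward{t}(\state_t) = -\frac{1}{\sigma_t}\E[W \mid \state_t], \qquad \nabla^2 \log \pforward{t}(\state_t) = \frac{1}{\sigma_t^2}\Big(\E[WW^\top\mid \state_t] - \E[W\mid\state_t]\E[W\mid\state_t]^\top - \Id\Big).
\]
The key identification is $(x-y)/\sigma_t = W$ under the joint law of $(\state_0, \state_t)$. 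These conditional expectations are well defined for every $\mudata$ because $W$ has moments of all orders.

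\textbf{Step 2 (fourth-moment bounds).} Apply conditional Jensen to each representation. This gives $\|\nabla\log\pforward{t}(\state_t)\|^4 \le \sigma_t^{-4}\E[\|W\|^4\mid\state_t]$ and $\|\nabla^2\log\pforward{t}(\state_t)\|_{Fr}^4 \le C_d\,\sigma_t^{-8}\big(\E[\|W\|^4\mid\state_t] + 1\big)$, where $C_d$ depends only on $d$. Taking expectations and using the tower property yields
\[
\E\|Y_s\|^4 \le \frac{C}{\sigma_\delta^4} \quad\text{and}\quad \E\|Z_s\|_{Fr}^4 \le \frac{C}{\sigma_\delta^8},
\]
uniformly for $s \in [0,T-\delta]$. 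Here $\back{s} = \state_{T-s}$ and $\sigma_{T-s} \ge \sigma_\delta > 0$, since $\sigma$ is increasing with $\sigma_\delta>0$.

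\textbf{Step 3 (conclusion).} We have $\|Y_s^\top Z_s\|^2 \le \|Y_s\|^2\|Z_s\|_{Fr}^2$, so Cauchy–Schwarz gives $\E\|Y_s^\top Z_s\|^2 \le (\E\|Y_s\|^4)^{1/2}(\E\|Z_s\|_{Fr}^4)^{1/2} \le C\sigma_\delta^{-6}$. Integrating over the finite interval $[0,T-\delta]$, with Tonelli justifying the exchange, gives the claim with bound $C(T-\delta)\sigma_\delta^{-6}$.

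The main obstacle is Step 1, specifically making the Hessian identity rigorous without moments of $\mudata$. The natural route of writing the score as $(\E[\state_0\mid\state_t]-\state_t)/\sigma_t^2$ fails for heavy tails, because $\E[\state_0\mid\state_t]$ need not be finite a priori. I will therefore work throughout with the combination $(x-y)/\sigma_t$, whose weighted integrals against $q_t(x\mid y)\,\rmd\mudata(y)$ are absolutely convergent. This holds because $z\mapsto \|z\|^k e^{-\|z\|^2/2}$ is bounded, which is exactly the bound used in \cref{lem:regularity_VE}.
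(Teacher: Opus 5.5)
Your proposal is correct and is essentially the paper's proof. The paper also bounds $\|Y_s^\top Z_s\|^2 \le \|Y_s\|^2\|Z_s\|_{\mathrm{Fr}}^2$ (finishing with $\tfrac12(a^2+b^2)$ where you use Cauchy--Schwarz), writes the score and the Hessian as conditional moments of the Gaussian increment $\state_{T-s}-X_0$ (your $\sigma_t W$), and concludes via conditional Jensen. One small slip: since $\|WW^\top\|_{\mathrm{Fr}}=\|W\|^2$, your Hessian bound should read $C_d\,\sigma_t^{-8}\big(\E[\|W\|^8\mid\state_t]+1\big)$ rather than involve $\E[\|W\|^4\mid\state_t]$; this is harmless because $W$ has all moments, and your uniform bound via $\sigma_{T-s}\ge\sigma_\delta$ makes explicit the integrability in $s$ that the paper leaves implicit.
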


\begin{proof}
Note that for all vector $x$ and matrix $A$, 
$\|Ax\|\ \leq \|A\|_\mathrm{Fr}\|x\|$. Therefore, 
$$
\|Y_s^\top Z_s\|^2 \leq \|Y_s^\top\|^2 \|Z_s\|^2_{\mathrm{Fr}} \leq \frac{1}{2}\left(\|Y_s^\top\|^4 + \|Z_s\|_{\mathrm{Fr}}^4\right)  \eqsp .
$$
We first show that $\E[ \int_0^{T-\delta} \|Y_s^\top\|^4 \rmd s] < \infty$. We know that
\[
\pforward{t}(x) = \int_{\R^d}  \, q_t(x \mid y) \, \rmd \mudata(y)\eqsp,
\]
where the transition density is given by
\[
q_t(x \mid y) = \frac{1}{(2 \pi \sigma_t^2)^{d/2}} 
\exp\Biggl\{ - \frac{\|x-y\|^2}{2 \sigma_t^2} \Biggr\}\eqsp,
\]
with $\sigma_t^2 = \int_0^t \schedulerVE_s^2 \, \rmd s$. 
Therefore,
\begin{equation}
\begin{split}
\nabla \log \pforward{t}(x) 
&= \frac{\nabla \pforward{t}(x)}{\pforward{t}(x)} \\
&= \frac{1}{\pforward{t}(x)} \int \nabla q_t(x \mid y) \, \rmd \mudata(y) \\
&= - \frac{1}{\sigma_t^2 \, \pforward{t}(x)} \int (x - y) \,q_t(x \mid y) \, \rmd \mudata(y) \eqsp ,
\end{split}
\end{equation}
and
$$
\E[\|\nabla \log \vec{p}(\state_{T - s})\|^4] =  \sigma_t^{-8}\E[\|\E[\state_{T - s} - X_0|\state_{T - s}]\|^4] \leq \sigma_t^{-8}\E[\|\state_{T - s} -X_0\| ^ 4] < \infty\eqsp .
$$
Then, we show that $\E[ \int_0^{T-\delta}  \|Z_s\|_{\mathrm{Fr}}^4 \rmd s] < \infty$. Write
\[
\nabla^2\pforward{t}(x) 
= \frac{1}{\sigma_t^4} \int_{\R^d} 
\left[ (x - y)(x - y)^\top - \sigma_t^2 \Id_d \right] 
\, q_t(x \mid y) \, \rmd \mudata(y) \eqsp ,
\]
and 
\[
\nabla^2 \log \pforward{t}(x) 
= \frac{\nabla^2 \pforward{t}(x)}{\pforward{t}(x)} 
- \nabla \log \pforward{t}(x)\, \nabla \log \pforward{t}(x)^\top.
\]
So we have 
\begin{align*}
\nabla^2 \log \pforward{t}(x) 
&= \frac{1}{\sigma_t^4 \, \pforward{t}(x)} 
\int_{\mathbb{R}^d} \left[ (x - y)(x - y)^\top - \sigma_t^2 \Id_d \right] 
\, q_t(x \mid y)\, \rmd \mudata(y) \nonumber \\
&\quad - \frac{1}{\sigma_t^4 \, \pforward{t}(x)^2} 
\left( \int_{\mathbb{R}^d} (x - y)\,  q_t(x \mid y)\, \rmd \mudata(y) \right)
\left( \int_{\mathbb{R}^d} (x - y)\, q_t(x \mid y)\, \rmd \mudata(y) \right)^\top \eqsp .
\end{align*}
Finally, 
\begin{align*}
\nabla^2 \log \pforward{t}(\state_{T-s})
&= \frac{1}{\sigma_{T - s}^4} \, 
\E\left[
(\state_{T-s} - X_0)(\state_{T-s} - X_0)^\top - \sigma_{T - s}^2 \Id_d
\,\Big|\, \state_{T-s}
\right] \nonumber \\
&\quad - \frac{1}{\sigma_{T - s}^4} \, 
\E\left[X_0 - \state_{T-s} \,\Big|\, \state_{T-s} \right]
\E\left[X_0 - \state_{T-s} \,\Big|\, \state_{T-s} \right]^\top \eqsp .
\end{align*}
This yields 
\begin{align*} \nonumber
\E \left[
\left\|
\nabla^2 \log \pforward{t}(\state_{T-s})
\right\|^4
\right]
&= \E \left[
\left\|
\frac{1}{\sigma_{T-s}^4} \, 
\E\left[
(\state_{T-s} - X_0)(\state_{T-s} - X_0)^\top - \sigma_{T-s}^2 \Id_d
\,\Big|\, \state_{T-s}
\right] \right.\right. \\ \nonumber
&\quad \hspace{2cm} \left.\left.
- \frac{1}{\sigma_{T-s}^4} \,
\E\left[X_0 - \state_{T-s} \,\Big|\, \state_{T-s}\right]
\E\left[X_0 - \state_{T-s} \,\Big|\, \state_{T-s}\right]^\top
\right\|^4
\right] \\ \nonumber
&\leq \frac{C_1}{\sigma_{T-s}^{16}} \, \E \left[
\left\|
(\state_{T-s} - X_0)(\state_{T-s} - X_0)^\top - \sigma_{T-s}^2 \Id_d
\right\|^4
\right] \\ \nonumber
&\quad \hspace{2cm}+ \frac{C_2}{\sigma_{T-s}^{16}} \, \E \left[
\left\|
(X_0 - \state_{T-s})(X_0 - \state_{T-s})^\top
\right\|^4
\right] \eqsp,
\end{align*}
where $C_1,C_2$ are positive constants. The proof is concluded by noting that $\state_{T-s}-X_0$ is a  Gaussian distribution. 
\end{proof}

\subsection{Proof of Theorem~\ref{thm:KL_bound}}
\label{sec:mainproof}
We can finally prove \cref{thm:KL_bound} in the first form using \cref{eq:err:approxVE,eq:err:discrVE,eq:err:mixVE}
and in the form of \cref{eq:thm_simplified}.
We consider $\delta \geq 0$ and $\{\back{t}\}_{t \in [0,T-\delta]}$,  $\{\back{t}^x\}_{t \in [0,T-\delta]}$ of \cref{eq:VE_back_sde}.
Writing ${\trainedsolutioninit{t}{x}}$ the solution of the discretized backward with deterministic initialization $x$, the existence of  \( \{\trainedsolution{t}\}_{t \in [0, T-\delta]} \) and \( Y^x = \{ \trainedsolutioninit{t}{x} \}_{t \in [0, T-\delta]} \) follows from the explicit form of the solution.
Thanks to the fact that (by hypothesis) we know that $\discsolution{0} >0 $ $\lambda_{\R^d}$-almost surely and therefore $\back{0} \ll \trainedsolution{0}$. 

We have to show that in our context also the Novikov-like hypothesis of \cref{thm:kl-control} is satisfied. 
First we note that for all $t \in [\delta, T]$
\begin{align}
    \information(\pforward{t}) = \PE{}{||\scorep{t}(\state_t)||^2} 
    &= \frac{\PE{}{\left\| \PE{}{\state_t -\state_0\mid \state_t}\right\|^2}}{\sigma_t^4} 
    \leq \frac{\PE{}{\left\| \state_t -\state_0 \right\|^2}}{\sigma_t^4} \nonumber\\
    &= \frac{\PE{}{\left\|Z\right\|^2}}{\sigma_t^4}= \frac{d\sigma_t^2}{\sigma_t^4} = \frac{d}{\sigma_t^2} \eqsp .
\end{align}
Therefore 
\begin{equation}
    \int_{\delta}^{T} \PE{}{||\scorep{t}(\state_t)||^2} \rmd t < \infty
\end{equation}
We also have that, using the notation from \cref{thm:kl-control}
\begin{align}
    &\int_{\delta}^{T} \PE{}{||\scorep{t}(\state_t)||^2} \rmd t \nonumber\\
    &= \PE{}{\int_{\delta}^{T} ||\scorep{t}(\state_t)||^2\rmd t} \nonumber\\
    &= \int_{\mathcal{C}([\delta,T],\mathbb{R}^d)}\int_{\delta}^{T} ||\scorep{t}(z)||^2\rmd t \, \rmd \mu_X(z) \nonumber\\
    &= \int_{\R^d} \int_{\mathcal{C}([\delta,T],\mathbb{R}^d)}\int_{\delta}^{T} ||\scorep{t}(z)||^2\rmd t \,  \rmd \mu_{X \mid \back{0}}(z \mid x) \, \mu_{\back{0}}(x) \nonumber\\
    &= \int_{\R^d} \int_{\mathcal{C}([\delta,T],\mathbb{R}^d)}\int_{\delta}^{T} ||\scorep{t}(z)||^2\rmd t \,  \rmd \mu_{X \mid \back{0}}(z \mid x) \, \mu_{\back{0}}(x) \nonumber\\
    &= \int_{\R^d} \int_{\mathcal{C}([\delta,T],\mathbb{R}^d)}\int_{\delta}^{T} ||\scorep{t}(z)||^2\rmd t \,  \rmd \mu_{X^z}(z) \, \mu_{\back{0}}(x) \nonumber\\
    &= \int_{\R^d} \PE{}{\int_{0}^{T-\delta} ||\scorep{T-t}(\back{t}^x)||^2\rmd t \,} \mu_{\back{0}}(x) \eqsp.
\end{align}
Therefore $\mu_{\back{0}}$-a.s. 
\begin{equation}
    \PE{}{\int_{0}^{T-\delta} ||\scorep{T-t}(\back{t}^x)||^2\rmd t } < \infty
\end{equation}
and this simply implies that $\mu_{\back{0}}$-a.s. 
\begin{equation}
    \PP{\int_{0}^{T-\delta} ||\scorep{T-t}(\back{t}^x)||^2\rmd t } < \infty
\end{equation}
For the trained score, it is sufficient to note by hypothesis that 
\begin{align}
    \PE{}{||\trainedscoreforward{t_k}||^2} 
    &\leq \PE{}{||\scorep{t_k}(\state_{t_k}) - \trainedscoreforward{t_k}||^2} + \PE{}{||\scorep{t}(\state_t)||^2} \nonumber\\
    &< \epsilon + \frac{d}{\sigma_t^2} \eqsp.
\end{align}
We can therefore apply theorem \cref{thm:kl-control} with
\begin{itemize}
\item $X = \{\back{t}\}_{t \in [0,T-\delta]}$, $Y = \{\trainedsolution{t}\}_{t \in [0, T-\delta]}$.
\item $X^x= \{\back{t}^x\}_{t \in [0,T-\delta]}$, $Y^x = \{{\trainedsolutioninit{t}{x}}\}_{t \in [0, T-\delta]}$.
\item $X_0 = \back{0}$, $Y_0 \sim \discsolution{0}$.
\end{itemize} 
The KL bound we obtain is the following :
\begin{align}
\kl{\pforward{\delta}}{p_{T-\delta}^{\theta}} 
&\leq \kl{\pforward{T}}{p_0^{\theta}} +\nonumber \\ 
&= \frac{1}{2} \int_0^{T-\delta}  \frac{1}{\barg_t^2}\, \E \biggl[
  \bigl\| 
    \barg_t^2 \backscore{t} 
 - \sum_{k=0}^{N-1}   \barg_t^2\trainedscorex{t_k} 
  \ind_{[t_{k}, t_{k+1}]}(t)
  \bigr\|^2 
\biggr] \rmd t \eqsp , 
\end{align}
Using the square triangular inequality and extracting $\barg_t$, we get that 
\begin{align}
\kl{\pforward{\delta}}{p_{T-\delta}^{\theta}} 
&\leq \kl{\pforward{T}}{p_0^{\theta}} \\ \nonumber
&\quad + \sum_{k=0}^{N-1} \int_{t_{k}}^{t_{k+1}} \barg_t^2\ \rmd t \, \E \biggl[\bigl\| \backscore{t_{k}} - \trainedscorex{t_{k}} \bigr\|^2 \biggr]  \\ \nonumber
&\quad + \sum_{k=0}^{N-1} \int_{t_{k}}^{t_{k+1}} \barg_t^2\, \E \bigl[\| \backscore{t_{k}} -\backscore{t} \|^2\bigr] \rmd t \eqsp ,
\end{align}
We obtained \cref{thm:KL_bound} with terms \cref{eq:err:approxVE,eq:err:discrVE,eq:err:mixVE}.
To obtain \cref{eq:information}, using the definition of $Y_t$ write
$$\sum_{k=0}^{N-1} \int_{t_{k}}^{t_{k+1}} \barg_t^2\, \E \bigl[\| \backscore{t_{k}} -\backscore{t} \|^2\bigr] \rmd t  = \sum_{k=0}^{N-1} \int_{t_{k}}^{t_{k+1}} \barg_t^2\, \E \bigl[\| Y_{t_{k}} -Y_t \|^2\bigr] \rmd t\eqsp.
$$
The expected value inside the integral can be written
\[
\E\left[\left\| \int_{t_{k}}^t \barg_s Z_s \, \rmd B_s \right\|^2\right] = \E\left[\int_{t_{k}}^t \| \barg_s Z_s \|_{\mathrm{Fr}}^2 \, \rmd s \right]\eqsp.
\]
Then, note  that 
$$\sum_{k=0}^{N-1} \int_{t_{k}}^{t_{k+1}} \barg_t^2\, \E \left[\int_{t_{k}}^{t}\| \barg_s Z_s \|_{\mathrm{Fr}} ^2\rmd s \right] \rmd t \leq \sum_{k=0}^{N-1} \int_{t_{k}}^{t_{k+1}} \barg_t^2\, \E \left[\int_{t_{k}}^{t_{k+1}}\| \barg_s Z_s \|_{\mathrm{Fr}} ^2\rmd s \right] \rmd t \eqsp .$$
Thanks to Lemma \ref{lem:general_y_dynamic} and Lemma \ref{lem:martingVE} we know that the norm of the score evolves as an SDE and defines the diffusion part defines a martingal, so that the second term of the last inequality can be upper bounded as follows
$$\sum_{k=0}^{N-1} (\|Y_{t_{k+1}}\|^2 - \|Y_{t_{k}}\|^2) \int_{t_{k}}^{t_{k+1}} \barg_t^2\, \rmd t  \leq\max_{k = 0,\dots,N-1} \Bigg\{ \int_{t_{k}}^{t_{k+1}} \barg_t^2\, \rmd t \Bigg\}  \sum_{k=0}^{N-1} (\|Y_{t_{k+1}}\|^2 - \|Y_{t_{k}}\|^2)  \eqsp .
$$
We note the telescopic sum and we conclude 
$$
\discerror \leq \max_{k = 0,\dots,N-1} \Bigg\{ \int_{t_k}^{t_{k+1}} \barg_t^2 \, \rmd t \Bigg\} 
\;\cdot\; \Big( \information(\pforward{\delta}) - \information(\pforward{T}) \Big) \eqsp,
$$ 
where $\discerror $ is defined in Theorem~\ref{thm:KL_bound}.

\subsection{Results for the proof of Theorem \ref{thm:kl-control}}
\label{sec:additional:technical}
\begin{lemma}[Conditional absolute continuity]\label{lem:conditional-properties} 
Let $(E,\mathcal E)$ and $(F,\mathcal F)$ be Polish spaces.  
Let $X,Y : (\Omega,\mathcal A,\mathbb P)\to (E,\mathcal E)$ be random variables and let
$T:(E,\mathcal E)\to(F,\mathcal F)$ be a measurable map.
Denote by $\mu_X,\mu_Y$ the laws of $X,Y$ on $E$, by $\mu_{T(X)},\mu_{T(Y)}$ the laws of
$T(X),T(Y)$ on $F$, and by $\mu_{(X,T(X))}$, $\mu_{(Y,T(Y))}$ the joint laws of
$(X,T(X))$ and $(Y,T(Y))$ on $E\times F$.
Assume that regular conditional distributions
$\mu_{X\mid T(X)}(\cdot\mid\cdot)$ and $\mu_{Y\mid T(Y)}(\cdot\mid\cdot)$ exist.

Suppose that
\[
\mu_{X\mid T(X)}(\cdot\mid\cdot)\ll \mu_{Y\mid T(Y)}(\cdot\mid\cdot)
\quad\text{and}\quad
\mu_{T(X)}\ll \mu_{T(Y)}.
\]
Then $\mu_X\ll \mu_Y$ and $\mu_{(X,T(X))}\ll \mu_{(Y,T(Y))}$, and for $\mu_Y$-a.e.
$e\in E$,
\[
\frac{\mathrm d\mu_X}{\mathrm d\mu_Y}(e)
=
\frac{\mathrm d\mu_{(X,T(X))}}{\mathrm d\mu_{(Y,T(Y))}}\bigl(e,T(e)\bigr)
=
\frac{\mathrm d\mu_{X\mid T(X)}}{\mathrm d\mu_{Y\mid T(Y)}}\bigl(e\mid T(e)\bigr)
\,
\frac{\mathrm d\mu_{T(X)}}{\mathrm d\mu_{T(Y)}}\bigl(T(e)\bigr).
\]
\end{lemma}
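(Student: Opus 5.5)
The plan is to build the density of $\mu_X$ with respect to $\mu_Y$ explicitly by disintegration along $T$, and then check that it is a density. Since $E$ and $F$ are Polish, the regular conditional distributions exist. We can write
\[
\mu_{(X,T(X))}(\rmd e,\rmd f)=\mu_{X\mid T(X)}(\rmd e\mid f)\,\mu_{T(X)}(\rmd f),
\]
and the analogous formula holds for $Y$. The hypothesis $\mu_{X\mid T(X)}(\cdot\mid\cdot)\ll\mu_{Y\mid T(Y)}(\cdot\mid\cdot)$ is read as holding for $\mu_{T(X)}$-a.e.\ $f$.

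\textbf{Step 1: a jointly measurable version of the conditional density.} For each $f$ we need $k(e,f)=\frac{\rmd\mu_{X\mid T(X)}}{\rmd\mu_{Y\mid T(Y)}}(e\mid f)$, and we need it to be jointly measurable in $(e,f)$. I would obtain this from the standard measurable Radon--Nikodym theorem for kernels on countably generated spaces, which applies since $E$ is Polish. Concretely, fix a countable generating algebra $\{A_j\}$ of $\mathcal E$, take limits of ratios over refining finite partitions, and apply the martingale convergence theorem; the limit is measurable in $(e,f)$. I expect this step to be the main technical obstacle. Alongside it, set $h(f)=\frac{\rmd\mu_{T(X)}}{\rmd\mu_{T(Y)}}(f)$.

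\textbf{Step 2: joint absolute continuity and the joint density.} For a measurable $C\subset E\times F$, Fubini and the two densities give
\[
\mu_{(X,T(X))}(C)=\int\!\!\int \ind_C(e,f)\,k(e,f)\,\mu_{Y\mid T(Y)}(\rmd e\mid f)\,h(f)\,\mu_{T(Y)}(\rmd f)=\int_C k\,h\;\rmd\mu_{(Y,T(Y))}.
\]
On the $\mu_{T(Y)}$-null part of the $f$'s where absolute continuity fails, we have $h=0$ $\mu_{T(Y)}$-a.e.\ there, so that part contributes nothing. This shows $\mu_{(X,T(X))}\ll\mu_{(Y,T(Y))}$ with density $k(e,f)h(f)$, which is the second equality of the statement.

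\textbf{Step 3: transfer to $E$.} Both joint laws are concentrated on the graph $\{(e,T(e))\}$, which is measurable because $T$ is measurable and $F$ is Polish (so its diagonal is measurable). Hence $\mu_{(X,T(X))}$ is the pushforward of $\mu_X$ under $\iota: e\mapsto(e,T(e))$, and likewise for $Y$. For $A\in\mathcal E$,
\[
\mu_X(A)=\mu_{(X,T(X))}(\iota(A))=\int_{\iota(A)}k\,h\,\rmd\mu_{(Y,T(Y))}=\int_A k(e,T(e))\,h(T(e))\,\mu_Y(\rmd e).
\]
Here $\iota(A)=(A\times F)\cap\text{graph}$ is measurable, and the last equality is the change of variables through $\iota$. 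This gives $\mu_X\ll\mu_Y$ together with the first equality, and the $\mu_Y$-a.e.\ uniqueness of Radon--Nikodym derivatives yields the claimed identities $\mu_Y$-a.e.
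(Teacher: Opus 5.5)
Your proposal is correct and follows essentially the paper's route: disintegrate along $T$ to get the joint density $k(e,f)h(f)$ with respect to $\mu_{(Y,T(Y))}$, then pull it back along $e\mapsto(e,T(e))$. The paper phrases that last step as disintegrating $\mu_{(Y,T(Y))}$ with respect to $Y$ with kernel $\delta_{T(e)}$. Your Step 1 (a jointly measurable version of the kernel density) makes explicit a measurability point the paper glosses over, and so does your use of $(A\times F)\cap\mathrm{graph}(T)$ in place of the paper's $A\times T(A)$, an image set that need not be Borel. The only slip is in Step 2: the exceptional set of $f$'s is $\mu_{T(X)}$-null rather than $\mu_{T(Y)}$-null, and that is precisely why $h=0$ holds $\mu_{T(Y)}$-a.e.\ on it.
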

\begin{proof}
Since $E$ and $F$ are Polish spaces, regular conditional distributions
$\mu_{X\mid T(X)}$ and $\mu_{Y\mid T(Y)}$ exist \citet[Appendix]{douc2018markov}.

Let $\phi : E \times F \to \mathbb R_+$ be a non-negative measurable test function.
By disintegration of $\mu_{(X,T(X))}$ with respect to $T(X)$, we have
\begin{align*}
\E[\phi(X,T(X))]
&= \int_{E\times F} \phi(e,f)\,\mathrm d\mu_{(X,T(X))}(e,f) \\
&= \int_F \int_E \phi(e,f)\,\mathrm d\mu_{X\mid T(X)}(e\mid f)\,
\mathrm d\mu_{T(X)}(f).
\end{align*}
By the assumed absolute continuity of the conditional laws and of the marginals,
this can be rewritten as
\begin{align*}
\E[\phi(X,T(X))]
&= \int_F \int_E \phi(e,f)\,
\frac{\mathrm d\mu_{X\mid T(X)}}{\mathrm d\mu_{Y\mid T(Y)}}(e\mid f)\,
\mathrm d\mu_{Y\mid T(Y)}(e\mid f)\,
\frac{\mathrm d\mu_{T(X)}}{\mathrm d\mu_{T(Y)}}(f)\,
\mathrm d\mu_{T(Y)}(f) \\
&= \int_{E\times F} \phi(e,f)\,
\frac{\mathrm d\mu_{X\mid T(X)}}{\mathrm d\mu_{Y\mid T(Y)}}(e\mid f)\,
\frac{\mathrm d\mu_{T(X)}}{\mathrm d\mu_{T(Y)}}(f)\,
\mathrm d\mu_{(Y,T(Y))}(e,f).
\end{align*}
Since this identity holds for all non-negative measurable $\phi$, it follows that
\[
\mu_{(X,T(X))} \ll \mu_{(Y,T(Y))}
\]
and that the Radon--Nikodym derivative is given $\mu_{(Y,T(Y))}$-a.e.\ by
\[
\frac{\mathrm d\mu_{(X,T(X))}}{\mathrm d\mu_{(Y,T(Y))}}(e,f)
=
\frac{\mathrm d\mu_{X\mid T(X)}}{\mathrm d\mu_{Y\mid T(Y)}}(e\mid f)\,
\frac{\mathrm d\mu_{T(X)}}{\mathrm d\mu_{T(Y)}}(f).
\]

Let's now consider $A \in \mathcal{E}$. We can write
\[
\begin{aligned}
\mu_X(A)
&= \mu_{(X,T(X))}(A \times T(A)) \\
&= \int_{E \times F} \ind_{A \times T(A)}(e,f) \,\mathrm d\mu_{(X,T(X))}(e,f) \\
&= \int_{E \times F} \ind_{A \times T(A)}(e,f) \,
\frac{\mathrm d\mu_{(X,T(X))}}{\mathrm d\mu_{(Y,T(Y))}}(e,f) \,\mathrm d\mu_{(Y,T(Y))}(e,f) \\
&= \int_E \int_F \ind_{A \times T(A)}(e,f) \,
\frac{\mathrm d\mu_{(X,T(X))}}{\mathrm d\mu_{(Y,T(Y))}}(e,f) \,
\mathrm d\mu_{T(Y)\mid Y}(f \mid e) \,\mathrm d\mu_Y(e) \\
&= \int_E \int_F \ind_{A \times T(A)}(e,f) \,
\frac{\mathrm d\mu_{(X,T(X))}}{\mathrm d\mu_{(Y,T(Y))}}(e,f) \,
\mathrm d\delta_{T(e)}(f) \,\mathrm d\mu_Y(e) \\
&= \int_E \ind_A(e) \,
\frac{\mathrm d\mu_{(X,T(X))}}{\mathrm d\mu_{(Y,T(Y))}}(e,T(e)) \,\mathrm d\mu_Y(e).
\end{aligned}
\]
\end{proof}

\begin{proposition}[Chain rule for the Kullback--Leibler divergence]
\label{prop:KL-conditional}
Let $(E,\mathcal E)$ and $(F,\mathcal F)$ be Polish spaces.
Let $X,Y:(\Omega,\mathcal A,\mathbb P)\to(E,\mathcal E)$ be random variables and
let $T:(E,\mathcal E)\to(F,\mathcal F)$ be a measurable map.

Denote by $\mu_X,\mu_Y$ the laws of $X,Y$ on $E$ and by
$\mu_{T(X)},\mu_{T(Y)}$ the laws of $T(X),T(Y)$ on $F$.
Assume that regular conditional distributions exist and that
\[
\mu_{X\mid T(X)}(\cdot\mid\cdot)\ll \mu_{Y\mid T(Y)}(\cdot\mid\cdot),
\qquad
\mu_{T(X)}\ll \mu_{T(Y)}.
\]
Then $\mu_X\ll \mu_Y$ and
\[
\kl{X}{Y}
=
\kl{T(X)}{T(Y)}
+
\E\!\left[
\log\!\left(
\frac{\mathrm d\mu_{X\mid T(X)}}{\mathrm d\mu_{Y\mid T(Y)}}(X\mid T(X))
\right)
\right].
\]
\end{proposition}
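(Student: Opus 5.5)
Since $\mu_X\ll\mu_Y$ is already given by Lemma~\ref{lem:conditional-properties}, the plan is to compute $\kl{X}{Y}$ by integrating the logarithm of the Radon--Nikodym derivative against $\mu_X$. That lemma gives, for $\mu_Y$-a.e.\ (hence $\mu_X$-a.e.) $e\in E$,
\[
\log\frac{\mathrm d\mu_X}{\mathrm d\mu_Y}(e)
=\log\frac{\mathrm d\mu_{X\mid T(X)}}{\mathrm d\mu_{Y\mid T(Y)}}\bigl(e\mid T(e)\bigr)
+\log\frac{\mathrm d\mu_{T(X)}}{\mathrm d\mu_{T(Y)}}\bigl(T(e)\bigr).
\]
Evaluating at $e=X$ and taking expectations, the second term becomes $\E[\log \frac{\mathrm d\mu_{T(X)}}{\mathrm d\mu_{T(Y)}}(T(X))]=\kl{T(X)}{T(Y)}$, because $T(X)\sim\mu_{T(X)}$. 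The first term is exactly the conditional expectation term in the statement. Summing the two gives the identity.

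The main obstacle is justifying the split of the expectation of a sum into a sum of expectations, since the log-densities can take both signs. To handle this I would use a nonnegative decomposition. Write $\psi(u)=u\log u-u+1\ge 0$. Then for Radon--Nikodym densities $r$ one has $\E_{\mu}[\log r]=\E_{\nu}[\psi(r)]$, because the linear terms integrate to zero: $\int r\,\mathrm d\nu=1$. Concretely, the plan has three steps:
\begin{itemize}
\item[(i)] For $\mu_{T(X)}$-a.e.\ $f$, the inner integral $\int \log\frac{\mathrm d\mu_{X\mid T(X)}}{\mathrm d\mu_{Y\mid T(Y)}}(e\mid f)\,\mathrm d\mu_{X\mid T(X)}(e\mid f)$ equals $\kl{\mu_{X\mid T(X)}(\cdot\mid f)}{\mu_{Y\mid T(Y)}(\cdot\mid f)}$, which lies in $[0,\infty]$ by Jensen's inequality.
\item[(ii)] By disintegration (Fubini for nonnegative quantities after the $\psi$-rewriting), the conditional term equals $\int_F \kl{\mu_{X\mid T(X)}(\cdot\mid f)}{\mu_{Y\mid T(Y)}(\cdot\mid f)}\,\mathrm d\mu_{T(X)}(f)$, which is well defined in $[0,\infty]$.
\item[(iii)] Decompose the integrand via $\psi(ab)=a\,\psi(b)+\psi(a)$ with $a$ the marginal density ratio and $b$ the conditional one. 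Integrating against $\mu_{(Y,T(Y))}$ gives the sum of two nonnegative terms, which equal respectively the conditional term and $\kl{T(X)}{T(Y)}$.
\end{itemize}
Because both terms are nonnegative, the identity holds in $[0,\infty]$ with no integrability assumption.

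The identity $\psi(ab)=a\psi(b)+\psi(a)$ is a one-line check: $ab\log(ab)-ab+1=a(b\log b-b+1)+(a\log a-a+1)$. This is the key step that makes the splitting rigorous.
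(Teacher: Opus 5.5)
Your skeleton matches the paper's. Lemma~\ref{lem:conditional-properties} gives $\mu_X\ll\mu_Y$ and $\frac{\mathrm d\mu_X}{\mathrm d\mu_Y}(e)=a(T(e))\,b(e\mid T(e))$, where $a=\frac{\mathrm d\mu_{T(X)}}{\mathrm d\mu_{T(Y)}}$ and $b=\frac{\mathrm d\mu_{X\mid T(X)}}{\mathrm d\mu_{Y\mid T(Y)}}$, and one then splits $\E[\log a+\log b]$. The paper performs this split with no justification, so making it rigorous through $\psi(u)=u\log u-u+1\ge0$ is worthwhile. However, the identity you single out as the key step is false. Expanding, $\psi(ab)=ab\log a+ab\log b-ab+1$, while $a\psi(b)+\psi(a)=ab\log b+a\log a-ab+1$. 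The coefficient of $\log a$ is $ab$ on the left and $a$ on the right, so
\[
\psi(ab)-a\,\psi(b)-\psi(a)=a(b-1)\log a .
\]
For instance, $a=b=2$ gives $\psi(4)=8\log 2-3$ versus $6\log 2-3$. This remainder is signed. Showing that it integrates to zero against $\mu_{(Y,T(Y))}$ needs $\int a\lvert\log a\rvert\,\rmd\mu_{T(Y)}<\infty$, i.e.\ $\kl{T(X)}{T(Y)}<\infty$, which is exactly the integrability your device was meant to avoid; the infinite case would then need a separate data-processing argument. As written, step (iii) does not prove the identity in $[0,\infty]$.

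The fix is the symmetric pointwise identity
\[
\psi(ab)=b\,\psi(a)+a\,\psi(b)+(a-1)(b-1).
\]
Here $b\psi(a)\ge0$ and $a\psi(b)\ge0$, and the cross term is absolutely integrable: $\int\lvert a-1\rvert\lvert b-1\rvert\,\rmd\mu_{(Y,T(Y))}\le 2\int\lvert a-1\rvert\,\rmd\mu_{T(Y)}\le 4$. Its integral is $0$ because $\int(b-1)\,\rmd\mu_{Y\mid T(Y)}(\cdot\mid f)=0$ for $\mu_{T(Y)}$-a.e.\ $f$. For this, set $b(\cdot\mid f)\equiv1$ on the $\mu_{T(X)}$-null set $\{a=0\}$; this changes neither $ab$ nor the conditional term. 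Tonelli then gives
\[
\int b\,\psi(a)\,\rmd\mu_{(Y,T(Y))}=\int\psi(a)\,\rmd\mu_{T(Y)}=\kl{T(X)}{T(Y)},
\]
\[
\int a\,\psi(b)\,\rmd\mu_{(Y,T(Y))}=\int_F\kl{\mu_{X\mid T(X)}(\cdot\mid f)}{\mu_{Y\mid T(Y)}(\cdot\mid f)}\,\rmd\mu_{T(X)}(f).
\]
Pushing $\mu_Y$ forward under $e\mapsto(e,T(e))$ gives $\kl{X}{Y}=\int\psi(ab)\,\rmd\mu_{(Y,T(Y))}$. With this correction your argument proves the chain rule in $[0,\infty]$ without any integrability assumption, which is more than the paper's two-line computation actually establishes.
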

\begin{proof}
Since $E$ and $F$ are Polish spaces, regular conditional distributions
$\mu_{X\mid T(X)}$ and $\mu_{Y\mid T(Y)}$ exist \citet[Appendix]{douc2018markov}.
Under the stated absolute continuity assumptions, Lemma~\ref{lem:conditional-properties}
ensures that $\mu_X\ll\mu_Y$ and that, for $\mu_X$-a.e.\ $e\in E$,
\[
\frac{\mathrm d\mu_X}{\mathrm d\mu_Y}(e)
=
\frac{\mathrm d\mu_{T(X)}}{\mathrm d\mu_{T(Y)}}\bigl(T(e)\bigr)
\frac{\mathrm d\mu_{X\mid T(X)}}{\mathrm d\mu_{Y\mid T(Y)}}\bigl(e\mid T(e)\bigr).
\]

By definition of the Kullback--Leibler divergence,
\begin{align*}
\kl{X}{Y}
&= \mathbb E\!\left[
\log\!\left(\frac{\mathrm d\mu_X}{\mathrm d\mu_Y}(X)\right)
\right] \\
&= \mathbb E\!\left[
\log\!\left(
\frac{\mathrm d\mu_{T(X)}}{\mathrm d\mu_{T(Y)}}(T(X))
\right)
\right]
+
\mathbb E\!\left[
\log\!\left(
\frac{\mathrm d\mu_{X\mid T(X)}}{\mathrm d\mu_{Y\mid T(Y)}}(X\mid T(X))
\right)
\right] \\
&= \kl{T(X)}{T(Y)}
+
\mathbb E\!\left[
\log\!\left(
\frac{\mathrm d\mu_{X\mid T(X)}}{\mathrm d\mu_{Y\mid T(Y)}}(X\mid T(X))
\right)
\right].
\end{align*}
\end{proof}

\section{Proofs of Results of \cref{thm:tang2021empirical:thm3main}}\label{appendix:m_estimation}
The goal of this section is to establish conditions to bound the estimation error $\kl{\pforward{T}}{\discsolution[\emin{\param}]{0}}$ in the context of Normalizing Flows. To do so, we begin by defining the pointwise log-density difference
\begin{equation*}
    \label{eq:def:mfunction}
    \mfun: \paramsp \times \rset^{\statedim} \ni (\param, x) \mapsto \mfun[\param][x] \eqdef \log \pforward{T}(x) - \log \discsolution[\param]{0}(x) \in \rset \eqsp,
\end{equation*}
the empirical minimizer, i.e., the maximum likelihood estimator over the parametric family $\{\discsolution[\param]{0}\}_{\param \in \paramsp}$,
\begin{equation*}
    \label{eq:def:emp_min}
    \emin{\param} \in \argmin_{\param \in \paramsp} \sum_{i=1}^{\ndata} - \log \discsolution[\param]{0}(\state_T^i) = \argmin_{\param \in \paramsp} M_n(\param) \eqsp,
\end{equation*}
and the theoretical minimizer, i.e., the best approximation of $\pforward{T}$ achievable within the model class,
\begin{equation*}
    \label{eq:def:true_min}
    \tmin{\param} \in \argmin_{\param \in \paramsp} \kl{\pforward{T}}{\discsolution[\param]{0}} = \argmin_{\param \in \paramsp} M^*(\param) \eqsp,
\end{equation*}
where $\paramsp \subseteq \rset^{\paramdim}$, $M_n(\param) = \ndata^{-1}\sum_{i=1}^{\ndata} \mfun[\param][\state_T^i]$, and $M^*(\param) = \E[\mfun[\param][\state_T]]$. 
We also define the excess loss function, which compares the loss at $\param$ to that at the theoretical minimizer,
\begin{equation*}
    \label{eq:def:zeromfun}
    \zeromfun: \paramsp \times \rset^{\statedim} \ni (\param, x) \mapsto \zeromfun[\param][x] \eqdef \mfun[\param][x] - \mfun[\tmin{\param}][x] \in \rsetpos \eqsp,
\end{equation*}
and the following two functional sets, where $\simplexg$ extends $\spaceg$ by including all rescalings by a factor $a \in (0,1]$, making it star-shaped around zero:
\begin{align}
    \label{eq:def:spaceg} \spaceg &= \{ \zeromfun[\param] \mid \param \in \paramsp \} \eqsp, \\
    \label{eq:def:simplexg} \simplexg &= \{ \zeromfun \mid \exists a \in (0,1], \zeromfun[\param] \in \spaceg \text{ such that } \zeromfun = a \, \zeromfun[\param] \} \eqsp.
\end{align}
Furthermore, we define the local Rademacher complexity \cite{wainwright2019highdimensional} for a generic family of functions $G$ as
\begin{equation*}
    \label{eq:def:localrad}
    \localrad{\delta}{G} = \E_{\pforward{T}} \left[ \E_{\epsilon} \left[ \sup_{\zeromfun \in G, \lpnorm{\zeromfun}[2][\pforward{T}] \leq \delta} \left| \frac{1}{\ndata} \sum_{i=1}^{\ndata} \epsilon_i \zeromfun(\state_T^i) \right| \right] \right] \eqsp,
\end{equation*}
where $\epsilon_i$ are i.i.d. $\text{Bernoulli}(1/2)$ variables and $\lpnorm{\zeromfun}[2][\pforward{T}]^2 = \int \zeromfun^2(x) \pforward{T}(x) \text{d}x$. Compared to the global Rademacher complexity, the local version restricts attention to functions with $L^2(\pforward{T})$ norm at most $\delta$.
\begin{theorem}[{Theorem 1 in \cite{tang2021empirical} adapted to our setting}]
\label{thm:1_tang}
Assume \hypref{assump:tang2021empirical:A} and the existence of $\delta_n > 0$ satisfying the following two conditions:
\begin{enumerate}
    \item $\localrad{\delta_n}{\simplexg} \leq \frac{\delta_n^2}{D \log \ndata}$,
    \item $\frac{\ndata \min\{\delta_n,\delta_n^2\}}{D^2 \log^2 \ndata} \geq \log \log \frac{D}{\delta_n}$.
\end{enumerate}
Then, there exist constants $c_0, c_1, c_2 \in \R_+$ such that
\begin{multline}\label{eq:thm:kl_bound_tang}
\prob \Biggl( \kl{\pforward{T}} {\discsolution[\emin{\param}]{0}} \leq \inf_{\gamma > 0} \Bigl[ 2 (1+\gamma) \kl{\pforward{T}} {\discsolution[\tmin{\param}]{0}} + c_2 (1+\gamma^{-1}) \frac{\max\{\delta_n, \delta_n^2\}}{D \log \ndata} \Bigr] \Biggr) \\
\geq 1 - c_0 \exp\Bigl(-c_1 \frac{\ndata \min\{\delta_n,\delta_n^2\}}{D^2 \log^2 \ndata}\Bigr) \eqsp.
\end{multline}
\end{theorem}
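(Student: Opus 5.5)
The argument is a localized M-estimation bound in the style of \citet{tang2021empirical}, rewritten with our notation. Everything is stated in terms of the excess loss $\zeromfun[\param] = \mfun[\param] - \mfun[\tmin{\param}]$. Its mean satisfies $\E[\zeromfun[\param](\state_T)] = \kl{\pforward{T}}{\discsolution[\param]{0}} - \kl{\pforward{T}}{\discsolution[\tmin{\param}]{0}} \ge 0$. Because $\emin{\param}$ minimizes $M_n$, the empirical mean of $\zeromfun[\emin{\param}]$ is nonpositive. So the excess risk is controlled by the supremum of the centred empirical process $(P - P_n)\zeromfun$ over $\spaceg$, and the proof reduces to bounding this supremum uniformly, localized in the $L^2(\pforward{T})$ norm.

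\textbf{Step 1 (envelope and variance control).} Under \hypref{assump:tang2021empirical:A}, every $\zeromfun \in \simplexg$ is dominated by $2\sup_\param |\mfun[\param]|$, so $\onorm{\zeromfun(\state_T)}[1] \le 2D$. We need a Bernstein-type relation between the second moment $\lpnorm{\zeromfun[\param]}[2][\pforward{T}]^2$ and the first moment $\E[\zeromfun[\param]]$. The log-ratio form of $\mfun$ (a difference of log-likelihoods) gives this, up to the additive term $\kl{\pforward{T}}{\discsolution[\tmin{\param}]{0}}$ and a factor $D\log\ndata$. The logarithm comes from truncating $\zeromfun$ at level $\asymp D\log \ndata$, which is where sub-exponential tails enter. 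Concretely, we split $\zeromfun = \zeromfun\ind_{|\zeromfun|\le c D\log\ndata} + \zeromfun\ind_{|\zeromfun|> cD\log\ndata}$. The tail piece has expectation and empirical mean of order $\ndata^{-c'}$ with high probability, by the Orlicz bound and a union bound over $\ndata$ samples.

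\textbf{Step 2 (localization and peeling).} On the bounded part we apply Talagrand's concentration inequality (Bousquet's version) to $\sup_{\zeromfun\in\simplexg,\ \|\zeromfun\|_2\le r}|(P-P_n)\zeromfun|$. The expectation of this supremum is controlled by symmetrization through $2\localrad{r}{\simplexg}$. Since $\simplexg$ is star-shaped, $r\mapsto \localrad{r}{\simplexg}/r$ is nonincreasing. Condition 1 then gives $\localrad{r}{\simplexg}\le r\delta_\ndata/(D\log\ndata)$ for all $r\ge\delta_\ndata$. A peeling argument over dyadic shells $r\in[2^j\delta_\ndata,2^{j+1}\delta_\ndata]$, up to $r\asymp D$ (about $\log(D/\delta_\ndata)$ shells), yields with probability $1-c_0\exp(-c_1 \ndata\min\{\delta_\ndata,\delta_\ndata^2\}/(D^2\log^2\ndata))$ the uniform bound
\[
|(P-P_n)\zeromfun| \le c\,\frac{\delta_\ndata \|\zeromfun\|_2 + \max\{\delta_\ndata,\delta_\ndata^2\}}{D\log\ndata}.
\]
Condition 2 is what makes the union bound over shells affordable: it absorbs the $\log\log(D/\delta_\ndata)$ factor.

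\textbf{Step 3 (closing the argument).} Apply this bound at $\zeromfun[\emin{\param}]$, use $P_n\zeromfun[\emin{\param}]\le 0$, and insert the variance–mean relation from Step 1. With $\Delta = \kl{\pforward{T}}{\discsolution[\emin{\param}]{0}}$, this gives a quadratic inequality of the form
\[
\Delta - \kl{\pforward{T}}{\discsolution[\tmin{\param}]{0}} \le a\sqrt{\Delta + \kl{\pforward{T}}{\discsolution[\tmin{\param}]{0}}} + b .
\]
Solving it with $2xy\le\gamma x^2+\gamma^{-1}y^2$ produces the factors $2(1+\gamma)$ and $(1+\gamma^{-1})$ in \eqref{eq:thm:kl_bound_tang}, and taking the infimum over $\gamma$ finishes the proof.

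\textbf{Main obstacle.} The hardest step is the variance–mean (Bernstein) relation in Step 1. For a misspecified model, $\zeromfun$ is not a log-likelihood ratio against the truth, so the relation must be routed through a Hellinger-type bound: either on the truncated ratio $\discsolution[\param]{0}/\discsolution[\tmin{\param}]{0}$, or via convexity of the class. This is where the factor $2$ and the misspecification term originate. My first attempt would be to follow Tang and Yang's Hellinger argument verbatim with the truncation level $D\log\ndata$.
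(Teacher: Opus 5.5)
Your skeleton matches the paper's proof. It uses a second-moment bound of the loss in terms of KL (\cref{lem:tang_11}), then a uniform ratio-type deviation bound obtained by symmetrization, the star-shape scaling $\localrad{2^j\delta_\ndata}{\simplexg}\le 2^j\localrad{\delta_\ndata}{\simplexg}$, and dyadic peeling with condition~2 absorbing the $\log(D/\delta_\ndata)$ prefactor (\cref{lem:tang_12}). It closes with the basic inequality $M_\ndata(\emin{\param})\le M_\ndata(\tmin{\param})$ and Young's inequality.

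The obstacle you single out, however, is not one. You need no Bernstein or Hellinger relation for the misspecified ratio $p_0^{\theta}/p_0^{\theta^\star}$, and such a relation would generally be unavailable here, since a family of flows is not convex. The paper simply writes $\|g_\theta\|_{L^2(\vec p_T)}\le\|m_\theta\|_{L^2(\vec p_T)}+\|m_{\theta^\star}\|_{L^2(\vec p_T)}$. Each of $m_\theta=\log(\vec p_T/p_0^\theta)$ and $m_{\theta^\star}$ is a log-likelihood ratio against the truth, so the well-specified bound of \cref{lem:tang_11} applies to each term separately. This gives exactly your $a\sqrt{\Delta+K^\star}$ with $K^\star=\kl{\pforward{T}}{\discsolution[\tmin{\param}]{0}}$. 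The leading factor $2$ comes from absorbing $\tfrac12\Delta$ into the left-hand side in the Young step ($\gamma_1=1$), not from a Hellinger argument.

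The step that genuinely fails as written is your treatment of the truncated tail. A union bound over the $\ndata$ samples gives $\prob\bigl(\max_i\sup_{\theta}|g_\theta(\state_T^i)|>cD\log\ndata\bigr)\le 2\ndata^{1-c/2}$, a failure probability only polynomial in $\ndata$. The theorem instead claims $c_0\exp(-c_1E_\ndata)$, with $E_\ndata=\ndata\min\{\delta_\ndata,\delta_\ndata^2\}/(D^2\log^2\ndata)$ and constants independent of $\delta_\ndata$.

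Conditions~1--2 persist when $\delta_\ndata$ is enlarged, so $E_\ndata$ can be far larger than $\log\ndata$. Already in \cref{thm:3_tang} it is of order $\paramdim\log(\ndata\paramdim)$, so $c$ would have to grow with $\paramdim$. Raising the truncation level to $\tau\asymp DE_\ndata$ to compensate multiplies the range term $\tau x/\ndata$ of Bousquet's inequality, and hence the final rate, by $E_\ndata/\log\ndata$.

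The paper does not truncate at this stage. It applies Adamczak's inequality for unbounded classes \cite{adamczak2008tail} directly to $\spaceg$, whose range term is governed by $\|\max_i\sup_\theta|g_\theta(\state_T^i)|\|_{\psi_1}\lesssim D\log\ndata$, so the exponential probability comes out at once.

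If you want to keep Bousquet, make two changes:
\begin{itemize}
\item Truncate with the envelope indicator $\ind\{\sup_{g\in\spaceg}|g|\le\tau\}$, as in \cref{lem:tang_13}. Per-function truncation $g\,\ind\{|g|\le\tau\}$ is not a contraction, so its Rademacher complexity is not directly comparable to $\localrad{r}{\simplexg}$.
\item Bound the empirical average of the envelope's tail with a Bernstein inequality that exploits its small variance, rather than with a union bound.
\end{itemize}
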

The proof of \Cref{thm:1_tang} relies on two auxiliary lemmas. 
\Cref{lem:tang_11} controls the second moment of the loss function $\mfun[\param][\state_T]$ in terms of the KL divergence and a variance term that vanishes as $\ndata \to \infty$.
\Cref{lem:tang_12} establishes a uniform concentration inequality showing that the empirical fluctuations of the excess loss, normalized by a data-dependent scale, remain small with high probability.
\begin{lemma}
    \label{lem:tang_11}
    Under \hypref{assump:tang2021empirical:A}, there exists a constant $c \in \R$ such that for all $\param \in \paramsp$:
    \begin{equation*}
        \E[\mfun[\param][\state_T]^2] \leq c \left( \kl{\pforward{T}}{\discsolution[\param]{0}} + \frac{D^2 \log^2 \ndata}{\ndata} \right) \eqsp.
    \end{equation*}
\end{lemma}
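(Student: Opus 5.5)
We bound $\E[\mfun[\param][\state_T]^2]$ by splitting on the magnitude of $\mfun[\param][\state_T]$ at a truncation level $\tau \eqdef C D \log \ndata$, with $C$ an absolute constant fixed at the end. The two regions play different roles. On $\{|\mfun[\param][\state_T]| \le \tau\}$ we compare the second moment to the KL divergence through a Hellinger-type argument. On the complement we use the sub-exponential tail from \hypref{assump:tang2021empirical:A}. This parallels the argument of \citet{tang2021empirical} for their Lemma 11.

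\textbf{Truncated part.} Write $r(x) = \discsolution[\param]{0}(x)/\pforward{T}(x) = e^{-\mfun[\param][x]}$. Under $\pforward{T}$ we have $\E[r(\state_T)]\le 1$. The key elementary inequality is that for $|u|\le \tau$,
\[
u^2 \le c_\tau \bigl(u - 1 + e^{-u}\bigr), \qquad c_\tau \lesssim 1+\tau,
\]
since $u-1+e^{-u}\ge u^2/2$ for $u\ge 0$ and $u-1+e^{-u}\gtrsim u^2/(1+|u|)$ for $u<0$. Next, $\E[\mfun[\param][\state_T] - 1 + r(\state_T)] \le \kl{\pforward{T}}{\discsolution[\param]{0}}$ because $\E[r]\le 1$, and the integrand is nonnegative, so restricting it to the truncation set only decreases it. This gives
\[
\E\bigl[\mfun[\param][\state_T]^2\,\ind\{|\mfun[\param][\state_T]|\le\tau\}\bigr] \lesssim (1+\tau)\,\kl{\pforward{T}}{\discsolution[\param]{0}}.
\]
The factor $\tau$ is a problem for the stated form, which has no $\log \ndata$ in front of the KL term. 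To avoid it, I would do the lower tail $u<0$ more carefully: use $u^2 \lesssim (1-e^{u/2})^2\,(1+|u|)^2$, i.e. a Hellinger bound $\E[(1-\sqrt r)^2]\le \kl{\cdot}{\cdot}$. Then handle the region $-\tau\le u\le -\tau_0$, for a fixed $\tau_0$, together with the tail term below. On that region $u^2 \le \tau^2\ind\{|u|>\tau_0\}$, and its probability can be bounded either by the KL through Markov applied to $r$, or by the Orlicz tail. If the constant cannot be made free of $\log \ndata$, I would settle for the bound with constant $c$ depending on $D$ and a $\log\ndata$ factor, since this is what is ultimately used.

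\textbf{Tail part.} By Cauchy--Schwarz and \hypref{assump:tang2021empirical:A},
\[
\E\bigl[\mfun[\param][\state_T]^2\ind\{|\mfun[\param][\state_T]|>\tau\}\bigr] \le \E[S^4]^{1/2}\,\prob(S>\tau)^{1/2}, \qquad S = \sup_{\param}|\mfun[\param][\state_T]|.
\]
Here $\E[S^4]\lesssim D^4$ and $\prob(S>\tau)\le 2e^{-\tau/D}=2\ndata^{-C}$. Taking $C\ge 2$ makes this at most $D^2/\ndata \le D^2\log^2\ndata/\ndata$.

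Summing the two parts gives the claim. The main obstacle is the truncated part: obtaining a KL bound with a constant uniform in $\tau$ (that is, no $\log\ndata$ multiplying the KL term). The Hellinger refinement on the negative side is where I expect the real work to lie.
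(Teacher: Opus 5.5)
Your truncation at $\tau\asymp D\log\ndata$, the comparison of $u^2$ with $\varphi(u)=u-1+e^{-u}$ via $\E[\varphi(u)]\le\kl{\pforward{T}}{\discsolution[\param]{0}}$, and the Cauchy--Schwarz/$\psi_1$ tail estimate are the standard argument behind \citet[Lemma 11]{tang2021empirical}. The paper defers to that result without giving a proof, and your tail part is correct as written. The gap is the step you leave open, and it cannot be closed, because the inequality as printed (no logarithmic factor on the KL term) is false.

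To see this, fix $\pforward{T}$ and a set $A$ with $\pforward{T}(A)=\varepsilon\le\min\{1/2,D/5\}$. Let $\discsolution[\param]{0}=e^{-\tau}\pforward{T}$ on $A$ and $\discsolution[\param]{0}=\kappa\,\pforward{T}$ on $A^c$, where $\kappa=(1-\varepsilon e^{-\tau})/(1-\varepsilon)>1$ and $\tau=D\log(1/(2\varepsilon))$. Then $u=\mfun[\param][\state_T]$ equals $\tau$ on $A$ and lies in $[-2\varepsilon,0)$ on $A^c$. Hence $\E[e^{|u|/D}]\le\tfrac12+e^{2\varepsilon/D}\le 2$, so \hypref{assump:tang2021empirical:A} holds with $\paramsp$ a singleton, while
\[
\E[u^2]\ge\varepsilon\tau^2\ge\tau\,\kl{\pforward{T}}{\discsolution[\param]{0}} .
\]
Take $\varepsilon=\ndata^{-1/2}$. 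Then $\tau\asymp D\log\ndata\to\infty$ and $\varepsilon\tau^2\asymp D^2\log^2\ndata/\sqrt{\ndata}\gg D^2\log^2\ndata/\ndata$. So no constant independent of $\ndata$ works, even one allowed to depend on $D$. What is true, and what your argument proves with $\tau=2D\log\ndata$, is
\[
\E[\mfun[\param][\state_T]^2]\le c\Big((1+D\log\ndata)\,\kl{\pforward{T}}{\discsolution[\param]{0}}+\frac{D^2\log^2\ndata}{\ndata}\Big).
\]
This is in fact the form the paper uses: in the proof of \cref{thm:1_tang}, the $L^2$ norm of the excess loss is bounded by terms $(D\log(\ndata)\,\kl{\pforward{T}}{\discsolution[\param]{0}})^{1/2}$. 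Your fallback is therefore the correct statement, not a compromise.

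Separately, the two sides of your elementary inequality are swapped. One has $\varphi(u)\ge u^2/2$ for $u\le 0$, from $e^{v}\ge 1+v+v^2/2$ with $v=-u\ge 0$. For $u\ge 0$, instead, $\varphi(u)\le u^2/2$ and $\varphi$ grows only linearly; the correct bound there is $\varphi(u)\ge u^2/(2(1+u))$, which you can check by differentiating $2(1+u)\varphi(u)-u^2$. Your conclusion $u^2\le 2(1+\tau)\varphi(u)$ on $|u|\le\tau$ survives. However, the factor $\tau$ is paid on $\{0<u\le\tau\}$, i.e.\ where $\discsolution[\param]{0}$ underweights $\pforward{T}$, which is exactly the tail-mismatch situation of the example above. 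The proposed Hellinger refinement on $\{u<0\}$ therefore works on a region that needs no help. On $\{u>0\}$ the Hellinger comparison $u^2\lesssim(1+u)^2(1-e^{-u/2})^2$ would cost $(1+\tau)^2$, which is worse.
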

\begin{proof}[Proof of \Cref{lem:tang_11}]
    We do not provide here the proof, because it is a straightforward adaptation of the proof of \citet[Lemma 11]{tang2021empirical} to our setting. 
\end{proof}
\begin{lemma}
    \label{lem:tang_12}
    In the context of \Cref{thm:1_tang}, for all $\param \in \paramsp$, it exists $c_4>0$ such that the following concentration inequality holds:
    \begin{equation*}
        \prob\left( \frac{| M_n(\param) - M_n(\tmin{\param}) - (M^*(\param) - M^*(\tmin{\param})) |}{\delta_n + \lpnorm{\zeromfun[\param]}[2][\pforward{T}]} \leq c_3\frac{\delta_n}{D \log \ndata} \right) \geq 1 - c_0 \exp\Bigl(-c_1 \frac{\ndata \min\{\delta_n,\delta_n^2\}}{D^2 \log^2 \ndata}\Bigr) \eqsp.
    \end{equation*}
\end{lemma}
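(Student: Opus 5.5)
We prove \Cref{lem:tang_12} by a localized empirical process argument: a Talagrand-type concentration inequality for unbounded (sub-exponential) classes, combined with a peeling over the $L^2(\pforward{T})$ scale of $\zeromfun[\param]$. The lemma is uniform in $\param$, so we control
\[
\sup_{\param \in \paramsp} \frac{|(M_n - M^*)(\param) - (M_n - M^*)(\tmin{\param})|}{\delta_n + \lpnorm{\zeromfun[\param]}[2][\pforward{T}]}.
\]
Since $M_n(\param) - M_n(\tmin{\param}) = \ndata^{-1}\sum_i \zeromfun[\param](\state_T^i)$, the numerator is the centered empirical average $(P_n - P)\zeromfun[\param]$.

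\textbf{Step 1 (normalization).} The normalized function $h_\param = \zeromfun[\param]\,\delta_n/(\delta_n + \lpnorm{\zeromfun[\param]}[2][\pforward{T}])$ is $a\,\zeromfun[\param]$ with $a\in(0,1]$, so $h_\param\in\simplexg$. A direct check gives $\lpnorm{h_\param}[2][\pforward{T}] \le \delta_n$. The quantity to control therefore equals $\delta_n^{-1}\sup_{h\in\simplexg,\,\|h\|_2\le\delta_n}|(P_n-P)h|$, and the target event is that this supremum is at most $c_3\delta_n^2/(D\log\ndata)$. Star-shapedness of $\simplexg$ is exactly what makes this rescaling admissible, and it also implies that $\delta\mapsto \localrad{\delta}{\simplexg}/\delta$ is nonincreasing. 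This last property will be used if a peeling over shells is needed instead of the direct rescaling.

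\textbf{Step 2 (expectation).} By symmetrization, $\E\sup|(P_n-P)h|\le 2\localrad{\delta_n}{\simplexg}$. Condition 1 of \Cref{thm:1_tang} then bounds this by $2\delta_n^2/(D\log\ndata)$.

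\textbf{Step 3 (deviation).} This is the main obstacle, because the functions are not bounded. The envelope is $\sup_h|h|\le 2\sup_\param|\mfun[\param][\state_T]|$, which has $\psi_1$ norm at most $2D$ by \hypref{assump:tang2021empirical:A}. I would apply Adamczak's concentration inequality for suprema of unbounded empirical processes. It gives, with probability at least $1-c_0 e^{-t}$,
\[
\sup|(P_n-P)h| \le 2\,\E\sup + C\Bigl(\delta_n\sqrt{t/\ndata} + \frac{t\,\onorm{\max_i \sup_h|h(\state_T^i)|}[1]}{\ndata}\Bigr),
\]
and the maximal term is $O(D\log\ndata)$ by the standard Pisier bound for maxima of $\psi_1$ variables.

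\textbf{Step 4 (choice of $t$).} Set $t = c_1\ndata\min\{\delta_n,\delta_n^2\}/(D^2\log^2\ndata)$. Then $\delta_n\sqrt{t/\ndata}\lesssim \delta_n^2/(D\log\ndata)$, and $tD\log\ndata/\ndata\lesssim \delta_n^2/(D\log\ndata)$ up to the $\min\{\delta_n,\delta_n^2\}$ versus $\delta_n^2$ distinction. That distinction is exactly why the $\min$ appears, and it is absorbed into the constant.

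If the one-shot rescaling in Step 1 turns out too lossy, I would instead peel over dyadic shells $\lpnorm{\zeromfun[\param]}[2][\pforward{T}]\in[2^{j}\delta_n,2^{j+1}\delta_n]$ for $j$ up to $\log(D/\delta_n)$, since the $L^2$ norms are at most $\sim D$. On each shell I would use the monotonicity of $\localrad{\delta}{\simplexg}/\delta$, then take a union bound. The number of shells is $\log(D/\delta_n)$, and condition 2 of \Cref{thm:1_tang} ensures the union-bound cost $\log\log(D/\delta_n)$ is dominated by $t$.

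Combining Steps 2–4 yields the stated probability with constant $c_3$.
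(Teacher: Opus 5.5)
Your proof is correct, and its main line differs from the paper's. The paper peels. It splits $\{g_\theta\}$ into dyadic shells $2^{j-1}\delta_n<\|g_\theta\|_{L^2(\vec{p}_T)}\le 2^j\delta_n$, $j=0,\dots,J$ with $J\lesssim\log(D/\delta_n)$, and applies Adamczak's inequality on each shell to the \emph{unrescaled} class $G^*$. It uses the star-shapedness of $\bar{G}^*$ only through $R_n(2^j\delta_n,\bar{G}^*)\le 2^j R_n(\delta_n,\bar{G}^*)$ combined with condition~1. It then takes a union bound over the $J+1$ shells and invokes condition~2 of \Cref{thm:1_tang} to absorb the $\log(D/\delta_n)$ prefactor into the exponential; this is essentially your fallback.

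Your normalization $h_\theta=\delta_n g_\theta/(\delta_n+\|g_\theta\|_{L^2(\vec{p}_T)})\in\bar{G}^*$ instead collapses all scales at once. A single round of symmetrization, condition~1 and Adamczak's inequality on $\{h\in\bar{G}^*:\|h\|_{L^2(\vec{p}_T)}\le\delta_n\}$ then suffices. This buys you three things: there is no union bound, no a priori bound on the $L^2$ diameter of $G^*$ is needed, and condition~2 is never used. It also gives the stronger exponent $n\delta_n^2/(D^2\log^2 n)$. So your remark about ``absorbing'' the $\min\{\delta_n,\delta_n^2\}$ versus $\delta_n^2$ discrepancy is unnecessary: since $\min\{\delta_n,\delta_n^2\}\le\delta_n^2$, both inequalities of Step~4 hold directly. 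Peeling is the more generic template inherited from Tang and Yang, but because $\bar{G}^*$ is star-shaped by construction, your rescaling reaches the same conclusion with less machinery.

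Two minor corrections:
\begin{itemize}
\item In Step~1 the reduction gives ``at most'', not ``equals''. The two sides agree only up to a factor $2$, and only the upper bound is needed.
\item Adamczak's inequality must be applied to the centred functions $h-\E[h(\state_T)]$. Their envelope picks up the additive constant $\sup_\theta|\E[g_\theta(\state_T)]|$, which is still $O(D)$ in $\psi_1$ norm.
\end{itemize}
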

\begin{proof}[Proof of \Cref{lem:tang_12}]
    The proof of the lemma is an adaptation to our setting of \citet[Lemma 12 ]{tang2021empirical}.
    By \hypref{assump:tang2021empirical:A}, since $x \leq e^x - 1 = \psi_1(x)$ for all $x \geq 0$, we have:
    \begin{equation*}
        \frac{1}{\lambda} \sup_{\param \in \paramsp} \E[\zeromfun[\param][\state_T]] \leq \E\left[ \frac{\sup_{\param \in \paramsp} |\zeromfun[\param][\state_T]|}{\lambda} \right] \leq \E\left[ \psi_1\left( \frac{\sup_{\param \in \paramsp} |\zeromfun[\param][\state_T]|}{\lambda} \right) \right] \eqsp.
    \end{equation*}
    This implies by the definition of the Orlicz norm that 
    \begin{equation*}
        \onorm{\sup_{\param \in \paramsp} \PE{}{\zeromfun[\param][\state_T]}}[1] \leq \onorm{\sup_{\param \in \paramsp} |\zeromfun[\param][\state_T]|}[1] \leq D \eqsp,
    \end{equation*}
    where we used the bound from \hypref{assump:tang2021empirical:A}.
    Observing that the empirical fluctuation of the excess loss can be rewritten as
    $$\frac{1}{n}\sum_{i=1}^{n}\zeromfun[\param][\state_T^i] - \PE{}{\zeromfun[\param][\state_T]} = M_n(\param) - M_n(\tmin{\param}) -M^*(\param) + M^*(\tmin{\param}),$$
    and applying the triangle inequality to the Orlicz norm together with the definition of $\zeromfun[\param]$ and \hypref{assump:tang2021empirical:A}, we can verify that
    \begin{equation*}
        \onorm{\sup_{\param \in \paramsp} \left|\zeromfun[\param][\state_T] - \PE{}{\zeromfun[\param][\state_T]} \right|}[1] \leq 4 D \eqsp.
    \end{equation*}
    Using the bound $\lpnorm{\zeromfun[\param]}[2][\pforward{T}] \leq 2 \onorm{\zeromfun[\param][\state_T]}[1]$ together with the maximal inequality
    $$ \onorm{\max_i \sup_{\param \in \paramsp} | \zeromfun[\param][\state_T^i] - \PE{}{\zeromfun[\param][\state_T]} |}[1] \leq  K_{\psi_1} \frac{\log (\ndata + 1)}{\ndata} \onorm{\sup_{\param \in \paramsp} | \zeromfun[\param][\state_T^i] - \PE{}{\zeromfun[\param][\state_T]} |}[1] \eqsp, $$
    we can deduce that $\simplexg$ (and also $\spaceg$) is contained in a ball in the space  $L^2(\pforward{T})$ of radius $2^J \delta_n$ for $J \geq \log(D/\delta_\ndata) + 3$ and that for any subspace $\paramsp_0 \subset \paramsp$,
    $$ \onorm{\max_i \sup_{\param \in \paramsp_0} | \zeromfun[\param][\state_T^i] - \PE{}{\zeromfun[\param][\state_T]} |}[1]\leq 4 K_{\psi_1} \frac{\log (\ndata + 1)}{\ndata} D \eqsp .$$
    To control the uniform empirical fluctuations over the function class, we introduce, for a generic $r>0$ and a family of functions $G$, the worst-case concentration variable
    $$ \worstconcentrationvar{r}{G} = \sup_{\zeromfun \in G, \lpnorm{g}[2][\pforward{T}] \leq r} \left| \frac{1}{\ndata} \sum_{i=1}^{\ndata} \zeromfun[][\state_T^i]- \PE{}{[\zeromfun[][\state_T]]}\right|.$$
    Applying \citet[Theorem 4]{adamczak2008tail} to the centered functions $( \zeromfun[\param][\state_T^i] - \PE{}{\zeromfun[\param][\state_T]})/n$ over $\spaceg$, we obtain that for any $r, t > 0$ and $0 < \eta < 1$, there exists a constant $L=4 C(\eta) K_{\psi_1}$ such that
    \begin{equation*}
        \prob \left( \worstconcentrationvar{r}{\spaceg} \geq (1+\eta) \PE{}{\worstconcentrationvar{r}{\spaceg}} + t \right) \leq 3\exp\left( - \ndata \min \left\{ \frac{t^2}{4r^2}, \frac{t}{L r D \log(\ndata + 1)} \right\} \right) \eqsp.
    \end{equation*}
    Furthermore, by \citet[Proposition 4.11]{wainwright2019highdimensional}, we have
    \begin{equation*}
        \PE{}{\worstconcentrationvar{r}{\spaceg}}\leq 2 \localrad{r}{\spaceg} \leq 2 \localrad{r}{\simplexg} \eqsp.
    \end{equation*}
    We now have all the ingredients needed to establish the concentration inequality. For any $\epsilon_n \geq 0$, a peeling argument over shells of radius $2^j \delta_n$ gives (where for $j=0$, $2^{j-1}$ is interpreted as $0$):
\begin{align*}
    \prob \Biggl( &\frac{| M_n(\param) - M_n(\tmin{\param}) - (M^*(\param) - M^*(\tmin{\param})) |}{\delta_n + \lpnorm{\zeromfun[\param]}[2][\pforward{T}]} \geq \epsilon_n \Biggr) \\
    &\hspace{3cm} \leq \prob \left( \sup_{\param \in \paramsp} \frac{| \frac{1}{\ndata}\sum_{i=1}^{\ndata}\zeromfun[\param][\state_T^i] - \PE{}{\zeromfun[\param][\state_T]} |}{\delta_n + \lpnorm{\zeromfun[\param]}[2][\pforward{T}]} \geq \epsilon_n \right) \\
    &\hspace{3cm}= \prob \left( \max_{j=0,\dots,J} \sup_{\substack{\param \in \paramsp \\ 2^{j-1} \delta_n < \lpnorm{\zeromfun[\param]}[2][\pforward{T}] \leq 2^j \delta_n}} \frac{|\frac{1}{\ndata}\sum_{i=1}^{\ndata}\zeromfun[\param][\state_T^i] - \PE{}{\zeromfun[\param][\state_T]}|}{\delta_n + \lpnorm{\zeromfun[\param]}[2][\pforward{T}]} \geq \epsilon_n \right) \\
    &\hspace{3cm}\leq \sum_{j=0}^J \prob \left( \sup_{\substack{\param \in \paramsp \\ \lpnorm{\zeromfun[\param]}[2][\pforward{T}] \leq 2^j \delta_n}} \frac{|\frac{1}{\ndata}\sum_{i=1}^{\ndata}\zeromfun[\param][\state_T^i] - \PE{}{\zeromfun[\param][\state_T]}|}{\delta_n + \lpnorm{\zeromfun[\param]}[2][\pforward{T}]} \geq \epsilon_n \right) \\
    &\hspace{3cm}\leq \sum_{j=0}^J \prob \left( \sup_{\substack{\param \in \paramsp \\ \lpnorm{\zeromfun[\param]}[2][\pforward{T}] \leq 2^j \delta_n}} \left| \frac{1}{\ndata}\sum_{i=1}^{\ndata}\zeromfun[\param][\state_T^i]- \PE{}{\zeromfun[\param][\state_T]} \right| \geq \epsilon_n (2^{j-1} \delta_n + \delta_n) \right) \\
    &\hspace{3cm} = \sum_{j=0}^J 
    \prob \left( \worstconcentrationvar{2^j \delta_n}{\spaceg} \geq \epsilon_n 2^{j-1} \delta_n \right) \\
    &\hspace{3cm} \leq \prob \left( \worstconcentrationvar{\delta_n}{\spaceg} \geq \epsilon_n  \delta_n \right)  + \sum_{j=1}^J \prob \left( \worstconcentrationvar{2^j \delta_n}{\spaceg} \geq 2^{j-1} \epsilon_n \delta_n \right) \eqsp . \\
\end{align*}
We now bound each term separately. We set $\epsilon_n =  2 c_4 \delta_n/(D \log(\ndata))$ and use the hypothesis $\localrad{\delta_n}{\simplexg} \leq \delta_n^2/(D \log{\ndata})$ to bound the $j=0$ term:
\begin{align*}
 \prob \left( \worstconcentrationvar{\delta_n}{\spaceg} \geq \epsilon_n  \delta_n \right) &= \prob \left( \worstconcentrationvar{\delta_n}{\spaceg} \geq \frac{2c_4 \delta_n^2}{D \log(\ndata)} \right) \\ \nonumber
&= \prob \left( \worstconcentrationvar{\delta_n}{\spaceg} \geq \frac{\delta_n^2}{D \log(\ndata)} + (2c_4-1)\frac{\delta_n^2}{D \log(\ndata) }  \right) \\ \nonumber
&= \prob \left( \worstconcentrationvar{\delta_n}{\spaceg} \geq (1 + \eta)\frac{\delta_n^2}{D \log(\ndata)} -\eta\frac{\delta_n^2}{D \log(\ndata)} +  (2c_4-1)\frac{\delta_n^2}{D \log(\ndata) } \right) \\ \nonumber
&\leq \prob \left( \worstconcentrationvar{\delta_n}{\spaceg} \geq (1 + \eta) {\localrad{\delta_n}{\simplexg}} -\eta\frac{\delta_n^2}{D \log(\ndata)} +  (2c_4-1)\frac{\delta_n^2}{D \log(\ndata) } \right) \\ \nonumber
&\leq \prob \left( \worstconcentrationvar{\delta_n}{\spaceg} \geq (1 + \eta)\PE{}{\worstconcentrationvar{\delta_n}{\spaceg}} +(2c_4-\eta-1)\frac{\delta_n^2}{D \log(\ndata)}  \right) \\ \nonumber
&\leq 3 \exp\left( - \ndata \min \left\{ \frac{(2c_4-\eta)^2 \delta_n^2}{4 D^2 \log^2(\ndata)}, \frac{(2c_4-\eta-1) \delta_n}{L D^2 \log(\ndata)\log(\ndata+1)} \right\} \right) \eqsp ,\nonumber
\end{align*}
where $L$ is a constant depending on $\eta$. For the remaining terms $j \geq 1$, we use the fact that $\simplexg$ contains the entire segment between $0$ and $\zeromfun[\param]$ for any $\param \in \paramsp$, which implies the following scaling property of the local Rademacher complexity:
\begin{equation*}
\localrad{2^j \delta_n}{\simplexg} \leq 2^j \localrad{\delta_n}{\simplexg}.
\end{equation*}
Applying the same concentration argument as above to each shell $j \geq 1$ yields:
\begin{align}
    \label{eq:zleqr}
    \prob \left( \worstconcentrationvar{2^j \delta_n}{\spaceg} \geq \epsilon_n 2^{j-1} \delta_n \right) \\ \nonumber 
    &= \prob \left( \worstconcentrationvar{2^j\delta_n}{\spaceg} \geq 2^{j}\frac{c_4 \delta_n^2}{D \log(\ndata)} \right) \\ \nonumber 
    &=\prob \Big( \worstconcentrationvar{2^j\delta_n}{\spaceg} \geq (1 + \eta)2^{j}\frac{\delta_n^2}{D \log(\ndata)} -\eta2^{j}\frac{\delta_n^2}{D \log(\ndata)} \\ \nonumber 
    &\qquad + (c_4-1)2^{j}\frac{\delta_n^2}{D \log(\ndata) } \Big) \\ \nonumber 
    &\leq \prob \left( \worstconcentrationvar{2^j\delta_n}{\spaceg} \geq (1 + \eta) {\localrad{2^j \delta_n}{\simplexg}} +(c_4 - \eta -1)\frac{2^j\delta_n^2}{D \log(\ndata)} \right) \\ \nonumber 
    &\leq \prob \left( \worstconcentrationvar{2^j\delta_n}{\spaceg} \geq (1 + \eta) \PE{}{\worstconcentrationvar{2^j \delta_n}{\spaceg}} +(c_4 -\eta)\frac{2^j\delta_n^2}{D \log(\ndata)} \right) \\ \nonumber 
    &\leq 3 \exp\left( - \ndata \min \left\{ \frac{(c_4-\eta)^2 \delta_n^2}{4 D^2 \log^2(\ndata)}, \frac{(c_4-\eta) \delta_n}{L D^2 \log(\ndata) \log(\ndata+1)} \right\} \right) \eqsp .
\end{align}
Summing over all $J+1$ shells and using the bound $J \leq \log(D/\delta_n) + 3$, we obtain:
\begin{align*}
    \prob \Biggl( &\frac{| M_n(\param) - M_n(\tmin{\param}) - (M^*(\param) - M^*(\tmin{\param})) |}{\delta_n + \lpnorm{\zeromfun[\param]}[2][\pforward{T}]} \geq \epsilon_n \Biggr) \\
    &\hspace{1.5cm}\leq 3 \exp\Big( - \ndata \min \Big\{ \frac{(2c_4-\eta-1)^2 \delta_n^2}{4 D^2 \log^2(\ndata)}, \frac{(2c_4-\eta-1) \delta_n}{L D^2 \log(\ndata)\log(\ndata+1)} \Big\} \Big) \\
    &\hspace{4cm}+ \sum_{j=1}^{J} 3 \exp\Big( - \ndata \min \Big\{ \frac{(c_4-\eta-1)^2 \delta_n^2}{4 D^2 \log^2(\ndata)}, \frac{(c_4-\eta-1) \delta_n}{L D^2 \log(\ndata) \log(\ndata+1)} \Big\} \Big) \\        
    &\hspace{1.5cm}\leq 6 (J+1) \exp\Big( - \ndata \min \Big\{ \frac{(c_4-\eta-1)^2 \delta_n^2}{4 D^2 \log^2(\ndata)}, \frac{(c_4-\eta-1) \delta_n}{L D^2 \log^2(\ndata)}, \\
    &\hspace{5.2cm} \frac{(2c_4-\eta-1)^2 \delta_n^2}{4 D^2 \log^2(\ndata)}, \frac{(2c_4-\eta-1) \delta_n}{L D^2 \log^2(\ndata)} \Big\} \Big) \\
    &\hspace{1.5cm}\leq 12 J \exp\Big( - \ndata \min \Big\{ \frac{(c_4-\eta-1)^2 \delta_n^2}{4 D^2 \log^2(\ndata)}, \frac{(c_4-\eta-1) \delta_n}{L D^2 \log^2(\ndata)}, \\
    &\hspace{5.2cm} \frac{(2c_4-\eta-1)^2 \delta_n^2}{4 D^2 \log^2(\ndata)}, \frac{(2c_4-\eta-1) \delta_n}{L D^2 \log^2(\ndata)} \Big\} \Big) \\
    &\hspace{1.5cm}\leq 12 \Big( \log\Big(\frac{D}{\delta_n}\Big) + 4 \Big) \exp\Big( - \ndata \min \Big\{ \frac{(c_4-\eta-1)^2 \delta_n^2}{4 D^2 \log^2(\ndata)}, \\
    &\hspace{5.2cm} \frac{(c_4-\eta-1) \delta_n}{L D^2 \log^2(\ndata)}, \frac{(2c_4-\eta-1)^2 \delta_n^2}{4 D^2 \log^2(\ndata)}, \\
    &\hspace{5.2cm} \frac{(2c_4-\eta-1) \delta_n}{L D^2 \log^2(\ndata)} \Big\} \Big) \\
    &\hspace{1.5cm}\lesssim \log\Big(\frac{D}{\delta_n}\Big)\exp\Big( - \ndata \min \Big\{ \frac{(c_4-\eta-1)^2 \delta_n^2}{4 D^2 \log^2(\ndata)}, \\
    &\hspace{5.2cm} \frac{(c_4-\eta-1) \delta_n}{L D^2 \log^2(\ndata)}, \frac{(2c_4-\eta-1)^2 \delta_n^2}{4 D^2 \log^2(\ndata)}, \\
    &\hspace{5.2cm} \frac{(2c_4-\eta-1) \delta_n}{L D^2 \log^2(\ndata)} \Big\} \Big) \eqsp .
\end{align*}
Finally, using condition 2 of \Cref{thm:1_tang}, namely $\ndata \min\left\{\delta_\ndata, \delta_\ndata^2\right\} /(D^2 \log^2 \ndata) \geq \log \log (D/\delta_\ndata)$, the logarithmic can be absorbed into the exponential, giving:
Denoting 
\begin{equation*}
    \Psi_n = \min \left\{(c_4-\eta-1)^2 \delta_n^2, \frac{(c_4-\eta-1) \delta_n}{L} , (2c_4-\eta-1)^2 \delta_n^2, \frac{(2c_4-\eta-1) \delta_n}{L} \right\},
\end{equation*}
we have:
\begin{equation*}
\prob \Biggl( \frac{| M_n(\param) - M_n(\tmin{\param}) - (M^*(\param) - M^*(\tmin{\param})) |}{\delta_n + \lpnorm{\zeromfun[\param]}[2][\pforward{T}]} \geq \epsilon_n \Biggr) \leq c_0  \exp\left(\min\left\{\delta_n, \delta_n^2\right\}\frac{\ndata}{D^2 \log^2 \ndata} - \frac{\ndata}{D^2\log^2(\ndata)} \Psi_n \right) \eqsp . 
\end{equation*}
\end{proof}
Choosing $\eta=0.5$, we have
\begin{multline*}
\prob \Biggl( \frac{| M_n(\param) - M_n(\tmin{\param}) - (M^*(\param) - M^*(\tmin{\param})) |}{\delta_n + \lpnorm{\zeromfun[\param]}[2][\pforward{T}]} \geq \epsilon_n \Biggr) \\
\hspace{1cm}\leq c_0 \exp\Bigg(
\Big( \min\{\delta_n, \delta_n^2\}
- \min \Big\{ (c_4-1.5)^2 \delta_n^2, \frac{(c_4-1.5)\delta_n}{L}, \\
\hspace{4.5cm} (2c_4-1.5)^2 \delta_n^2, \frac{(2c_4-1.5)\delta_n}{L} \Big\} \Big)
\frac{\ndata}{D^2 \log^2 \ndata}
\Bigg) \eqsp.
\end{multline*}
Therefore for $c_4>3$, the minimum over the four terms simplifies, and the exponent becomes negative:
\begin{multline*}
            \prob \Biggl( \frac{| M_n(\param) - M_n(\tmin{\param}) - (M^*(\param) - M^*(\tmin{\param})) |}{\delta_n + \lpnorm{\zeromfun[\param]}[2][\pforward{T}]} \geq \epsilon_n \Biggr) \\
            \leq c_0  \exp\left(\left(\min\left\{\delta_n, \delta_n^2\right\} - \min \left\{ \delta_n, \delta_n^2 \right\} \min\{\frac{1}{L},1\}(c_4-1.5)\right)\frac{\ndata}{D^2 \log^2 \ndata}\right).
\end{multline*}
Choosing $c_4$ such that $\min\{L^{-1},1\}(c_4-1.5) > 1$, defining $c_1 =  \min\{L^{-1},1\}(c_4-1.5) - 1$ and noting $c_3 = 2c_4$ the exponent is strictly negative and we obtain the result.
\begin{proof}[Proof of \Cref{thm:1_tang}]
    We begin by bounding the $L^2(\pforward{T})$ norm of the excess loss. For all $\param \in \paramsp$, by the triangle inequality for $L^2$ norms:
    \begin{equation*}
        \lpnorm{\zeromfun[\param]}[2][\pforward{T}]^2
        \leq 2 \lpnorm{\mfun[\param][\cdot]}[2][\pforward{T}]^2 + 2 \lpnorm{\mfun[\tmin{\param}][\cdot]}[2][\pforward{T}]^2.
    \end{equation*}
    Applying \Cref{lem:tang_11} and the subadditivity of the square root, there exists a constant $c$ such that
    \begin{equation*}
        \lpnorm{\zeromfun[\param]}[2][\pforward{T}]\leq 
        c \left(\left(D\log(\ndata)\kl{\pforward{T}}{\discsolution[\param]{0}}\right)^{1/2}+ \left(D\log(\ndata)\kl{\pforward{T}}{\discsolution[\tmin{\param}]{0}}\right)^{1/2} + \frac{D \log \ndata}{\ndata^{1/2}} \right)\eqsp.
    \end{equation*}
    Next, since $\emin{\param}$ minimizes $M_n$ over $\paramsp$, we have $M_n(\emin{\param}) \leq M_n(\tmin{\param})$, and therefore 
\begin{align*}
    \kl{\pforward{T}}{\discsolution[\param]{0}} - \kl{\pforward{T}}{\discsolution[\tmin{\param}]{0}} 
    &= M^*(\emin{\param}) - M^*(\tmin{\param}) \\
    &\leq M_n(\tmin{\param}) - M_n(\emin{\param}) + M^*(\emin{\param}) - M^*(\tmin{\param}) \\
    &= |M_n(\emin{\param}) - M_n(\tmin{\param}) + M^*(\tmin{\param}) - M^*(\emin{\param})| \eqsp.
\end{align*}
Combining the two bounds above, we can obtain
\begin{align*}
    &\prob \Biggl( \frac{ \kl{\pforward{T}}{\discsolution[\emin{\param}]{0}} - \kl{\pforward{T}}{\discsolution[\tmin{\param}]{0}}}{\delta_n + c \left(\left(D\log(\ndata)\kl{\pforward{T}}{\discsolution[\param]{0}}\right)^{1/2}+ \left(D\log(\ndata)\kl{\pforward{T}}{\discsolution[\tmin{\param}]{0}}\right)^{1/2} + \frac{D \log \ndata}{\ndata^{1/2}} \right)} \leq\epsilon_n \Biggr) \\
    &\quad \geq \prob \Biggl( \frac{|M_n(\emin{\param}) - M_n(\tmin{\param}) + M^*(\tmin{\param}) - M^*(\emin{\param}) |}{\delta_n + c \left(\left(D\log(\ndata)\kl{\pforward{T}}{\discsolution[\emin{\param}]{0}}\right)^{1/2}+ \left(D\log(\ndata)\kl{\pforward{T}}{\discsolution[\tmin{\param}]{0}}\right)^{1/2} + \frac{D \log \ndata}{\ndata^{1/2}} \right)} \leq \epsilon_n \Biggr) \\
    &\quad \geq \prob \Biggl( \frac{|M_n(\emin{\param}) - M_n(\tmin{\param}) + M^*(\tmin{\param}) - M^*(\emin{\param}) |}{\delta_n + \lpnorm{\zeromfun[\emin{\param}]}[2][\pforward{T}]} \geq \epsilon_n \Biggr).
\end{align*}
Applying \Cref{lem:tang_12}, with probability at least $1 - c_0 \exp\Bigl(-c_1 \frac{\ndata \min\{\delta_n,\delta_n^2\}}{D^2 \log^2 \ndata}\Bigr)$ we have:
\begin{align*}
    &\frac{M^*(\emin{\param}) - M^*(\tmin{\param})}{\delta_n + c \left(\left(D\log(\ndata)\kl{\pforward{T}}{\discsolution[\emin{\param}]{0}}\right)^{1/2}+ \left(D\log(\ndata)\kl{\pforward{T}}{\discsolution[\tmin{\param}]{0}}\right)^{1/2} + \frac{D \log \ndata}{\ndata^{1/2}} \right)} \leq \frac{c_3 \delta_n}{D \log(\ndata)}.
\end{align*}
Multiplying both sides by the denominator and rearranging gives:
\begin{align*}
    \kl{\pforward{T}}{\discsolution[\emin{\param}]{0}} - \kl{\pforward{T}}{\discsolution[\tmin{\param}]{0}} &\leq \frac{c_3 \delta_n^2}{D \log(\ndata)} + c \frac{c_3 \delta_n}{\sqrt{D \log(\ndata)}}\left(\sqrt{\kl{\pforward{T}}{\discsolution[\emin{\param}]{0}}}+\sqrt{\kl{\pforward{T}}{\discsolution[\tmin{\param}]{0}} }\right)\\
    &+ \frac{c c_3 \delta_n}{\sqrt{\ndata}}. 
\end{align*}
Using $2\sqrt{ab}\leq \gamma a + \frac{1}{\gamma}b$, valid for any $\gamma>0$, we can state that for any $\bar{\gamma_1},\bar{\gamma_2}>0$:
\begin{align*}
    &\kl{\pforward{T}}{\discsolution[\emin{\param}]{0}} - \kl{\pforward{T}}{\discsolution[\tmin{\param}]{0}} \\
    &\hspace{2.5cm}\leq \frac{c_3 \delta_n^2}{D \log(\ndata)} + \frac{c c_3 \delta_n}{\sqrt{D \log(\ndata)}}\left(\sqrt{\kl{\pforward{T}}{\discsolution[\emin{\param}]{0}}}+\sqrt{\kl{\pforward{T}}{\discsolution[\tmin{\param}]{0}} }\right) + \frac{c_3 \delta_n}{\sqrt{\ndata}} \\
    &\hspace{2.5cm}\leq \frac{c_3 \delta_n^2}{D \log(\ndata)} + c c_3 \Bigg( \frac{ \delta_n^2}{2 \bar{\gamma_1}D \log(\ndata)} +  \frac{\bar{\gamma_1}}{2}\kl{\pforward{T}}{\discsolution[\emin{\param}]{0}} +  \frac{ \delta_n^2}{2\bar{\gamma_2} D \log(\ndata)}\\
    &\hspace{9cm} +  \frac{\bar{\gamma_2}}{2} \kl{\pforward{T}}{\discsolution[\tmin{\param}]{0}}  +  \frac{\delta_n}{\sqrt{\ndata}}  \Bigg).
\end{align*}
Rescaling $\gamma_i = c c_3 \bar{\gamma}_i$ and collecting terms:
\begin{align*}
    \kl{\pforward{T}}{\discsolution[\emin{\param}]{0}} - \kl{\pforward{T}}{\discsolution[\tmin{\param}]{0}} &\leq \frac{c_3 \delta_n^2}{D \log(\ndata)} + c^2 c_3^2 \left( \frac{ \delta_n^2}{2\gamma_1 D \log(\ndata)} + \frac{ \delta_n^2}{2\gamma_2 D \log(\ndata)} +  \frac{\delta_n}{\sqrt{\ndata}} \right)\\
    & +  \frac{\gamma_1}{2}\kl{\pforward{T}}{\discsolution[\emin{\param}]{0}} +  \frac{\gamma_2}{2} \kl{\pforward{T}}{\discsolution[\tmin{\param}]{0}}.
\end{align*}
Moving the $\kl{\pforward{T}}{\discsolution[\emin{\param}]{0}}$ term to the left-hand side, we obtain with the desired probability:
\begin{align}
    (1 -\frac{\gamma_1}{2})\kl{\pforward{T}}{\discsolution[\emin{\param}]{0}} &\leq (1 + \frac{\gamma_2}{2})\kl{\pforward{T}}{\discsolution[\tmin{\param}]{0}} \\
    &+ \frac{c_3 \delta_n^2}{D \log(\ndata)} + c^2 c_3^2 \left( \frac{ \delta_n^2}{2\gamma_1 D \log(\ndata)} + \frac{ \delta_n^2}{2\gamma_2 D \log(\ndata)} +  \frac{\delta_n}{\sqrt{\ndata}} \right).
\end{align}
Setting $\gamma_1 = 1$ to normalize the left-hand side:
\begin{align*}
    &\kl{\pforward{T}}{\discsolution[\emin{\param}]{0}} \leq 2(1 + \gamma_2)\kl{\pforward{T}}{\discsolution[\tmin{\param}]{0}} + \frac{2c_3 \delta_n^2}{D \log(\ndata)} + c^2 c_3^2 \left( \frac{ \delta_n^2}{ D \log(\ndata)} + \frac{ \delta_n^2}{\gamma_2 D \log(\ndata)} +  \frac{2\delta_n}{\sqrt{\ndata}} \right).
\end{align*}
Choosing $c \geq 1$ and grouping all estimation error terms using $\max\{\delta_n, \delta_n^2\}$:
\begin{align*}
\kl{\pforward{T}}{\discsolution[\emin{\param}]{0}} &\leq2(1 + \gamma_2)\kl{\pforward{T}}{\discsolution[\tmin{\param}]{0}} + 2(c^2 c_3^2)\left(\frac{\delta_n^2}{D \log(\ndata)} +  \frac{ \delta_n^2}{ D \log(\ndata)} + \frac{ \delta_n^2}{\gamma_2 D \log(\ndata)} +  \frac{\delta_n}{\sqrt{\ndata}} \right) \\
&\leq 2(1 + \gamma_2)\kl{\pforward{T}}{\discsolution[\tmin{\param}]{0}} + 6(c^2 c_3^2)\left(\frac{\max\{\delta_n,\delta_n^2\}}{D \log(\ndata)} + \frac{\max\{\delta_n,\delta_n^2\}}{\gamma_2 D \log(\ndata)} \right) \\
&= 2(1 + \gamma_2)\kl{\pforward{T}}{\discsolution[\tmin{\param}]{0}} + 6(c^2 c_3^2)\left(1 + \frac{1}{\gamma_2}\right)\left(\frac{\max\{\delta_n,\delta_n^2\}}{D \log(\ndata)} \right) \eqsp . 
\end{align*}
Renaming $\gamma_2 = \gamma$ and $c_2 = 6c^2 c_3^2$, and taking the infimum over $\gamma > 0$, we obtain the desired result.
\end{proof}
We now restrict our attention to the case where $\discsolution{0}$ belongs to the family of normalizing flows \cite{rezende2015variational}. Specifically, we introduce a reference measure with density $\refnfmeas(x)$ with respect to the Lebesgue measure $\lambda_{\R^d}$, and a parametric family of transformations
\begin{equation*}
    \flow: \rset^{\statedim} \times \paramsp \ni (x, \param) \rightarrow \flow[\param][x] \in \rset^{\statedim}\eqsp,
\end{equation*}
such that for all $\param \in \paramsp$, $\flow[\param]: \rset^{\statedim} \rightarrow \rset^{\statedim}$ is an invertible map, and $\discsolution{0}$ is the density of the pushforward measure $\pushforward{\flow[\param]}{\refnfmeas}$. In other words, the model distribution is obtained by pushing the reference density through the learned transformation $\flow[\param]$. By the change of variables formula, the log-density of the model admits the explicit expression
\begin{equation*}
    \log \discsolution{0}(x) = \log\refnfmeas(\invflow[\param][x]) - \log(\det{\jflow[\param][x]})\eqsp,
\end{equation*}
where $\jflow[\param][x]$ denotes the Jacobian of $\flow[\param]$ evaluated at $x$. Under \hypref{assump:tang2021empirical:C1}, which provides regularity conditions on the flow family, the general bound of \Cref{thm:1_tang} can be made explicit, as stated in the following result.
\begin{theorem}{Theorem 3 in \cite{tang2021empirical} adapted to our setting}
    \label{thm:3_tang}
    For clarity and ease of reading we restate here \cref{thm:tang2021empirical:thm3main}.
    Under \hypref{assump:tang2021empirical:A}, \hypref{assump:tang2021empirical:C1}, there exist constants $c_0, c_1, c_2 > 0$ such that 
\begin{align*}
\label{eq:thm:kl_bound_tang_parametric}
\prob \Biggl( 
    &\kl{\pforward{T}} {\discsolution[\emin{\param}]{0}} \leq \inf_{\gamma > 0} \Bigl[ 2 (1+\gamma) \kl{\pforward{T}} {\discsolution[\tmin{\param}]{0}} + c_2 (1+\gamma^{-1}) \frac{ \paramdim D \log(\ndata) \log(\ndata \paramdim)}{\sqrt{\ndata}} \Bigr] \Biggr) \\
    &\hspace{7cm}\geq 1 - c_0 \exp\Bigl(-c_1 \min \left\{ \paramdim \log(\ndata \paramdim) , \frac{\sqrt{\ndata \paramdim \log(\ndata \paramdim)}}{D \log \ndata} \right\}\Bigr)\eqsp.
\end{align*}
\end{theorem}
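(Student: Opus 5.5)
The plan is to derive the statement from \Cref{thm:1_tang} by exhibiting an explicit $\delta_n$ that satisfies its two conditions. Since the target rate is $\paramdim D \log(\ndata)\log(\ndata\paramdim)/\sqrt{\ndata}$ and \Cref{thm:1_tang} gives $\max\{\delta_n,\delta_n^2\}/(D\log \ndata)$, I would aim for
\[
\delta_n \asymp D^2\log^2(\ndata)\,\sqrt{\paramdim\log(\ndata\paramdim)/\ndata}\,\cdot\sqrt{\paramdim}
\]
up to constants, tuned so that $\delta_n^2/(D\log\ndata)$ (or $\delta_n/(D\log \ndata)$ in the regime $\delta_n\le 1$) matches the claimed excess term. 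I will fix the exact exponents only after the complexity bound below is in hand. The probability $1-c_0\exp(-c_1 \ndata\min\{\delta_n,\delta_n^2\}/(D^2\log^2\ndata))$ should then reduce to the stated $\min\{\paramdim\log(\ndata\paramdim),\sqrt{\ndata\paramdim\log(\ndata\paramdim)}/(D\log\ndata)\}$. The two branches of the min correspond respectively to the $\delta_n^2$ and $\delta_n$ cases.

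\textbf{Step 1: local Rademacher complexity of $\simplexg$.} By \hypref{assump:tang2021empirical:C1}, $\param\mapsto\zeromfun[\param]$ is Lipschitz with envelope $\mlip$. Hence an $\epsilon$-net of the compact set $\paramsp\subset\R^{\paramdim}$, of cardinality $(C\,\mathrm{diam}(\paramsp)/\epsilon)^{\paramdim}$, induces an $\epsilon\,\mlip$-cover of $\spaceg$ in empirical $L^2$. For the star hull $\simplexg$, I will add a one-dimensional grid over $a\in(0,1]$, which costs only an extra $\log(1/\epsilon)$ factor in the entropy. Dudley's entropy integral (or a direct Massart finite-class bound plus discretization error) then gives
\[
\localrad{\delta}{\simplexg}\lesssim \delta\sqrt{\paramdim\log(\ndata\paramdim)/\ndata}+\text{(discretization term)}.
\]
The discretization term is controlled by choosing $\epsilon\sim 1/(\ndata\sqrt{\paramdim})$ and using $\E[\max_i \mlip(\state_T^i)]\lesssim \onorm{\mlip[\state_T]}[1]\log \ndata$, which follows from the $\psi_1$ maximal inequality. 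The envelope $\sup_\param|\zeromfun[\param]|\le 2\sup_\param|\mfun[\param]|$ has $\psi_1$ norm at most $2D$ by \hypref{assump:tang2021empirical:A}. This envelope can enter the bound through a truncation at level $D\log\ndata$, and that truncation is the origin of the $\log \ndata$ factors.

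\textbf{Step 2: verifying the conditions of \Cref{thm:1_tang}.} Condition 1 requires $\delta\sqrt{\paramdim\log(\ndata\paramdim)/\ndata}\lesssim \delta^2/(D\log\ndata)$. This holds once $\delta_n\gtrsim D\log(\ndata)\sqrt{\paramdim\log(\ndata\paramdim)/\ndata}$. Condition 2 is a $\log\log$ requirement, satisfied for $\ndata$ large or after enlarging $c_0$. Plugging $\delta_n$ into \Cref{thm:1_tang} yields the excess term $\max\{\delta_n,\delta_n^2\}/(D\log \ndata)$. Bounding this by $\paramdim D\log\ndata\log(\ndata\paramdim)/\sqrt{\ndata}$ requires enlarging $\delta_n$ by a factor like $\sqrt{\paramdim\log(\ndata\paramdim)}$ where needed. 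This enlargement keeps condition 1 valid, since that condition is monotone: $\localrad{\delta}{\simplexg}/\delta$ is nonincreasing by star-shapedness.

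\textbf{Main obstacle.} The delicate part is Step 1 with unbounded, only sub-exponential functions. Standard local Rademacher bounds assume uniformly bounded classes. I would handle this by splitting $\zeromfun[\param]=\zeromfun[\param]\ind_{\{F\le D\log\ndata\}}+\zeromfun[\param]\ind_{\{F> D\log\ndata\}}$ with envelope $F$. The tail piece is negligible, of order $D/\ndata$, by the Orlicz bound. The bounded piece can then be treated with chaining at sup-norm scale $D\log\ndata$. Bookkeeping the resulting logs consistently, so that they match the stated rate and probability exactly, is where I expect most of the work.
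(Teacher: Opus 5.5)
Your overall route is the paper's. You apply \Cref{thm:1_tang} with $\delta_n\asymp D\log\ndata\sqrt{\paramdim\log(\ndata\paramdim)/\ndata}$; the paper takes exactly $\delta_n=c_3D\log\ndata\sqrt{\paramdim\log(\ndata\paramdim)/\ndata}$, which yields the two branches of the stated probability. You then truncate the envelope at $\rho\asymp D\log\ndata$, cover the star hull through the Lipschitz parametrization plus a grid in $a$, and chain. The gap is in Step~1. The local complexity $\localrad{\delta}{\simplexg}$ localizes in the \emph{population} norm $\lpnorm{g}[2][\pforward{T}]\le\delta$. Dudley's integral and Massart's lemma, however, are conditional on the sample and are governed by the \emph{empirical} radius $\maxray$ of the localized class, which is random and not bounded by $\delta$. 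Your bound $\localrad{\delta}{\simplexg}\lesssim\delta\sqrt{\paramdim\log(\ndata\paramdim)/\ndata}$ tacitly sets $\maxray=\delta$, and nothing in your argument justifies that. The omission is not cosmetic. The natural control, obtained by symmetrization and contraction for functions bounded by $\rho$, is $\E[\maxray^2]\lesssim\delta^2+\rho\,\localrad{\delta}{\simplexg_A}$. It produces an extra term of order $\rho\,\paramdim\log(\ndata\paramdim)/\ndata\asymp D\log\ndata\,\paramdim\log(\ndata\paramdim)/\ndata$. That is exactly the size of $\delta_n^2/(D\log\ndata)$ at the critical radius, so it enters condition~1 at the same order as the term you kept.

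The paper closes this by inserting $\E[\maxray^2]\lesssim\delta^2+\rho\,\localrad{\delta}{\simplexg_A}$ into the Dudley bound. This gives the implicit inequality $\localrad{\delta}{\simplexg_A}\lesssim\sqrt{\paramdim/\ndata}\,\sqrt{\delta^2+\rho\,\localrad{\delta}{\simplexg_A}}\,\sqrt{\log(\rho/\delta)+\log\paramdim}$, which is solved at $\delta=\delta_n$ by a fixed-point (contradiction) argument. Its solution is $\localrad{\delta}{\simplexg_A}\lesssim\delta\sqrt{\paramdim\log(\cdot)/\ndata}+\rho\,\paramdim\log(\cdot)/\ndata$, and both terms fall below $\delta_n^2/(D\log\ndata)$ once $c_3$ is large. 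With your finite-net/Massart variant you could instead bound $\E\max_j\frac1n\sum_i g_j(\state_T^i)^2$ over the net by Bernstein and a union bound, with the same outcome. After this fix your Step~2 goes through and the resulting $\delta_n$ is unchanged.

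One smaller correction concerns the truncation level. Since $\sup_\param|\zeromfun[\param]|$ only has $\psi_1$-norm at most $2D$, truncating exactly at $D\log\ndata$ leaves a tail contribution of order $D\log\ndata/\sqrt{\ndata}$, not $D/\ndata$. Take $\rho$ a sufficiently large multiple of $D\log\ndata$ so that the remainder is $O(D\log\ndata/\ndata)$, as in \Cref{lem:tang_13}.
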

To prove this result, we need two technical lemmas. 
\begin{lemma}
    \label{lem:tang_13} 
    There exists a constant $0<\rho\leq c_0D \log(\ndata)$ and a constant $c$ such that for any $\delta>0$:
    \begin{equation*}
        \PE{}{\worstconcentrationvar{\delta}{\simplexg}}
        \leq \PE{}{\sup_{\zeromfun \in G, \lpnorm{g}[2][\pforward{T}] \leq r} \left| \frac{1}{\ndata} \sum_{i=1}^{\ndata} \zeromfun[][\state_T^i]\ind_{ A(\state_T^i)}- \PE{}{\zeromfun[][\state_T\ind_{A(\state_T)}]}\right| + c \frac{D \log(\ndata)}{\ndata}} \eqsp , 
    \end{equation*}
    where $A(X) = \{\sup_{\zeromfun \in \spaceg} |\zeromfun[][X]| \leq \rho\}$.
\end{lemma}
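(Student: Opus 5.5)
The bound is a truncation argument. I will split each function in the class into a bounded part and a tail part, and control the tail with the Orlicz envelope from \hypref{assump:tang2021empirical:A}.

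Fix $\rho>0$, to be chosen of order $D\log \ndata$. Set $A=\{x: \sup_{\zeromfun\in\spaceg}|\zeromfun(x)|\le\rho\}$. Every $\zeromfun\in\simplexg$ has the form $a\,\zeromfun[\param]$ with $a\in(0,1]$, so $|\zeromfun|\le \sup_{\spaceg}|\zeromfun[\param]|$ pointwise, and the same set $A$ serves for $\simplexg$. For each $\zeromfun$ I write $\zeromfun=\zeromfun\ind_A+\zeromfun\ind_{A^c}$. The triangle inequality then gives
\[
\worstconcentrationvar{\delta}{\simplexg}\le \sup_{\zeromfun}\Big|\tfrac1\ndata\textstyle\sum_i \zeromfun(\state_T^i)\ind_A(\state_T^i)-\E[\zeromfun\ind_A(\state_T)]\Big| + \tfrac1\ndata\textstyle\sum_i H(\state_T^i)\ind_{A^c}(\state_T^i)+\E[H(\state_T)\ind_{A^c}(\state_T)],
\]
where $H=\sup_{\spaceg}|\zeromfun[\param]|$ is the envelope. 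The localization constraint $\lpnorm{\zeromfun}[2][\pforward{T}]\le\delta$ is simply carried along in the first supremum. Taking expectations, the two tail terms have the same mean, so it remains to show that $2\E[H\ind_{\{H>\rho\}}]\le c\,D\log(\ndata)/\ndata$.

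For the tail, \hypref{assump:tang2021empirical:A} gives $\onorm{H(\state_T)}[1]\le 2D$, using that $\zeromfun[\param]=\mfun[\param]-\mfun[\tmin{\param}]$ and the triangle inequality for the Orlicz norm. It follows that $\prob(H>s)\le 2e^{-s/(2D)}$. Integrating,
\[
\E[H\ind_{\{H>\rho\}}]=\rho\,\prob(H>\rho)+\int_\rho^\infty\prob(H>s)\,\rmd s\le 2(\rho+2D)e^{-\rho/(2D)}.
\]
I choose $\rho=2D\log \ndata$, which satisfies the constraint $\rho\le c_0D\log\ndata$ with $c_0=2$. The right-hand side becomes $4D(\log\ndata+1)/\ndata\lesssim D\log(\ndata)/\ndata$, which is the claimed remainder.

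I expect the main obstacle to be the step where I pass from the original local supremum to the truncated one. The truncated functions $\zeromfun\ind_A$ are no longer in $\simplexg$, so the statement has to be read with the supremum still indexed by $\zeromfun\in\simplexg$ with $\lpnorm{\zeromfun}[2][\pforward{T}]\le\delta$, and only the summands truncated, as in the displayed statement. I also have to check that the envelope $H$ is measurable. This follows from the continuity in $\param$ given by \hypref{assump:tang2021empirical:C1}, so the supremum can be taken over a countable dense subset of the compact set $\paramsp$.
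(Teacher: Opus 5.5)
The paper gives no proof of this lemma. It is stated as an adaptation of \citet[Lemma 13]{tang2021empirical} and described only as reducing the worst-case fluctuation to a truncated class of bounded functions at the cost of a negligible remainder, so there is no written argument to compare yours with. Your truncation argument is a correct and complete way to fill this gap, and each step checks out:
\begin{itemize}
\item the split $g=g\ind_A+g\ind_{A^c}$ together with the pointwise domination $|g|\le H$ on all of $\simplexg$;
\item the envelope bound $\|H(\state_T)\|_{\psi_1}\le 2D$, where the factor $2$ is the right constant because $g_\theta=\operatorname{m}(\cdot;\theta)-\operatorname{m}(\cdot;\theta^{\star})$ while \hypref{assump:tang2021empirical:A} only controls $\sup_\theta|\operatorname{m}(\cdot;\theta)|$ (the paper's proof of Lemma~\ref{lem:tang_12} silently drops this factor);
\item the tail integral, and the choice $\rho=2D\log\ndata$, which gives a remainder of $8D(\log\ndata+1)/\ndata$.
\end{itemize}
Your measurability argument via \hypref{assump:tang2021empirical:C1} is also legitimate, since the lemma is only used inside the proof of \cref{thm:3_tang}, where that assumption is in force.

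Your reading of the garbled supremum is the right one to prove: indexed by $\simplexg$, localized on the untruncated $g$, with only the summands truncated. Because $\|g\ind_A\|_{L^2(\pforward{T})}\le\|g\|_{L^2(\pforward{T})}$, your truncated supremum is pointwise at most $\worstconcentrationvar{\delta}{\simplexg_A}$. So your version implies the form that is actually invoked at the end of the proof of \cref{thm:3_tang}.
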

\Cref{lem:tang_13} shows that the expected worst-case empirical fluctuation can be reduced to a truncated version of the problem, up to a negligible remainder term. 
This truncation allows us to work with bounded functions, on which covering number arguments can be applied. To this end, we introduce the empirical distance between two functions $\zeromfun, \zeromfun' \in \simplexg$:
\begin{equation*}
    \distn{\zeromfun}{\zeromfun'} = \sqrt{\frac{1}{\ndata}\sum_{i=1}^{n}\left(\zeromfun[][\state_T^i] - \zeromfun[][\state_T^i]'\right)^2} \eqsp , 
\end{equation*}
and the truncated function classes
\begin{itemize}
    \item $\spaceg_{A} = \{{\zeromfun}_A\eqdef\zeromfun\ind_{ A(\cdot)} |  \zeromfun \in \spaceg\} \eqsp,$
    \item $\simplexg_{A} = \{{\zeromfun}_A\eqdef\zeromfun\ind_{ A(\cdot)} |  \zeromfun \in \simplexg\} \eqsp,$
\end{itemize}
to which the distance $\distn$ naturally extends. We also define the diameter of $\spaceg_A$ under this distance:
\begin{equation*}
\maxray = \sup_{{\zeromfun}_A, {\zeromfun}'_A \in {\spaceg}_A} \distn{{\zeromfun}_A}{{\zeromfun'}_A} \leq 2 \rho \eqsp.
\end{equation*}
The second lemma provides a bound on the Dudley entropy integral for the truncated class, which will be the key ingredient to control the local Rademacher complexity.
\begin{lemma}\label{lem:tang_14}
    Under \hypref{assump:tang2021empirical:A}, there exists a constant $c$ depending on $\onorm{\mlip[\state_T]}[1]$ and $c_{\infty}=\sup_{\param \in \paramsp} \|\param\|_\infty$ such that 
\begin{align*}
                &\PE{}{\int_0^{\maxray}\sqrt{\paramdim \log\left(\frac{6c_{\infty}\sqrt{\frac{\paramdim}{\ndata}\sum_{i=1}^{\ndata}\mlip[\state_T^i]}}{\epsilon}\right) + \log\left( \frac{2\rho}{\epsilon} \right) \rmd \epsilon }} \\
                &\hspace{3cm} \leq c \left( \rho \sqrt{\paramdim} \sqrt{-\PE{}{\frac{\maxray^2}{4 \rho^2}}\log\left(\PE{}{\frac{\maxray^2}{4 \rho^2}} \right) + \PE{}{\frac{\maxray^2}{4 \rho^2}}} + \PE{}{\maxray \sqrt{\paramdim \log (\paramdim)}}\right) \eqsp .
\end{align*}
\end{lemma}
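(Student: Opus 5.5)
The plan is to split the integrand, evaluate both pieces explicitly after a change of variables, and only then take expectations using Jensen and Cauchy--Schwarz. Write $C_\ndata \eqdef 6c_{\infty}\sqrt{\frac{\paramdim}{\ndata}\sum_{i=1}^{\ndata}\mlip[\state_T^i]}$ and $a \eqdef \maxray/(2\rho) \in [0,1]$, the latter because $\maxray \leq 2\rho$. Use $\sqrt{u+v}\leq\sqrt{u}+\sqrt{v}$ for $u,v\geq 0$, and write $\log(C_\ndata/\epsilon) = \log(C_\ndata/(2\rho)) + \log(2\rho/\epsilon)$. Bounding the first summand by its positive part, the integrand is at most
\[
\sqrt{\paramdim}\,\sqrt{\log^+\bigl(C_\ndata/(2\rho)\bigr)} + (\sqrt{\paramdim}+1)\sqrt{\log(2\rho/\epsilon)} .
\]
Integrating over $[0,\maxray]$ gives two contributions:
\[
\maxray\sqrt{\paramdim\log^+\bigl(C_\ndata/(2\rho)\bigr)}
\qquad\text{and}\qquad
(\sqrt{\paramdim}+1)\int_0^{\maxray}\sqrt{\log(2\rho/\epsilon)}\,\rmd\epsilon .
\]

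For the second contribution, substitute $\epsilon = 2\rho u$ to get $2\rho(\sqrt{\paramdim}+1)\int_0^{a}\sqrt{\log(1/u)}\,\rmd u$. Apply Cauchy--Schwarz on $[0,a]$:
\[
\int_0^a\sqrt{\log(1/u)}\,\rmd u\leq \Bigl(a\int_0^a\log(1/u)\,\rmd u\Bigr)^{1/2} = \bigl(a^2\log(1/a)+a^2\bigr)^{1/2}.
\]
Setting $x = a^2 = \maxray^2/(4\rho^2)$, this equals $\sqrt{-\tfrac12 x\log x + x}\leq\sqrt{-x\log x + x}$. The map $f(x) = -x\log x + x$ is nonnegative and concave on $[0,1]$, since $f'' = -1/x$, so $\sqrt{f}$ is also concave. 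Jensen then gives $\E[\sqrt{f(x)}]\leq\sqrt{f(\E[x])}$. This produces exactly the first term on the right-hand side, $c\,\rho\sqrt{\paramdim}\sqrt{-\E[x]\log\E[x]+\E[x]}$, after absorbing $\sqrt{\paramdim}+1\leq 2\sqrt{\paramdim}$.

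For the first contribution, expand the logarithm as
\[
\log\frac{C_\ndata}{2\rho} = \tfrac12\log\paramdim + \log\frac{3c_\infty}{\rho} + \tfrac12\log\Bigl(\frac1\ndata\sum_{i=1}^{\ndata}\mlip[\state_T^i]\Bigr).
\]
The $\tfrac12\log\paramdim$ part gives the term $\E[\maxray\sqrt{\paramdim\log\paramdim}]$ directly. For the $c_\infty/\rho$ part, I would take $\rho\geq 1$ without loss of generality, which is allowed because only an upper bound on $\rho$ is imposed in \cref{lem:tang_13}. Its contribution is then at most $\maxray\sqrt{\paramdim\log^+(3c_\infty)}$, a constant multiple of $\maxray\sqrt{\paramdim}$.

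The random part $\maxray\sqrt{\paramdim\,\log^+(\frac1\ndata\sum_i\mlip[\state_T^i])}$ is the step I expect to be the main obstacle, because $\maxray$ and the empirical Lipschitz average are dependent. I would handle it with Cauchy--Schwarz:
\[
\E\Bigl[\maxray\sqrt{\log^+ \bar{\mlip}}\Bigr]\leq\sqrt{\E[\maxray^2]}\,\sqrt{\E[\log(1+\bar{\mlip})]},\qquad \bar{\mlip}\eqdef\tfrac1\ndata\textstyle\sum_i\mlip[\state_T^i].
\]
By Jensen for the concave logarithm, $\E[\log(1+\bar{\mlip})]\leq\log(1+\E[\mlip[\state_T]])$. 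This is a constant depending only on $\onorm{\mlip[\state_T]}[1]$, since a finite $\psi_1$ norm implies a finite mean. Finally, $\sqrt{\E[\maxray^2]} = 2\rho\sqrt{\E[x]}\leq 2\rho\sqrt{-\E[x]\log\E[x]+\E[x]}$, using $\E[x]\leq 1$. So this piece, and likewise the constant piece $\maxray\sqrt{\paramdim}$ handled the same way, is absorbed into the first term of the claimed bound, with a constant $c$ depending on $\onorm{\mlip[\state_T]}[1]$ and $c_\infty$. Summing the three bounds concludes.
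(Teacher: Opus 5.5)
The paper states this lemma without proof (it is adapted from \citet{tang2021empirical}), so there is no in-text argument to compare with. Your proposal is a correct, self-contained proof, up to the $\rho\ge1$ point below. Each step is sound:
\begin{itemize}
\item splitting $\log(C_\ndata/\epsilon)$ at the scale $2\rho$;
\item bounding $\int_0^a\sqrt{\log(1/u)}\,\rmd u$ by Cauchy--Schwarz;
\item applying Jensen to the concave map $x\mapsto\sqrt{-x\log x+x}$ on $[0,1]$;
\item decoupling $\maxray$ from the empirical average $\frac{1}{\ndata}\sum_i\mlip[\state_T^i]$ by Cauchy--Schwarz, then Jensen for $\log(1+\cdot)$.
\end{itemize}
The resulting constant depends only on $c_\infty$ and $\E[\mlip[\state_T]]\le\onorm{\mlip[\state_T]}[1]$. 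What you actually use is $\maxray\le 2\rho$ and the integrability of $\mlip$ from \hypref{assump:tang2021empirical:C1}, not \hypref{assump:tang2021empirical:A}. The same argument also covers the version with $\frac{1}{\ndata}\sum_i\mlip[\state_T^i]^2$, which is what the covering bound in the proof of \cref{thm:3_tang} produces, because a finite $\psi_1$ norm gives a finite second moment.

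The one step to restate is the reduction to $\rho\ge1$. It is not a harmless normalization; it is a hypothesis the statement is missing. Fix $\mlip$ and $c_\infty$, and rescale $\rho$ and $\maxray$ by a factor $\lambda\in(0,1)$. The right-hand side is multiplied by $\lambda$. The substitution $\epsilon=\lambda u$ shows the left-hand side is bounded below by a quantity of order $\lambda\sqrt{\paramdim\log(1/\lambda)}\,\E[\maxray]$. So no $c$ independent of $\rho$ can work as $\rho\to0$.

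Your justification ("only an upper bound on $\rho$ is imposed") becomes rigorous once you check that \cref{lem:tang_13} survives enlarging $\rho$. In the standard truncation argument, the error it incurs is controlled by $\E[\sup_{g\in\spaceg}|g(\state_T)|\,\ind_{A^c}(\state_T)]$, which is nonincreasing in $\rho$. Hence $\rho$ may be replaced by $\max\{\rho,1\}$ provided $c_0D\log\ndata\ge1$. Alternatively, assume $\rho\ge\rho_0>0$ and let $c$ also depend on $\log^+(3c_\infty/\rho_0)$.
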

We are now ready to prove \Cref{thm:3_tang}.
\begin{proof}[Proof of \Cref{thm:3_tang}]  
    We choose
    \begin{equation*}
        \delta_n = c_3\sqrt{\paramdim \frac{\log(\ndata \paramdim)}{\ndata}}D \log(\ndata) \eqsp,
    \end{equation*}
    which is the natural parametric rate scaled by the complexity parameters $D$ and $\log(\ndata)$. It is straightforward to verify that for sufficiently large $c_3$, condition $$ \frac{\ndata \min\{\delta_n,\delta_n^2\}}{D^2 \log^2 \ndata} \geq \log \log \frac{D}{\delta_n}$$ holds for all $\ndata \geq \ndata_0$.
    We now focus on bounding $\localrad{\delta_n}{\simplexg_A}$ and show that this is sufficient for obtaining the conclusion of \cref{thm:1_tang} with this choise of $\delta_\ndata$.
    The strategy is to derive a recursive bound on $\localrad{\delta_n}{\simplexg_A}$ via a covering number argument.
    Conditionally on $\{\state_T^1,..., \state_T^{\ndata}\}$, $\distn$ is a deterministic distance. Denoting by $\nballs{{\simplexg}_A}{{\distn}}{\epsilon}$ and $\nballs{\spaceg_A}{{\distn}}{\epsilon}$ the $\epsilon$-covering numbers of $\simplexg_A$ and $\spaceg_A$ respectively, we claim that:
    $$\log \nballs{\simplexg_A}{ \distn}{\epsilon} \leq \nballs{\spaceg_A}{ \distn}{\epsilon/2} + \log \frac{2 \rho}{\epsilon} \eqsp . $$
    To see this, note that any $\bar{\zeromfun}_A \in \simplexg_A$ can be written as $\bar{\zeromfun}_A = a {\zeromfun}_A$ for some ${\zeromfun}_A \in \spaceg_A$ and $a \in (0,1]$. Letting ${\zeromfun}_A^c$ be the center of the nearest $\epsilon/2$-ball in $\spaceg_A$, and choosing $k$ such that $k\frac{2\epsilon}{2 \rho} < a \leq (k+1)\frac{2\epsilon}{2 \rho}$, the triangle inequality gives:
$$ {\distn{ \bar{\zeromfun}_A }{ (k+1)\frac{\epsilon}{2\rho}{\zeromfun}_A^c }} \leq {\distn{ \bar{\zeromfun}_A }{ (k+1)\frac{\epsilon}{2\rho}{\zeromfun}_A }} + {\distn{ (k+1)\frac{\epsilon}{2\rho}{\zeromfun}_A }{ (k+1)\frac{\epsilon}{2\rho}{\zeromfun}_A^c }} \leq \frac{\epsilon}{2} + \frac{\epsilon}{2} = \epsilon \eqsp , $$
which establishes the covering number bound for $\simplexg_A$.
We now use \hypref{assump:tang2021empirical:C1} to bound $\nballs{\spaceg_A}{ \distn}{\epsilon/2}$. For any ${\zeromfun}_A, {\zeromfun}'_A \in \spaceg_A$, the Lipschitz condition gives:
\begin{align*}
        \distn{{\zeromfun}_A}{{\zeromfun}'_A}^2 &= \frac{1}{n}\sum_{i=1}^{\ndata}\left(\zeromfun[\param][\state_T^i]\ind_{ A(\state_T^i)} - \zeromfun[\param'][\state_T^i]\ind_{ A(\state_T^i)}\right)^2 \\
        &\leq \frac{1}{n}\sum_{i=1}^{\ndata} \left( \mlip[\state_T^i] \|\param - \param'\|_2\right)^2 \\
        &= \left(\frac{1}{n}\sum_{i=1}^{\ndata} \mlip[\state_T^i]^2 \right) \|\param - \param'\|_2^2 \eqsp. \\
\end{align*}
This means that the distance $\distn$ on $\spaceg_A$ is dominated by the data-dependent distance on the parameter space $\distnparam{\param}{\param'} = \sqrt{\frac{\sum_{i=1}^{\ndata} \mlip[\state_T^i]^2}{n}} \|\param - \param'\|$, and therefore:
$$\nballs{\spaceg_A}{\distn}{\epsilon/2} \leq \nballs{\paramsp}{\distnparam}{\epsilon/2} \eqsp . $$
Since $\paramsp$ is compact with $\sup_{\param \in \paramsp}\|\param\|_\infty < c_{\infty}$, it is contained in a Euclidean ball of radius $c_{\infty}\sqrt{\paramdim}$. A ball of radius $\epsilon/2$ in the $\distnparam$ metric corresponds to a Euclidean ball of radius $\epsilon\sqrt{n}/(2(\sum_{i=1}^{\ndata} \mlip[\state_T^i]^2)^{1/2})$, so :
$$\nballs{\paramsp}{\distnparam}{\epsilon/2} \leq \left(\frac{6 c_{\infty}\sqrt{\paramdim}}{\epsilon} \cdot \sqrt{\frac{\sum_{i=1}^{\ndata} \mlip[\state_T^i]^2}{n}}\right)^{\paramdim}.$$
Combining the two covering bounds, we obtain the following number of balls bound on the space $\simplexg_A$:
\begin{equation*}
    \log \nballs{\simplexg_A}{\distn}{\epsilon}  \leq \paramdim \log\left(\frac{6c_{\infty}\sqrt{\paramdim}}{\epsilon} \cdot \sqrt{\frac{\sum_{i=1}^{\ndata} \mlip[\state_T^i]^2}{n}}\right) + \log \frac{2\rho}{\epsilon} \eqsp .
\end{equation*}
We now apply Dudley's inequality \citet[Equation 8.13]{Vershynin2018} to control the Rademacher complexity. To verify the sub-Gaussian increment condition required by Dudley's theorem, we first establish a $\psi_2$ norm bound for the Rademacher average. 
\begin{lemma}\label{lem:bernoulli_rademacher}
For a fixed vector $c = (c_1,...,c_\ndata)$ and $\epsilon_i$ indipendent Bernoulli with values in 
$ \{ -1,1 \} $, the random variable 
$$S = \frac{1}{n}\sum_{i=1}^{\ndata} c_i \epsilon_i$$
satisfies , 
\begin{equation*}
    \|S\|_{\psi_2} \leq \frac{2\|c\|_2}{\ndata} \eqsp .
\end{equation*}
\end{lemma}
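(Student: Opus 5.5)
We bound the moment generating function of $S$ and then convert it into a $\psi_2$ bound. The route is the standard one: Hoeffding's lemma, then the definition of the Orlicz norm. The only ingredients are the independence of the $\epsilon_i$, the symmetry of Rademacher variables, and the definition $\|S\|_{\psi_2}=\inf\{K>0:\E[\exp(S^2/K^2)]\le 2\}$.

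\textbf{Step 1 (MGF bound).} Each $\epsilon_i$ is symmetric in $\{-1,1\}$, so for every $\lambda\in\R$ we have $\E[e^{\lambda c_i\epsilon_i/\ndata}]=\cosh(\lambda c_i/\ndata)\le \exp(\lambda^2c_i^2/(2\ndata^2))$, using the elementary inequality $\cosh x\le e^{x^2/2}$. By independence,
\[
\E[e^{\lambda S}]\le \exp\Bigl(\frac{\lambda^2\|c\|_2^2}{2\ndata^2}\Bigr),
\]
so $S$ is sub-Gaussian with variance proxy $v^2=\|c\|_2^2/\ndata^2$.

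\textbf{Step 2 (tail bound).} A Chernoff bound, optimized at $\lambda=t/v^2$ and applied to both $S$ and $-S$, gives
\[
\prob(|S|\ge t)\le 2e^{-t^2/(2v^2)}.
\]

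\textbf{Step 3 (conversion to $\psi_2$).} For $K>\sqrt2\,v$, the layer-cake formula gives
\[
\E[e^{S^2/K^2}]=1+\int_0^\infty \frac{2t}{K^2}e^{t^2/K^2}\prob(|S|\ge t)\,\rmd t\le 1+\frac{4}{K^2}\int_0^\infty t\,e^{-t^2(1/(2v^2)-1/K^2)}\,\rmd t=1+\frac{2}{K^2/(2v^2)-1}.
\]
This is at most $2$ as soon as $K^2\ge 6v^2$, that is, $K\ge\sqrt6\,v$. Therefore $\|S\|_{\psi_2}\le\sqrt6\,\|c\|_2/\ndata$.

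\textbf{Step 4 (matching the constant $2$).} Step 3 only yields the constant $\sqrt6\approx2.45$, which exceeds the stated $2$; this is the one place where care is needed. To sharpen it, I would compute $\E[e^{S^2/K^2}]$ directly through the Gaussian linearization identity $e^{s^2/K^2}=\E_g[e^{\sqrt2 sg/K}]$ with $g\sim\mathcal N(0,1)$ independent of everything else. Fubini and Step 1 then give
\[
\E[e^{S^2/K^2}]\le \E_g\bigl[e^{g^2v^2/K^2}\bigr]=(1-2v^2/K^2)^{-1/2}.
\]
This is at most $2$ iff $K^2\ge \tfrac{8}{3}v^2$, i.e. $K\ge \sqrt{8/3}\,v\approx1.63\,v$. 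Hence $\|S\|_{\psi_2}\le 2\|c\|_2/\ndata$, as claimed.
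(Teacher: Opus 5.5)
Your proof is correct and proves a slightly sharper constant than stated. Steps 1–3 are exactly the paper's route: an MGF bound via $\cosh x\le e^{x^2/2}$, a Chernoff tail bound, and then integrating the tail.

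\textbf{Where you and the paper diverge.} You keep the leading term in the layer-cake identity
$\E[e^{S^2/K^2}] = 1+\int_0^\infty \frac{2u}{K^2}e^{u^2/K^2}\PP{|S|>u}\,\rmd u$.
The paper writes this expectation without the $1$, and that omission is why its computation lands exactly on $2$ at $t=2\|c\|_2/\ndata$. Done correctly, the same bound gives $3$ there. As your Step 3 shows, the tail-integration route only certifies $\sqrt6\,\|c\|_2/\ndata$. Even truncating the tail bound at $1$, i.e.\ using $\min\{1,2e^{-u^2/(2v^2)}\}$, does not rescue $K=2v$: it gives $2\sqrt2$. So the paper's argument, as written, does not establish the stated constant.

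\textbf{Why Step 4 is the essential step.} You linearize $e^{s^2/K^2}=\E_g[e^{\sqrt2\, sg/K}]$ and exchange expectations by Tonelli. You then apply the MGF bound at the random parameter $\lambda=\sqrt2\, g/K$. This requires Step 1 for all real $\lambda$, as you correctly state; the paper only states it for $\lambda>0$. The result is $(1-2v^2/K^2)^{-1/2}$, giving the constant $\sqrt{8/3}<2$.

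\textbf{What each approach buys.} Tail integration works from any sub-Gaussian tail estimate, but it loses constants through the factor $2$ in the two-sided bound and the slack in integrating tails. Gaussian decoupling uses the MGF bound directly and avoids both losses. It is what actually yields the lemma with the constant $2$.
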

\begin{proof}
The moment generating function of $S$ satisfies $\PE{}{e^{\lambda S}}\leq e^{\lambda^2 \|c\|_2^2 /2 \ndata^2}$ for all $\lambda>0$ (using $\cosh(x)\leq e^{x^2/2}$), which gives via Markov's inequality:
\begin{equation*}
    \PP{S>u} \leq e^{-\lambda u} \PE{}{e^{\lambda S}} \leq e^{-\lambda u + \lambda^2 \|c\|_2^2 /2 \ndata^2} \eqsp .
\end{equation*}
Optimizing over $\lambda$ yields $\PP{S>u} \leq e^{-\ndata^2 u^2/(2\|c\|_2^2)}$. Using the two-sided bound $\PP{|S|>u} \leq 2e^{-\ndata^2 u^2/(2\|c\|_2^2)}$, we compute the moment generating function of $S^2$:
\begin{equation*}
    \PE{}{e^{S^2/t^2}} = \int_0^\infty \frac{2u}{t^2} e^{u^2/t^2} \PP{|S|>u} \rmd u 
    \leq \frac{4}{t^2}\int_0^\infty u\, e^{-\alpha u^2} \rmd u = \frac{2}{t^2 \alpha} \eqsp ,
\end{equation*}
where $\alpha = \frac{\ndata^2}{2\|c\|_2^2} - \frac{1}{t^2} > 0$ and we used $\int_0^\infty u e^{-\alpha u^2} du = \frac{1}{2\alpha}$.
Choosing $t = \frac{2\|c\|_2}{\ndata}$ gives $\alpha = \frac{\ndata^2}{4\|c\|_2^2}$, so that:
\begin{equation*}
    \PE{}{e^{S^2/t^2}} \leq \frac{2}{\frac{4\|c\|_2^2}{\ndata^2} \cdot \frac{\ndata^2}{4\|c\|_2^2}} = 2 \eqsp ,
\end{equation*}
and hence by definition of the $\psi_2$ norm:
\begin{equation*}
    \|S\|_{\psi_2} \leq \frac{2\|c\|_2}{\ndata} \eqsp .
\end{equation*}
\end{proof}
\Cref{lem:bernoulli_rademacher} shows that, conditionally on $\{\state_T^1,..., \state_T^{\ndata}\}$, the Rademacher process indexed by $\simplexg_A$ has sub-Gaussian increments with respect to $\distn$: for all ${\zeromfun[A]}, {\zeromfun[A]'} \in \simplexg_A$,
$$\onorm{\left|\frac{1}{n} \sum_{i=1}^{\ndata} \epsilon_i \zeromfun[A][\state_T^i] \right|- \left|\frac{1}{n} \sum_{i=1}^{\ndata} \epsilon_i \zeromfun[A][\state_T^i]' \right|}[2] \leq \frac{2}{\sqrt{n}}\distn{\zeromfun[A]}{\zeromfun[A]'} \eqsp , $$
i.e., \citet[Equation 8.1]{Vershynin2018} is satisfied  $K = \frac{2}{\ndata}$. Dudley's inequality therefore gives, for a universal constant $C_{d}$:
\begin{equation*}
    \localrad{r}{\simplexg_A} \leq \PE{}{ 
    C_d\frac{2}{\sqrt{n}} \int_{0}^{\maxray} \sqrt{\log \nballs{\simplexg_A}{\distn}{u}} \rmd u} \eqsp .
\end{equation*}
From this and \ref{lem:tang_14}, we obtain a constant $c_1$ such that
\begin{equation*}
        \localrad{r}{\simplexg_A} \leq c_1 \left( \rho \sqrt{\paramdim} \sqrt{-\PE{}{\frac{\maxray^2}{4 \rho^2}}\log\left(\PE{}{\frac{\maxray^2}{4 \rho^2}} \right) + \PE{}{\frac{\maxray^2}{4 \rho^2}}} + \PE{}{\maxray \sqrt{\paramdim \log (\paramdim)}}\right) \eqsp .
\end{equation*}
Before using this last equation we also note that 
\begin{align*}
        \PE{}{\maxray^2} &\leq \PE{}{\sup_{\zeromfun[A] \in \simplexg_A , \lpnorm{\zeromfun[A]}[2][\pforward{T}] \leq r} \frac{2}{n} \sum_{i=1}^{n}\zeromfun[A][\state_T^i]^2} \\
        &\leq \PE{}{\sup_{\zeromfun[A] \in \simplexg_A , \lpnorm{\zeromfun[A]}[2][\pforward{T}] \leq r} \frac{2}{n} \sum_{i=1}^{n}\left(\zeromfun[A][\state_T^i] - \PE{}{\zeromfun[A][\state_T]}\right)^2} + 4 r^2 
\end{align*}
that combined with \citet[Equation 3.84]{wainwright2019highdimensional} and the fact that $\worstconcentrationvar{r}{\simplexg_A} \leq 2 \localrad{r}{\simplexg_A}$ allows to state the existence of a constant $c$ such that 
\begin{equation*}
        \PE{}{\maxray^2} \leq c (r^2 + \rho \localrad{r}{\simplexg_A}) \eqsp . 
\end{equation*}
We can write thanks to the fact that $-x\log x + x$ is growing for $x \leq 1$ 
\begin{align*}
-\PE{}{\frac{\maxray^2}{4 \rho^2}} \log \left( \PE{}{\frac{\maxray^2}{4 \rho^2}} \right) &+ \PE{}{\frac{\maxray^2}{4 \rho^2}} \\
&= -c \PE{}{\frac{\maxray^2}{4 c \rho^2}} \log \left( \PE{}{\frac{\maxray^2}{4 c \rho^2}}\right) + \log(c) + c \PE{}{\frac{\maxray^2}{4 c \rho^2}}  \\
&= c \left( - \PE{}{\frac{\maxray^2}{4 c \rho^2}} \log \left( \PE{}{\frac{\maxray^2}{4 c \rho^2}} \right) + \PE{}{\frac{\maxray^2}{4 c \rho^2}} \right) - \log(c)\PE{}{\frac{\maxray^2}{4  \rho^2}}\\
&\leq c \left( - \frac{r^2 + \rho \localrad{r}{\simplexg_A}}{4\rho^2} \log \left( \frac{r^2 + \rho \localrad{r}{\simplexg_A}}{4 \rho^2} \right) \right. \notag \\
&\left. + \frac{r^2 + \rho \localrad{r}{\simplexg_A}}{4\rho^2} \right) - \log(c)\PE{}{\frac{\maxray^2}{4 \rho^2}}\\
&= c \frac{r^2 + \rho \localrad{r}{\simplexg_A}}{4\rho^2}(-\log \left( \frac{r^2 + \rho \localrad{r}{\simplexg_A}}{4 \rho^2}\right) +1 ) - \log(c)\PE{}{\frac{\maxray^2}{4  \rho^2}}\\
&= c \frac{r^2 + \rho \localrad{r}{\simplexg_A}}{4\rho^2}(\log \left( \frac{4 \rho^2}{r^2 + \rho \localrad{r}{\simplexg_A}}\right) +1 ) - \log(c)\PE{}{\frac{\maxray^2}{4  \rho^2}}\\
&\leq c \frac{r^2 + \rho \localrad{r}{\simplexg_A}}{4\rho^2}(2\log \left( \frac{\rho}{r}\right) +\log(4)+1 ) - \log(c)\PE{}{\frac{\maxray^2}{4  \rho^2}}\\
&\leq c \frac{r^2 + \rho \localrad{r}{\simplexg_A}}{4\rho^2}(2\log \left( \frac{\rho}{r}\right) +\log(4)+1 ) 
\end{align*}
On the other hand we have that by Jensen Inequality
\begin{equation*}
    \PE{}{\maxray} \leq \sqrt{\PE{}{\maxray^2}} \leq \sqrt{c(r^2 + \localrad{r}{\simplexg_A})}
\end{equation*}
Combining the two, we obtain
\begin{equation*}
    \localrad{r}{\simplexg_A} \leq \frac{c_1}{\sqrt{\ndata}}\left( \rho \sqrt{\paramdim} \sqrt{c \frac{r^2 + \rho \localrad{r}{\simplexg_A}}{4\rho^2}}\left(\sqrt{2\log \left( \frac{\rho}{r}\right) +\log(4)+1} +  \sqrt{\log\paramdim}\right) \right)
\end{equation*}
By $\sqrt{a} + \sqrt{b} \leq \sqrt{2}\sqrt{a+b}$
that brings to
\begin{equation*}
    \localrad{r}{\simplexg_A} \lesssim \frac{1}{ \ndata} \sqrt{\paramdim} \sqrt{r^2 + \rho \localrad{r}{\simplexg_A}}\sqrt{\log \left( \frac{\rho}{r}\right) + 1 + \log\paramdim}
\end{equation*}
We show that by a reduction ad absurdum that fixing
\begin{equation*}
    \localrad{\delta_\ndata}{\simplexg_A}\leq \frac{\delta_\ndata^2}{D \log \ndata} \eqsp . 
\end{equation*}
In fact, supposing that 
\begin{equation*}
    \localrad{\delta_\ndata}{\simplexg_A}> \frac{\delta_\ndata^2}{D \log \ndata} \eqsp , 
\end{equation*}
and remembering that $\rho \leq c_0 D \log \ndata$
we obtain the following chain : 
\begin{align*}
    \localrad{r}{\simplexg_A} &\lesssim \frac{1}{n} \sqrt{p} \sqrt{\delta_n^2 + \rho \localrad{\delta_n}{\simplexg_A}} \sqrt{\log \left( \frac{\rho}{\delta_n} \right) + 1 + \log \paramdim} \\
    &\leq \frac{1}{n} \sqrt{\paramdim} \sqrt{ D \log n \localrad{\delta_n}{\simplexg_A} + c_0 D \log n \localrad{\delta_n}{\simplexg_A}}\\
    & \cdot \hspace{4cm} \sqrt{\frac{1}{2}\log \left( \frac{c_0 n}{c_3 \paramdim \log(n \paramdim)} \right) + 1 + \log \paramdim} \\
    &\leq \frac{1}{n} \sqrt{\paramdim} \sqrt{ D \log n \localrad{\delta_n}{\simplexg_A} + c_0 D \log n \localrad{\delta_n}{\simplexg_A}} \\
    & \hspace{4cm}\cdot \sqrt{\frac{1}{2}\log \left( \frac{c_0 n}{c_3 \paramdim \log(n \paramdim)} \right) + 1 + \log \paramdim} \\
    &\leq \frac{1}{n} \sqrt{\paramdim} \sqrt{ D \log n \localrad{\delta_n}{\simplexg_A} + c_0 D \log n \localrad{\delta_n}{\simplexg_A}} \\
    & \hspace{4cm} \cdot \sqrt{\frac{1}{2}\log \left(  n\right) + 1 + \log(c_0 )- \frac{1}{2}\log\left( {c_3 \paramdim \log(n \paramdim)} \right) + \log \paramdim} \\
    &\leq \frac{1}{n} \sqrt{\paramdim} \sqrt{ D \log n \localrad{\delta_n}{\simplexg_A} + c_0 D \log n \localrad{\delta_n}{\simplexg_A}} \sqrt{\frac{1}{2}\log \left( \ndata\right)  + \log \paramdim} \\
    & \lesssim \frac{1}{n} \sqrt{\paramdim} \sqrt{ D \log n \localrad{\delta_n}{\simplexg_A}} \sqrt{\log \ndata + \log\paramdim} \eqsp . 
\end{align*}
which implies 
\begin{equation*}
    \localrad{r}{\simplexg_A} \lesssim \frac{1}{n}\paramdim D \log n \log \ndata \paramdim = \frac{1}{c_3} \frac{\delta_\ndata^2}{D \log \ndata} \eqsp . 
\end{equation*}
which for big enough $c_3$ brings to an abdurdity.
Thanks to \cref{lem:tang_13} we know that 
\begin{align*}
    \PE{}{\worstconcentrationvar{\delta_\ndata}{\simplexg}} &\leq \worstconcentrationvar{\delta_\ndata}{\simplexg_A} + c \frac{D \log \ndata}{\ndata} \\
    &\leq \localrad{\delta_\ndata}{\simplexg_A} + c \frac{D \log \ndata}{\ndata} \\
    &= c_3 \frac{\log \ndata \paramdim}{\ndata}\paramdim D \log \ndata + c \frac{D \log \ndata}{\ndata} \\
    &\leq 2 c_3 \frac{\log \ndata \paramdim}{\ndata}\paramdim D \log \ndata \\
    &\leq 2 \frac{\delta_n^2}{D \log \ndata} \eqsp . 
\end{align*}
Using this inequality in the chain of lower probabilty bounding of \cref{eq:zleqr} in \cref{lem:tang_12} we obtain the result. 
\end{proof}

\subsection{Estimating $D$ in different settings}
\label{appendix:sec:estimating_D}
We focus now on upper bounding the value of $D$ in \cref{thm:tang2021empirical:thm3main} in different scenarios.
To do so, we will focus on a specific type of normalizing flow, namely those defined by the coupling layers, such as for example the Real NVP \citep{dinh2017realnvp}.
\subsubsection{Marginal preserving coupling Layers.}
We define slightly-modified version of the Real NVP that preserve the marginals. 
\begin{definition}[Marginal tail preserving RealNVP]
    \label{def:modrealnvp}
    Let $k < \statedim$, $m: \rset^k \rightarrow [-a, a]^{\statedim-k}$, $b: \rset^k \rightarrow \rset^{\statedim-k}$, $\varsigma: \{1, \cdots, \statedim\} \rightarrow \{1, \cdots, \statedim\}$ an injective function and for $(i, j) \in \{1, \cdots, \statedim\}^2$ with $i < j$
    $\varsigma_{i:j}$ the restriction of $\varsigma$ to the domain $\{i, \cdots, j\}$.

    For $x \in \rset^\statedim$, define $x_{\varsigma_{i:j}} = (x_{\varsigma(i)}, \cdots, x_{\varsigma(j)})$. Then, we define the coupling layer $(m, b, \varsigma, k)$ as
    \begin{equation}
        \label{eq:flow:coupling_layer}
        \flow[m, b, \varsigma, k]: \rset^{\statedim} \ni x \rightarrow  (x_{\varsigma_{1:k}}, x_{\varsigma_{k+1:\statedim}} \exp(m(x_{\varsigma_{1:k}})) + b(x_{\varsigma_{1:k}}))_{\varsigma^{-1}} \in \rset^\statedim \eqsp.
    \end{equation}
\end{definition}

Note that for the usual Real NVP, choosing the reference density $\refnfmeas$ to share the tail behavior of $\pforward{T}$ does not, in general, suffice for the boundness of the log ratio of densities. 
Indeed, the flow transformation $\flow[\param]$ can redistribute mass across coordinates and break the tail alignment, as the coordinate permutations are not usually "undone". 
A simple illustration is a target on $\R^2$ with independent components, Gaussian along the first axis and Student-$t$ along the second, with $\refnfmeas$ having the same component-wise tail structure. 
A coordinate swap $\flow[\param](x_1, x_2) = (x_2, x_1)$ then yields a model with mismatched tails along both axes; the ratio $\pforward{T}/\discsolution[\param]{0}$ explodes (Gaussian over polynomial), and \hypref{assump:tang2021empirical:A} fails.
That is the reason we introduced \cref{def:modrealnvp}
It is then straightforward to note that the standard properties of the RealNVP are preserved.
\begin{proposition}
    For any $(m, b, \varsigma, k)$, $\flow[m, b, \varsigma, k]$ is a bijection. 
    Furthermore,
    \begin{equation*}
        \|\flow[m, b, \varsigma, k][x] - x\| \leq C \|x\| + \|b(x_{\varsigma_{1:k}})\|\eqsp,
    \end{equation*}
    where $C = \max\{|1 - \exp(-a)|,|1 - \exp(a)| \}$.
    
    If $m, b$ are smooth, the bijection is smooth and 
    \begin{equation*}
        \det \jflow[m, b, \varsigma, k][x] = \prod_{i=1}^{\statedim - k}\exp(m(x_{\varsigma_{1:k}}))_{i} \eqsp.
    \end{equation*}
\end{proposition}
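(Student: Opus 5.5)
The proof works directly from the explicit form \eqref{eq:flow:coupling_layer}. For bijectivity, I would write $y = \flow[m, b, \varsigma, k][x]$ and read off the inverse. The coordinates $y_{\varsigma_{1:k}} = x_{\varsigma_{1:k}}$ are unchanged. Hence $m(x_{\varsigma_{1:k}}) = m(y_{\varsigma_{1:k}})$ and $b(x_{\varsigma_{1:k}}) = b(y_{\varsigma_{1:k}})$ can be computed from $y$ alone. The remaining coordinates are then recovered as
\[
x_{\varsigma_{k+1:\statedim}} = \bigl(y_{\varsigma_{k+1:\statedim}} - b(y_{\varsigma_{1:k}})\bigr)\exp(-m(y_{\varsigma_{1:k}})),
\]
with the exponential acting componentwise. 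Since $\varsigma$ is injective on a finite set, it is a permutation, so reindexing by $\varsigma^{-1}$ is itself a bijection. This gives an explicit two-sided inverse, so $\flow[m, b, \varsigma, k]$ is bijective.

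For the displacement bound, the first block of $\flow[m, b, \varsigma, k][x] - x$ vanishes. The second block equals $x_{\varsigma_{k+1:\statedim}}\bigl(\exp(m(x_{\varsigma_{1:k}})) - 1\bigr) + b(x_{\varsigma_{1:k}})$. Each component of $m$ lies in $[-a,a]$, and $u \mapsto |e^u - 1|$ is maximized over $[-a,a]$ at an endpoint. Therefore every multiplicative factor satisfies $|\exp(m_i) - 1| \le \max\{|1-e^{-a}|, |1-e^{a}|\} = C$. The triangle inequality then gives
\[
\|\flow[m, b, \varsigma, k][x] - x\| \le C\|x_{\varsigma_{k+1:\statedim}}\| + \|b(x_{\varsigma_{1:k}})\| \le C\|x\| + \|b(x_{\varsigma_{1:k}})\|.
\]
Permuting coordinates does not change the Euclidean norm, so the reindexing by $\varsigma^{-1}$ does not affect this bound.

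For smoothness and the Jacobian, if $m$ and $b$ are smooth then both the map and the explicit inverse above are compositions of smooth maps, so the bijection is a diffeomorphism. Writing $P$ for the permutation matrix of $\varsigma$, the Jacobian of the map in permuted coordinates is block lower triangular. Its top-left block is $\Id_k$ and its bottom-right block is $\operatorname{diag}(\exp(m(x_{\varsigma_{1:k}})))$; the off-diagonal block depends on the derivatives of $m$ and $b$ and does not enter the determinant. Conjugating by $P$ leaves the determinant unchanged because $\det P\,\det P^{-1} = 1$. Hence the determinant is the product of the diagonal entries, $\prod_{i=1}^{\statedim-k}\exp(m(x_{\varsigma_{1:k}}))_i$, as claimed.

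The only point needing care is the bookkeeping of the permutation. The output is reindexed by $\varsigma^{-1}$ after acting on coordinates selected by $\varsigma$, so one must check that the conjugation is consistent: the same permutation appears on both sides of the Jacobian. Otherwise the determinant could pick up a sign $\det P$. Writing $\flow[m, b, \varsigma, k] = P^{-1} \circ F \circ P$, with $F$ the unpermuted coupling map, makes this transparent.
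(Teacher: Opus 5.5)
Your proposal is correct: the explicit inverse, the endpoint bound $|e^{u}-1|\le \max\{|1-e^{-a}|,|1-e^{a}|\}$ for $u\in[-a,a]$, and the block lower-triangular Jacobian conjugated by the permutation matrix are exactly the routine verification this proposition calls for. The paper gives no written proof and only remarks that the standard RealNVP properties are ``straightforward'' to check, so your argument fills in those omitted details along the expected lines.
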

\begin{corollary}
    If $b$ is $L$-Lipschitz and $b(0) = 0$, then
    \begin{equation*}
         \|\flow[m, b, \varsigma, k][x] - x\| \leq C \|x\|\eqsp,
    \end{equation*}
    where $C = \max\{|1 - \exp(-a)|,|1 - \exp(a)| \} + L$.
\end{corollary}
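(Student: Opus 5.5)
The plan is to take the displacement bound of the preceding proposition and control the translation term $\|b(x_{\varsigma_{1:k}})\|$ using the two extra hypotheses, $L$-Lipschitzness and $b(0)=0$. The only thing to check is that the restriction map $x \mapsto x_{\varsigma_{1:k}}$ does not increase norms.

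\emph{Step 1.} I invoke the proposition just stated, which gives
\[
\|\flow[m, b, \varsigma, k][x] - x\| \leq C_0 \|x\| + \|b(x_{\varsigma_{1:k}})\|, \qquad C_0 = \max\{|1 - \exp(-a)|,|1 - \exp(a)|\}.
\]
If I want this to be self-contained, I would re-derive it. The reindexing by $\varsigma^{-1}$ only reorders coordinates, so it is a Euclidean isometry. The first $k$ permuted coordinates are left unchanged and contribute nothing to the displacement. Each remaining coordinate moves by $x_j(\exp(m_j)-1) + b_j$. Since $m_j \in [-a,a]$, we have $|\exp(m_j)-1| \leq C_0$, and the triangle inequality then gives the displayed bound.

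\emph{Step 2.} Since $b(0)=0$ and $b$ is $L$-Lipschitz,
\[
\|b(x_{\varsigma_{1:k}})\| = \|b(x_{\varsigma_{1:k}}) - b(0)\| \leq L \|x_{\varsigma_{1:k}}\|.
\]
Because $\varsigma$ is injective, $x_{\varsigma_{1:k}}$ is a subvector of $x$ made of distinct coordinates. Hence $\|x_{\varsigma_{1:k}}\| \leq \|x\|$.

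\emph{Step 3.} Combining the two steps gives
\[
\|\flow[m, b, \varsigma, k][x] - x\| \leq (C_0 + L)\|x\| = C\|x\|,
\]
which is the claim.

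There is no real obstacle here. The only point needing care is Step 2: the Lipschitz constant of $b$ is measured on $\rset^k$, while the final bound is in terms of $\|x\|$ on $\rset^{\statedim}$. This is settled by injectivity of $\varsigma$, since it ensures that no coordinate of $x$ is counted twice.
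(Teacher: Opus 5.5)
Your proof is correct and follows the same route the paper takes implicitly (the paper states the corollary without writing out a proof): start from the displacement bound of the preceding proposition, then use $\|b(x_{\varsigma_{1:k}})\| = \|b(x_{\varsigma_{1:k}}) - b(0)\| \leq L\|x_{\varsigma_{1:k}}\| \leq L\|x\|$. Your re-derivation of the proposition's bound, using that injectivity of $\varsigma$ makes the reindexing a coordinate permutation and hence an isometry, supplies a detail the paper leaves unstated.
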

\begin{corollary}
    \label{cor:nf:invflow:lip}
     If $b$ is $L$-Lipschitz and $b(0) = 0$, then
     \begin{equation*}
         \|\invflow[m, b, \varsigma, k][x] - x\| \leq C \|x\|\eqsp,
    \end{equation*}
    where $C = \max\{|1 - \exp(-a)|^{-1},|1 - \exp(a)|^{-1} \} + L\max\{|1 - \exp(-a)|^{-1},|1 - \exp(a)|^{-1} \}$.
\end{corollary}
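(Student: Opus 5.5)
The plan is to write the inverse of the coupling layer explicitly, bound its displacement coordinate-block by coordinate-block, and then compare the resulting constant with the one in the statement of \cref{cor:nf:invflow:lip}.

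\emph{Explicit inverse.} Since $\varsigma$ only permutes coordinates and the Euclidean norm is permutation invariant, I would work in permuted coordinates. Write $x=(u,v)$ with $u=x_{\varsigma_{1:k}}$ and $v=x_{\varsigma_{k+1:\statedim}}$. From \eqref{eq:flow:coupling_layer}, solving $\flow[m,b,\varsigma,k][y]=x$ gives
\[
\invflow[m,b,\varsigma,k][x]=\bigl(u,\,(v-b(u))\exp(-m(u))\bigr)_{\varsigma^{-1}} ,
\]
where the exponential acts componentwise. This is the same computation that shows the layer is a bijection. The first block is unchanged, so only the second block contributes to the displacement:
\[
\|\invflow[m,b,\varsigma,k][x]-x\|=\bigl\|v\,(\exp(-m(u))-1)-b(u)\exp(-m(u))\bigr\| .
\]

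\emph{Bounding the two terms.} Because $m$ takes values in $[-a,a]^{\statedim-k}$, each factor $|\exp(-m_i)-1|$ is at most $C_0:=\max\{|1-e^{-a}|,|1-e^{a}|\}$, and each $\exp(-m_i)$ is at most $e^{a}$. Since $b$ is $L$-Lipschitz with $b(0)=0$, we have $\|b(u)\|\le L\|u\|\le L\|x\|$. Together with $\|v\|\le\|x\|$, this yields
\[
\|\invflow[m,b,\varsigma,k][x]-x\|\le (C_0+Le^{a})\|x\| .
\]

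\emph{Comparison with the stated constant.} The last step is to check that $C_0+Le^{a}\le M(1+L)$, where $M:=\max\{|1-e^{-a}|^{-1},|1-e^{a}|^{-1}\}$ is the paper's quantity. For $a>0$, the maximum is attained by the first reciprocal, so $M=e^{a}/(e^{a}-1)$. The inequality $e^{a}\le M$ is equivalent to $e^{a}-1\le 1$, i.e.\ $a\le\log 2$. The inequality $C_0=e^{a}-1\le M$ is equivalent to $(e^{a}-1)^2\le e^{a}$, which is implied by $a\le\log 2$. Hence for $a\le\log 2$ the stated constant dominates, and the corollary follows.

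\emph{Main obstacle.} The difficulty is precisely this constant comparison, which I expect to fail for large $a$. As $a\to\infty$, $M\to1$, so the stated constant tends to $1+L$. However, already with $b\equiv0$ and $m\equiv a$, the inverse displacement equals $(1-e^{-a})\|v\|$. Moreover, the $b$-term can reach $Le^{a}\|u\|$, for instance by taking $m\equiv-a$ and $b$ linear. I would therefore first try to prove the statement exactly as written for $a\le\log 2$, which is the regime of small scale bounds used in practice. For general $a$, I would present $C_0+Le^{a}$ as the valid constant, flag the discrepancy, and exhibit the explicit counterexample in the regime $a>\log 2$.
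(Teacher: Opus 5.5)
The paper states this corollary without proof, so there is no argument to compare with. Judged on its own, your analysis is correct, and it shows that the statement only holds in a restricted range.

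The following parts are all right:
\begin{itemize}
\item the explicit inverse;
\item the bound $\|\operatorname{T}^{-1}_{m,b,\varsigma,k}(x)-x\|\le (e^{a}-1+Le^{a})\|x\|$ (no reciprocals should appear, since $e^{-m}$ ranges over the same interval $[e^{-a},e^{a}]$ as $e^{m}$, and only the $b$-term picks up the extra factor $e^{a}$);
\item your check that $M(1+L)$, with $M=e^{a}/(e^{a}-1)$, dominates this constant when $a\le\log 2$.
\end{itemize}

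There is one slip in your counterexample discussion. With $b\equiv 0$ and $m\equiv a$ the displacement is $(1-e^{-a})\|v\|<\|x\|\le M(1+L)\|x\|$, so this is not a counterexample; the sign must be flipped. Take $m\equiv -a$, $b\equiv 0$ and $x_{\varsigma_{1:k}}=0$. The displacement is then $(e^{a}-1)\|x\|$, which exceeds $M(1+L)\|x\|$ as soon as $(e^{a}-1)^{2}>(1+L)e^{a}$. This refutes the statement for every fixed $L\ge 0$ once $a$ is large, including $L=0$, where your linear-$b$ example is unavailable.

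Your linear-$b$ example ($m\equiv -a$, $x_{\varsigma_{k+1:d}}=0$, displacement $Le^{a}\|x\|$) beats the stated constant exactly when $L(e^{a}-2)>1$. So $\log 2$ is the sharp threshold if the bound is required for all $L$: for every $a>\log 2$ the corollary fails for $L$ large enough.

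For a fixed $L$, however, the bound survives somewhat beyond $\log 2$; for $L=0$ it holds up to $e^{a}=(3+\sqrt{5})/2$. By Cauchy--Schwarz over the two orthogonal blocks, the optimal constant is $\sqrt{(e^{a}-1)^{2}+L^{2}e^{2a}}$. It is attained (for $k\ge 1$) with $m\equiv -a$ and a rank-one linear $b$. Phrase the failure regime with this dependence on $L$ in mind, and present $e^{a}-1+Le^{a}$ (or the sharper value) as the corrected constant.
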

Property \cref{cor:nf:invflow:lip} is useful for bounding $\onorm{\log \fwmargd{T}{\state_T} - \log \refnfmeas(\state_T)}[1]$ because of the following proposition, which is a direct application of 
Taylor's integral formula.
\begin{proposition}
    \label{prop:nf:taylor_ratio}
    Let $\discsolution[\theta]{0}(x) = \refnfmeas(\invflow[\theta][x]) \det \jinvflow[\theta][x]$.%
    Then, for all $x \in \rset^d$, it holds that
    \begin{multline*}
        \left|\log\frac{\fwmargd{T}{x}}{\discsolution[\theta]{0}(x)}\right| \leq \left|\log\frac{\fwmargd{T}{x}}{\refnfmeas(x)}\right| +  \left|\log \det \jinvflow[\theta][x]\right| \\
        + \left\|\int_{0}^{1}\nabla \log \refnfmeas(\invflow[\theta][x]t + (1-t)x)\rmd t\right\| \left\|x-\invflow[\theta][x]\right\|\eqsp.
    \end{multline*}
\end{proposition}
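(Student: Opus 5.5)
We write $y = \invflow[\theta][x]$. The plan is to start from the change-of-variables expression $\log \discsolution[\theta]{0}(x) = \log \refnfmeas(y) + \log \det \jinvflow[\theta][x]$ and split the log-ratio as
\[
\log\frac{\fwmargd{T}{x}}{\discsolution[\theta]{0}(x)} = \log\frac{\fwmargd{T}{x}}{\refnfmeas(x)} + \bigl(\log \refnfmeas(x) - \log \refnfmeas(y)\bigr) - \log \det \jinvflow[\theta][x].
\]
This is an exact identity, valid wherever all quantities are finite. Applying the triangle inequality then immediately yields the first two terms of the claimed bound, namely $|\log(\fwmargd{T}{x}/\refnfmeas(x))|$ and $|\log\det\jinvflow[\theta][x]|$.

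The remaining work is to control the middle difference $\log\refnfmeas(x) - \log\refnfmeas(y)$. To do this, define the path $\gamma(t) = t\,y + (1-t)\,x$ for $t\in[0,1]$, so that $\gamma(0) = x$ and $\gamma(1)=y$. Assuming $\refnfmeas$ is positive and $C^1$ (as for Gaussian or Student-$t$ references), the fundamental theorem of calculus, i.e. Taylor's formula with integral remainder at order one, gives
\[
\log \refnfmeas(y) - \log\refnfmeas(x) = \int_0^1 \nabla\log\refnfmeas(\gamma(t))\cdot (y - x)\,\rmd t = \Bigl(\int_0^1 \nabla\log\refnfmeas(\gamma(t))\,\rmd t\Bigr)\cdot (y-x).
\]
Pulling the constant vector $y-x$ out of the integral is legitimate by linearity.

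Cauchy--Schwarz then bounds the absolute value by $\bigl\|\int_0^1 \nabla\log\refnfmeas(\invflow[\theta][x]t+(1-t)x)\,\rmd t\bigr\|\,\|x-\invflow[\theta][x]\|$, which matches the last term of the statement. Summing the three bounds concludes the argument.

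There is no real obstacle here. The only points requiring care are the regularity assumptions: $\refnfmeas>0$ everywhere and $\log\refnfmeas\in C^1$, so that the integral along the segment is well defined and finite. If the reference density could vanish, one would instead restrict to points where all densities are positive and understand the inequality in the extended sense.
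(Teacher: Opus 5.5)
Your proof is correct and matches the paper's argument: the paper also applies Taylor's integral formula along the segment from $x$ to $\invflow[\theta][x]$ and then Cauchy--Schwarz to bound $\bigl|\log\bigl(\refnfmeas(\invflow[\theta][x])/\refnfmeas(x)\bigr)\bigr|$. The paper leaves the triangle-inequality split implicit, whereas you state it, together with the needed positivity and $C^1$ regularity of the reference density.
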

\begin{proof}
    \begin{align*}
        \left|\log\frac{\refnfmeas(\invflow[\theta][x])}{\refnfmeas(x)}\right| 
        &= \left|\int_{0}^{1}\nabla \log \refnfmeas(\invflow[\theta][x]t + (1-t)x)^t(\invflow[\theta][x] - x)\rmd t\right|\\
        &\leq \left\|\int_{0}^{1}\nabla \log \refnfmeas(\invflow[\theta][x]t + (1-t)x)\rmd t\right\| \left\|x-\invflow[\theta][x]\right\|\eqsp.
    \end{align*}
\end{proof}
From now on, all theorems will be established under assumption~\hypref{assum:flow:modrealnvp}.
\begin{hypH}
    \label{assum:flow:modrealnvp}
    There exists $r \in \mathbb{N}$ such that $\flow[\param] = \flow[m_{\param_r^m}, b_{\param_r^b}, \varsigma_r, k_r] \circ \cdots \circ \flow[m_{\param_1^m}, b_{\param_1^b}, \varsigma_1, k_1] $ where for each $i \in \{1, \cdots r\}$, $m_{\param_i^m}$ is a Neural network with bounded final activation and $b_{\param_i^b}$ is a bias-free neural network.
\end{hypH}
\subsubsection{Heavy tail case}
We consider the case of $\fwmarg{0}=\mudata$ has at most exponential tails.
\begin{hypH}
    \label{assum:nf:heavy_Tails}
    $\onorm{\state_0}[1] < \infty$. Also, that $\|\nabla \log \refnfmeas(x)\| \leq L_{\refnfmeas}$ for all $x \in \rset^\statedim$.
\end{hypH}
We are now ready to state the main result of this section.
\begin{proposition}
        \label{prop:nf:onormratio_bound:heavy}
    Assume assumptions~\hypref{assum:nf:heavy_Tails} and \hypref{assum:flow:modrealnvp}. 
    If $\onorm{\log \nofrac{\fwmargd{T}{\state_T}}{\log \refnfmeas(\state_T)}}[1] < R_T$, then
    \begin{equation*}
        \onorm{\sup_{\param \in \paramsp}\left|\log\frac{\fwmargd{T}{x}}{\discsolution[\param]{0}(x)}\right|}[1] \leq R_T + L_{\refnfmeas} C_{\operatorname{flow, 1}} \left(\onorm{\state_0}[1] + \sigma_T \onorm{Z}[1] \right) + C_{\operatorname{flow, 2}} \eqsp,
    \end{equation*}
    where
    \begin{align*}
        C_{\operatorname{flow, 1}} &= \left(\max\{|1 - \exp(-a)|^{-1},|1 - \exp(a)|^{-1} \} + L\max\{|1 - \exp(-a)|^{-1},|1 - \exp(a)|^{-1} \}\right)^{r(d-k)} \eqsp, \\
        C_{\operatorname{flow, 2}} &=  r a (d-k)\eqsp.
    \end{align*}
\end{proposition}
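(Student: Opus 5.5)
The plan is to apply \cref{prop:nf:taylor_ratio} pointwise, take the supremum over $\param \in \paramsp$, and then use the triangle inequality for the Orlicz norm $\onorm{\cdot}[1]$. This splits the supremum into three terms, each controlled separately: the fixed log-ratio $\log(\fwmargd{T}{x}/\refnfmeas(x))$, the log-Jacobian $\log \det \jinvflow[\param][x]$, and the Taylor remainder. The first term does not depend on $\param$, so its Orlicz norm is at most $R_T$ by hypothesis (reading the ratio in the statement as $\log(\fwmargd{T}{\state_T}/\refnfmeas(\state_T))$).

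For the Jacobian term we use the composition structure in \hypref{assum:flow:modrealnvp}. The inverse of a composition is the composition of the inverses of the layers. Each layer's Jacobian determinant is $\prod_{i=1}^{d-k}\exp(m(\cdot))_i$, and $m$ takes values in $[-a,a]^{d-k}$. Hence each inverse layer contributes at most $a(d-k)$ in absolute value to $\log\det$, whatever the argument and whatever $\param$. Summing over the $r$ layers gives the deterministic bound
\[
\sup_\param |\log\det \jinvflow[\param][x]| \le r a (d-k) = C_{\operatorname{flow, 2}},
\]
and the Orlicz norm of a constant $c$ is $c/\log 2$. This constant factor is either absorbed into $C_{\operatorname{flow, 2}}$ or removed by a slightly sharper bookkeeping. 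I expect this normalisation to need a remark but no real work.

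For the remainder term, $\|\nabla\log\refnfmeas\|\le L_{\refnfmeas}$ by \hypref{assum:nf:heavy_Tails}, so the integral factor is at most $L_{\refnfmeas}$. Next we bound $\|x-\invflow[\param][x]\|$ uniformly in $\param$. \Cref{cor:nf:invflow:lip} applies to each inverse layer, since the $b_{\param_i^b}$ are bias-free networks with $b(0)=0$ and they are Lipschitz uniformly over the compact $\paramsp$, with constant $L$. It gives $\|\invflow[i][y]\|\le (1+C)\|y\|$ per layer. Iterating over the layers, with the exponent counted per transformed coordinate block, I would aim for
\[
\|x-\invflow[\param][x]\| \le C_{\operatorname{flow, 1}}\|x\|.
\]
This exponent bookkeeping, matching the stated form $(\cdots)^{r(d-k)}$, is the step I expect to be the main obstacle. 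I would first try the crude bound $\|x\|+\|\invflow[\param][x]\|\le (1+(1+C)^r)\|x\|$ and then check that this is dominated by $C_{\operatorname{flow, 1}}$, since $C\ge1$ makes the larger exponent conservative.

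It remains to bound $\onorm{\|\state_T\|}[1]$. Write $\state_T=\state_0+\sigma_T Z$ and use the triangle inequality, giving $\onorm{\state_0}[1]+\sigma_T\onorm{Z}[1]$, where $\onorm{\state_0}[1]$ is understood for the norm $\|\state_0\|$. Both are finite: the first by \hypref{assum:nf:heavy_Tails}, the second because $Z$ is Gaussian. Adding the three contributions gives the claimed bound.
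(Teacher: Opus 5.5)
Your decomposition is the same as the paper's. You apply \cref{prop:nf:taylor_ratio} pointwise, use a deterministic log-Jacobian bound, invoke \cref{cor:nf:invflow:lip} for $\|x-T_\theta^{-1}(x)\|$, use the score bound $L_u$, and finish with the $\psi_1$ triangle inequality for $X_T=X_0+\sigma_T Z$. The gap is the step you flagged yourself: your proposed fix does not close it.

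Write $C$ for the constant of \cref{cor:nf:invflow:lip}, so that $C_{\operatorname{flow},1}=C^{r(d-k)}$. Composing the per-layer inequality $\|T_i^{-1}(y)-y\|\le C\|y\|$ gives, at best, the telescoped bound $\|x-T_\theta^{-1}(x)\|\le\sum_{i=1}^r C(1+C)^{i-1}\|x\|=((1+C)^r-1)\|x\|$. Your crude constant $1+(1+C)^r$ is larger still. When $d-k=1$ (e.g.\ the $d=2$, $k=1$ flows of the experiments), $C_{\operatorname{flow},1}=C^r$. This is strictly smaller than $1+(1+C)^r$ for every $r$, and smaller than $(1+C)^r-1$ for every $r\ge 2$. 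Your remark that $C\ge 1$ only compares $C^{r(d-k)}$ with $C^r$, not with $(1+C)^r$.

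So composing the per-layer bound does not deliver $C_{\operatorname{flow},1}$ in that case. It does work when $d-k\ge 2$ and $C\ge 2$, since then $C^{2r}\ge(1+C)^r$. Otherwise you must either exploit the triangular structure of the layers, or state the result with $(1+C)^r-1$ in place of $C_{\operatorname{flow},1}$. The paper offers nothing to borrow here: its proof simply asserts $\sup_\theta\|x-T_\theta^{-1}(x)\|\le C_{\operatorname{flow},1}\|x\|$ by citing the corollary.

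The corollary itself, which you and the paper both use as a black box, holds only for small $a$. On the transformed block, $T_i^{-1}(y)-y=y_{\varsigma_{k+1:d}}(e^{-m}-1)-b\,e^{-m}$, which gives the constant $e^a-1+Le^a$. This is sharp: with $b\equiv 0$ and $m\equiv -a$ the ratio $e^a-1$ is attained. Already at $a=1$, $e^a-1$ exceeds the corollary's $(1-e^{-a})^{-1}$, so for such $a$ neither argument goes through.

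The constant term has a second problem. For a constant $c>0$ one has $\|c\|_{\psi_1}=c/\log 2$ exactly, so splitting off the log-determinant with the triangle inequality produces $C_{\operatorname{flow},2}/\log 2$, not $C_{\operatorname{flow},2}$. Because $C_{\operatorname{flow},2}=ra(d-k)$ is fixed explicitly in the statement, there is nothing to absorb this factor into. No sharper treatment of that term alone removes it, since the bound is attained by the constant itself. What your argument actually proves is the inequality with $C_{\operatorname{flow},2}/\log 2$. The paper's proof also drops this factor silently, and it writes $ra$ where your $ra(d-k)$ is the correct log-determinant bound.

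The other pieces are correct and match the paper. These are the $R_T$ term under the natural reading $\log(\vec{p}_T/u)$ of the hypothesis, the bound $L_u$ on the averaged score, and $\|\,\|X_T\|\,\|_{\psi_1}\le\|\,\|X_0\|\,\|_{\psi_1}+\sigma_T\|\,\|Z\|\,\|_{\psi_1}$.
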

\begin{proof}
    It is straightforward to see that $\sup_{\param \in \paramsp }\left|\log \det \jinvflow[\theta][x]\right| \leq r a$.
    By \cref{cor:nf:invflow:lip}, we have that $\sup_{\param \in \paramsp}\left\|x-\invflow[\theta][x]\right\| \leq C_{\operatorname{flow, 1}} \|x\|$.
    Therefore, by \cref{prop:nf:taylor_ratio} using the fact that $\|\nabla \log \refnfmeas(x)\| \leq L_{\refnfmeas}$ and  by the triangular inequality we can conclude.
\end{proof}
\begin{corollary}
    Suppose the prior $\refnfmeas$ is chosen as the time-dependent law of $\sigma_T\,\tilde{U}$, where $\tilde{U} \sim \tilde{\refnfmeas}$ has score bounding constant $L_{\tilde{\refnfmeas}}$. Then the score of $\refnfmeas$ is bounded by $L_{\refnfmeas} = L_{\tilde{\refnfmeas}}/\sigma_T$, and
    \begin{equation*}
        \onorm{\sup_{\param \in \paramsp}\left|\log\frac{\fwmargd{T}{\state_T}}{\discsolution[\param]{0}(\state_T)}\right|}[1] 
        \;\leq\; R_T \;+\; L_{\tilde{\refnfmeas}}\, C_{\operatorname{flow},1} \left(\sigma_T^{-1} \, \onorm{\state_0}[1] +\onorm{Z}[1]\right) \;+\; C_{\operatorname{flow},2} \eqsp.
    \end{equation*}
\end{corollary}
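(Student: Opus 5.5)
The corollary follows from \cref{prop:nf:onormratio_bound:heavy} once two things are checked: the score bound for the rescaled prior, and a rewriting of the middle term of that proposition. Neither the hypotheses on the flow (\hypref{assum:flow:modrealnvp}) nor $\onorm{\state_0}[1]<\infty$ are affected by the choice of prior, so \hypref{assum:nf:heavy_Tails} holds as soon as the score of $\refnfmeas$ is bounded. The assumption $\onorm{\log(\fwmarg{T}/\refnfmeas)(\state_T)}[1] < R_T$ is kept, now with the new prior.

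\textbf{Step 1: score of the rescaled prior.} Since $\refnfmeas$ is the law of $\sigma_T\tilde U$, the change of variables formula gives
\[
\refnfmeas(x)=\sigma_T^{-d}\,\tilde{\refnfmeas}(x/\sigma_T).
\]
Hence $\nabla\log\refnfmeas(x)=\sigma_T^{-1}\nabla\log\tilde{\refnfmeas}(x/\sigma_T)$, and so
\[
\sup_x\|\nabla\log\refnfmeas(x)\|\le L_{\tilde{\refnfmeas}}/\sigma_T .
\]
This verifies \hypref{assum:nf:heavy_Tails} with $L_{\refnfmeas}=L_{\tilde{\refnfmeas}}/\sigma_T$.

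\textbf{Step 2: substitution into \cref{prop:nf:onormratio_bound:heavy}.} Plugging this value of $L_{\refnfmeas}$ into the proposition gives the middle term
\[
\frac{L_{\tilde{\refnfmeas}}}{\sigma_T}\,C_{\operatorname{flow},1}\bigl(\onorm{\state_0}[1]+\sigma_T\onorm{Z}[1]\bigr).
\]
Distributing the factor $\sigma_T^{-1}$ yields $L_{\tilde{\refnfmeas}}C_{\operatorname{flow},1}\bigl(\sigma_T^{-1}\onorm{\state_0}[1]+\onorm{Z}[1]\bigr)$, which is the middle term of the claim. The constants $C_{\operatorname{flow},1}$ and $C_{\operatorname{flow},2}$ depend only on $(a,L,r,d,k)$, so they are unchanged, and $R_T$ carries over.

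\textbf{Main obstacle: provenance of the term $\onorm{\state_0}[1]+\sigma_T\onorm{Z}[1]$.} In the proof of the proposition this term bounds $\onorm{\|\state_T\|}[1]$, using $\state_T\overset{\mathcal L}{=}\state_0+\sigma_TZ$ and the triangle inequality for the Orlicz norm. I would check that this step still holds verbatim, so that the rescaling affects only $L_{\refnfmeas}$. It does, because it uses nothing about the prior.
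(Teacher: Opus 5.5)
Your proposal is correct and follows the paper's route: the paper gives no separate proof and treats the corollary as an immediate consequence of \cref{prop:nf:onormratio_bound:heavy}, and your two steps (the scaling identity $u(x)=\sigma_T^{-d}\tilde u(x/\sigma_T)$ giving the score bound $L_{\tilde u}/\sigma_T$, then substituting and distributing $\sigma_T^{-1}$) are exactly that substitution. The step you flag as the main obstacle is indeed harmless, since the bound $\bigl\|\,\|\state_T\|\,\bigr\|_{\psi_1}\le \|\state_0\|_{\psi_1}+\sigma_T\|Z\|_{\psi_1}$ uses only $\state_T\overset{\mathcal L}{=}\state_0+\sigma_T Z$ and never involves the prior.
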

We focus now on the necessary conditions one must impose to $\refnfmeas$ to ensure that $\onorm{\log \nofrac{\fwmargd{T}{\state_T}}{\log \refnfmeas(\state_T)}}[1] < R_T$.
\begin{hypH}
    \label{assum:nf:bound_ratio}
    There exists $\densratioub > 0$ such that for all $x \in \rset^d$, $\fwmargd{T}{x} / \refnfmeas(x) \leq \densratioub$.
    Furthermore, $\int \indi{\fwmarg{T} \geq \refnfmeas}(x) \fwmargd{T}{x}\rmd x < 1$ and there exists $\kappa > \log \left(\densratioub / \int \indi{\fwmarg{T} \geq \refnfmeas}(x) \fwmargd{T}{x}\rmd x\right)$ %
    such that
    \begin{equation*}
        \int \left(\frac{\refnfmeas(x)}{\fwmargd{T}{x}}\right)^{\frac{1}{\kappa}}\refnfmeas(x)\rmd x < \infty\eqsp.
    \end{equation*}
\end{hypH}
\begin{lemma}
    Assume assumption~\hypref{assum:nf:bound_ratio}. Then, 
    \begin{equation*}
        \onorm{\left|\log \fwmargd{T}{\state_T} - \log\refnfmeas(\state_T)\right|}[1] \leq -\log \densratioub / \log\left(\int \indi{\fwmarg{T} \geq \refnfmeas}(x) \fwmargd{T}{x}\rmd x\right)\eqsp.
    \end{equation*}
\end{lemma}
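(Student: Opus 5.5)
The plan is to verify the Orlicz condition directly at the candidate level
\[
K \coloneqq \frac{-\log \densratioub}{\log q}, \qquad q \coloneqq \int \indi{\fwmarg{T} \geq \refnfmeas}(x)\, \fwmargd{T}{x}\,\rmd x \in (0,1),
\]
that is, to show $\E[\exp(|W|/K)] \leq 2$ with $W \coloneqq \log \fwmargd{T}{\state_T} - \log \refnfmeas(\state_T)$. We may take $\densratioub \geq 1$: since both densities integrate to one, $\fwmarg{T}\leq \refnfmeas$ everywhere is impossible unless they coincide. With $\densratioub \geq 1$ and $q<1$ we get $K>0$, and if $\densratioub = 1$ the bound is $0$ only in the degenerate case. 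We split the expectation according to the sign of $W$:
\[
\E\bigl[e^{|W|/K}\bigr] = \E\bigl[e^{W/K}\indi{W \geq 0}\bigr] + \E\bigl[e^{-W/K}\indi{W<0}\bigr].
\]

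For the first term, \hypref{assum:nf:bound_ratio} gives $W \leq \log \densratioub$ everywhere, and $\prob(W\geq 0) = q$. Hence the first term is at most $\densratioub^{1/K} q$. The value of $K$ is chosen exactly so that this is $1$: indeed $\densratioub^{1/K} = \exp\bigl(\log\densratioub \cdot \log(1/q)/\log \densratioub\bigr) = 1/q$.

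For the second term we write it as an integral against $\fwmarg{T}$:
\[
\int_{\{\refnfmeas > \fwmarg{T}\}} \Bigl(\frac{\refnfmeas(x)}{\fwmargd{T}{x}}\Bigr)^{1/K} \fwmargd{T}{x}\,\rmd x = \int_{\{\refnfmeas > \fwmarg{T}\}} \refnfmeas(x)^{1/K}\, \fwmargd{T}{x}^{1-1/K}\,\rmd x .
\]
If $K \geq 1$, then on $\{\refnfmeas>\fwmarg{T}\}$ the integrand is bounded by $\refnfmeas$, since $(\refnfmeas/\fwmarg{T})^{1/K} \leq \refnfmeas/\fwmarg{T}$ there. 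The term is therefore at most $\int_{\{\refnfmeas>\fwmarg{T}\}}\refnfmeas \leq 1$, and the total is at most $2$, which is the claim.

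The main obstacle is the regime $K<1$, i.e.\ $q < 1/\densratioub$, where the pointwise comparison fails. There I would use the integrability part of \hypref{assum:nf:bound_ratio}. Write the integrand as
\[
\refnfmeas \cdot \Bigl(\frac{\refnfmeas}{\fwmarg{T}}\Bigr)^{1/K - 1},
\]
and bound it through the finiteness of $\int (\refnfmeas/\fwmarg{T})^{1/\kappa}\refnfmeas$. The lower bound on $\kappa$ is presumably there to ensure $1/K - 1 \leq 1/\kappa$, possibly after replacing $K$ by $\max\{K,\kappa^{-1}\text{-dependent level}\}$, so that this moment dominates the term. I would then try to trade a small increase of $K$ against this moment to keep the second term at most $1$. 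If that does not close with the exact stated constant, the fallback is to state the bound as $\max\{K,1\}$, which the case analysis above already proves cleanly. Throughout, the $K\geq1$ case is the one I am confident in, and the constant in the $K<1$ case is where the argument may need adjusting.
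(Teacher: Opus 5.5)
For $K\ge 1$ your argument is correct and is essentially the paper's. You split $\E[e^{|W|/\lambda}]$ by the sign of $W$, bound the first piece using $\fwmarg{T}\le\densratioub\,\refnfmeas$, and bound the second by $\int_{\{\refnfmeas>\fwmarg{T}\}}\refnfmeas\le 1$. The paper gets that second bound from H\"older with exponents $\lambda/(\lambda-1)$ and $\lambda$; you get it from the pointwise inequality $(\refnfmeas/\fwmarg{T})^{1/\lambda}\le\refnfmeas/\fwmarg{T}$. Both need $\lambda\ge 1$. Your first-piece bound $q\,\densratioub^{1/\lambda}$ is sharper than the paper's written $q^{1-1/\lambda}\densratioub^{1/\lambda}$. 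It is your bound that gives exactly the threshold $\lambda\ge K$; the paper's expression, taken literally, is $\le 1$ only for $\lambda\ge 1+K$.

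The regime $K<1$ is where your plan breaks, and it cannot be repaired, because the statement is false there.

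Your $\kappa$ idea runs the wrong way. The condition $\kappa>\log(\densratioub/q)$ is a lower bound on $\kappa$, i.e.\ an upper bound on $1/\kappa$. Dominating $(\refnfmeas/\fwmarg{T})^{1/K-1}$ by $(\refnfmeas/\fwmarg{T})^{1/\kappa}$ on $\{\refnfmeas>\fwmarg{T}\}$ would instead need $1/K-1\le 1/\kappa$. In any case, a merely finite moment could never deliver the exact constant $1$; the paper's proof never uses $\kappa$ at all.

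Here is a counterexample. Take any $\fwmarg{T}$ (e.g.\ Gaussian), a set $A$ with $\int_A\fwmarg{T}=1/4$, and set $\refnfmeas=\tfrac12\fwmarg{T}$ on $A$ and $\refnfmeas=\tfrac76\fwmarg{T}$ on $A^c$. Then $\int\refnfmeas=\tfrac18+\tfrac78=1$, $\densratioub=2$, $q=1/4$ and $K=\log 2/\log 4=1/2$. Every $\kappa$ satisfies the integrability requirement, since the ratio is bounded above and below. Yet
\[
\E\bigl[e^{|W|/K}\bigr]=\tfrac14\cdot 2^{2}+\tfrac34\cdot\bigl(\tfrac76\bigr)^{2}=1+\tfrac{49}{48}>2,
\]
so $\onorm{W}[1]>K$.

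It gets worse. Put $\fwmarg{T}$-mass $c\varepsilon$ on a subset of $A^c$ where $\fwmarg{T}/\refnfmeas=\varepsilon$. Keep $c$ small and fixed, and readjust the constant ratio on the rest of $A^c$ so that $\refnfmeas$ stays normalized. This adds $c\,\varepsilon^{1-1/\lambda}$ to the second piece, which is unbounded as $\varepsilon\to 0$ for every $\lambda<1$. Meanwhile $\densratioub$, $q$ and the finiteness of the $\kappa$-integral are untouched.

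Hence, for $K<1$, no bound smaller than $1$ depending only on $\densratioub$ and $q$ can hold. The correct statement is $\onorm{W}[1]\le\max\{K,1\}$. That is precisely your fallback, and also precisely what the paper's proof shows, since it fixes $\lambda\in[1,\infty)$ in its first line and so tacitly assumes $K\ge 1$.
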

\begin{proof}
    Let $\lambda \in [1, \infty)$.
    \begin{align*}
        \PE{}{\exp\frac{1}{\lambda}\left|\log\frac{\fwmargd{T}{\state_T}}{\refnfmeas(\state_T)}\right|} 
            &= \PE{}{\indi{\fwmarg{T}\geq \refnfmeas}(\state_T)\left(\frac{\fwmargd{T}{\state_T}}{\refnfmeas(\state_T)}\right)^{\frac{1}{\lambda}}} 
            + \PE{}{\indi{\fwmarg{T} < \refnfmeas}(\state_T)\left(\frac{\refnfmeas(\state_T)}{\fwmargd{T}{\state_T}}\right)^{\frac{1}{\lambda}}}\\
            &\leq \PE{}{\indi{\fwmarg{T} \geq \refnfmeas}(\state_T)}^{1-\frac{1}{\lambda}}\densratioub^{\frac{1}{\lambda}} 
            + \PE{}{\indi{\fwmarg{T} < \refnfmeas}(\state_T)}^{1-\frac{1}{\lambda}} \PE{U \sim \refnfmeas}{\indi{\fwmarg{T} < \refnfmeas}(U)}^{\frac{1}{\lambda}} \\
            &\leq 1 + \PE{}{\indi{\fwmarg{T} \geq \refnfmeas}(\state_T)}^{1-\frac{1}{\lambda}}\densratioub^{\frac{1}{\lambda}} \eqsp.
    \end{align*}
    Thus, provided that $\lambda \geq  -\log \densratioub / \log\left(\int \indi{\fwmarg{T} \geq \refnfmeas}(x) \fwmargd{T}{x}\rmd x\right)$ we have that $\PE{}{\exp\frac{1}{\lambda}\left|\log\frac{\fwmargd{T}{\state_T}}{\refnfmeas(\state_T)}\right|} \leq 2$.
\end{proof}
\subsubsection{Light-tailed distributions}
\begin{proposition}
    \label{prop:nf:onormratiobound:gaussian}
    If $\onorm{\state_0}[2] < \infty$, then for all $\sigma_T > 0$ it holds that 
    \begin{equation}
        \onorm{\log\!\left(\frac{\fwmargd{T}{\state_T}}{\gausspdf(\state_T; 0, \sigma_T^2\Id)}\right)}[1] 
        \leq \frac{\log 2 + 1}{\log 2} + \frac{\onorm{\state_0}[2]^2}{\sigma_T^2}\left(\frac{1}{\log 2} + \frac{8\statedim}{6}\right) + \frac{1}{2}\frac{\onorm{\state_0}[2]^4}{\sigma_T^4} \eqsp.
    \end{equation}
\end{proposition}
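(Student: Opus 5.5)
The idea is to write the log-ratio as the log of a Gaussian expectation and bound it above and below by explicit quadratic functions of $\state_T$ and $\state_0$. Write $\gausspdf_T := \gausspdf(\cdot;0,\sigma_T^2\Id)$. Since $\fwmargd{T}{x}=\int \gausspdf(x;y,\sigma_T^2\Id)\,\rmd\mudata(y)$, the Gaussian prefactors cancel and
\[
\log\frac{\fwmargd{T}{x}}{\gausspdf_T(x)} = \log \E_{Y\sim\mudata}\Big[\exp\Big(\frac{\langle x,Y\rangle}{\sigma_T^2}-\frac{\|Y\|^2}{2\sigma_T^2}\Big)\Big].
\]

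For the lower bound I apply Jensen's inequality:
\[
\log\frac{\fwmargd{T}{x}}{\gausspdf_T(x)} \ge \frac{\langle x,\E Y\rangle}{\sigma_T^2}-\frac{\E\|Y\|^2}{2\sigma_T^2} \ge -\frac{\|x\|\,\E\|Y\|}{\sigma_T^2}-\frac{\E\|Y\|^2}{2\sigma_T^2}.
\]
For the upper bound I use $\langle x,y\rangle-\|y\|^2/2\le \|x\|^2/2$, which gives $\log(\fwmarg{T}/\gausspdf_T)\le \|x\|^2/(2\sigma_T^2)$. That is too crude, since $\|\state_T\|^2/\sigma_T^2$ does not have a bounded $\psi_1$ norm independent of $d$. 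So instead I plan to bound the moment generating function directly. Write $\state_T=\state_0+\sigma_T Z$ and compute $\E[\exp(\lambda^{-1}|\log(\fwmarg{T}/\gausspdf_T)(\state_T)|)]$ by splitting on the sign of the log-ratio.

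On the event where the ratio exceeds one, the quantity is $\E[(\fwmarg{T}/\gausspdf_T)^{1/\lambda}]$. For $\lambda\ge 1$, Hölder's inequality bounds it by $\E[\fwmarg{T}/\gausspdf_T(\state_T)]^{1/\lambda}=(\int \fwmarg{T}^2/\gausspdf_T)^{1/\lambda}$. This chi-square-type integral is explicit:
\[
\int \frac{\fwmarg{T}^2}{\gausspdf_T} = \E\big[\exp(\langle Y,Y'\rangle/\sigma_T^2)\big], \qquad Y,Y' \text{ i.i.d. } \mudata.
\]
This is finite only if $\onorm{\state_0}[2]^2/\sigma_T^2$ is small, which is a problem. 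To avoid that restriction, I would not use Hölder. Instead I use the pointwise bound above with the decomposition $\|x\|^2 \le 2\|\state_0\|^2+2\sigma_T^2\|Z\|^2$, together with a cross-term bound $\langle \state_0+\sigma_T Z, Y\rangle$. Only the $Y$- and $\state_0$-dependent pieces should matter, and the $\|Z\|^2$ part must cancel against the $-\|Y\|^2$ term. That cancellation is the main obstacle.

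Concretely, I will use $\langle x,Y\rangle/\sigma_T^2 \le \langle \state_0,Y\rangle/\sigma_T^2+\langle Z,Y\rangle/\sigma_T$ and Young's inequality. The $Z$-part is then Gaussian in $Z$ given $Y$, so its exponential moment is $\exp(\|Y\|^2/(2\sigma_T^2\lambda^2))$, of order $\psi_2$ in $Y$. The remaining terms are controlled by $\onorm{\state_0}[2]^2/\sigma_T^2$, with the factor $d$ coming from $\E\|Z\|^2=d$. The quartic term $\onorm{\state_0}[2]^4/\sigma_T^4$ arises from squaring the cross term $\|\state_0\|\|Y\|/\sigma_T^2$ when converting products of sub-Gaussian variables into a $\psi_1$ bound, via $\onorm{UV}[1]\le\onorm{U}[2]\onorm{V}[2]$.

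Finally, I collect the exponential-moment bounds into a single condition $\E[\exp(|\cdot|/\lambda)]\le 2$. I then solve for $\lambda$ using $e^{u}\le 1+u e^{u}$ and $\log 2$ normalisations, which gives the constants $(\log 2+1)/\log 2$ and $1/\log 2$. I would also use the triangle inequality for $\onorm{\cdot}[1]$ to sum the contributions of the lower and upper parts.
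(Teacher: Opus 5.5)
Your Jensen lower bound is correct, and you are right that the crude upper bound $\log(\vec{p}_T/\varphi)(x)\le\|x\|^2/(2\sigma_T^2)$ cannot give the statement. The precise reason is that its $\psi_1$ norm contains an additive term of order $d$ that is not multiplied by $\|X_0\|_{\psi_2}^2/\sigma_T^2$.

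The gap is in your fix. On the event where the ratio exceeds one you must bound $\mathbb{E}_{X_T}\bigl[(\mathbb{E}_Y[W(X_T,Y)])^{1/\lambda}\bigr]$, with $W(x,y)=\exp(\langle x,y\rangle/\sigma_T^2-\|y\|^2/(2\sigma_T^2))$. Here $Y\sim\mu_{\text{data}}$ is the integration variable, and it sits inside the power $1/\lambda$ (equivalently, inside the logarithm).

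Computing a Gaussian moment generating function in $Z$ ``given $Y$'' would require moving $\mathbb{E}_Y$ outside that power. Jensen only gives $(\mathbb{E}_Y W)^{1/\lambda}\ge\mathbb{E}_Y W^{1/\lambda}$, which is the wrong direction. Moving $\mathbb{E}_{X_T}$ inside instead just returns the chi-square integral $\mathbb{E}\exp(\langle Y,Y'\rangle/\sigma_T^2)$, which you already (rightly) rejected. Nor can $\|Z\|^2$ ``cancel'' against $-\|Y\|^2$, since they are functions of different random variables.

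Consequently, the quartic term, the $8d/6$ factor and the $1/\log 2$ constants in your last two paragraphs are reverse-engineered rather than derived. In particular, $\|UV\|_{\psi_1}\le\|U\|_{\psi_2}\|V\|_{\psi_2}$ applied to $\|X_0\|\|Y\|/\sigma_T^2$ naturally produces $\|X_0\|_{\psi_2}^2/\sigma_T^2$, not its square.

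The missing idea is a weighted Young inequality applied pointwise in $x$, before any Orlicz norm is taken. From $\langle x,y\rangle\le\frac{\zeta^2}{2}\|x\|^2+\frac{1}{2\zeta^2}\|y\|^2$ you get
\[
\log\frac{\vec{p}_T(x)}{\varphi(x;0,\sigma_T^2\mathrm{I})}\le\frac{\zeta^2\|x\|^2}{2\sigma_T^2}+\log\mathbb{E}\Bigl[\exp\Bigl(\frac{(\zeta^{-2}-1)\|X_0\|^2}{2\sigma_T^2}\Bigr)\Bigr].
\]
Choose $\zeta^2=s/(s+2)$ with $s=\|X_0\|_{\psi_2}^2/\sigma_T^2$. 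Then $(\zeta^{-2}-1)/(2\sigma_T^2)=\|X_0\|_{\psi_2}^{-2}$, so the last term is at most $\log 2$, while the coefficient of $\|x\|^2$ becomes proportional to $s$.

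On the lower side, the reverse Young inequality plus Jensen gives $-\log(\cdot)\le\zeta^2\|x\|^2/(2\sigma_T^2)+s+1$. Your own Jensen bound gives the same, after applying the same Young step to $\|x\|\,\mathbb{E}\|Y\|$.

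This is the route the paper takes. With the pointwise bound $|\log(\cdot)|\le\log 2+1+s+\zeta^2\|x\|^2/(2\sigma_T^2)$ in hand, it uses three facts:
\begin{itemize}
\item $\|a\|_{\psi_1}=|a|/\log 2$ for constants $a$;
\item $\|\,\|X_T\|^2\|_{\psi_1}=\|X_T\|_{\psi_2}^2\le 2(\|X_0\|_{\psi_2}^2+8d\sigma_T^2/3)$;
\item $\frac{s}{s+2}(s+\frac{8d}{3})\le\frac{s^2}{2}+\frac{8ds}{6}$.
\end{itemize}
So the quartic term is the small weight $\zeta^2$ times the $X_0$-part of $\|X_T\|_{\psi_2}^2$, not a product of sub-Gaussian factors.
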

\begin{proof}
By Young's inequality, for any $\zeta \in (0,1)$,
\begin{align*}
    (1 - \zeta^2)\|x\|^2 + (1 - \zeta^{-2})\|x_0\|^2 \leq \|x - x_0\|^2 \leq (1 + \zeta^2)\|x\|^2 + (1 + \zeta^{-2})\|x_0\|^2 \eqsp.
\end{align*}
Substituting into 
$
\fwmargd{T}{x} = (2\pi\sigma_T^2)^{-d/2}\,\PE{}{\exp(-\|x-\state_0\|^2/(2\sigma_T^2))}$ and dividing by $\gausspdf(x; 0, \sigma_T^2\Id) = (2\pi\sigma_T^2)^{-d/2}\exp(-\|x\|^2/(2\sigma_T^2)) \eqsp , 
$
we obtain
\begin{align*}
    \exp\!\left(-\tfrac{\zeta^2}{2\sigma_T^2}\|x\|^2\right) A_+ 
    \;\leq\; \frac{\fwmargd{T}{x}}{\gausspdf(x; 0, \sigma_T^2\Id)} 
    \;\leq\; \exp\!\left(\tfrac{\zeta^2}{2\sigma_T^2}\|x\|^2\right) A_- \eqsp,
\end{align*}
where 
\begin{equation*}
    A_\pm \coloneqq \PE{}{\exp\!\left(-\tfrac{1 \pm \zeta^{-2}}{2\sigma_T^2}\|\state_0\|^2\right)}\eqsp.
\end{equation*}
Taking logarithms,
\begin{equation*}
    \left|\log \frac{\fwmargd{T}{x}}{\gausspdf(x; 0, \sigma_T^2\Id)}\right| 
    \;\leq\; \tfrac{\zeta^2}{2\sigma_T^2}\|x\|^2 + \max\!\left(\log A_-,\; -\log A_+\right) \eqsp.
\end{equation*}

Choosing 
\begin{equation*}
    \zeta^2 \coloneqq \frac{\onorm{\state_0}[2]^2 / \sigma_T^2}{\onorm{\state_0}[2]^2 / \sigma_T^2 + 2} \in (0,1)\eqsp,
\end{equation*}
one has $(\zeta^{-2}-1)/(2\sigma_T^2) = 1/\onorm{\state_0}[2]^2$, so by the definition of the Orlicz $\psi_2$-norm,
\begin{equation*}
    A_- = \PE{}{\exp\!\left(\frac{\|\state_0\|^2}{\onorm{\state_0}[2]^2}\right)} \leq 2\,, \qquad \text{hence} \qquad \log A_- \leq \log 2\eqsp.
\end{equation*}
 
The function $u \mapsto \exp(-c u)$ is convex on $\rset_{\geq 0}$ for any $c > 0$, so by Jensen's inequality
\begin{equation*}
    A_+ \geq \exp\!\left(-\tfrac{1+\zeta^{-2}}{2\sigma_T^2}\E\!\left[\|\state_0\|^2\right]\right)\,, \qquad \text{hence} \qquad -\log A_+ \leq \tfrac{1+\zeta^{-2}}{2\sigma_T^2}\E\!\left[\|\state_0\|^2\right]\eqsp.
\end{equation*}
With our choice of $\zeta$, $1+\zeta^{-2} = 2\left(\onorm{\state_0}[2]^2 + \sigma_T^2\right)/\onorm{\state_0}[2]^2$, and from the inequality $1+u \leq \exp(u)$ together with the definition of the $\psi_2$-norm we have $\E[\|\state_0\|^2] \leq \onorm{\state_0}[2]^2$. Therefore,
\begin{equation*}
    -\log A_+ \leq \frac{\onorm{\state_0}[2]^2}{\sigma_T^2} + 1\eqsp.
\end{equation*}
We have 
$\max(\log A_-, -\log A_+) \leq \log 2 + \onorm{\state_0}[2]^2/\sigma_T^2 + 1$, and substituting $\zeta^2 = \left(\onorm{\state_0}[2]^2/\sigma_T^2\right) / \left(\onorm{\state_0}[2]^2 / \sigma_T^2 + 2\right)$ yields the pointwise bound
\begin{equation*}
    \left|\log \frac{\fwmargd{T}{x}}{\gausspdf(x; 0, \sigma_T^2\Id)}\right| 
    \leq \log 2 + 1 + \frac{\onorm{\state_0}[2]^2}{\sigma_T^2} + \frac{ \onorm{\state_0}[2]^2}{\sigma_T^2}\left(\frac{\onorm{\state_0}[2]^2}{\sigma_T^2} + 2\right)^{-1} \frac{\|x\|^2}{2\sigma_T^2}\eqsp.
\end{equation*}
Applying $\onorm{\,\cdot\,}[1]$ at $x = \state_T$ and using the triangular inequality together with the identity $\onorm{a}[1] = |a|/\log 2$ for any constant $a$ and $\onorm{\|\state_T\|^2}[1] = \onorm{\state_T}[2]^2$, we obtain
\begin{equation*}
    \onorm{\log\!\left(\frac{\fwmargd{T}{\state_T}}{\gausspdf(\state_T; 0, \sigma_T^2\Id)}\right)}[1] 
    \leq \frac{\log 2 + 1}{\log 2} + \frac{\onorm{\state_0}[2]^2}{\sigma_T^2 \log 2} + \frac{ \onorm{\state_0}[2]^2}{\sigma_T^2}\left(\frac{\onorm{\state_0}[2]^2}{\sigma_T^2} + 2\right)^{-1} \frac{\onorm{\state_T}[2]^2}{2\sigma_T^2} \eqsp .
\end{equation*}
Using the fact that $\onorm{\state_T}[2]^2 \leq 2\left(\onorm{\state_0}[2]^2 + 8 d \sigma_T^2/3 \right)$  and that $\onorm{\state_0}[2]^2 / \sigma_T^2 + 2\geq 2 $ we conclude.
\end{proof}
\begin{corollary}
    Assume assumption~\hypref{assum:flow:modrealnvp} and that $\onorm{\state_0}[2] < \infty$. Then, for all $\sigma_T > 0$,
    \begin{align*}
        \onorm{\sup_{\param \in \paramsp}\left|\log\frac{\fwmargd{T}{\state_T}}{\discsolution[\param]{0}(\state_T)}\right|}[1] 
        &\leq \frac{\log 2 + 1}{\log 2} + \frac{\onorm{\state_0}[2]^2}{\sigma_T^2}\left(\frac{1}{\log 2} + \frac{8\statedim}{6}\right) + \frac{1}{2}\frac{\onorm{\state_0}[2]^4}{\sigma_T^4} \\
        &\quad + \sigma_T C_{\operatorname{flow, 1}} \left(\frac{\onorm{\state_0}[1]}{\sigma_T} +  \statedim M_{\psi_1}\right) + \frac{\sigma_T^2}{2} C_{\operatorname{flow, 2}}^2 \left(\frac{\onorm{\state_0}[2]^2}{\sigma_T^2} + \statedim \frac{M_{\psi_1}^2}{\sigma_T^2}\right) \eqsp,
    \end{align*}
    where $M_{\psi_1} = \onorm{Z}[1]$ with $Z\sim \mathcal{N}(0, 1)$ and 
    \begin{align*}
        C_{\operatorname{flow, 1}} &= \left[\max\{|1 - \exp(-a)|^{-1},|1 - \exp(a)|^{-1} \}(1 + L)\right]^{r(d-k)} \eqsp, \\
        C_{\operatorname{flow, 2}} &=  r a (d-k)\eqsp.
    \end{align*}
\end{corollary}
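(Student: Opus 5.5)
The plan is to reduce everything to pointwise bounds via \cref{prop:nf:taylor_ratio} and then take $\onorm{\cdot}[1]$ using the triangle inequality. I take the reference density to be $\refnfmeas = \gausspdf(\cdot;0,\sigma_T^2\Id)$. Fix $x$ and $\param$. By \cref{prop:nf:taylor_ratio},
\begin{equation*}
\left|\log\frac{\fwmargd{T}{x}}{\discsolution[\param]{0}(x)}\right| \leq \left|\log\frac{\fwmargd{T}{x}}{\gausspdf(x;0,\sigma_T^2\Id)}\right| + \left|\log\det\jinvflow[\param][x]\right| + \left|\log\frac{\refnfmeas(\invflow[\param][x])}{\refnfmeas(x)}\right| \eqsp,
\end{equation*}
with the last term written here before its Taylor expansion. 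The first term does not depend on $\param$. \cref{prop:nf:onormratiobound:gaussian} controls its $\psi_1$ norm, and this gives the first line of the claimed bound. For the second term, \hypref{assum:flow:modrealnvp} bounds each coupling-layer log-determinant by $a(d-k)$. Summing over the $r$ layers gives $\sup_\param|\log\det\jinvflow[\param][x]|\leq C_{\operatorname{flow},2}$ uniformly in $x$, exactly as in the proof of \cref{prop:nf:onormratio_bound:heavy}.

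The third term is where the Gaussian case differs from \cref{prop:nf:onormratio_bound:heavy}. There the reference score was bounded by $L_\refnfmeas$. Here $\nabla\log\refnfmeas(y)=-y/\sigma_T^2$ grows linearly, so the Taylor factor $\|\int_0^1\nabla\log\refnfmeas(\cdot)\,\rmd t\|$ has to be controlled along the segment $[x,\invflow[\param][x]]$. Composing \cref{cor:nf:invflow:lip} over the layers gives $\sup_\param\|x-\invflow[\param][x]\|\leq C_{\operatorname{flow},1}\|x\|$. It also gives $\|\invflow[\param][x]\|\lesssim(1+C_{\operatorname{flow},1})\|x\|$, so the averaged score is $O(\|x\|/\sigma_T^2)$. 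The product is then a quadratic form in $\|x\|/\sigma_T$. Equivalently, one can use the exact identity $\log\refnfmeas(\invflow[\param][x])-\log\refnfmeas(x)=(\|x\|^2-\|\invflow[\param][x]\|^2)/(2\sigma_T^2)$. I would split this into a part linear in $\|x\|$, coming from the bias-free shift and controlled in $\psi_1$, and a part quadratic in $\|x\|$, which produces the $\tfrac{\sigma_T^2}{2}C^2(\cdots)/\sigma_T^2$-type term. The resulting pointwise majorant is uniform in $\param$, so taking the supremum over $\param$ costs nothing.

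Finally, I apply $\onorm{\cdot}[1]$ at $x=\state_T=\state_0+\sigma_T Z$. For the linear part, the triangle inequality gives $\onorm{\|\state_T\|}[1]\leq\onorm{\state_0}[1]+\sigma_T\onorm{\|Z\|}[1]$, and $\onorm{\|Z\|}[1]\leq d\,M_{\psi_1}$ by the coordinatewise bound. For the quadratic part, I use $\onorm{\|\state_T\|^2}[1]=\onorm{\state_T}[2]^2\leq 2(\onorm{\state_0}[2]^2+\sigma_T^2\onorm{Z}[2]^2)$, written in terms of $M_{\psi_1}$ as in the statement. I expect this step to be the main obstacle: matching the precise constants, in particular the exponent $r(d-k)$ in $C_{\operatorname{flow},1}$ and the way the quadratic term is expressed through $M_{\psi_1}$, requires carefully composing the per-layer Lipschitz constants of the inverse and tracking which coordinates are actually rescaled. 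I would first derive the bound with generic constants, and then tighten the layerwise composition to recover the stated ones.
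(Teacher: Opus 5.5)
Your skeleton matches the paper's proof: \cref{prop:nf:taylor_ratio}, then \cref{prop:nf:onormratiobound:gaussian} for the first line, the uniform bound $C_{\operatorname{flow},2}$ on the log-determinant, the explicit Gaussian score with \cref{cor:nf:invflow:lip}, and the triangle inequality for $\onorm{\cdot}[1]$. The gap is the step you defer as constant-matching. It is not bookkeeping, and no sharper layerwise composition will close it.

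The ``part linear in $\|x\|$ coming from the bias-free shift'' does not exist. For any $C$ with $\sup_{\param}\|\invflow[\param][x]-x\|\le C\|x\|$ you get $\|\invflow[\param][x]\|\le(1+C)\|x\|$; since $b(0)=0$, the shift enters $\|\invflow[\param][x]\|^2$ only quadratically. Hence
\[
\bigl|\log\refnfmeas(\invflow[\param][x])-\log\refnfmeas(x)\bigr|=\frac{\bigl|\langle x-\invflow[\param][x],\,x+\invflow[\param][x]\rangle\bigr|}{2\sigma_T^2}\le\frac{C(2+C)}{2\sigma_T^2}\|x\|^2 ,
\]
which is purely quadratic, while the log-determinant contributes only the constant $C_{\operatorname{flow},2}$. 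Carried out correctly, your plan proves the first line plus $C_{\operatorname{flow},2}/\log 2$ plus $\frac{C(2+C)}{2\sigma_T^2}\onorm{\state_T}[2]^2\le C(2+C)\bigl(\onorm{\state_0}[2]^2/\sigma_T^2+\tfrac{8\statedim}{3}\bigr)$. That bound has no term $C_{\operatorname{flow},1}\onorm{\|\state_T\|}[1]$, has an extra constant, and puts a factor $1/\sigma_T^2$ on the data part of the quadratic term, which the stated $\frac{C_{\operatorname{flow},2}^2}{2}(\onorm{\state_0}[2]^2+\statedim M_{\psi_1}^2)$ lacks. Trading $\onorm{\|Z\|}[2]^2$ for $\statedim M_{\psi_1}^2$ is also invalid in low dimension: for $\statedim=1$, $\onorm{Z^2}[1]=8/3>M_{\psi_1}^2$.

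In fact the corollary as stated is false, so there is nothing to recover. Take $\state_0=0$, so that $\fwmarg{T}=\refnfmeas=\gausspdf(\cdot;0,\sigma_T^2\Id)$ and $\state_T=\sigma_T Z$. Let $\statedim=2$, $k=r=1$, $\varsigma=\mathrm{id}$, and take $\param_0\in\paramsp$ with $b\equiv 0$ and $m\equiv -a'$ for some $a'\in(0,a)$. Then $\invflow[\param_0][x]=(x_1,e^{a'}x_2)$ and
\[
\log\frac{\fwmargd{T}{\state_T}}{\discsolution[\param_0]{0}(\state_T)}=\frac{e^{2a'}-1}{2}Z_2^2-a' .
\]
Its $\psi_1$-norm is at least $\frac43(e^{2a'}-1)-a'/\log 2$ for every $\sigma_T$. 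The right-hand side, by contrast, tends to $\frac{\log 2+1}{\log 2}+a^2M_{\psi_1}^2$ as $\sigma_T\to 0$, so the exponential wins once $a$ is moderately large. The same example gives $\|\invflow[\param_0][x]-x\|=(e^{a'}-1)|x_2|$, while the constant in \cref{cor:nf:invflow:lip} tends to $1+L$ as $a\to\infty$; that corollary is too optimistic as well.

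The paper's proof reaches the stated shape only through slips:
\begin{itemize}
\item it evaluates $\int_0^1\nabla\log\refnfmeas(t\invflow[\param][x]+(1-t)x)\,\rmd t$ as $-\frac{1}{2\sigma_T^2}(\invflow[\param][x]-x)$ instead of $-\frac{1}{2\sigma_T^2}(\invflow[\param][x]+x)$;
\item it bounds the log-determinant by $C_{\operatorname{flow},2}\|x\|$, which is the only source of its linear term;
\item it drops the square on $C_{\operatorname{flow},1}$;
\item it then swaps the two constants and the $\sigma_T$-scalings when taking $\psi_1$-norms.
\end{itemize}
What you should state and prove instead is the corrected bound your decomposition yields, with a valid constant $C$, which is of order $e^{ra}(1+L)^r$.
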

\begin{proof}
    Simply note that
    \begin{multline*}
        \int_{0}^{1} \nabla \log \refnfmeas\!\left(t\,\invflow[\theta][x] + (1-t)x\right) \rmd t 
        = - \frac{1}{\sigma_T^2}\int_{0}^{1} \left(t\,\invflow[\theta][x] + (1-t)x\right) \rmd t \\
        = -\frac{1}{2\sigma_T^2}\!\left(\invflow[\theta][x] - x\right) \eqsp.
    \end{multline*}
    Therefore,
    \begin{align*}
        &\left|\log \det \jinvflow[\theta][x]\right| + \left\|\int_{0}^{1}\nabla \log \refnfmeas(\invflow[\theta][x]t + (1-t)x)\rmd t\right\| \left\|x-\invflow[\theta][x]\right\|\\
        &\qquad \leq \left|\sum_{\ell=1}^{r} \sum_{i=1}^{d - k} m_{\theta_{\ell}^m}(x_{\varsigma_{1:d}})_{i}\right| + \frac{1}{2\sigma_T^2}\left\|x-\invflow[\theta][x]\right\|^2 \\
        &\qquad \leq C_{\operatorname{flow, 2}}\|x\| + \frac{C_{\operatorname{flow, 1}}}{2\sigma_T^2}\|x\|^2 \eqsp.
    \end{align*}
\end{proof}

\section{Appendix experiments}\label{app:appendix_experiments}

In this section we report experiments details and further extend the discussion of results. 

\subsection{The targets and the employed score estimators}
\paragraph{Targets and score estimators.}
 Since the score of $\pforward{t}$ is not available in closed form for any heavy-tailed target, we deliberately consider two distributions, each chosen to isolate a different aspect of our method.
The first is a Gaussian mixture model (GMM), for which the score is analytically tractable: this allows us to study the effect of the initialization in isolation, free from any bias introduced by score approximation.

The second is a heavy-tailed (HT) target, for which the score must be approximated. This second setting reflects the regime our method is ultimately designed for, and lets us assess whether the gains observed on the GMM persist once an inexact score is plugged in.
For the HT target we consider two families of score estimators: MCMC-based approximations of $\PE{}{\state_0 \mid \state_t = x}$, using HMC~\cite{neal2011hmc}, the Barker proposal~\cite{livingstone2020barker}, and NUTS~\cite{hoffman2011nuts}, and a learned neural-network denoiser $\mathrm{NN}_{\theta}$, implemented as an MLP with EDM-style preconditioning.
The MCMC samplers are used to produce approximate samples from the conditional law $\pforward{0 \mid t}(y \mid x)$, which is known only up to a normalization constant; the approximate conditional mean is then obtained by averaging these samples.
Comparing several MCMC samplers alongside the learned denoiser is meant as a robustness check: any improvement that survives across heterogeneous score approximations is unlikely to be an artefact of one specific estimator. 

We work in dimension $d=2$ throughout this section. This is a deliberate choice: the variance of MCMC-based score estimators grows quickly with $d$, and in higher dimensions the resulting score becomes too noisy to cleanly separate the contribution of the initialization from that of score approximation error, which is precisely the effect we wish to isolate.

The two targets are constructed as follows. The GMM target is a 25-component mixture with anisotropic covariances and Dirichlet-sampled weights, designed to exhibit a rich multimodal structure on which the initialization can be stress-tested. The HT target is a mixture of $n=4$ multivariate Student-$t$ components, each with $\nu=3$ degrees of freedom, anisotropic covariances, and Dirichlet-sampled weights. We use fewer components in the HT case because the bias of MCMC-based score approximations grows with the number of modes; keeping $n$ small ensures that the score estimators remain reliable enough to draw meaningful conclusions. Both targets are normalized to have zero mean and unit global standard deviation, so that comparisons are not confounded by differences in scale. The two targets are visualized in \cref{fig:gmm_ht_targets}.

For both targets, denoising follows the EDM scheduler of~\cite{karras2022elucidating} with $\sigma_{\max} = 3$, $\sigma_{\min} = 2 \times 10^{-4}$, $\rho = 2$, and $40$ discretization steps. We evaluate our method at intermediate truncation points $\sigma_T$ along this schedule, which corresponds to running the reverse process from a short rather than full horizon. Sampling is performed with a first-order stochastic solver in the VE parameterization~\cite{ho2020denoising, karras2022elucidating}.
\begin{figure}[!h]
  \centering
  \includegraphics[width=0.6\textwidth]{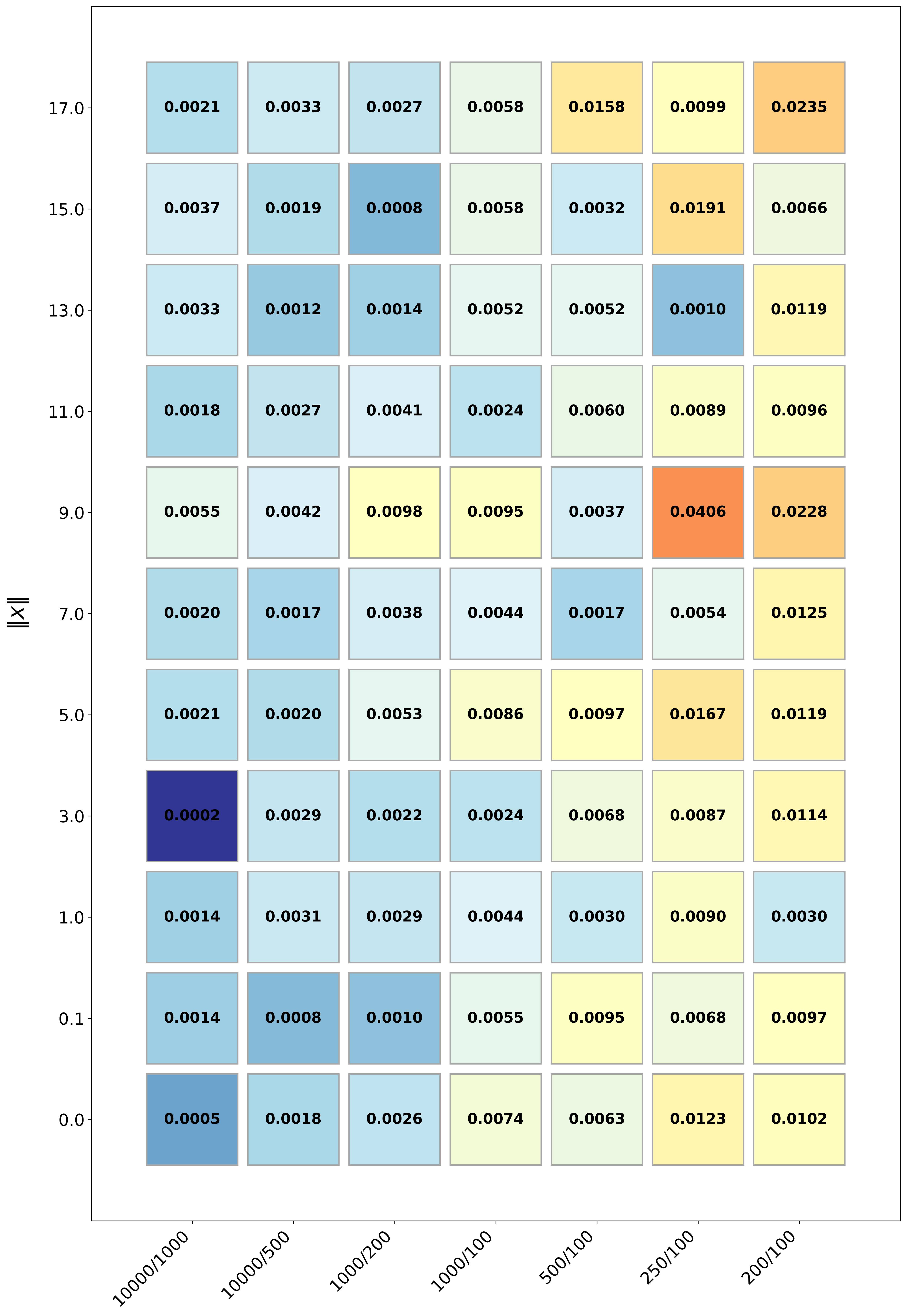}
  \caption{%
    Denoising quality for the NUTS sampler at $\sigma_t=1.0$.
    The heatmap reports the $\text{diff}_{\text{MCMC}}$ metric, which compares posterior means across configurations
    (rows: input norms $\|x\|$; columns: $(n_{\text{mcmc}}, n_{\text{warmup}})$ pairs).
    Warmer colors (yellow/red) indicate larger deviations from the oracle, while cooler colors (blue) indicate smaller deviations.
    The selected configuration $(1000,\,200)$ attains accuracy comparable to the oracle at a markedly lower computational cost.
  }
  \label{fig:mcmc_nuts_quality}
\end{figure}
\paragraph{MCMC for denoising.}
Running denoising through MCMC samplers is computationally expensive even in dimension $d=2$. We therefore aim to balance computational cost against denoising accuracy by carefully tuning the number of MCMC samples ($n_{\text{mcmc}}$) and warmup steps ($n_{\text{warmup}}$) used to estimate the posterior mean.
For each configuration, defined by the MCMC method, the pair $(n_{\text{mcmc}}, n_{\text{warmup}})$, the noise level $\sigma_t$, and the input norm, we draw $N_{\text{repeats}}=200$ independent posterior expectation estimations. We assess accuracy by comparing this estimate against an oracle baseline through the metric
\begin{equation}
  \text{diff}_{\text{MCMC}} = \left\| \frac{1}{N_{\text{repeats}}} \sum_{r=1}^{N_{\text{repeats}}} \hat{\mu}_{0,\text{config}}^{(r)} - \frac{1}{N_{\text{repeats}}} \sum_{r=1}^{N_{\text{repeats}}} \hat{\mu}_{0,\text{oracle}}^{(r)} \right\|_2,
\end{equation}
where $\hat{\mu}_{0,\text{oracle}}^{(r)}$ denotes the $r$-th posterior mean approximation of $\PE{}{\state_0 \mid \state_t=x}$ obtained under the oracle configuration: high-precision NUTS with $n_{\text{mcmc}}=10000$ and $n_{\text{warmup}}=1000$.
We benchmarked several configurations for all three MCMC methods (NUTS, Barker, and HMC) over noise levels $\sigma_t \in [0.01, 2.0]$ and different input $x = \lambda \cdot (1,1)$ ($\lambda>0$) of the posterior $\PE{}{\state_0 \mid \state_t=x}$ with norms $\|x\| \in [0, 17]$. Based on this empirical analysis, we adopted $n_{\text{mcmc}}=1000$ and $n_{\text{warmup}}=200$ as a favorable trade-off, preserving denoising quality across all tested settings while substantially reducing runtime. This configuration was applied uniformly to all MCMC-based denoisers. Results for NUTS are reported in \cref{fig:mcmc_nuts_quality}.
For noise levels $\sigma_t \in [0.01, 2.0]$ and input norms $\|x\| \in [0, 17]$, we selected the configuration $n_{\text{mcmc}}=1000$ and $n_{\text{warmup}}=200$ as an effective computational trade-off. This configuration was then applied uniformly across all MCMC-based denoisers (NUTS, Barker, HMC). 
Results for $\sigma_t=1$ and MCMC NUTS are displayed in \cref{fig:mcmc_nuts_quality}.
\paragraph{Diffusion and Flow Architectures.}
For the diffusion denoiser neural network we trained a MLP like neural network using training strategies from \cite{karras2022elucidating}. See \cref{tab:toy:ht:config} for details on the denoiser network.
\begin{table}[htbp]
    \scriptsize
    \centering
    \caption{Configuration of the MLP denoiser used for the HT case. The model uses four fully connected blocks with GroupNorm and SiLU activations. The noise schedule is continuous with log-normal sampling. FNet refers to the denoiser of \cite{karras2022elucidating} and we used Adam optimizer \cite{kingma2015adam}.}    
\label{tab:toy:ht:config}
    \begin{tabular}{|c|c|c|c|c|c|c|c|c|}
        \toprule
        Learning rate & Batch size &  Dataset size & Blocks & Channel Multipliers & Normalization & Activation & Positional Embedding \\
        \midrule
         $10^{-4}$ & 20000 & $10^3-10^4-10^5$ & 4 & [2,4,4,2] & GroupNorm & SiLU & FNet-style  \\
        \bottomrule
    \end{tabular}
    \vspace{0.2cm}
\end{table}

For the normalizing flows we trained coupling flows using the FlowJAX library. We used a t$-$Student prior (scale=$\sigma_T$, $\nu=3$) for the HT case and a Gaussian prior (scale=$\sigma_T$). Details on the flow architecture and training scheme are reported in \cref{tab:flow:config}.
\begin{table}[htbp]
    \footnotesize
    \centering
    \caption{Configuration of the normalizing flow models used for heavy-tail and GMM cases. Both models use coupling flows with triangular affine transformations. Training employs Adam optimizer with dynamic or fixed noise strategies. The flow operates in a noise-conditioned setting where samples are corrupted with Gaussian noise during training.}
            \label{tab:flow:config}
    \begin{tabular}{|c|c|c|c|c|c|c|c|}
        \toprule
        Dimension & Flow Layers & NN Width & NN Depth & Activation & Learning Rate & Batch Size & N. Epochs \\
        \midrule
        2 & 5 & 50 & 3 & ReLU6 & $10^{-3}$ & 1000 & 3000 \\
        \bottomrule
    \end{tabular}
    \vspace{0.2cm}
    \end{table}
\begin{table}[ht]
  \centering
  \small
  \caption{Bulk MSW (mean $\pm$ std) for all configurations.}
  \label{tab:bulk_msw_full}
  \begin{tabular}{llccc}
    \toprule
      & Init & $\sigma=0.8012415$ & $\sigma=1.05527906$ & $\sigma=1.24404052$ \\
    \midrule
    \multirow{3}{*}{NUTS}
    & $\pinf$      & 0.074 $\pm$ 0.001 & 0.037 $\pm$ 0.006 & 0.025 $\pm$ 0.003 \\
    & $p_\theta$    & 0.014 $\pm$ 0.002 & 0.014 $\pm$ 0.002 & 0.012 $\pm$ 0.004 \\
    & $\vec{p}_{T}$ & 0.012 $\pm$ 0.002 & 0.012 $\pm$ 0.003 & 0.012 $\pm$ 0.003 \\
    \midrule
    \multirow{3}{*}{$\text{NN}_{\param}$}
    & $\pinf$      & 0.074 $\pm$ 0.001 & 0.035 $\pm$ 0.009 & 0.027 $\pm$ 0.001 \\
    & $p_\theta$    & 0.012 $\pm$ 0.001 & 0.011 $\pm$ 0.002 & 0.012 $\pm$ 0.001 \\
    & $\vec{p}_{T}$ & 0.011 $\pm$ 0.002 & 0.011 $\pm$ 0.002 & 0.011 $\pm$ 0.002 \\
    \midrule
    \multirow{3}{*}{BARKER}
    & $\pinf$      & 0.065 $\pm$ 0.012 & 0.032 $\pm$ 0.004 & 0.019 $\pm$ 0.003 \\
    & $p_\theta$    & 0.017 $\pm$ 0.001 & 0.013 $\pm$ 0.001 & 0.013 $\pm$ 0.002 \\
    & $\vec{p}_{T}$ & 0.015 $\pm$ 0.000 & 0.015 $\pm$ 0.001 & 0.015 $\pm$ 0.000 \\
    \midrule
    \multirow{3}{*}{HMC}
    & $\pinf$      & 0.072 $\pm$ 0.001 & 0.033 $\pm$ 0.006 & 0.022 $\pm$ 0.001 \\
    & $p_\theta$    & 0.015 $\pm$ 0.001 & 0.013 $\pm$ 0.002 & 0.014 $\pm$ 0.003 \\
    & $\vec{p}_{T}$ & 0.013 $\pm$ 0.002 & 0.013 $\pm$ 0.002 & 0.013 $\pm$ 0.002 \\
    \midrule
    Flow & -- & \multicolumn{3}{c}{0.015 $\pm$ 0.001} \\
    \midrule
    Reference  & -- & -- &0.002 $\pm$ 0.001 & -- \\
    \bottomrule
  \end{tabular}
\end{table}

\begin{figure}[!h]
  \centering
  \begin{tabular}{cccc}
    \includegraphics[width=0.32\textwidth]{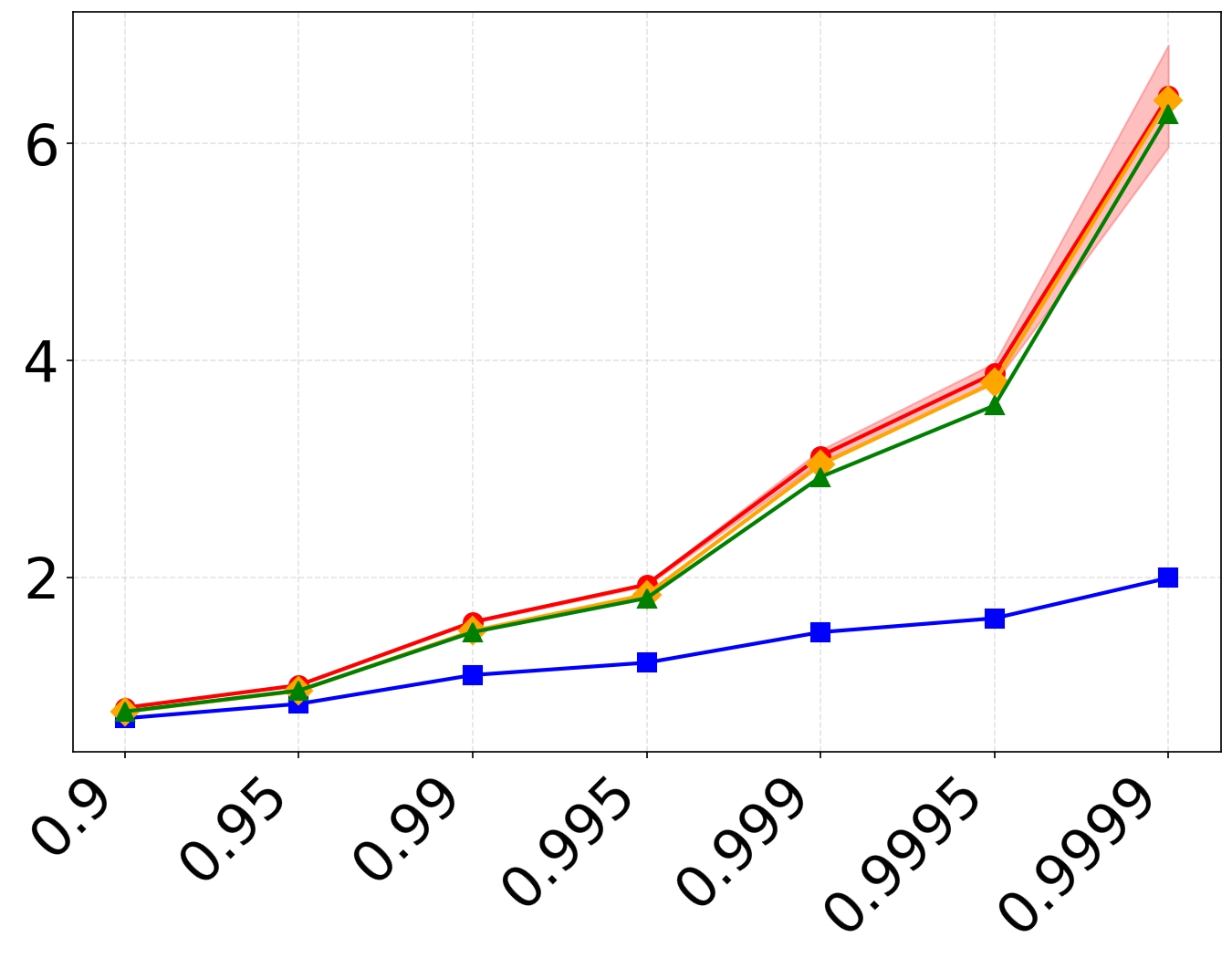} &
    \includegraphics[width=0.32\textwidth]{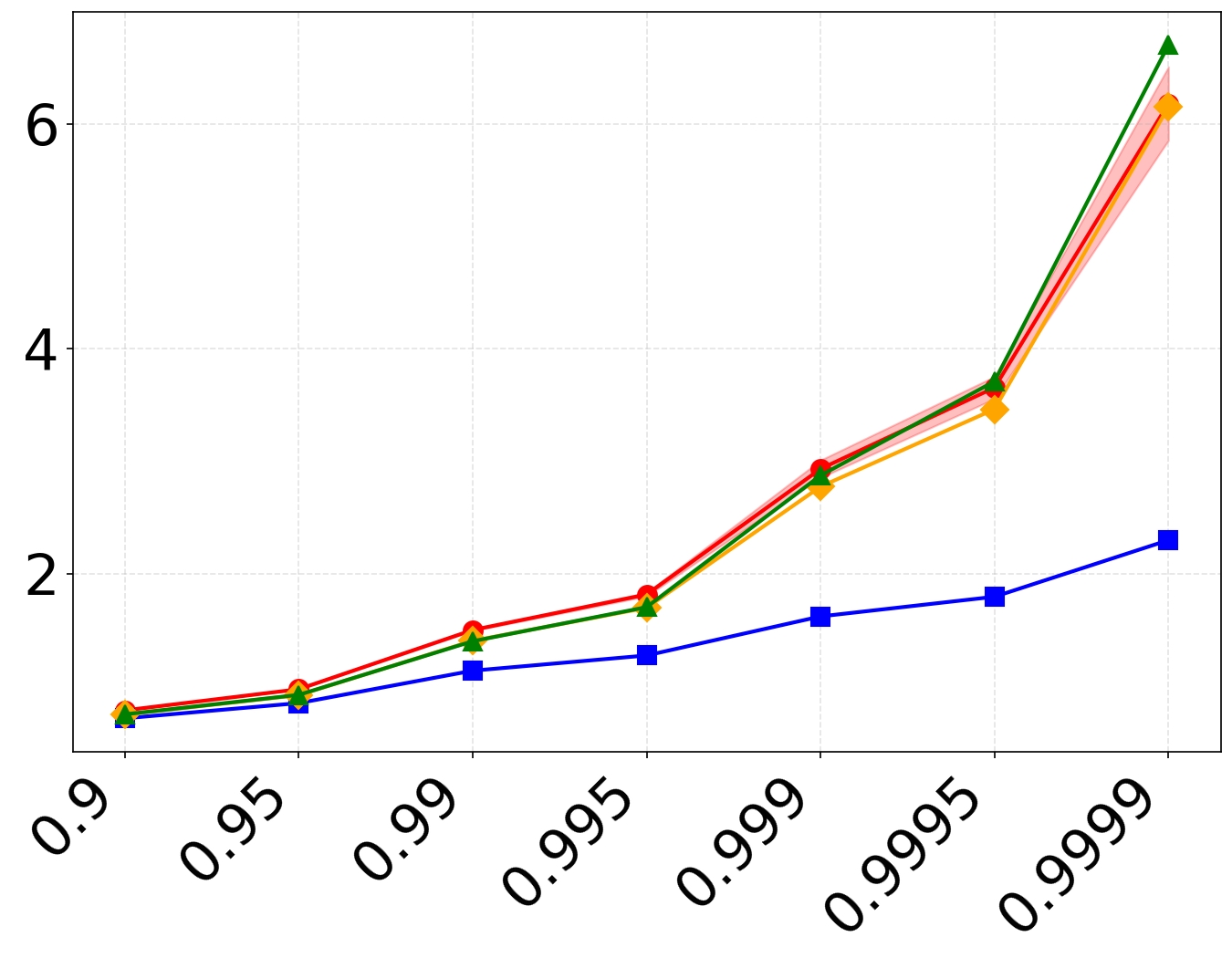} &
    \includegraphics[width=0.32\textwidth]{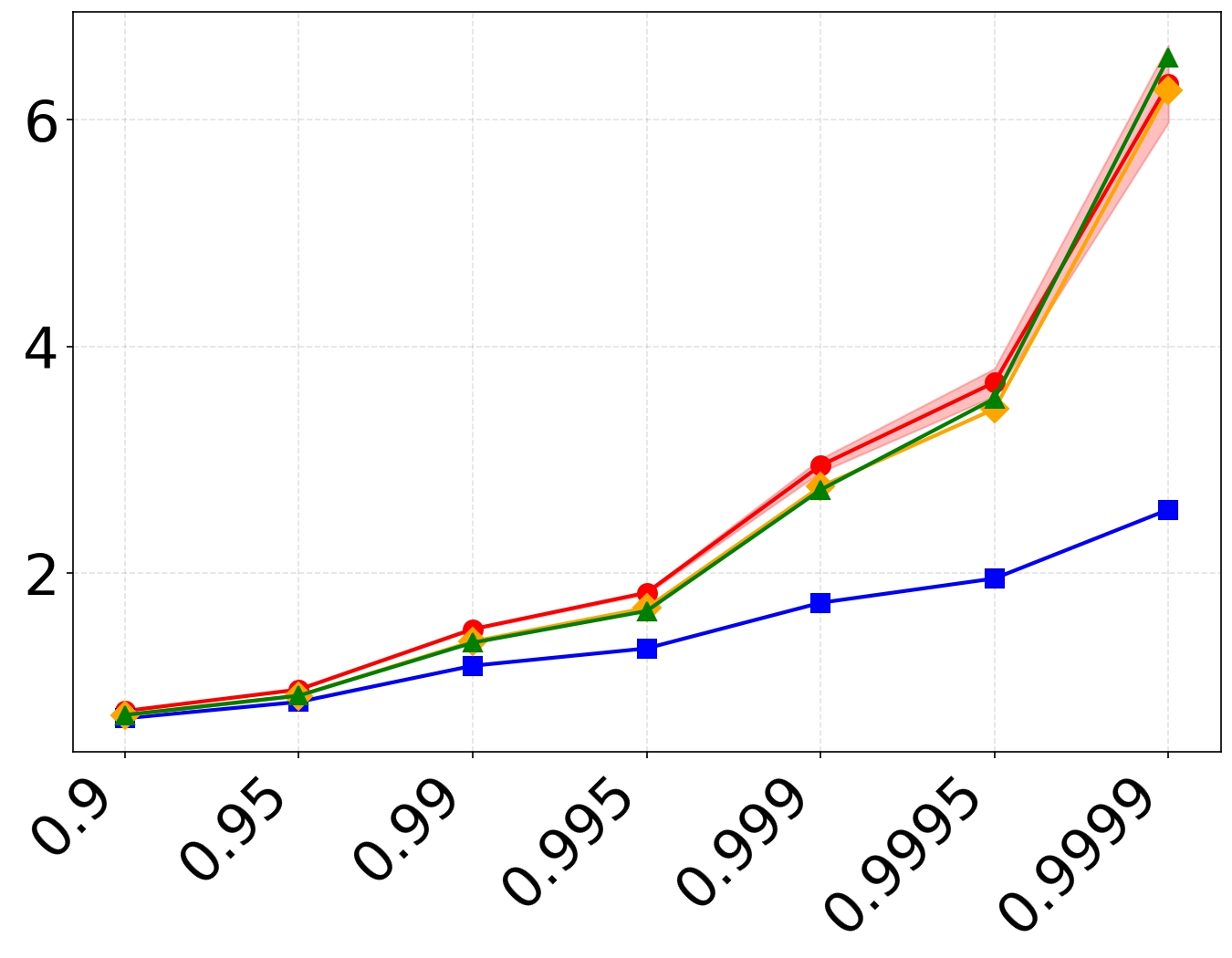} \\
    \small (a) NUTS, $\sigma_T=0.801$ & (b) NUTS, $\sigma_T=1.055$ & (c) NUTS, $\sigma_T=1.244$ \\[0.5em]
    \includegraphics[width=0.32\textwidth]{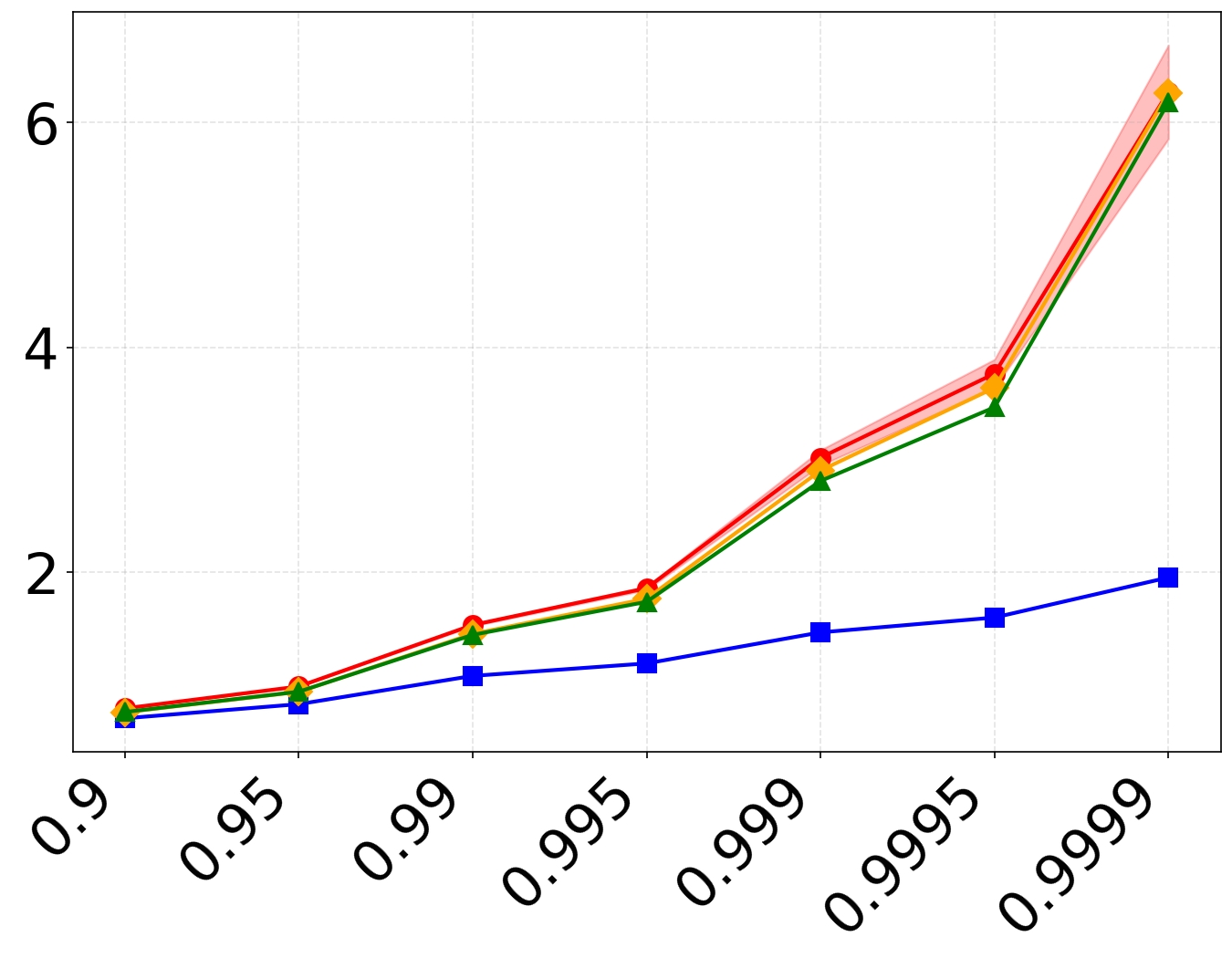} &
    \includegraphics[width=0.32\textwidth]{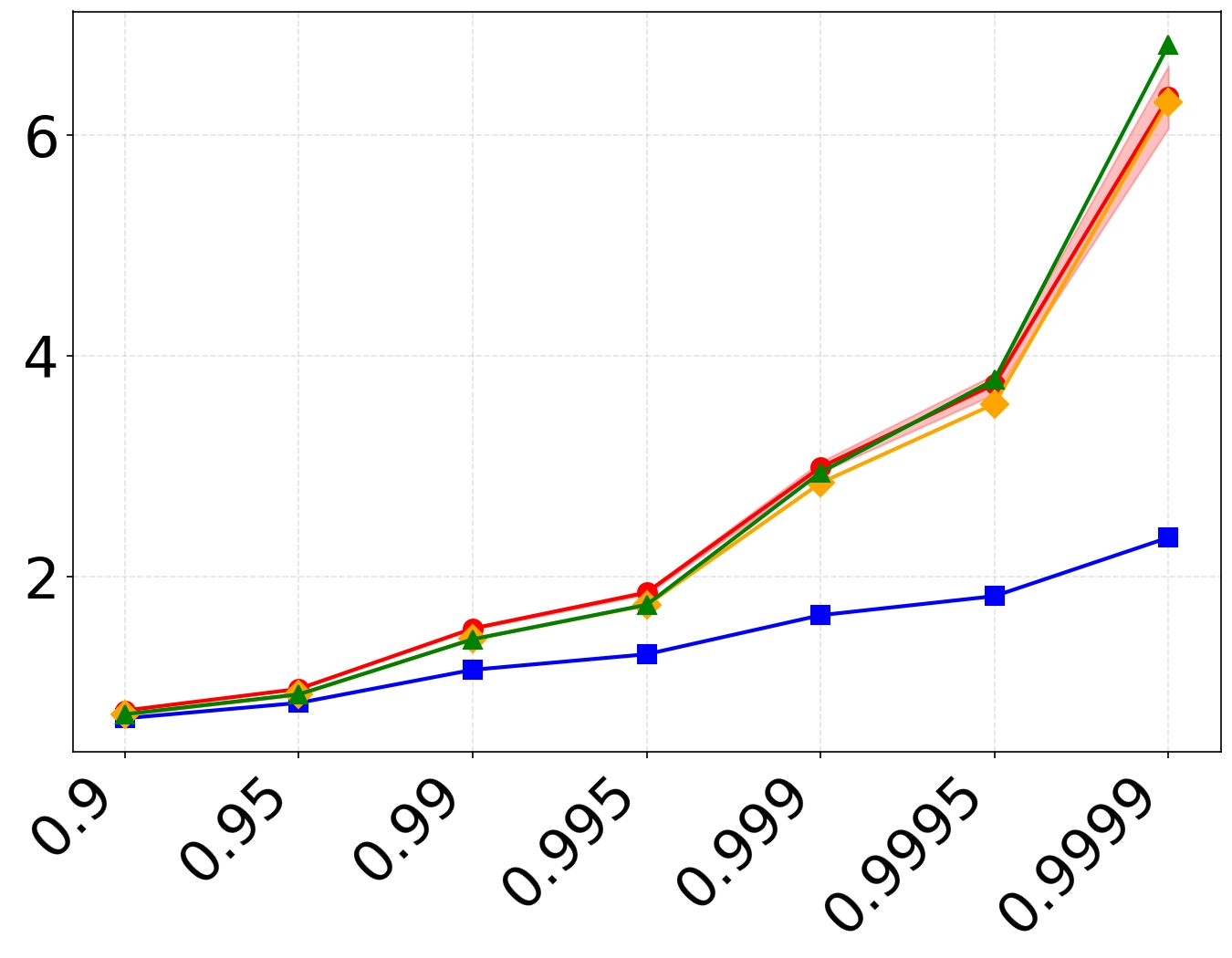} &
    \includegraphics[width=0.32\textwidth]{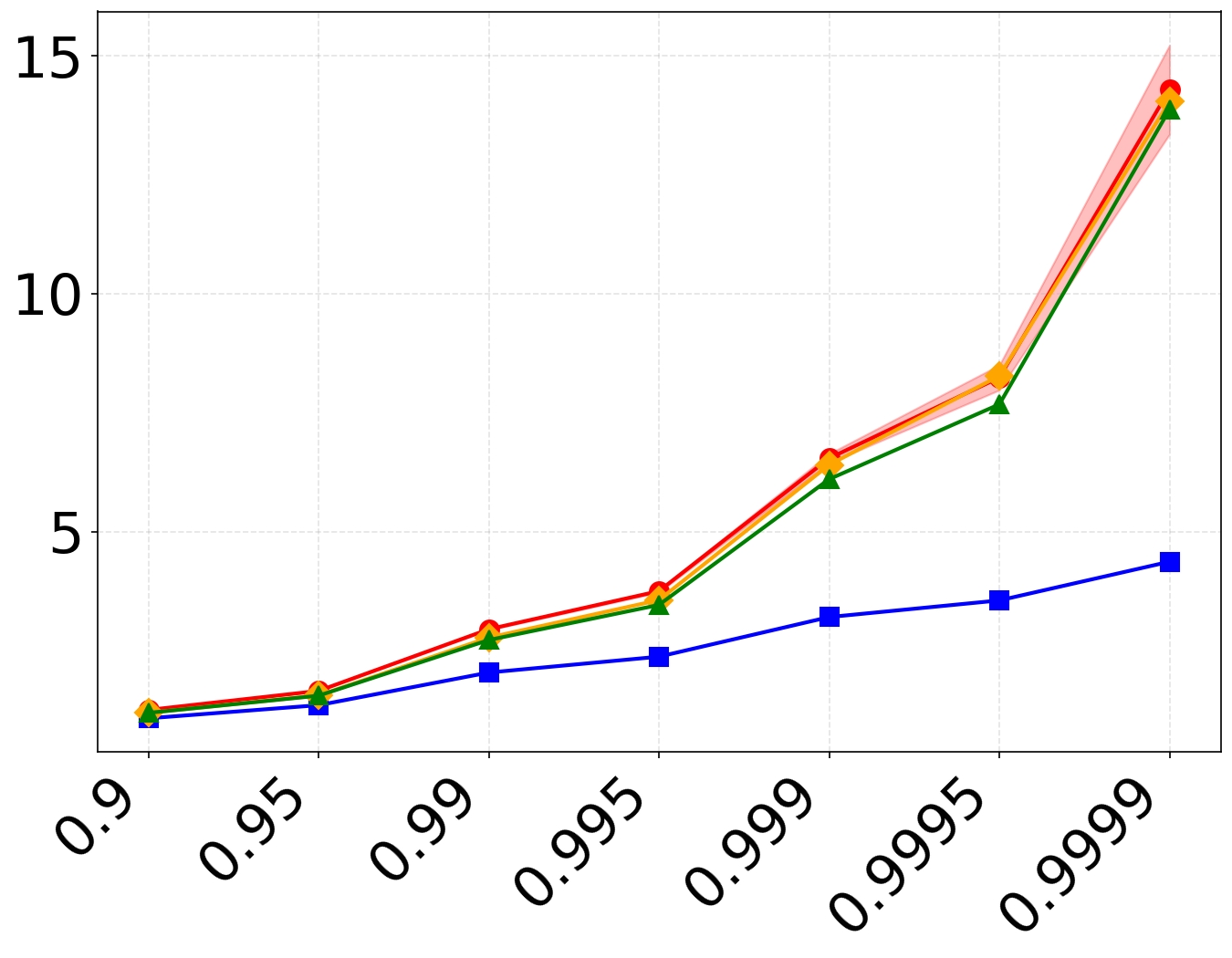} \\
    \small (d) BARKER, $\sigma_T=0.801$ & (e) BARKER, $\sigma_T=1.055$ & (f) BARKER, $\sigma_T=1.244$ \\[0.5em]
    \includegraphics[width=0.32\textwidth]{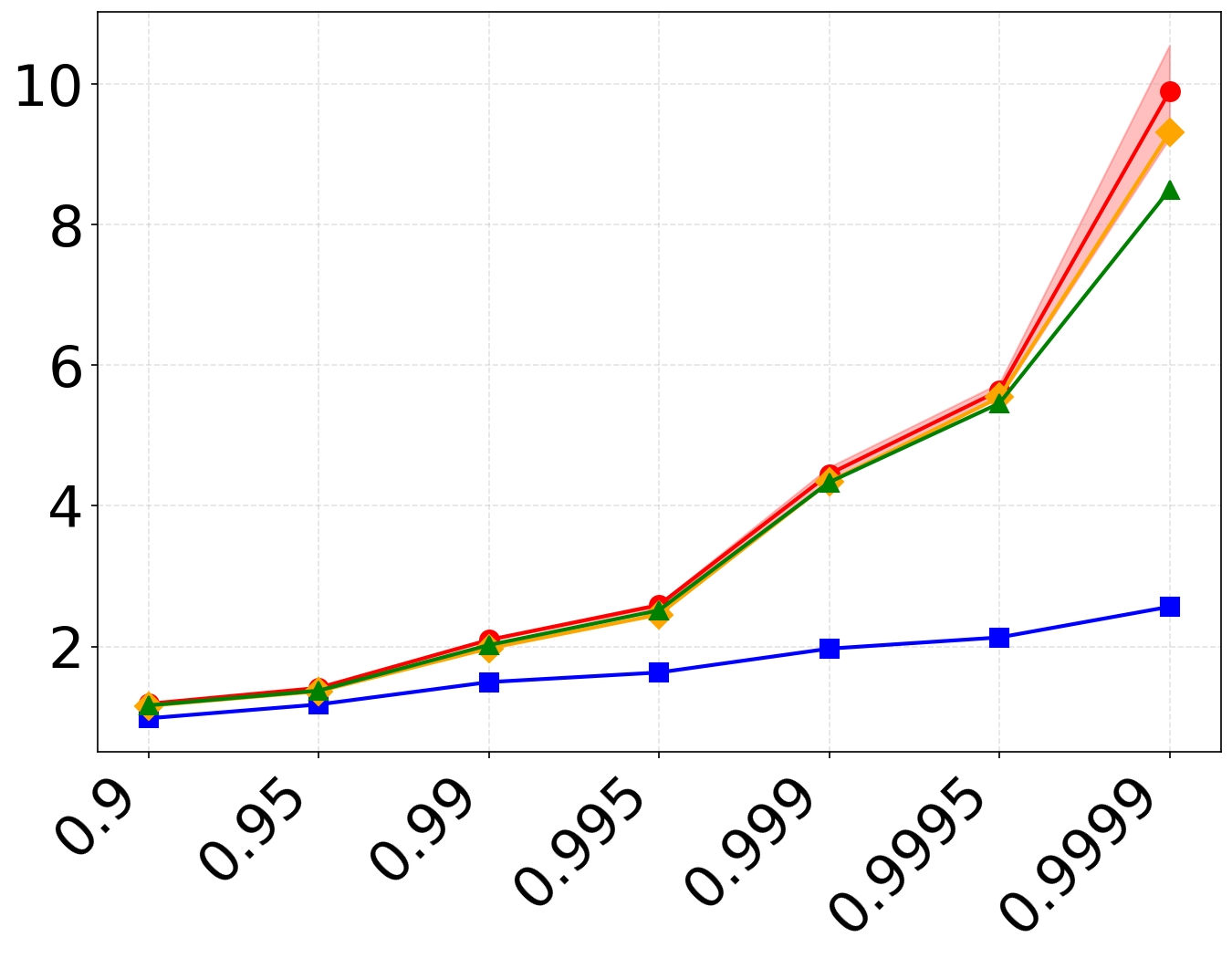} &
    \includegraphics[width=0.32\textwidth]{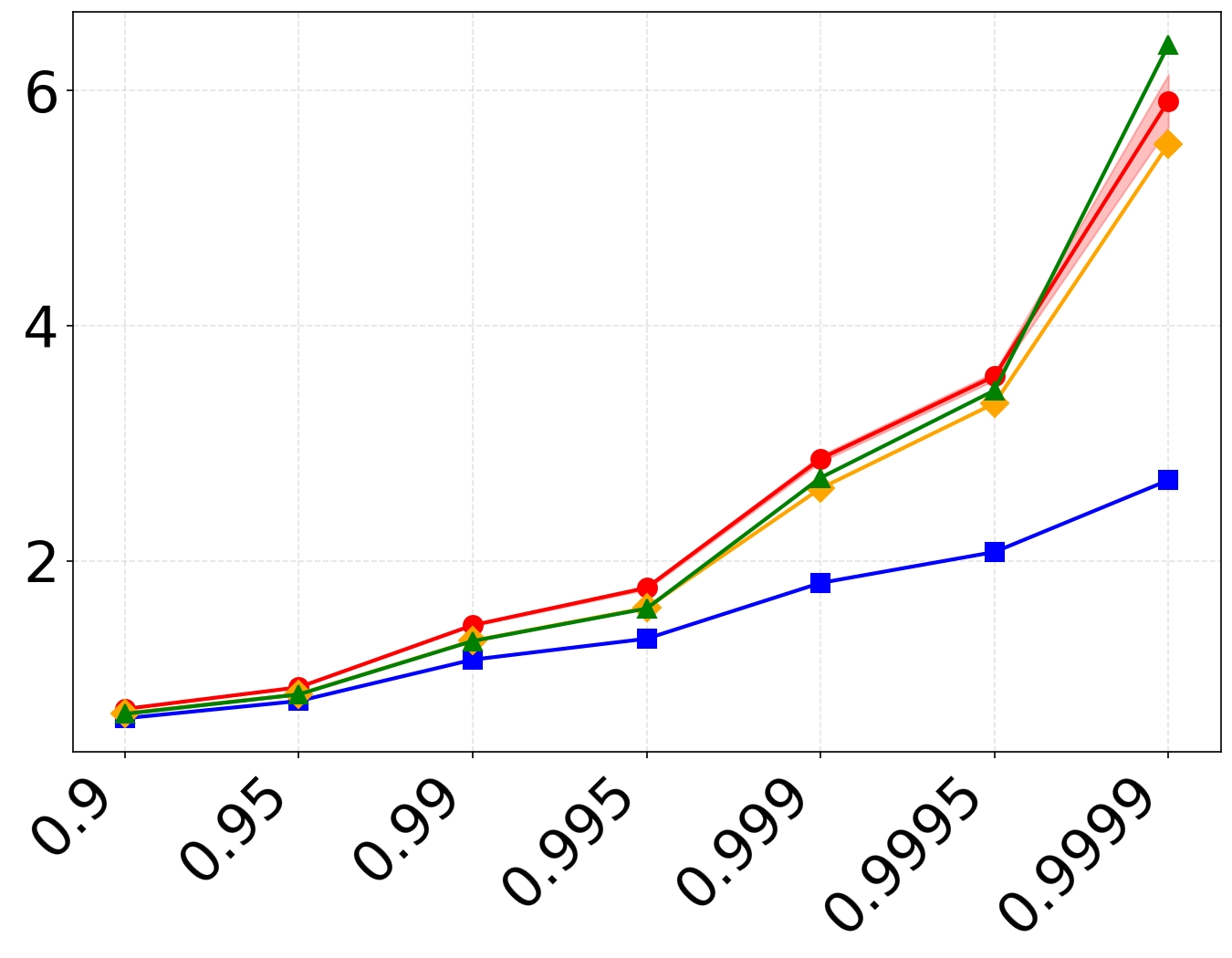} &
    \includegraphics[width=0.32\textwidth]{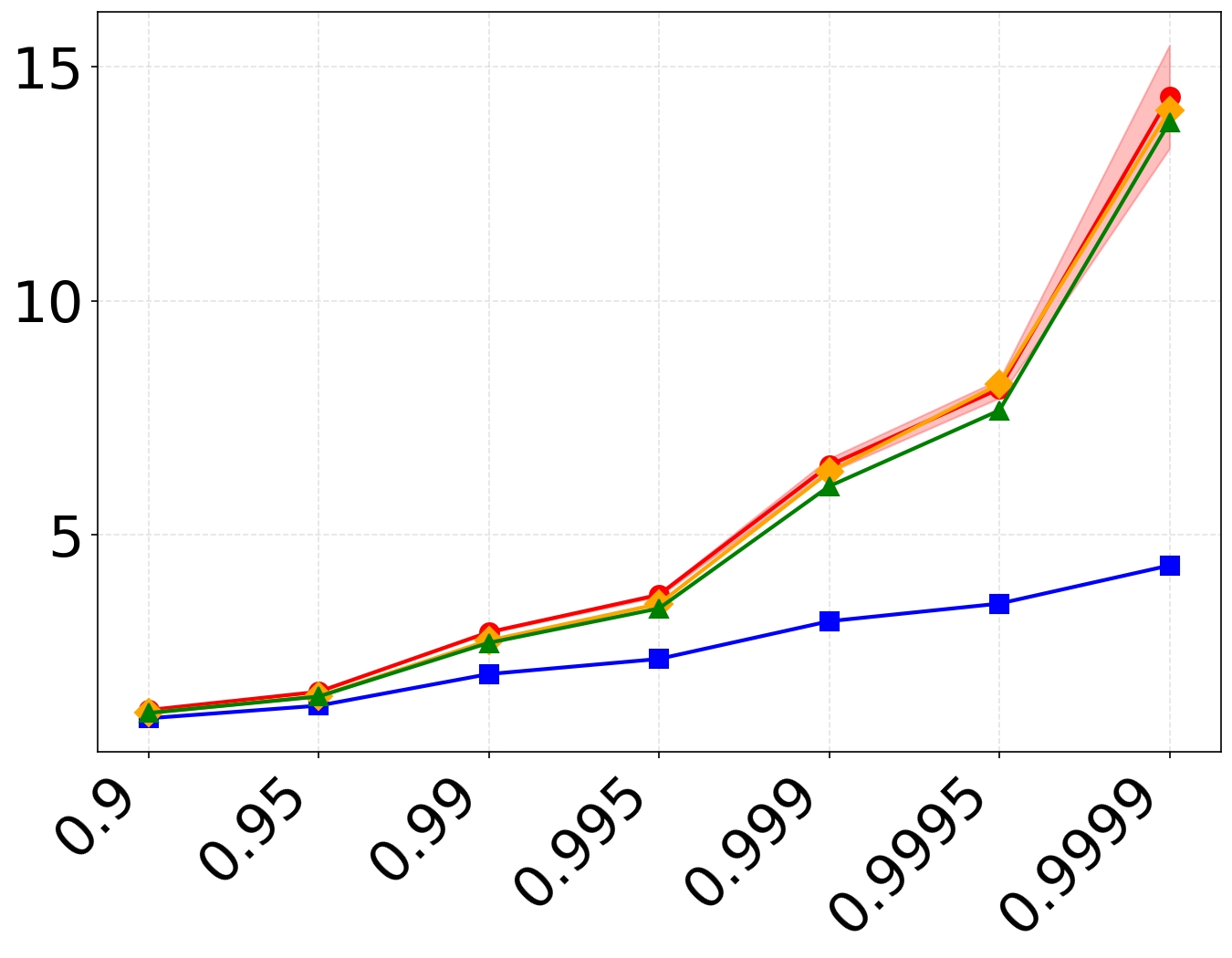} \\
    \small (g) HMC, $\sigma_T=0.801$ & (h) HMC, $\sigma_T=1.055$ & (i) HMC, $\sigma_T=1.244$ \\
  \end{tabular}
  \caption{Tail quantiles (0.9--0.9999). \textcolor{blue}{$p_\infty$}, 
    \textcolor{orange}{$p_T$}, \textcolor{green}{$p_\theta$}, \textcolor{red}{Reference}. 
    We test different MCMC-based denoisers}
    \label{fig:MCMC_quantile}
\end{figure}
\begin{figure}[!h]
  \centering
  \begin{tabular}{cccc}
    \includegraphics[width=0.32\textwidth]{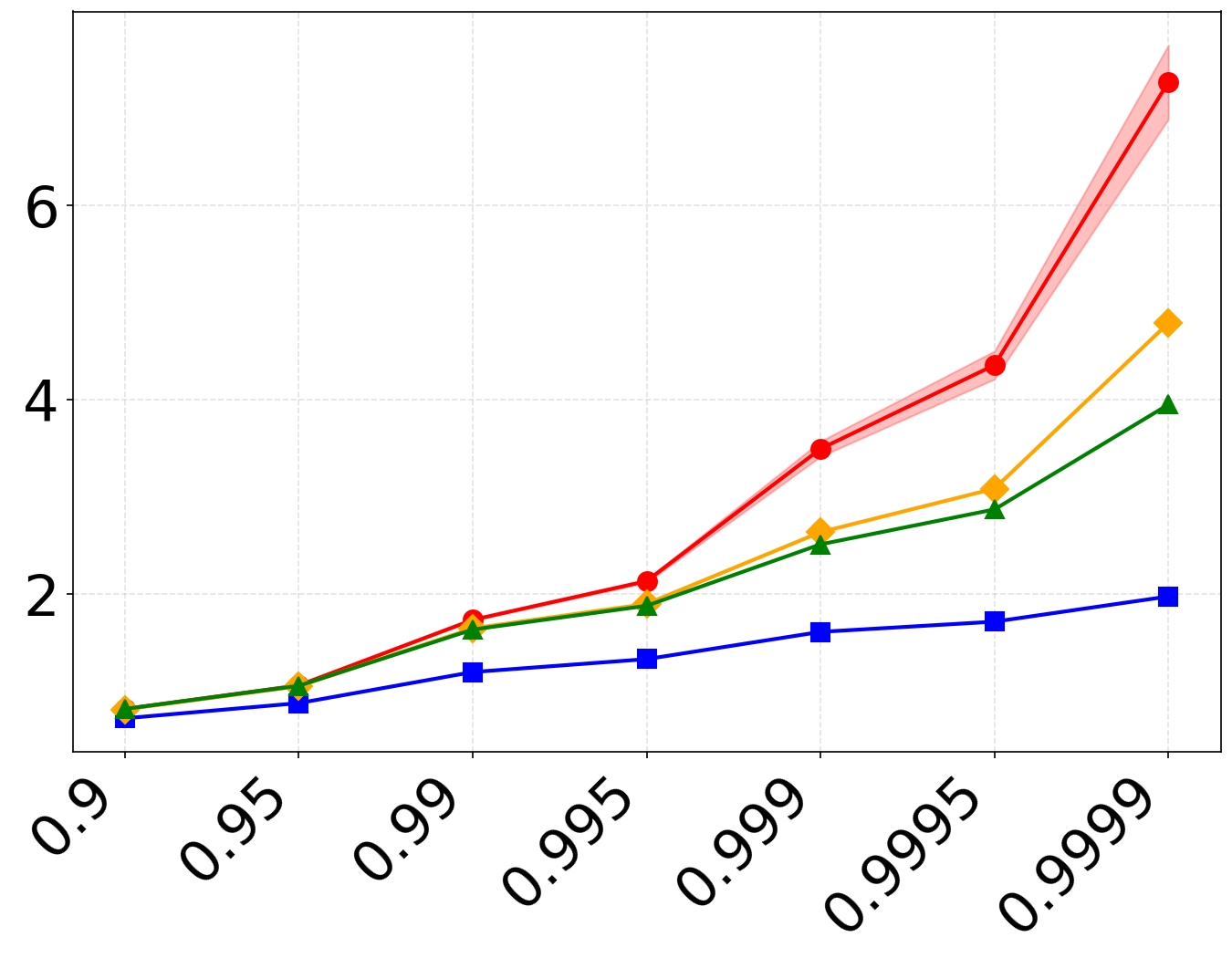} &
    \includegraphics[width=0.32\textwidth]{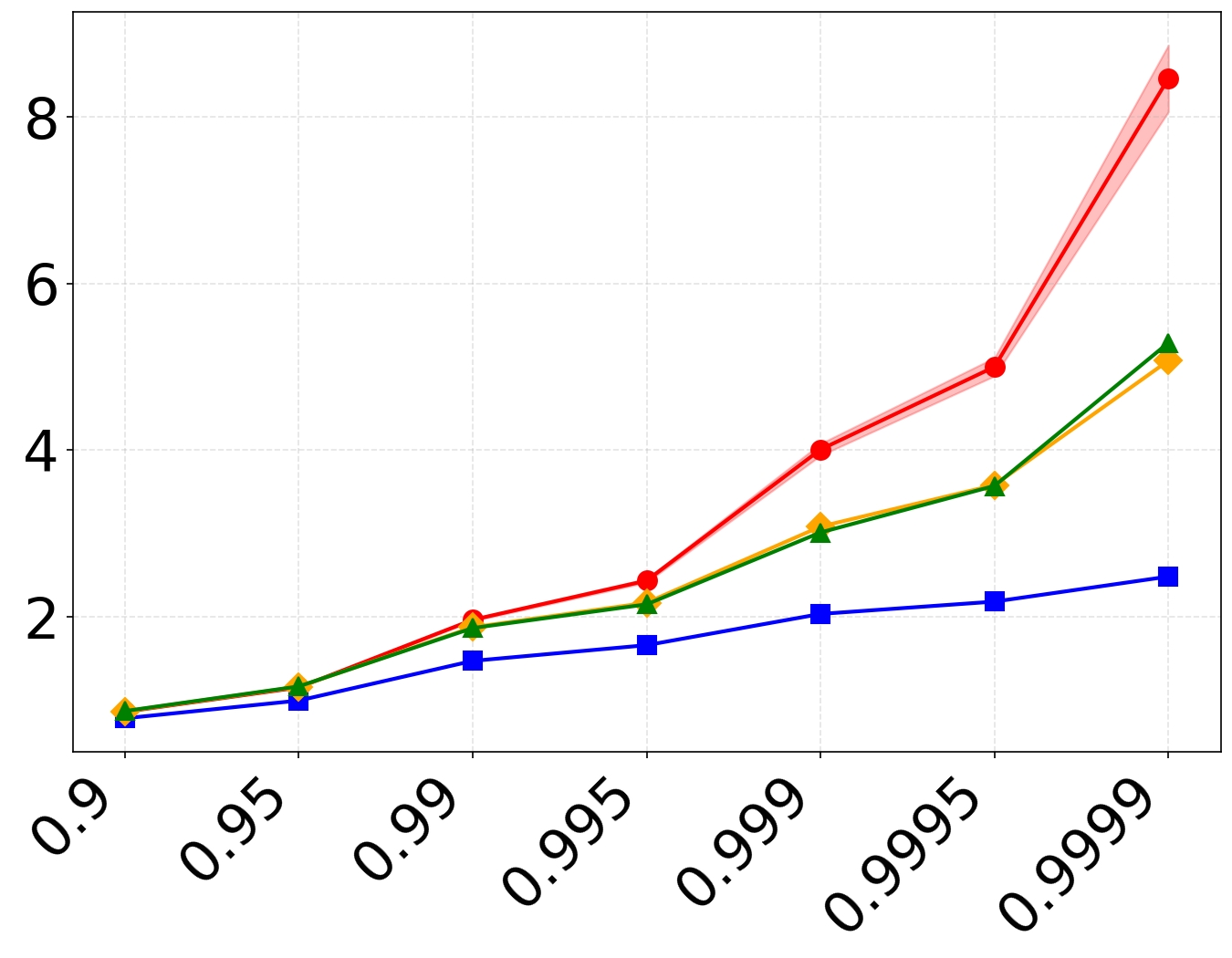} &
    \includegraphics[width=0.32\textwidth]{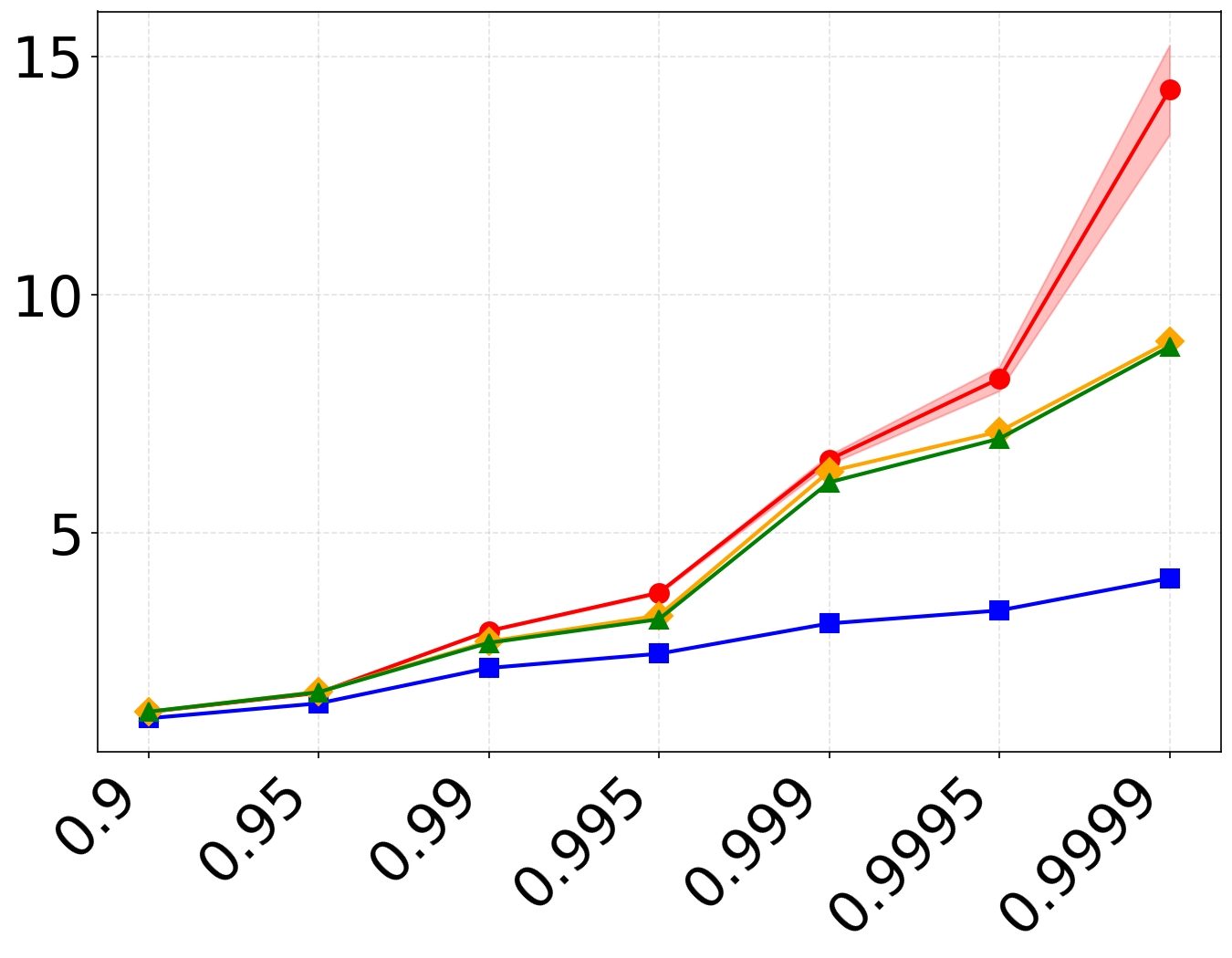} \\
    \small (a) $n=10^3$, $\sigma_T=0.801$ & (b) $n=10^3$, $\sigma_T=1.055$ & (c) $n=10^3$, $\sigma_T=1.244$ \\[0.5em]
    \includegraphics[width=0.32\textwidth]{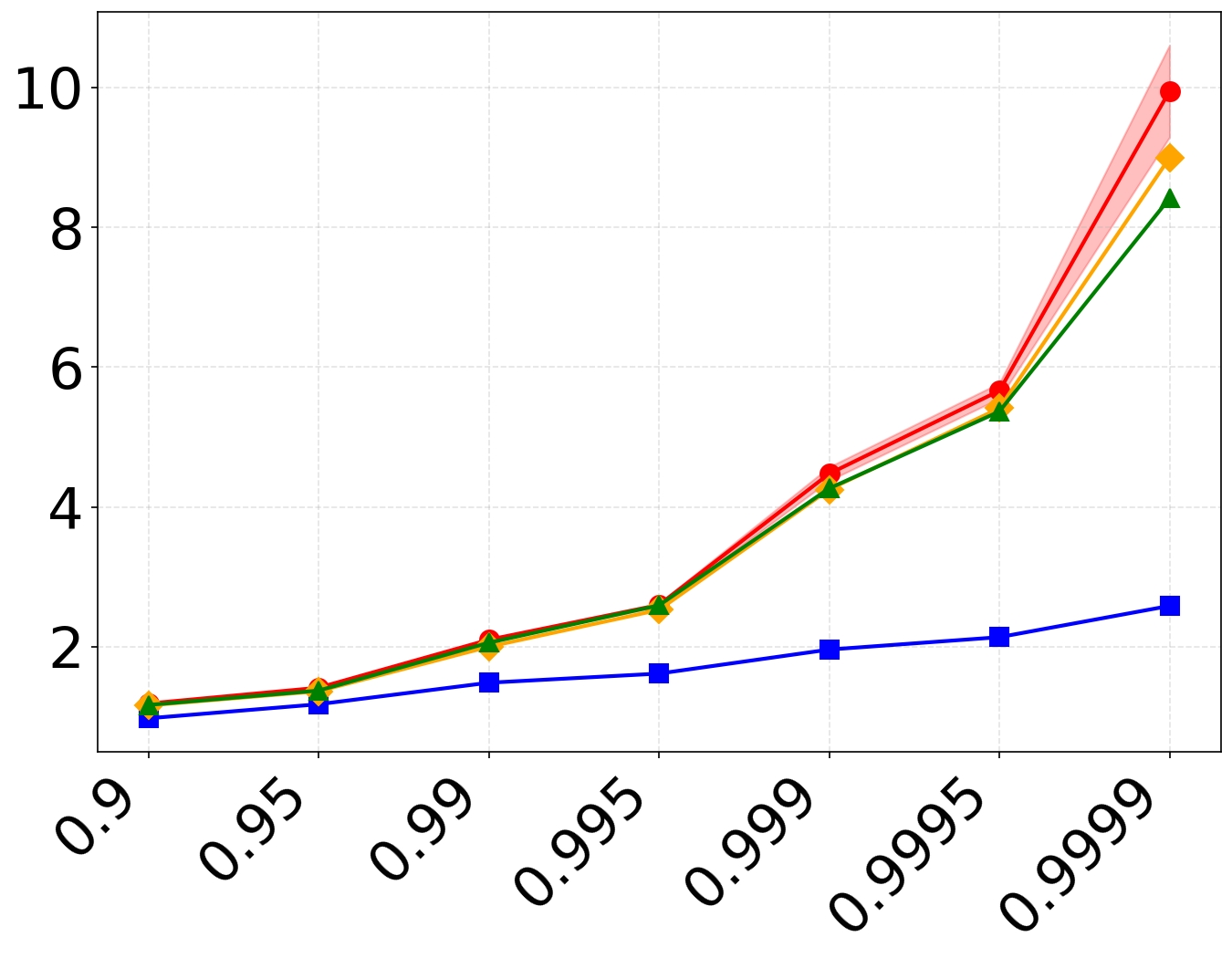} &
    \includegraphics[width=0.32\textwidth]{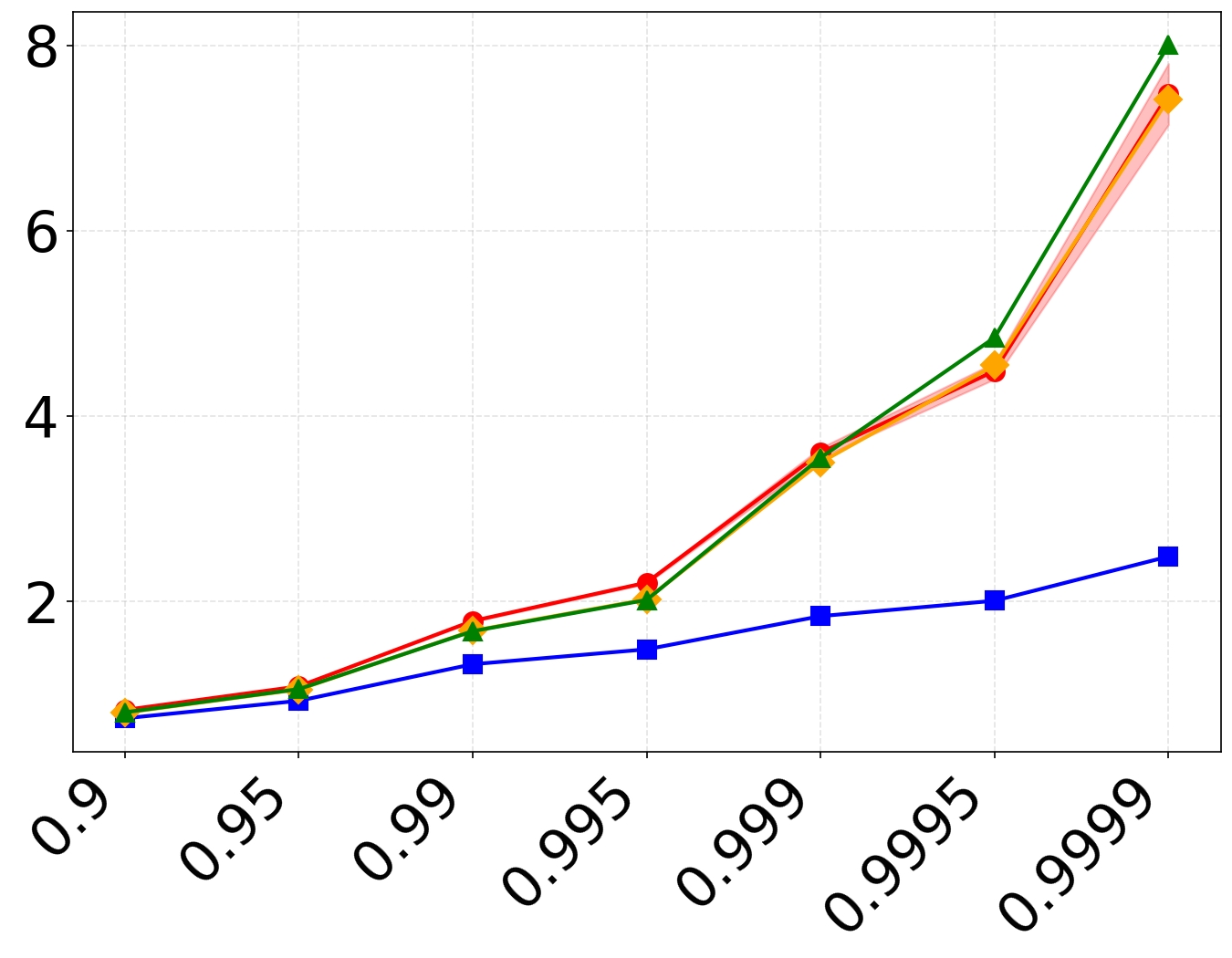} &
    \includegraphics[width=0.32\textwidth]{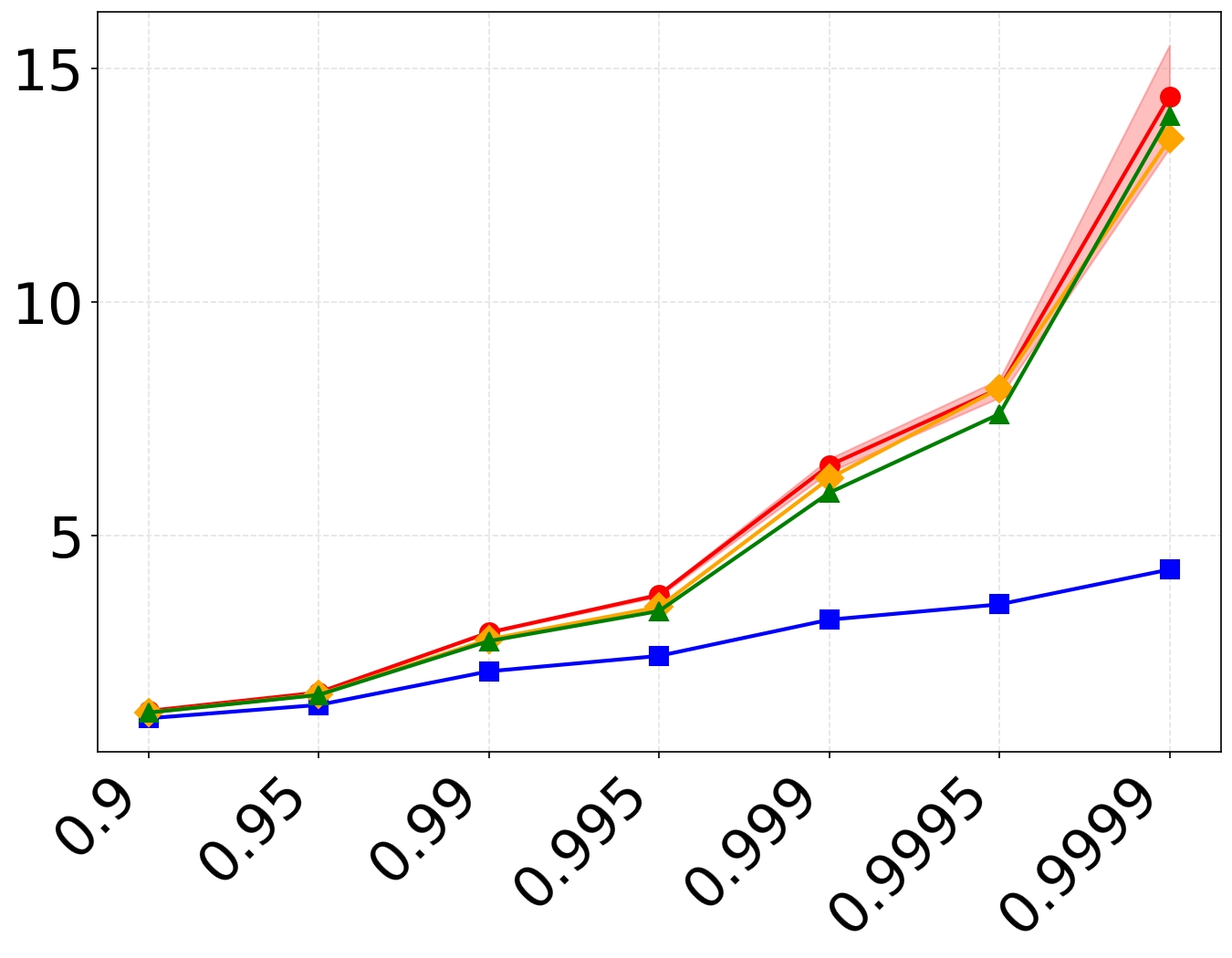} \\
    \small (d) $n=10^4$, $\sigma_T=0.801$ & (e) $n=10^4$, $\sigma_T=1.055$ & (f) $n=10^4$, $\sigma_T=1.244$ \\[0.5em]
    \includegraphics[width=0.32\textwidth]{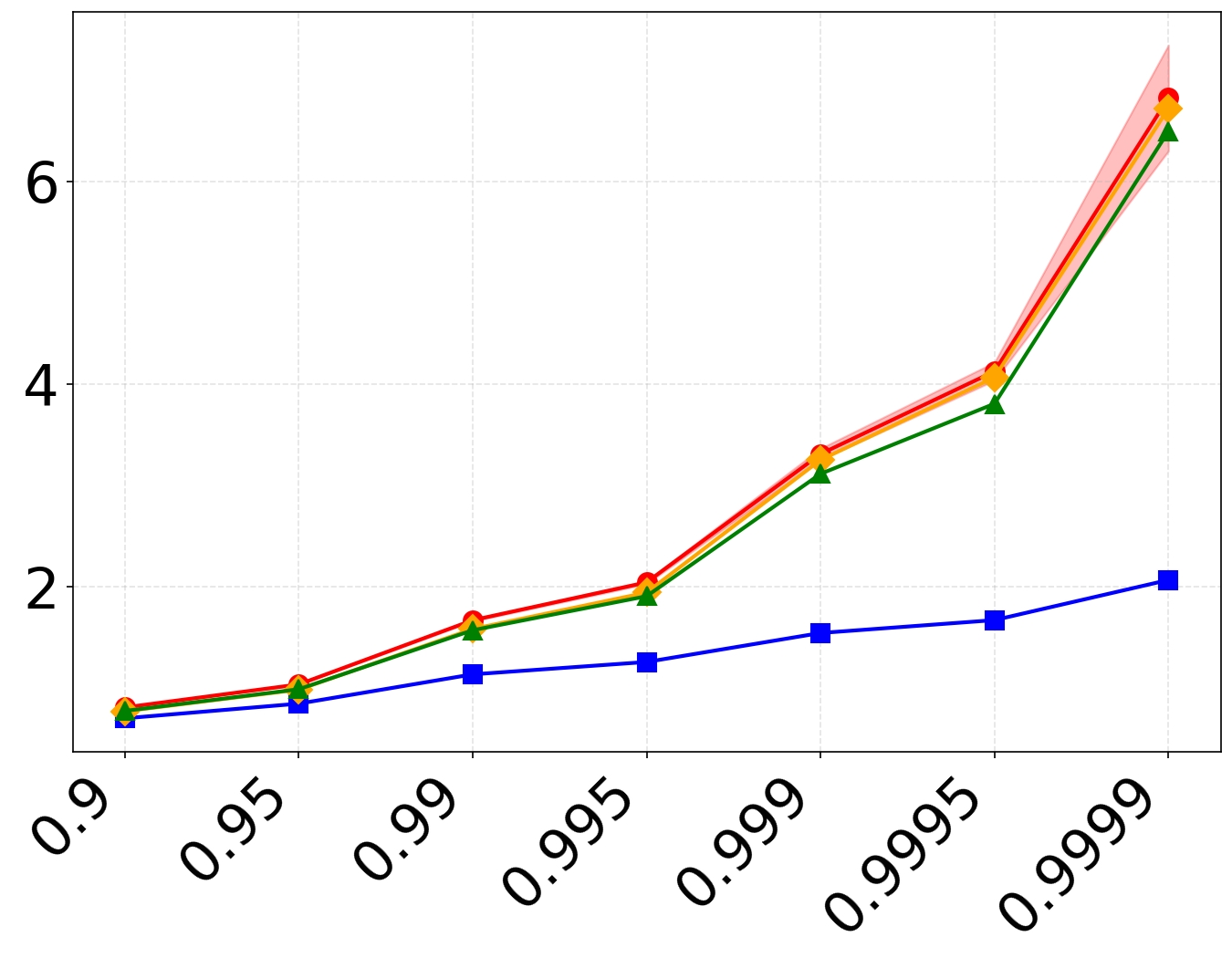} &
    \includegraphics[width=0.32\textwidth]{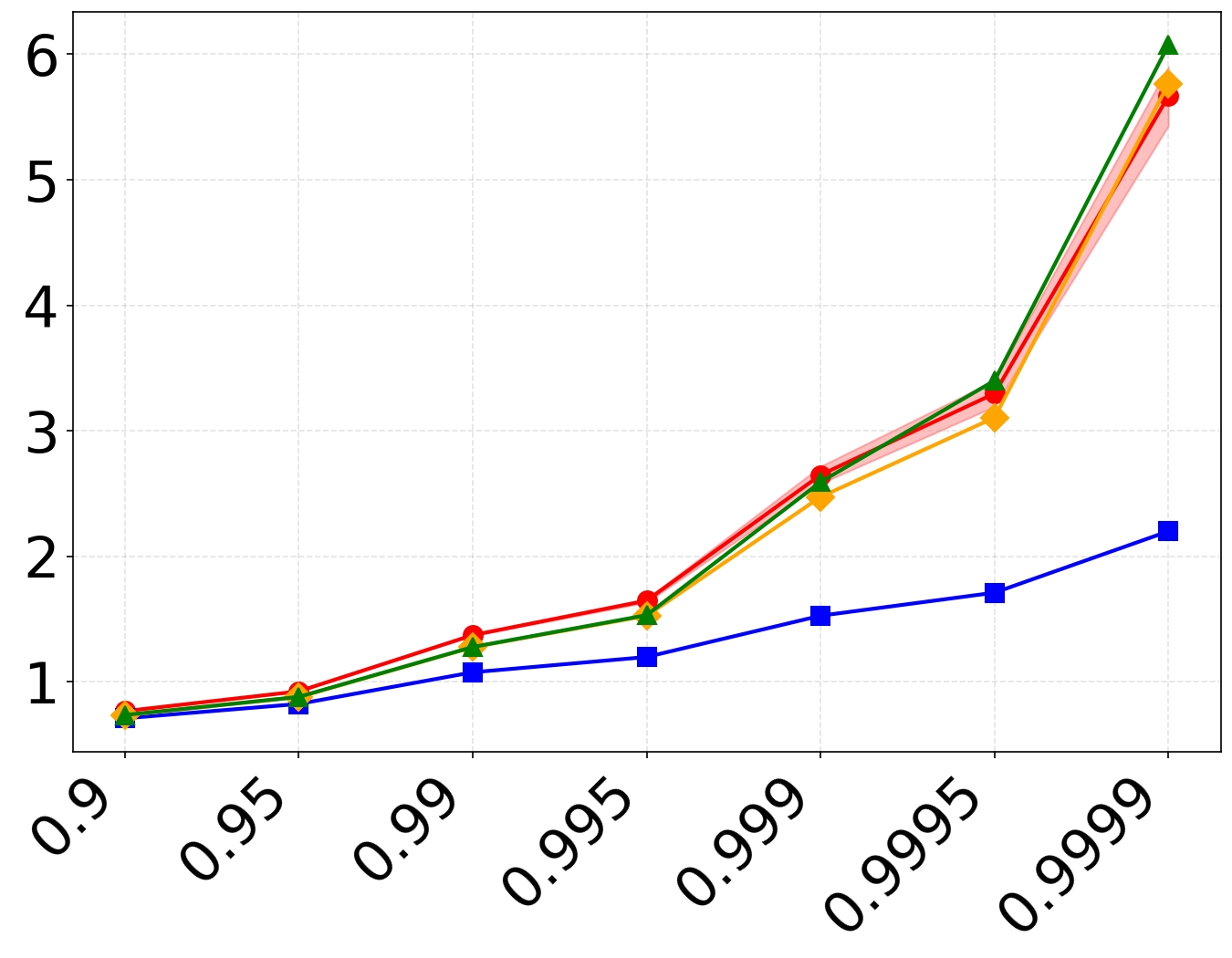} &
    \includegraphics[width=0.32\textwidth]{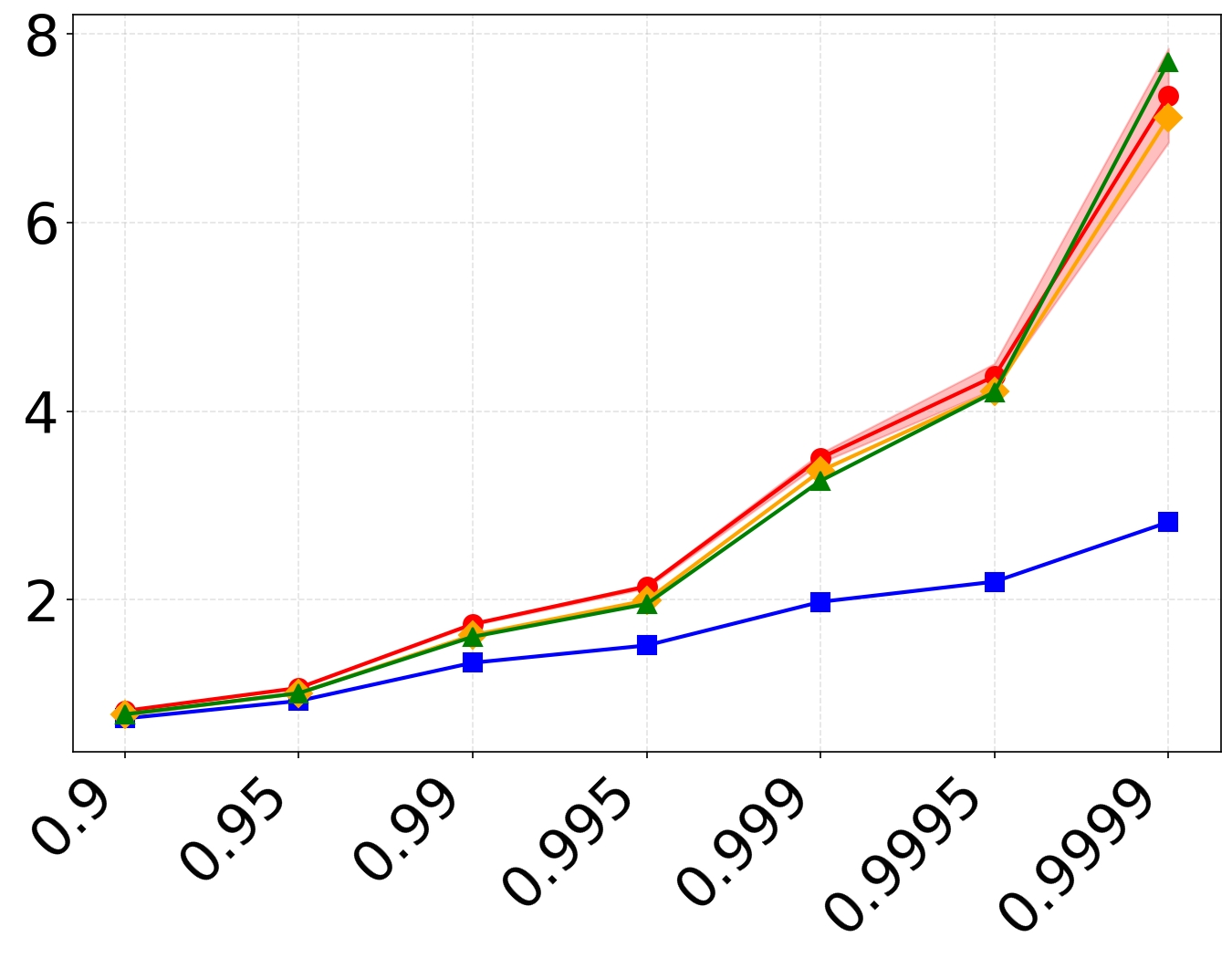} \\
    \small (g) $n=10^5$, $\sigma_T=0.801$ & (h) $n=10^5$, $\sigma_T=1.055$ & (i) $n=10^5$, $\sigma_T=1.244$ \\
  \end{tabular}
  \caption{Tail quantiles (0.9--0.9999). \textcolor{blue}{$p_\infty$}, 
    \textcolor{orange}{$p_T$}, \textcolor{green}{$p_\theta$}, \textcolor{red}{Reference}. 
    We test Neural Network denoisers trained on different size of training datasets.}
\label{fig:NN_quantile}
\end{figure}
\subsection{Testing initialization}
As already mentioned in the main text, we test the effect of $\pinf$, $\pforward{T}$, and $\discsolution[\param]{0}$ on the sampling performance in the GMM and HT case. 
We already mentioned that the test for GMM were perfomed sampling $10^6$ points and comparing MSW with $10^6$ reference points. The comparaison was made $10$ times and the results are reported in mean$\pm$std. 
For the HT case we divided the evaluation of bulk and tail using the procedures described in \cref{sec:experiments_toy}. Results for GMM are already reported in the main text, while for the HT case we need to report results for HMC and Barker. In \cref{tab:bulk_msw_full} we report the results for the MSW on the bulk for all denoising methods. 
In \cref{fig:MCMC_quantile} we report the tail quantiles for all MCMC methods, including HMC and Barker. In \cref{fig:NN_quantile} we report the tail quantiles for the neural network denoiser trained on different size of training datasets.
\begin{figure}[!h]
  \centering
  \includegraphics[width=0.7\textwidth]{results_flow_gmm/heatmap_fix.jpg}
  \caption{%
    KL divergence between $\discsolution[\param]{0}$ and $\pforward{T}$ for the flow trained under the \emph{fixed-noise} regime of \cref{alg:training}.
    Rows index the training set size $n_{\text{train}}$, columns the noise level $\sigma_T$.
    The cubic root of the KL divergence is shown for readability; warmer colors correspond to larger divergences.
    The fixed-noise strategy yields low KL only when the dataset is sufficiently large. 
  }
  \label{fig:flow_kl_fix}
\end{figure}
\subsection{Training the Coupling Layer Flow}
\cref{thm:tang2021empirical:thm3main} expresses the rate of decay of the KL divergence as a function of the number of training samples.
Moreover, following \cite{silveri2025beyond}, the noising operation can be interpreted as a smoothing of the target distribution, which can be beneficial for the training of $\discsolution[\param]{0}$.
\begin{figure}[!h]
  \centering
  \includegraphics[width=0.7\textwidth]{results_flow_gmm/heatmap_dynamic.jpg}
  \caption{%
    KL divergence between $\discsolution[\param]{0}$ and $\pforward{T}$ for the flow trained under the \emph{dynamic-noise} regime of \cref{alg:training}.
    Rows index the training set size $n_{\text{train}}$, columns the noise level $\sigma_T$.
    The cubic root of the KL divergence is shown for readability; warmer colors correspond to larger divergences.
    Compared to the fixed-noise regime (\cref{fig:flow_kl_fix}), the dynamic strategy produces an improved performance the bigger $\sigma_T$ it is, confirming the benefit of training the flow on noised data.
  }
  \label{fig:flow_kl_dynamic}
\end{figure}
\paragraph{Assessing the Coupling Layer Flow.}
To assess the joint effect of training set size and noise conditioning on the coupling layerflow accuracy, we trained $\discsolution[\param]{0}$ on the GMM target while varying both the number of training samples $n_{\text{train}} \in [100, 10000]$ and the noise level $\sigma_T \in [0.01, 2.56]$. We chose the GMM target as it admits a tractable density $\pforward{T}$, which makes the KL divergence between the model and the target amenable to Monte Carlo estimation.
We compare the two training strategies of \cref{alg:training} Results for the two strategies are reported in \cref{fig:flow_kl_fix} and \cref{fig:flow_kl_dynamic}, respectively. Results of \cref{fig:kl_sigma,fig:kl_trainsize} are just a subset of the these full results. 
For better readability, the heatmaps display the cubic root of the KL divergence; both figures share a common color scale to enable direct visual comparison.
\subsection{TarFlow as a candidate $\discsolution{0}$ on image data}
We now turn our attention on Tarflow \cite{zhai2025normalizing}, a Transformer-based normalizing flow, that could serve as a viable model for $\discsolution{0}$ for high dimensional datawithin the sampling pipeline sampling introduced in the main text. To this end, we run TarFlow on standard image benchmarks and, in parallel, investigate whether the choice of noise sampling strategy during training, fixed versus dynamical (\cref{alg:training}), meaningfully affects the resulting initialization. 
The image generation results reported below should therefore be read primarily as a diagnostic of TarFlow's suitability as $\discsolution{0}$, rather than as an attempt to compete with state-of-the-art image generators. The underlying rationale is the following: if the proposed pipeline can match or outperform the benchmark established by \cite{karras2022elucidating, Karras2023TrainingDiffusion}, then TarFlow can be confidently employed within our framework to model light-tailed targets, and, by extension, light-tailed transformations of $\pforward{T}$, in the spirit of Comet Flow~\cite{mcdonald2022cometflows}.
We report these results in the Appendix as they are still preliminary.
\paragraph{Datasets.}
We consider three benchmarks: FFHQ-64, used for unconditional generation, and two subsets of ImageNet-512 used for conditional generation. We denote by ImageNet$_{\text{dogs}}$ the subset obtained by randomly selecting 50 dog classes, and by ImageNet$_{\text{birds}}$ the analogous subset of 50 bird classes. The two ImageNet subsets allow us to evaluate both class-conditional fidelity and the robustness of the method across closely related categories.

\paragraph{Pretrained denoisers.}
For FFHQ-64 we rely on the pretrained VE model of \cite{karras2022elucidating}. For ImageNet-512 we use the size-S denoiser of \cite{Karras2023TrainingDiffusion}, which operates on the latent representations produced by Stable VAE~\cite{stabilityai_sd-vae-ft-mse}; consequently, our short-horizon sampling on ImageNet is performed in latent space. In all cases, we use the EDM sampler~\cite{karras2022elucidating}.

\paragraph{TarFlow as $\discsolution{0}$.}
We use TarFlow~\cite{zhai2025normalizing} as the model class for $\discsolution{0}$. Compared to the original TarFlow setup, our instantiation is at least $10\times$ smaller; a detailed comparison is given in \cref{tab:benchmark:config}. We train $\discsolution{0}$ on noised data with short time horizon $\sigma_T = 7$. A practical issue arises from the fact that training TarFlow directly on noised data can push inputs outside the architecture's intended dynamic range (the original model is trained on images in $[-1,1]$). We address this by dividing the noised inputs by a \emph{Training Factor} during training and multiplying by the same factor at sampling initialization, which we found to substantially improve generation quality. Further architectural and optimization details are reported in \cref{tab:benchmark:config}.
we evaluate both the fixed and dynamic noise training strategies of \cref{alg:training}.
\paragraph{Sampling strategies.}
For all datasets we use the EDM sampler and evaluate five initialization strategies. The first is a long-horizon Gaussian start ($\pinf$ at $\sigma_T = 80$): the standard EDM setup, with 40 steps and $\rho=7$ for FFHQ-64, and 32 steps and $\rho=7$ for ImageNet-512, following \cite{karras2022elucidating, Karras2023TrainingDiffusion}. The second and third are short-horizon, flow-based initializations using $\discsolution{0}$ trained respectively under the fixed and the dynamical variant, denoted $\discsolution{0}$ (fix) and $\discsolution{0}$ (dyn). The fourth is a short-horizon Gaussian start at $\sigma_T = 7$, denoted $\pinf$ ($\sigma_T = 7$), which uses the short horizon but no learned initializer. The fifth is a short-horizon empirical initialization $\pforward{T}$, obtained by convolving the empirical training distribution with Gaussian noise of level $\sigma_T = 7$. For ImageNet, we perform class-conditional sampling with a flow-side guidance weight $\text{CFG}_{\text{flow}} \in [0.5, 2]$, as natively supported by TarFlow, and a denoiser-side weight $\text{CFG}_{\text{SGM}} = 1.9$, following \cite{Karras2023TrainingDiffusion}. All short-horizon strategies use 20 denoising steps and $\rho = 5$; we observed stable quality across $\rho$ values.

\paragraph{Metrics and evaluation protocol.}
We report Fréchet Inception Distance (FID)~\cite{heusel2017gans} together with several complementary metrics, motivated by recent work pointing out the limitations of FID~\cite{jayasumana2024rethinking,stein2023exposing}: the unbiased Kernel Inception Distance (KID)~\cite{binkowski2018demystifying}, the Dino Fréchet Distance (DinoFD)~\cite{oquab2024dinov,stein2023exposing,Karras2023TrainingDiffusion}, the Sliced Wasserstein Distance (SWD), and the Max Sliced Wasserstein Distance (MSW). FID, KID and DinoFD assess perceptual realism, whereas SWD and MSW measure the fidelity of the underlying distribution, a property that is particularly relevant when realism is not the primary objective. On ImageNet-512, MSW, SWD are computed in latent space, the others in pixel space.

For each dataset, we train two TarFlow models, one per training variant, and run, for each sampling strategy, three independent generations. FID, KID, and DinoFD are computed on $5\times 10^4$ samples per run, and we report the minimum across the three runs. SWD and MSW are computed four times per run with $1.75\times 10^4$ samples and $2\times 10^4$ projection slices each; we report the global mean and standard deviation over the resulting $3 \times 4 = 12$ evaluations. All comparisons are made against the training data, following the protocol of \cite{karras2022elucidating, Karras2023TrainingDiffusion}.

Aggregated metrics are reported in \cref{tab:ffhq} (FFHQ-64), \cref{tab:in_birds} (ImageNet$_{\text{birds}}$), and \cref{tab:in_dogs} (ImageNet$_{\text{dogs}}$). Qualitative comparisons are shown in \cref{fig:photos:ffhq,fig:photos:birds,fig:photos:dogs}: each row corresponds to a sampling strategy, and entries are obtained by selecting a training image and retrieving, for every method, its $L^2$ nearest neighbour among the generated samples (in latent space for the ImageNet subsets). Class labels follow the convention of \url{https://deeplearning.cms.waikato.ac.nz/user-guide/class-maps/IMAGENET/}.

\begin{table}
    \caption{Aggregated Results for ImageNet$_{\text{birds}}$}
\label{tab:in_birds}
\centering
\begin{tabular}{lccccc}
\toprule
Model & FID & DINO FD & KID & SWD & MSW \\
\midrule
$\pinf (\sigma_T = 80)$ (standard) & 5.90 & 67.13 & 0.0018 & 0.042 $\pm$ 0.001 & 5.095 $\pm$ 0.129 \\
$\discsolution{0}$ - CFG$_{\text{flow}} = 0.5$  (fixed)  & 3.38 & 58.72 & 0.0009 & 0.016 $\pm$ 0.000 & 1.955 $\pm$ 0.112 \\
$\discsolution{0}$ - CFG$_{\text{flow}} = 1.0$  (fixed)  & 3.30 & 63.26 & 0.0009 & 0.023 $\pm$ 0.000 & 1.984 $\pm$ 0.041 \\
$\discsolution{0}$ - CFG$_{\text{flow}} = 1.5$  (fixed)  & 3.11 & 69.58 & 0.0008 & 0.035 $\pm$ 0.000 & 4.781 $\pm$ 0.124 \\
$\discsolution{0}$ - CFG$_{\text{flow}} = 2.0$ (fixed) & 2.93 & 79.57 & 0.0006 & 0.048 $\pm$ 0.001 & 6.992 $\pm$ 0.145 \\
$\discsolution{0}$ - CFG$_{\text{flow}} = 0.5$ (dynamical) & 4.35 & 59.56 & 0.0013 & 0.027 $\pm$ 0.000 & 2.999 $\pm$ 0.062 \\
$\discsolution{0}$ - CFG$_{\text{flow}} = 1.0$ (dynamical) & 4.82 & 65.86 & 0.0015 & 0.028 $\pm$ 0.000 & 3.069 $\pm$ 0.051 \\
$\discsolution{0}$ - CFG$_{\text{flow}} = 1.5$ (dynamical) & 5.38 & 72.84 & 0.0016 & 0.031 $\pm$ 0.000 & 3.693 $\pm$ 0.047 \\
$\discsolution{0}$ - CFG$_{\text{flow}} = 2.0$ (dynamical) & 5.92 & 81.48 & 0.0017 & 0.035 $\pm$ 0.000 & 4.509 $\pm$ 0.050 \\
$\pforward{T}$ & 3.91 & 53.49 & 0.0011 & 0.022 $\pm$ 0.000 & 1.258 $\pm$ 0.633 \\
$\pinf (\sigma_T = 7)$ & 4.15 & 68.42 & 0.0011 & 0.041 $\pm$ 0.000 & 8.174 $\pm$ 0.063 \\
\bottomrule
\end{tabular}
\end{table}

\begin{figure}[t]
    \centering
    \small
    \setlength{\tabcolsep}{0pt}%
    \renewcommand{\arraystretch}{0}%
    \begin{tabular}{lcccccccccc}
        & 9 & 23 & 81 & 86 & 87 & 88 & 92 & 93 & 94 & 145 \vspace{0.05cm}
\\%        
        \raisebox{1ex}{{\scriptsize Train}} & 
        \includegraphics[width=0.095\linewidth]{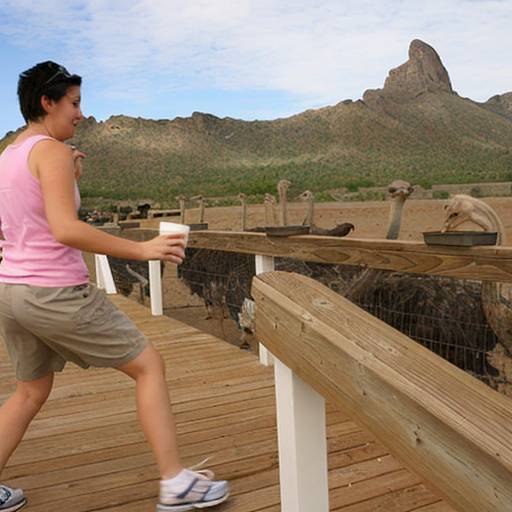}&%
        \includegraphics[width=0.095\linewidth]{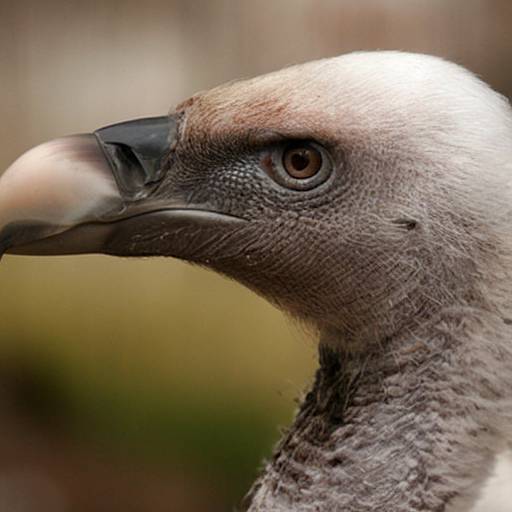}&%
        \includegraphics[width=0.095\linewidth]{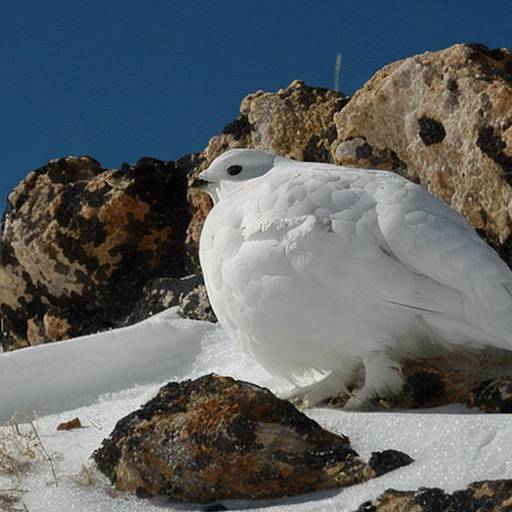}&%
        \includegraphics[width=0.095\linewidth]{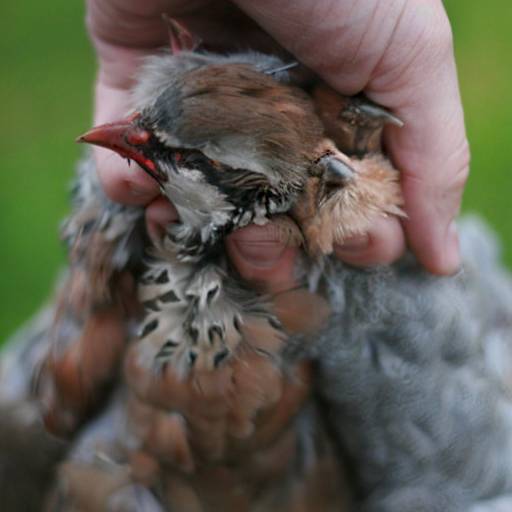}&%
        \includegraphics[width=0.095\linewidth]{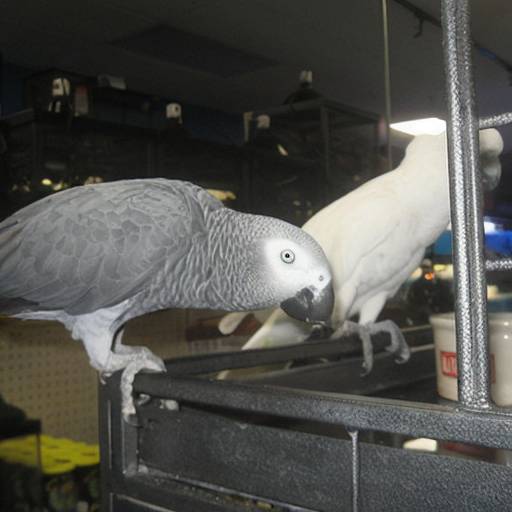}&%
        \includegraphics[width=0.095\linewidth]{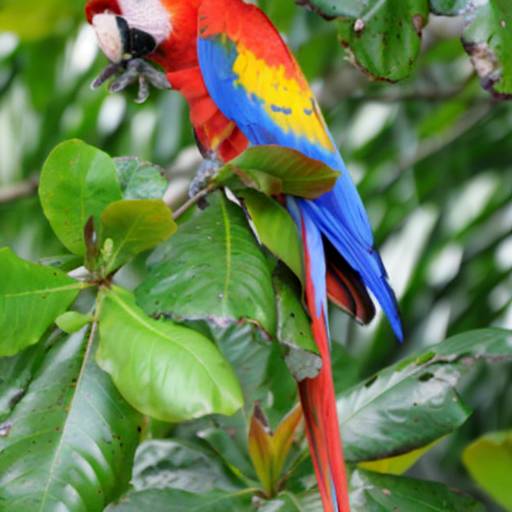}&%
        \includegraphics[width=0.095\linewidth]{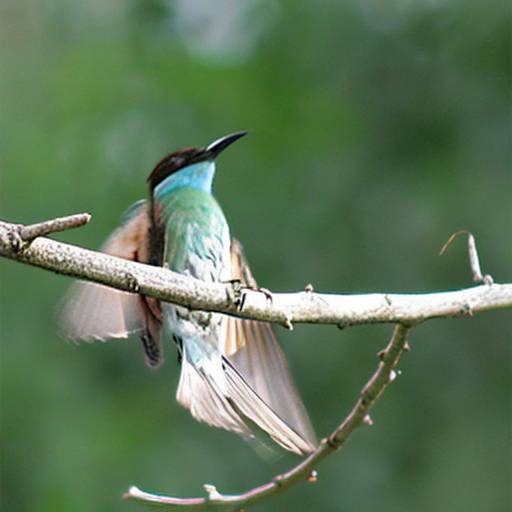}&%
        \includegraphics[width=0.095\linewidth]{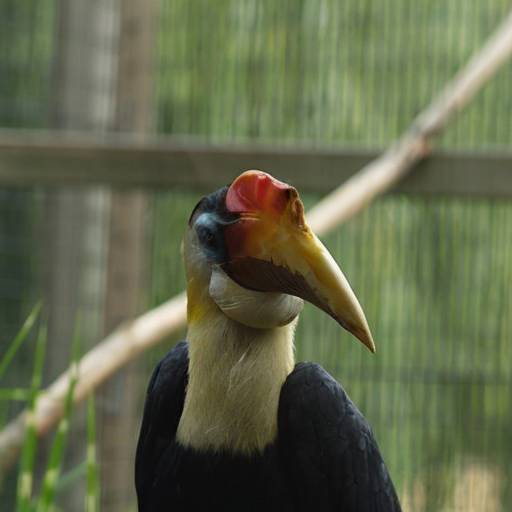}&%
        \includegraphics[width=0.095\linewidth]{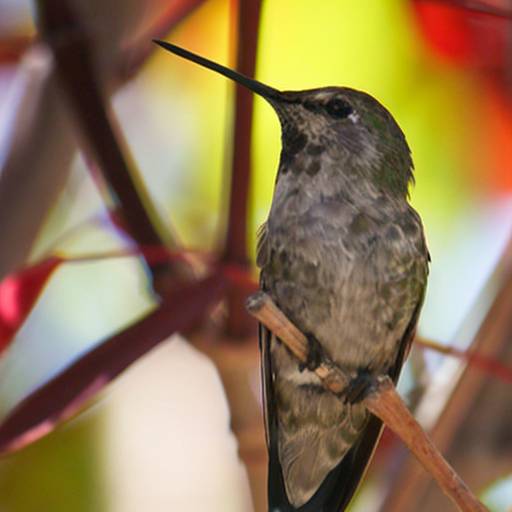}&%
        \includegraphics[width=0.095\linewidth]{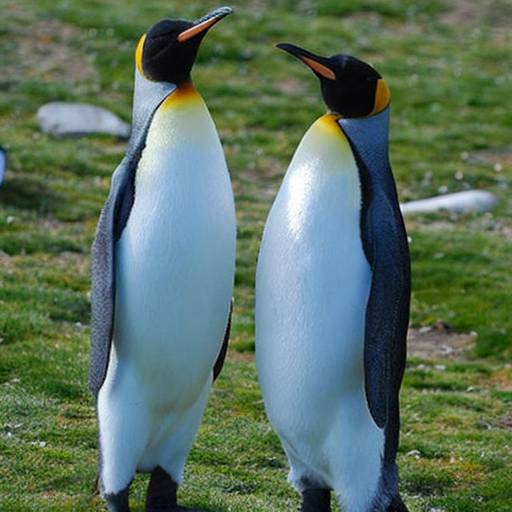}\\%

        \raisebox{0.5ex}{\scriptsize \shortstack[l]{$\pinf$ \\ $(\sigma_T = 80)$}}  & 
        \includegraphics[width=0.095\linewidth]{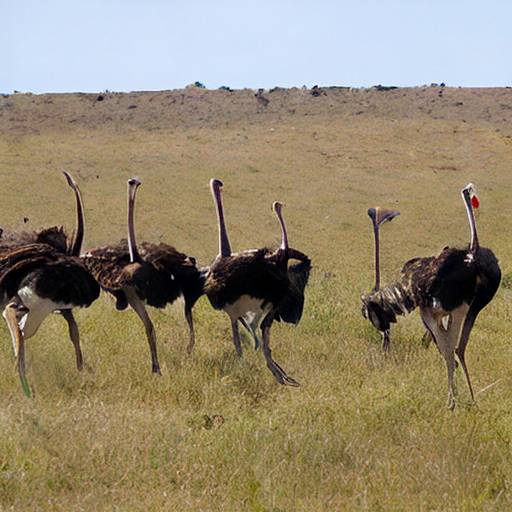}&%
        \includegraphics[width=0.095\linewidth]{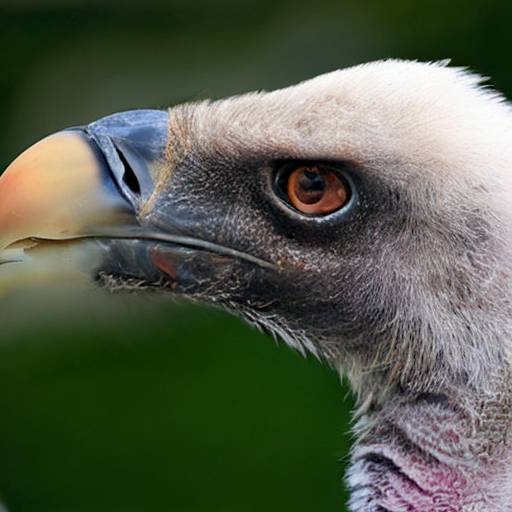}&%
        \includegraphics[width=0.095\linewidth]{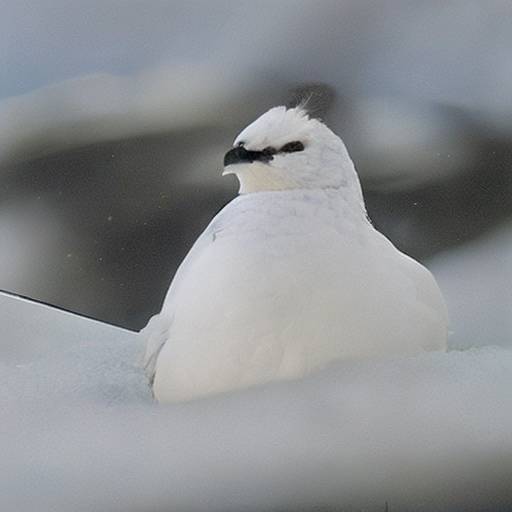}&%
        \includegraphics[width=0.095\linewidth]{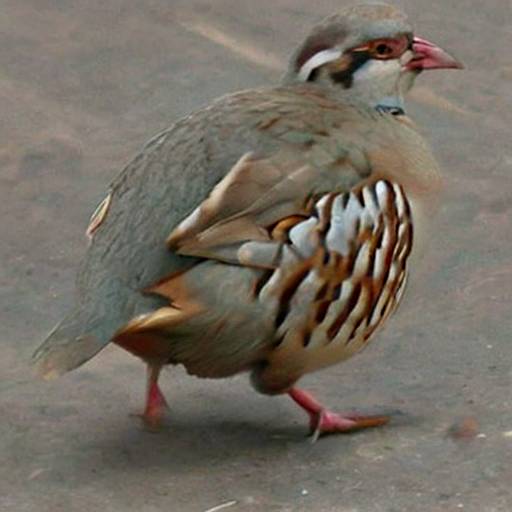}&%
        \includegraphics[width=0.095\linewidth]{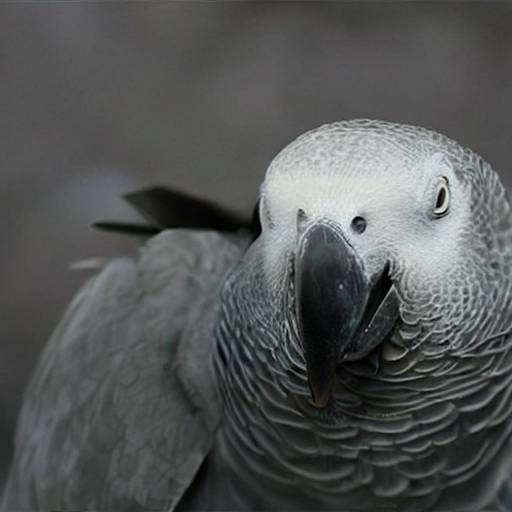}&%
        \includegraphics[width=0.095\linewidth]{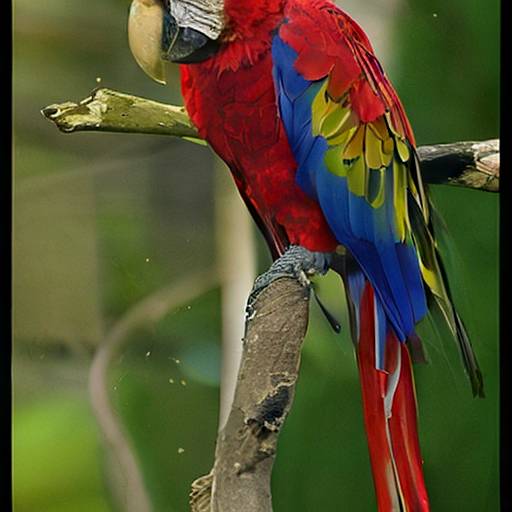}&%
        \includegraphics[width=0.095\linewidth]{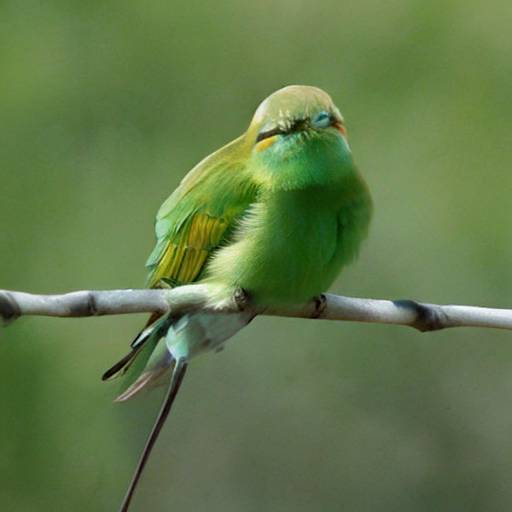}&%
        \includegraphics[width=0.095\linewidth]{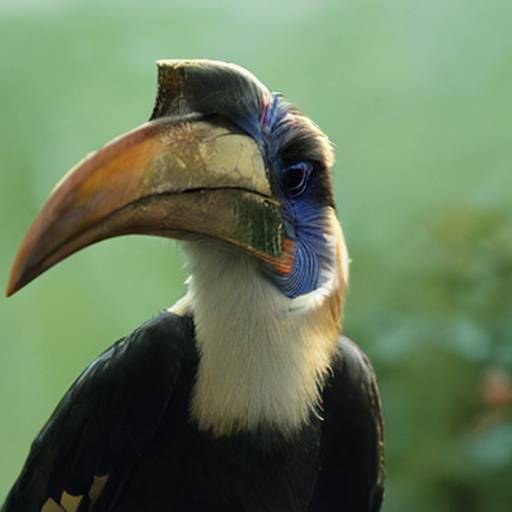}&%
        \includegraphics[width=0.095\linewidth]{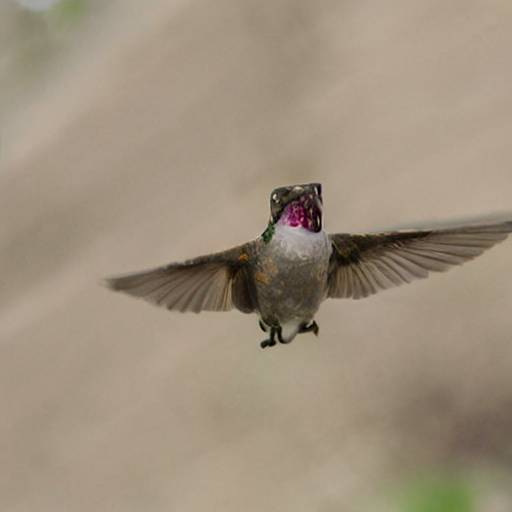}&%
        \includegraphics[width=0.095\linewidth]{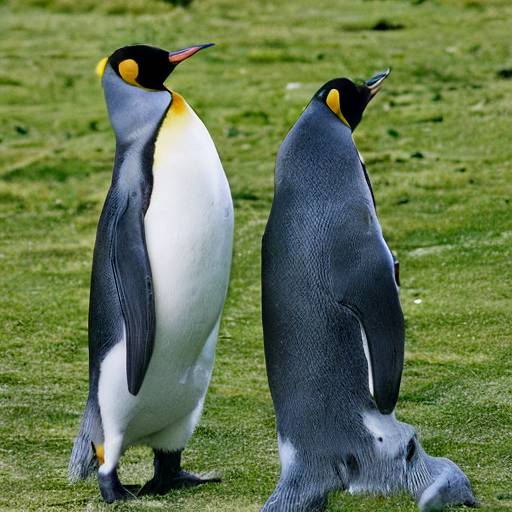}\\%
        
        \raisebox{1ex}{{\scriptsize $\discsolution{0}$} (fix)} & 
        \includegraphics[width=0.095\linewidth]{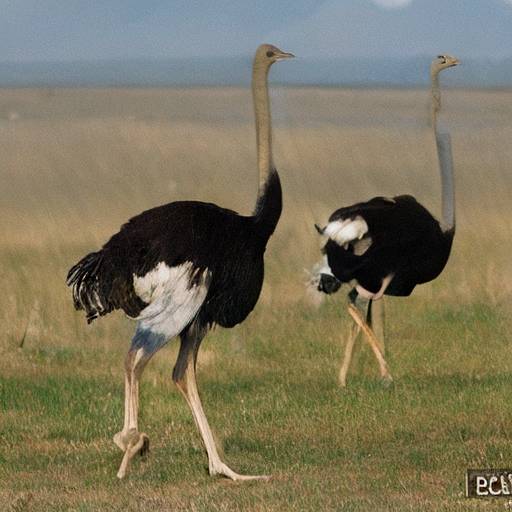}&%
        \includegraphics[width=0.095\linewidth]{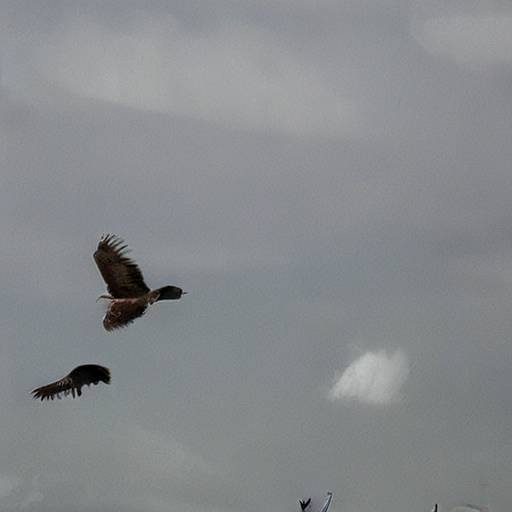}&%
        \includegraphics[width=0.095\linewidth]{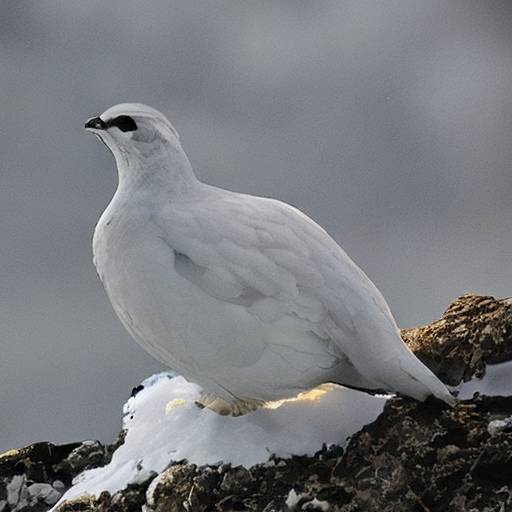}&%
        \includegraphics[width=0.095\linewidth]{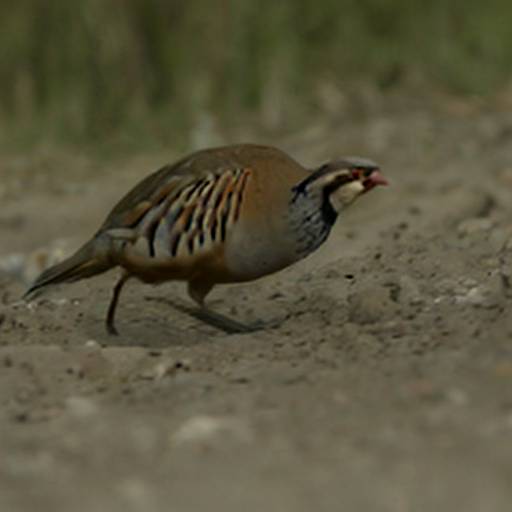}&%
        \includegraphics[width=0.095\linewidth]{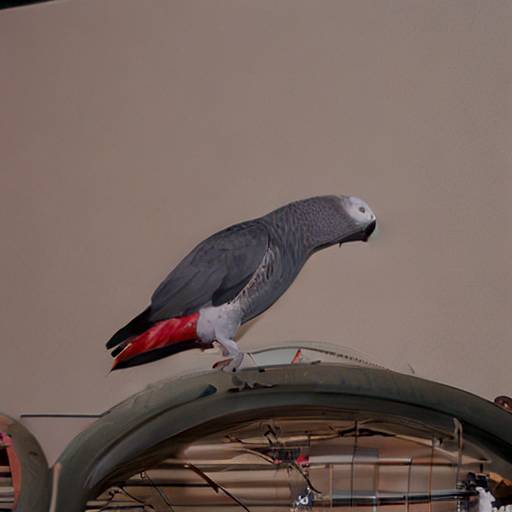}&%
        \includegraphics[width=0.095\linewidth]{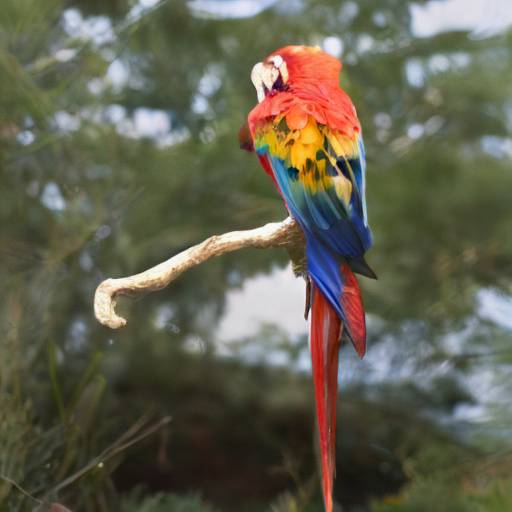}&%
        \includegraphics[width=0.095\linewidth]{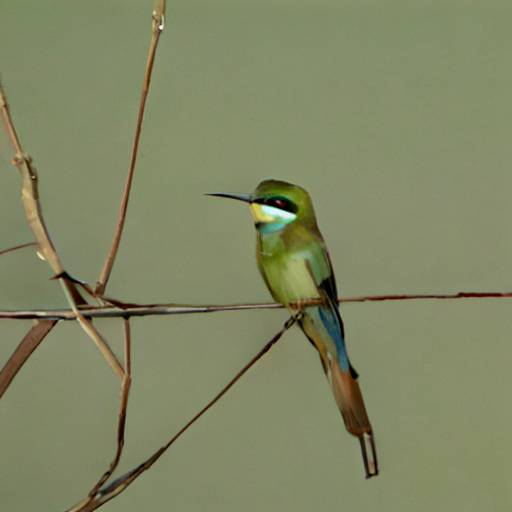}&%
        \includegraphics[width=0.095\linewidth]{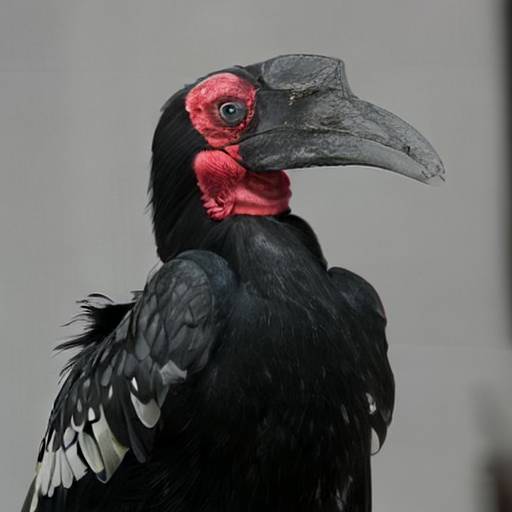}&%
        \includegraphics[width=0.095\linewidth]{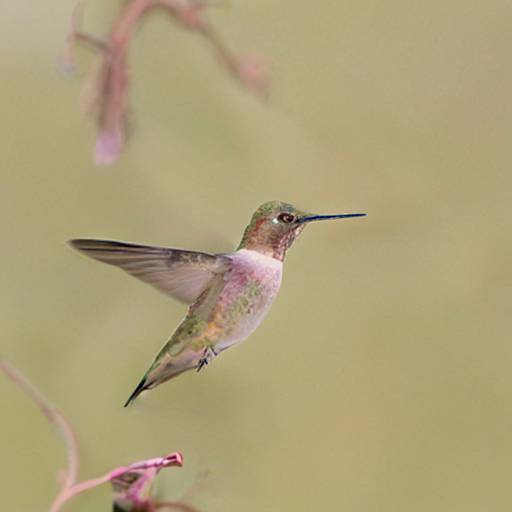}&%
        \includegraphics[width=0.095\linewidth]{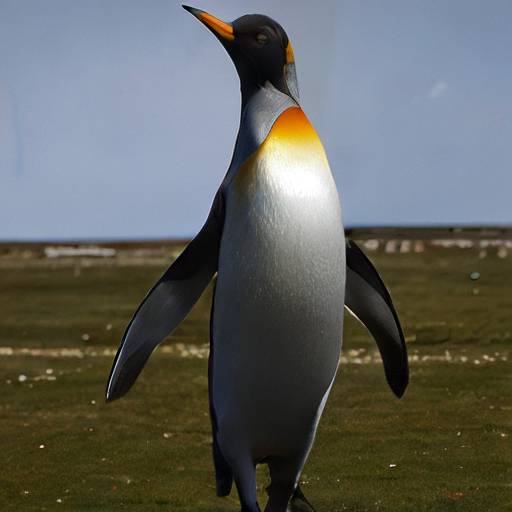}\\%
        
        \raisebox{1ex}{{\scriptsize $\discsolution{0}$ (dyn)}} & 
        \includegraphics[width=0.095\linewidth]{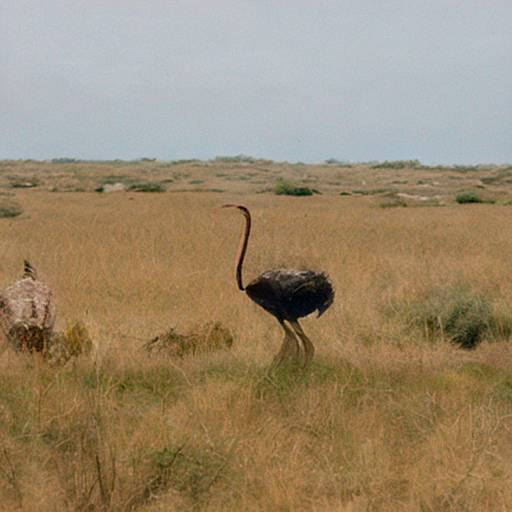}&%
        \includegraphics[width=0.095\linewidth]{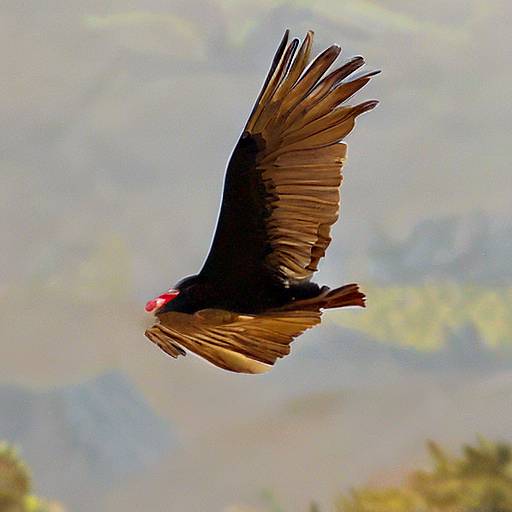}&%
        \includegraphics[width=0.095\linewidth]{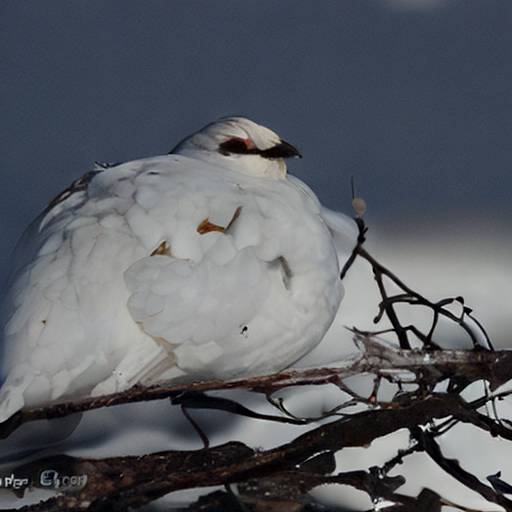}&%
        \includegraphics[width=0.095\linewidth]{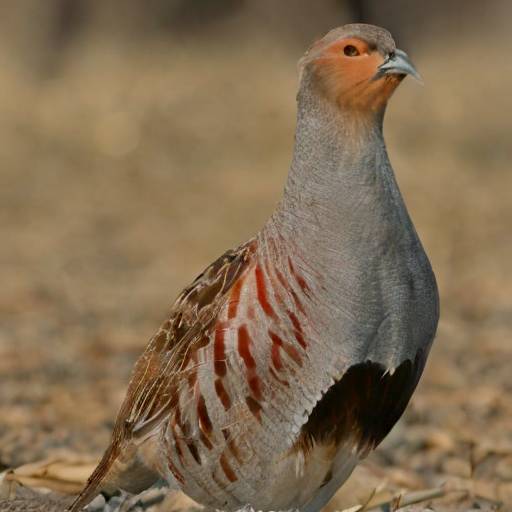}&%
        \includegraphics[width=0.095\linewidth]{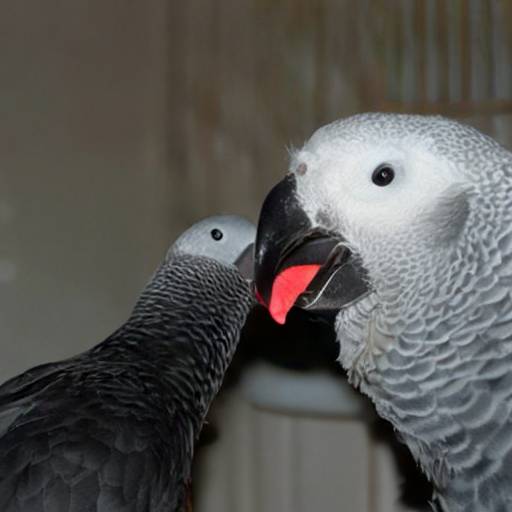}&%
        \includegraphics[width=0.095\linewidth]{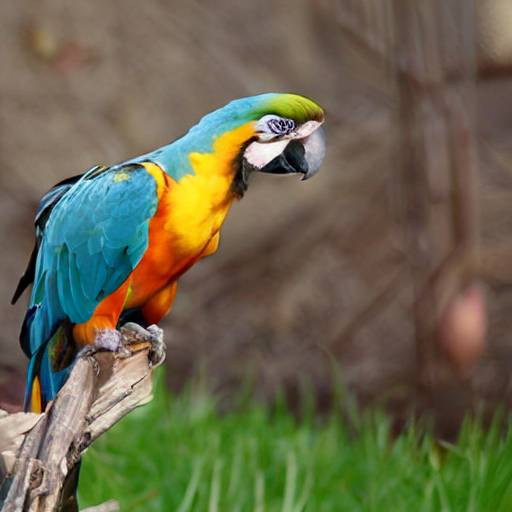}&%
        \includegraphics[width=0.095\linewidth]{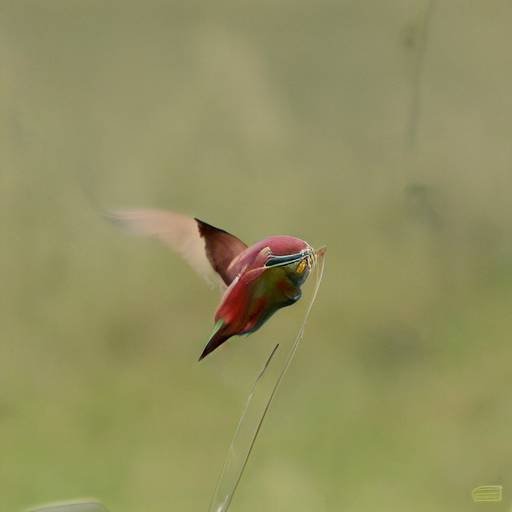}&%
        \includegraphics[width=0.093\linewidth]{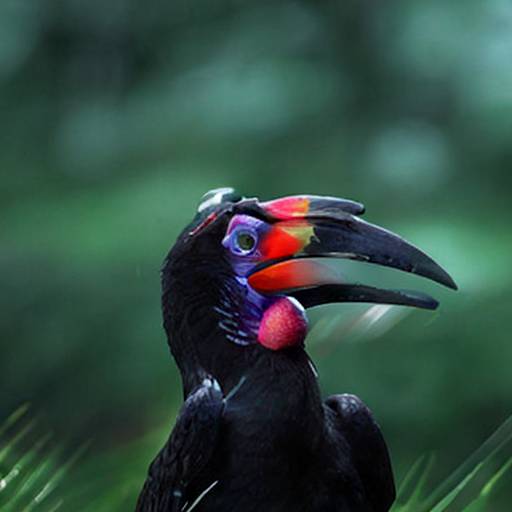}&%
        \includegraphics[width=0.095\linewidth]{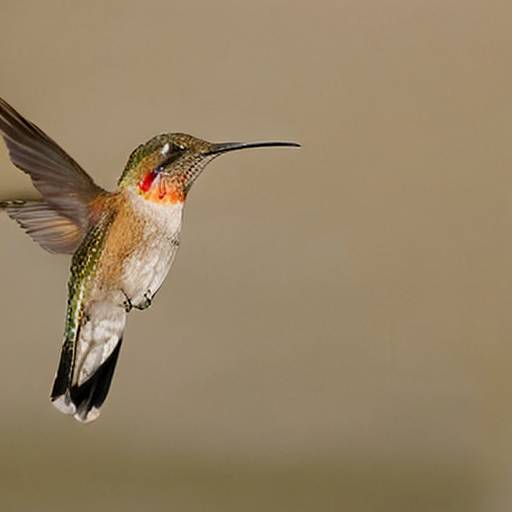}&%
        \includegraphics[width=0.095\linewidth]{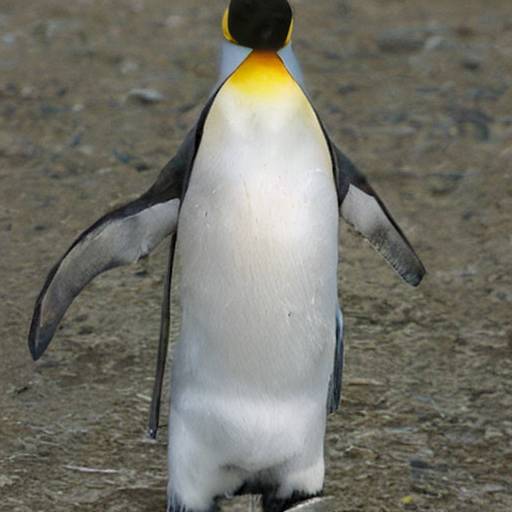}\\%

        \raisebox{1ex}{{\scriptsize $\pforward{T}$}} & 
        \includegraphics[width=0.095\linewidth]{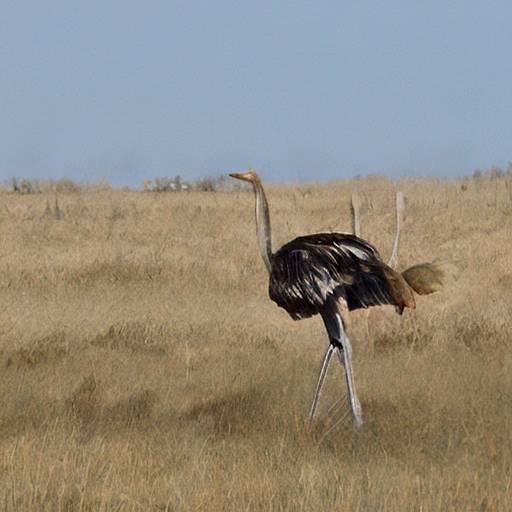}&%
        \includegraphics[width=0.095\linewidth]{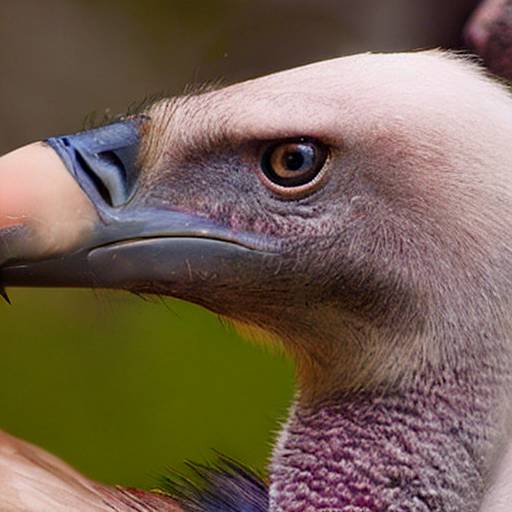}&%
        \includegraphics[width=0.095\linewidth]{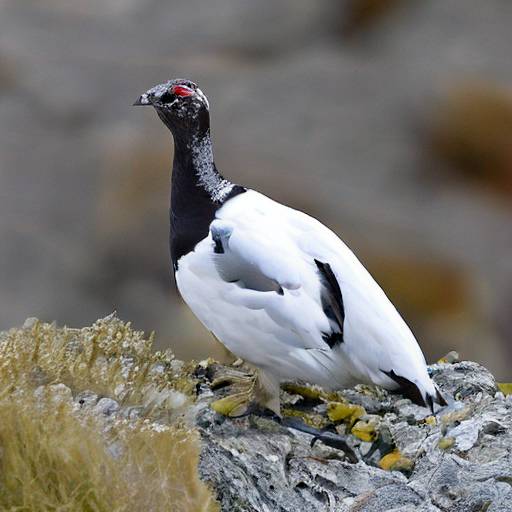}&%
        \includegraphics[width=0.095\linewidth]{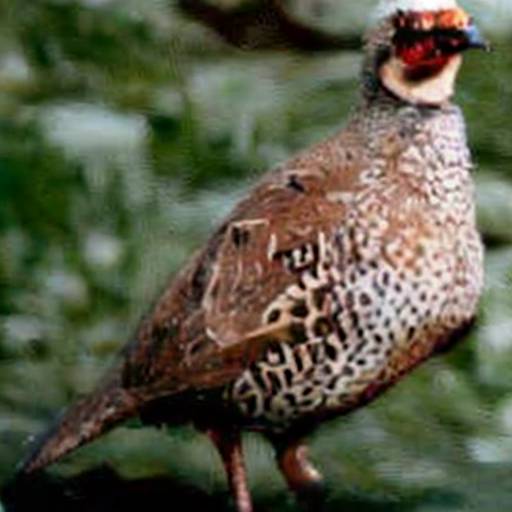}&%
        \includegraphics[width=0.095\linewidth]{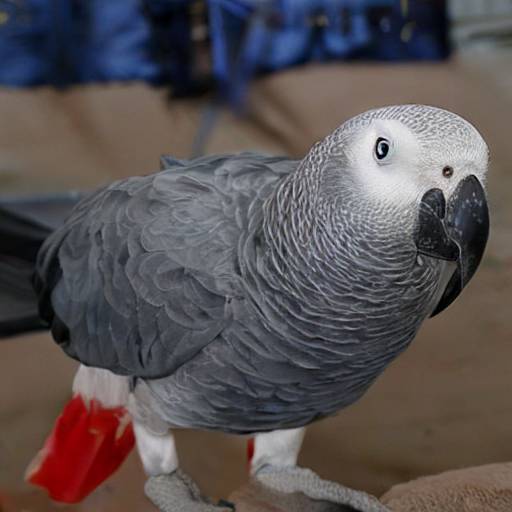}&%
        \includegraphics[width=0.095\linewidth]{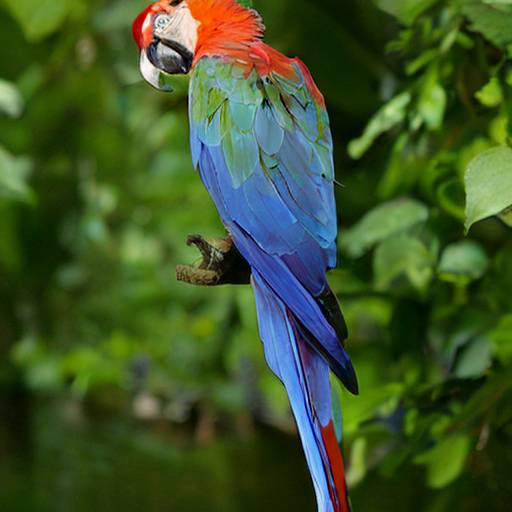}&%
        \includegraphics[width=0.095\linewidth]{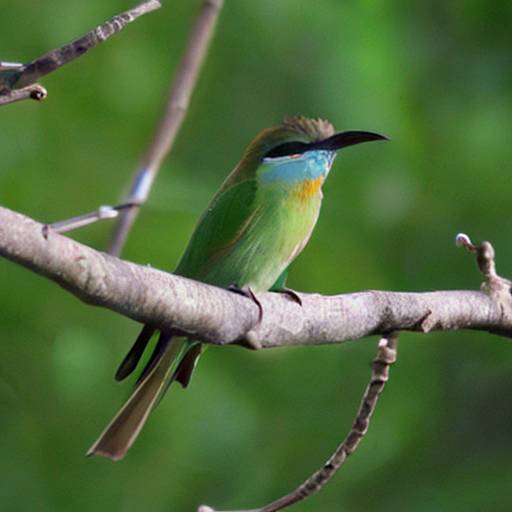}&%
        \includegraphics[width=0.095\linewidth]{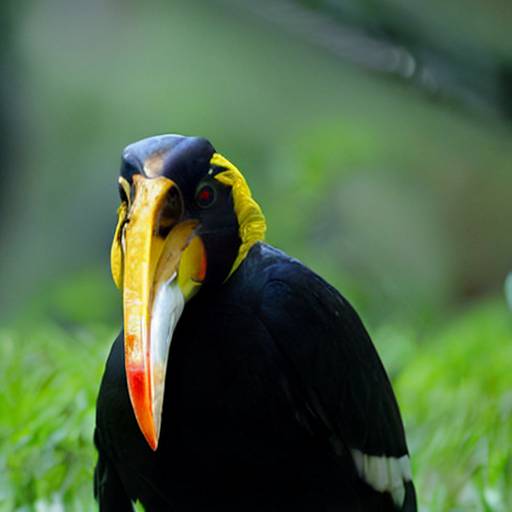}&%
        \includegraphics[width=0.095\linewidth]{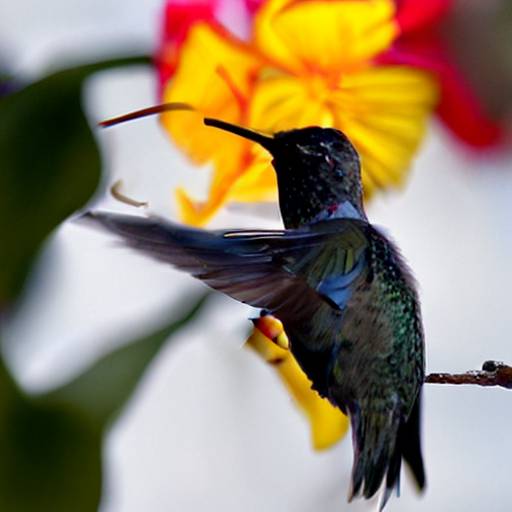}&%
        \includegraphics[width=0.095\linewidth]{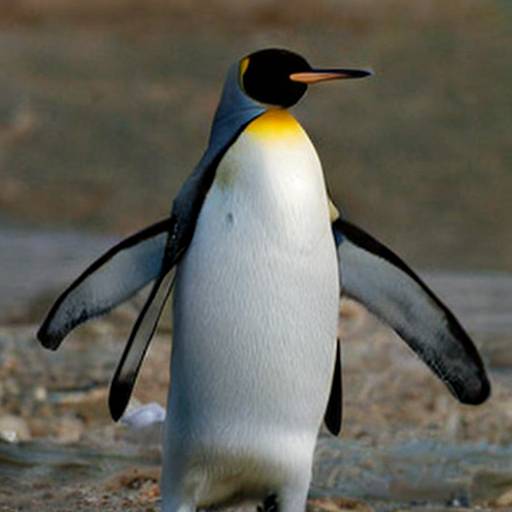}\\%
        
        \raisebox{0.5ex}{\scriptsize \shortstack[l]{$\pinf$ \\ $(\sigma_T = 7)$}}  & 
        \includegraphics[width=0.095\linewidth]{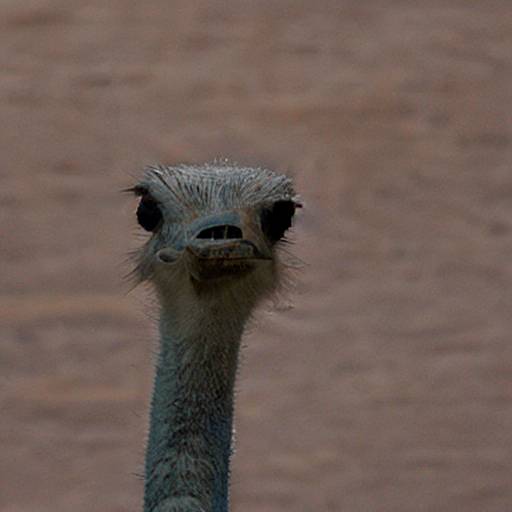}&%
        \includegraphics[width=0.095\linewidth]{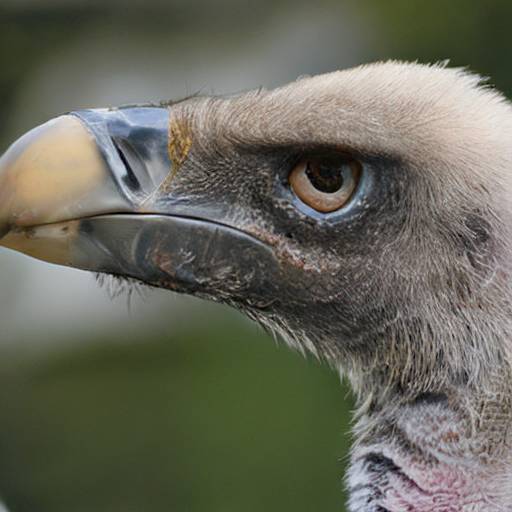}&%
        \includegraphics[width=0.095\linewidth]{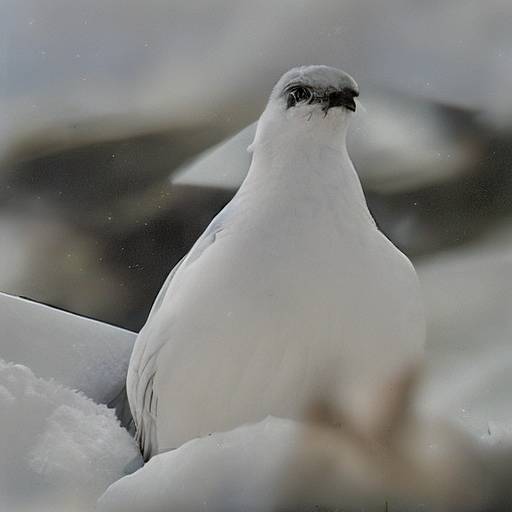}&%
        \includegraphics[width=0.095\linewidth]{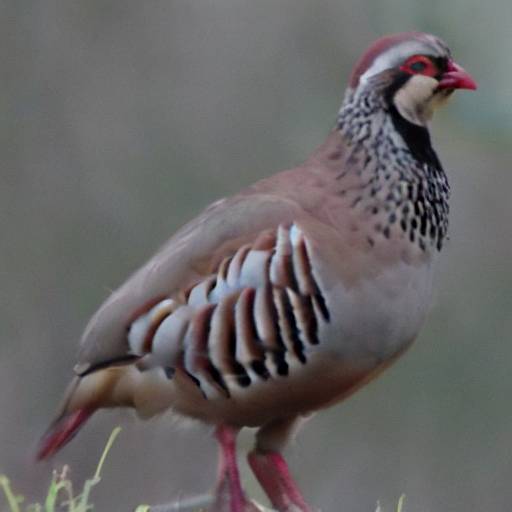}&%
        \includegraphics[width=0.095\linewidth]{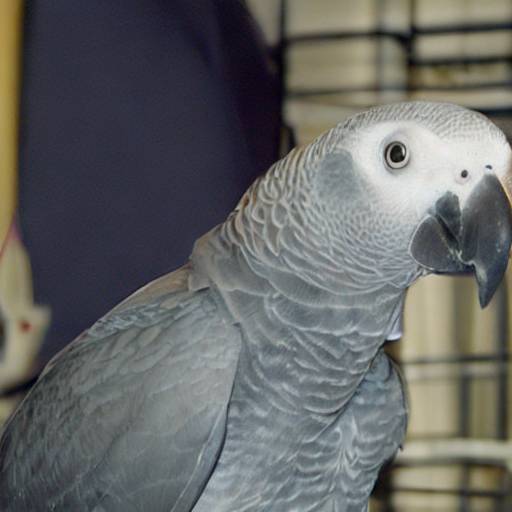}&%
        \includegraphics[width=0.095\linewidth]{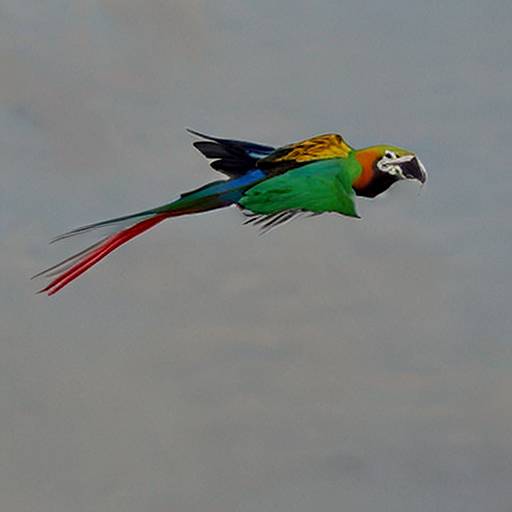}&%
        \includegraphics[width=0.095\linewidth]{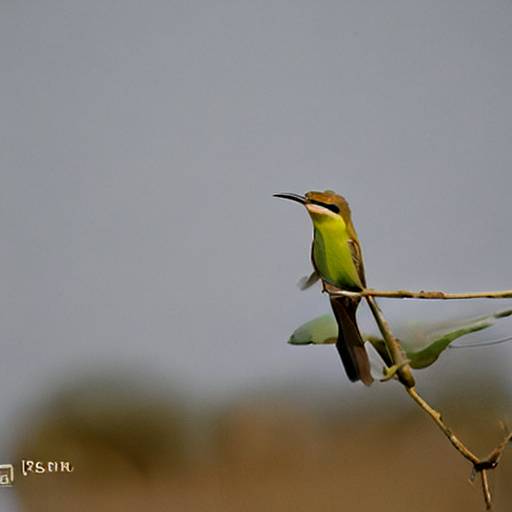}&%
        \includegraphics[width=0.095\linewidth]{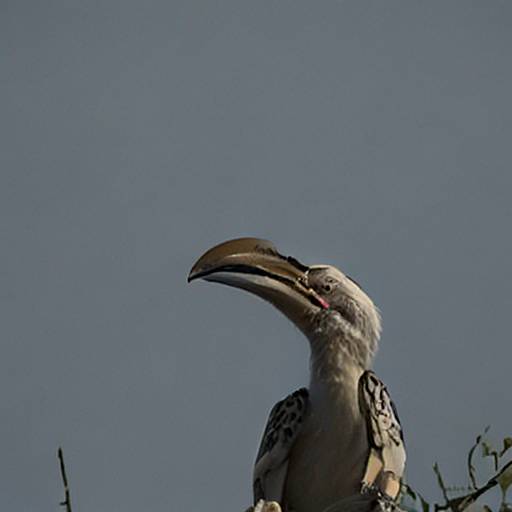}&%
        \includegraphics[width=0.095\linewidth]{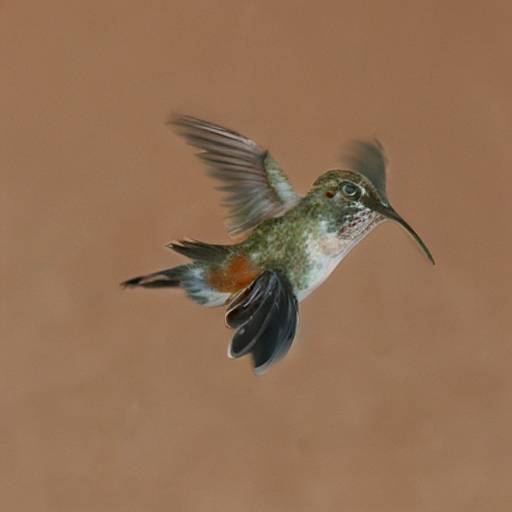}&%
        \includegraphics[width=0.095\linewidth]{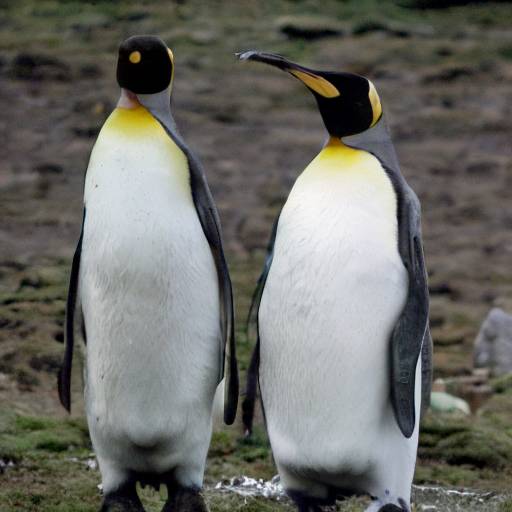}\\%
    \end{tabular}
    \caption{Nearest-neighbour grid comparison for ImageNet$_{\text{birds}}$. Each row corresponds to a different initialization strategy; columns correspond to the queried training classes.}
    \label{fig:photos:birds}
\end{figure}
\vfill

\begin{center}
\begin{table}
    \centering
    \caption{TarFlow~\cite{zhai2025normalizing} configurations: original model versus the lightweight variant we use as $\discsolution{0}$.}
    \resizebox{\textwidth}{!}{
    \begin{tabular}{|c|c|c|c|c|c|c|c|c|}
        \toprule
        Model & Channel size & \# blocks & Layers per block & Epochs & Batch size & Learning rate & Training Factor & Guidance \\
        \midrule
        Original Model & $512$ & $8$ & $8$ & $100$ & $128$ (default) & $10^{-4}$ (default) & ,  & $0$ uncond / $1$ cond \\
        \midrule
        $\discsolution{0}$, FFHQ & $128$ & $1$ & $8$ & $700$ & $500$ & $3\cdot 10^{-4}$ & $14$ & $0$ (uncond) \\
        \midrule
        $\discsolution{0}$, ImageNet$_{\text{birds}}$ & $128$ & $1$ & $8$ & $700$ & $1000$ & $3\cdot 10^{-4}$ & $14$ & $0.5$--$2$ (cond) \\
        \midrule
        $\discsolution{0}$, ImageNet$_{\text{dogs}}$ & $128$ & $1$ & $8$ & $700$ & $1000$ & $3\cdot 10^{-4}$ & $14$ & $0.5$--$2$ (cond) \\
        \bottomrule
    \end{tabular}
    }
    \label{tab:benchmark:config}
\end{table}
\end{center}

\begin{table}
\caption{Aggregated Results for FFHQ}
\centering
\begin{tabular}{lccccc}
\toprule
Model & FID & DINO FD & KID & SWD & MSW \\
\midrule
$\pinf (\sigma_T = 80)$ (standard) & 2.53 & 198.34 & 0.0013 & 0.041 $\pm$ 0.003 & 5.380 $\pm$ 0.282 \\
$\discsolution{0}$  (fixed) & 3.00 & 210.86 & 0.0019 & 0.036 $\pm$ 0.002 & 3.295 $\pm$ 0.331 \\
$\discsolution{0}$ (dynamical) & 3.54 & 221.39 & 0.0024 & 0.062 $\pm$ 0.002 & 6.646 $\pm$ 0.274 \\
$\pforward{T}$ & 2.47 & 188.70 & 0.0014 & 0.011 $\pm$ 0.001 & 0.085 $\pm$ 0.289 \\
$\pinf (\sigma_T = 7)$ & 16.80 & 322.57 & 0.0152 & 0.255 $\pm$ 0.001 & 25.138 $\pm$ 0.134 \\
\bottomrule
\end{tabular}
\label{tab:ffhq}
\end{table}

\paragraph{Results on TarFlow.}
Three messages emerge from these benchmarks. First, across all three datasets, both the empirical short-horizon initialization $\pforward{T}$ and the flow-based initialization $\discsolution{0}$ behave remarkably well, consistently matching, and on ImageNet often improving upon, the classical long-horizon baseline $\pinf\,(\sigma_T = 80)$. The empirical initialization $\pforward{T}$ attains the lowest scores throughout, confirming that whenever faithful samples of the noised target are available, they provide a near-ideal starting distribution. The flow-based variant $\discsolution{0}$ tracks $\pforward{T}$ closely and outperforms the uninformed short-horizon Gaussian $\pinf\,(\sigma_T = 7)$ across FID, KID, DinoFD, and SWD on ImageNet$_{\text{dogs}}$ and ImageNet$_{\text{birds}}$, while remaining competitive on FFHQ-64 (slightly higher FID and DinoFD, but improved SWD and MSW). The qualitative grids in \cref{fig:photos:ffhq,fig:photos:birds,fig:photos:dogs} confirm that the generated samples are diverse and do not collapse onto training images. Taken together, these results indicate that TarFlow has a strong potential as a model class for $\discsolution{0}$ within our pipeline: it succeeds in approximating, through a learned parametric model, the near-ideal regime accessed by $\pforward{T}$.
Second, as a positive side effect, the short-horizon strategy drastically reduces the number of sampling steps required to reach competitive quality. Whereas the standard long-horizon pipeline relies on $40$ steps on FFHQ-64 and $32$ steps on ImageNet-512, both $\pforward{T}$ and $\discsolution{0}$ achieve comparable, and frequently better, scores using only $20$ steps. The comparison between the short- and long-horizon Gaussian initializations further suggests that $\sigma_T = 80$ is more of a default than an optimal choice: paired with an informed prior, a shorter horizon is enough to match, or even surpass, the standard pipeline at a fraction of its computational cost.
Third, on the specific question of fixed versus dynamical training, the fixed variant of $\discsolution{0}$ tends to outperform the dynamical strategy, particularly
on FFHQ.
Overall, these results support the use of TarFlow as $\discsolution{0}$ in our pipeline. Both short-horizon variants provide a practical alternative to the standard long-horizon pipeline, delivering comparable or better quality at a substantially reduced sampling cost.
\vfill
\begin{figure}[t]
    \centering
    \small
    \setlength{\tabcolsep}{0pt}%
    \renewcommand{\arraystretch}{0}%
    
    \begin{tabular}{lcccccccccc}
        \raisebox{1ex}{{\scriptsize Train}} & 
        \includegraphics[width=0.095\linewidth]{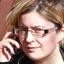}&%
        \includegraphics[width=0.095\linewidth]{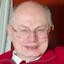}&%
        \includegraphics[width=0.095\linewidth]{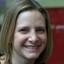}&%
        \includegraphics[width=0.095\linewidth]{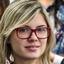}&%
        \includegraphics[width=0.095\linewidth]{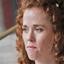}&%
        \includegraphics[width=0.095\linewidth]{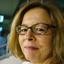}&%
        \includegraphics[width=0.095\linewidth]{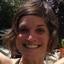}&%
        \includegraphics[width=0.095\linewidth]{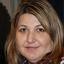}&%
        \includegraphics[width=0.095\linewidth]{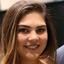}&%
        \includegraphics[width=0.095\linewidth]{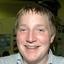}\\%

        \raisebox{0.5ex}{\scriptsize \shortstack[l]{$\pinf$ \\ $(\sigma_T = 80)$}} & 
        \includegraphics[width=0.095\linewidth]{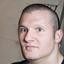}&%
        \includegraphics[width=0.095\linewidth]{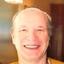}&%
        \includegraphics[width=0.095\linewidth]{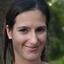}&%
        \includegraphics[width=0.095\linewidth]{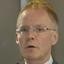}&%
        \includegraphics[width=0.095\linewidth]{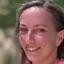}&%
        \includegraphics[width=0.095\linewidth]{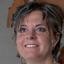}&%
        \includegraphics[width=0.095\linewidth]{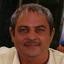}&%
        \includegraphics[width=0.095\linewidth]{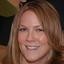}&%
        \includegraphics[width=0.095\linewidth]{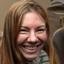}&%
        \includegraphics[width=0.095\linewidth]{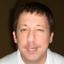}\\%
        
        \raisebox{1ex}{{\scriptsize $\discsolution{0}$ (fix)}} & 
        \includegraphics[width=0.095\linewidth]{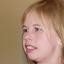}&%
        \includegraphics[width=0.095\linewidth]{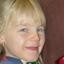}&%
        \includegraphics[width=0.095\linewidth]{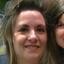}&%
        \includegraphics[width=0.095\linewidth]{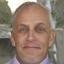}&%
        \includegraphics[width=0.095\linewidth]{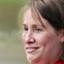}&%
        \includegraphics[width=0.095\linewidth]{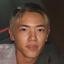}&%
        \includegraphics[width=0.095\linewidth]{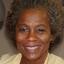}&%
        \includegraphics[width=0.095\linewidth]{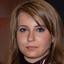}&%
        \includegraphics[width=0.095\linewidth]{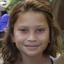}&%
        \includegraphics[width=0.095\linewidth]{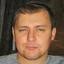}\\%
        
        \raisebox{1ex}{{\scriptsize $\discsolution{0}$ (dyn)}} & 
        \includegraphics[width=0.095\linewidth]{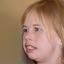}&%
        \includegraphics[width=0.095\linewidth]{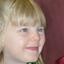}&%
        \includegraphics[width=0.095\linewidth]{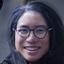}&%
        \includegraphics[width=0.095\linewidth]{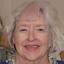}&%
        \includegraphics[width=0.095\linewidth]{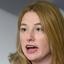}&%
        \includegraphics[width=0.095\linewidth]{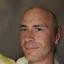}&%
        \includegraphics[width=0.095\linewidth]{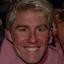}&%
        \includegraphics[width=0.095\linewidth]{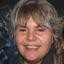}&%
        \includegraphics[width=0.095\linewidth]{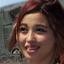}&%
        \includegraphics[width=0.095\linewidth]{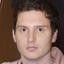}\\%
         
        \raisebox{1ex}{{\scriptsize $\pforward{T}$}} & 
        \includegraphics[width=0.095\linewidth]{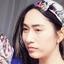}&%
        \includegraphics[width=0.095\linewidth]{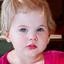}&%
        \includegraphics[width=0.095\linewidth]{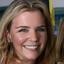}&%
        \includegraphics[width=0.095\linewidth]{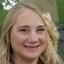}&%
        \includegraphics[width=0.095\linewidth]{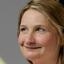}&%
        \includegraphics[width=0.095\linewidth]{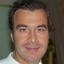}&%
        \includegraphics[width=0.095\linewidth]{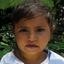}&%
        \includegraphics[width=0.095\linewidth]{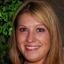}&%
        \includegraphics[width=0.095\linewidth]{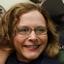}&%
        \includegraphics[width=0.095\linewidth]{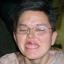}\\%
        
        \raisebox{0.5ex}{\scriptsize \shortstack[l]{$\pinf$ \\ $(\sigma_T = 7)$}}  & 
        \includegraphics[width=0.095\linewidth]{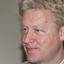}&%
        \includegraphics[width=0.095\linewidth]{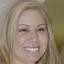}&%
        \includegraphics[width=0.095\linewidth]{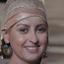}&%
        \includegraphics[width=0.095\linewidth]{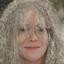}&%
        \includegraphics[width=0.095\linewidth]{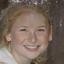}&%
        \includegraphics[width=0.095\linewidth]{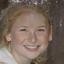}&%
        \includegraphics[width=0.095\linewidth]{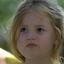}&%
        \includegraphics[width=0.095\linewidth]{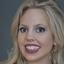}&%
        \includegraphics[width=0.095\linewidth]{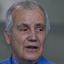}&%
        \includegraphics[width=0.095\linewidth]{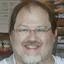}\\%
    \end{tabular}
    \caption{Nearest-neighbour grid comparison for FFHQ-64. Each row corresponds to a different initialization strategy.}
    \label{fig:photos:ffhq}
\end{figure}
\vfill

\begin{table}
    \caption{Aggregated Results for ImageNet$_{\text{dogs}}$}
\label{tab:in_dogs}
\centering
\begin{tabular}{lccccc}
\toprule
Model & FID & DINO FD & KID & SWD & MSW \\
\midrule
$\pinf (\sigma_T = 80)$ (standard) & 10.46 & 95.30 & 0.0050 & 0.040 $\pm$ 0.001 & 3.739 $\pm$ 0.088 \\
$\discsolution{0}$ - CFG$_{\text{flow}} = 0.5$  (fixed)  & 5.53 & 70.35 & 0.0022 & 0.029 $\pm$ 0.001 & 2.911 $\pm$ 0.073 \\
$\discsolution{0}$ - CFG$_{\text{flow}} = 1.0$  (fixed)  & 5.25 & 69.59 & 0.0021 & 0.034 $\pm$ 0.001 & 3.034 $\pm$ 0.071 \\
$\discsolution{0}$ - CFG$_{\text{flow}} = 1.5$  (fixed)  & 4.76 & 70.36 & 0.0018 & 0.041 $\pm$ 0.001 & 4.359 $\pm$ 0.171 \\
$\discsolution{0}$ - CFG$_{\text{flow}} = 2.0$  (fixed)  & 4.25 & 71.51 & 0.0014 & 0.050 $\pm$ 0.001 & 6.146 $\pm$ 0.175 \\
$\discsolution{0}$ - CFG$_{\text{flow}} = 0.5$ (dynamical) & 6.66 & 73.08 & 0.0029 & 0.033 $\pm$ 0.001 & 2.810 $\pm$ 0.068 \\
$\discsolution{0}$ - CFG$_{\text{flow}} = 1.0$ (dynamical) & 7.44 & 76.88 & 0.0033 & 0.034 $\pm$ 0.001 & 2.771 $\pm$ 0.058 \\
$\discsolution{0}$ - CFG$_{\text{flow}} = 1.5$ (dynamical) & 8.10 & 81.47 & 0.0036 & 0.035 $\pm$ 0.000 & 2.882 $\pm$ 0.054 \\
$\discsolution{0}$ - CFG$_{\text{flow}} = 2.0$ (dynamical) & 8.69 & 87.11 & 0.0039 & 0.038 $\pm$ 0.000 & 3.816 $\pm$ 0.060 \\
$\pforward{T}$ & 6.55 & 68.93 & 0.0030 & 0.024 $\pm$ 0.000 & 0.376 $\pm$ 0.665 \\
$\pinf (\sigma_T = 7)$& 7.20 & 85.17 & 0.0028 & 0.034 $\pm$ 0.000 & 7.287 $\pm$ 0.072 \\
\bottomrule
\end{tabular}
\end{table}

\begin{figure}[t]
    \centering
    \small
    \setlength{\tabcolsep}{0pt}%
    \renewcommand{\arraystretch}{0}%
    \begin{tabular}{lcccccccccc}
        & 157 & 196 & 206 & 219 & 225 & 228 & 234 & 235 & 236 & 268 \vspace{0.05cm}
\\%        
        \raisebox{1ex}{{\scriptsize Train}} & 
        \includegraphics[width=0.095\linewidth]{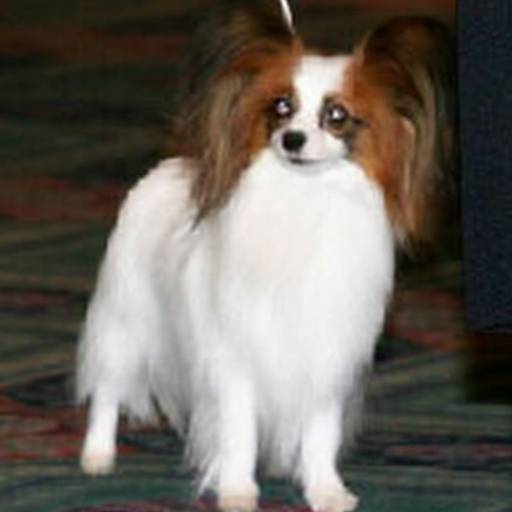}&%
        \includegraphics[width=0.095\linewidth]{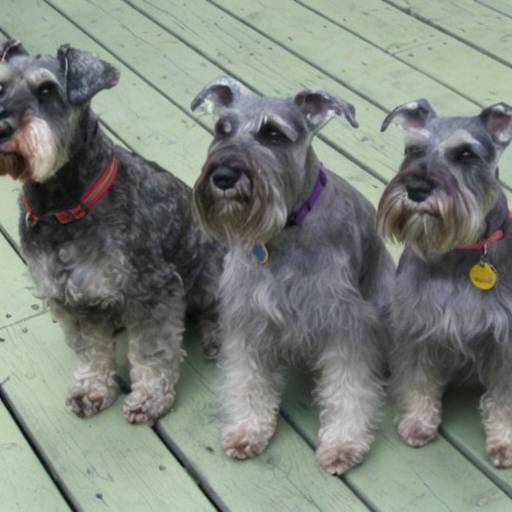}&%
        \includegraphics[width=0.095\linewidth]{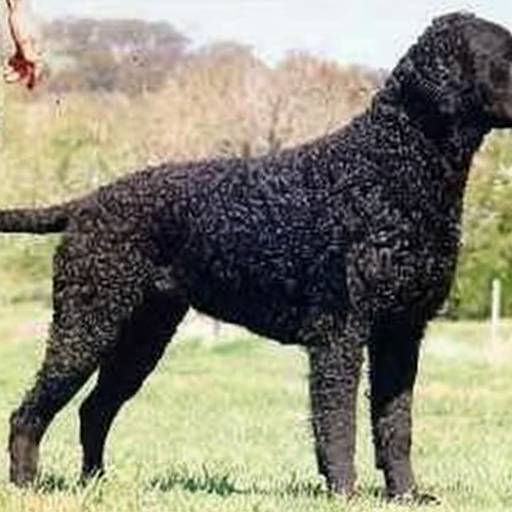}&%
        \includegraphics[width=0.095\linewidth]{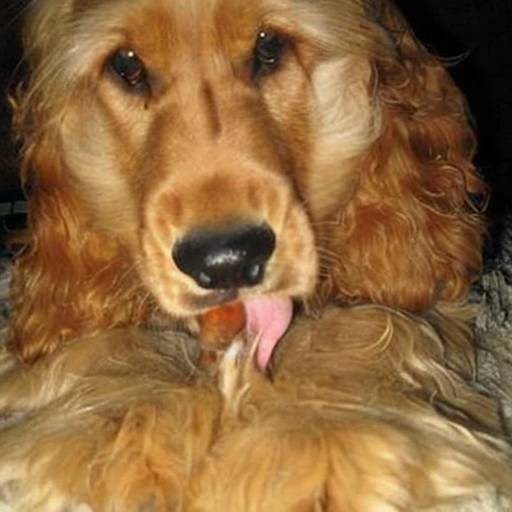}&%
        \includegraphics[width=0.095\linewidth]{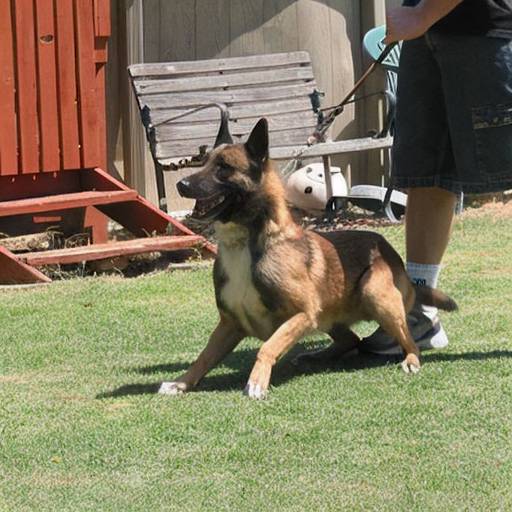}&%
        \includegraphics[width=0.095\linewidth]{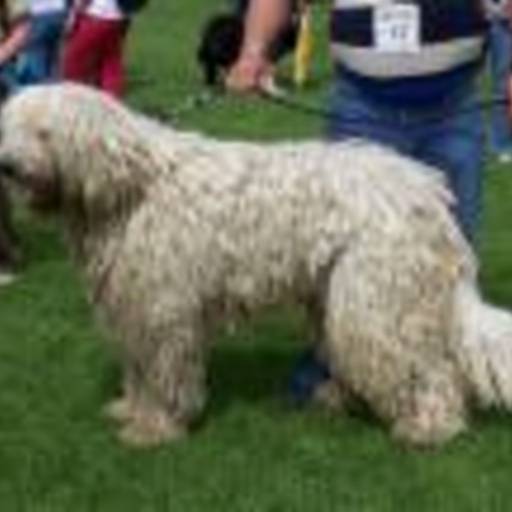}&%
        \includegraphics[width=0.095\linewidth]{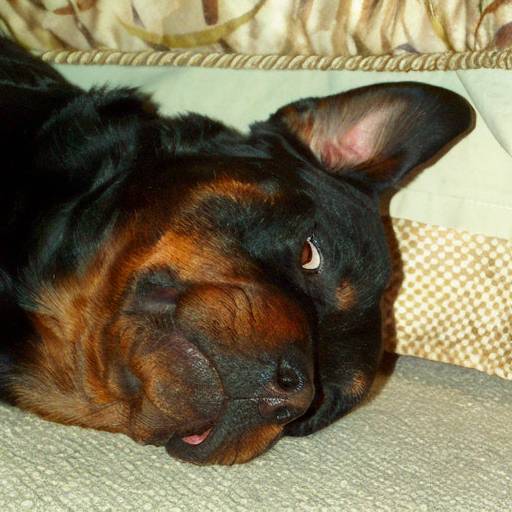}&%
        \includegraphics[width=0.095\linewidth]{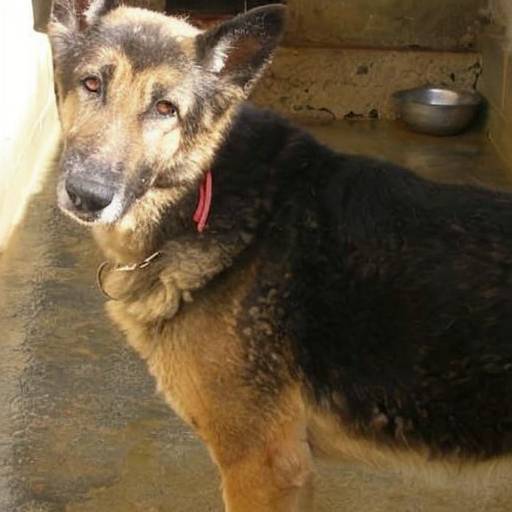}&%
        \includegraphics[width=0.095\linewidth]{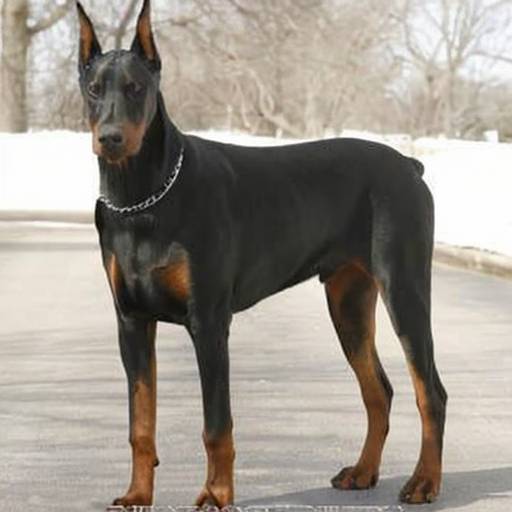}&%
        \includegraphics[width=0.095\linewidth]{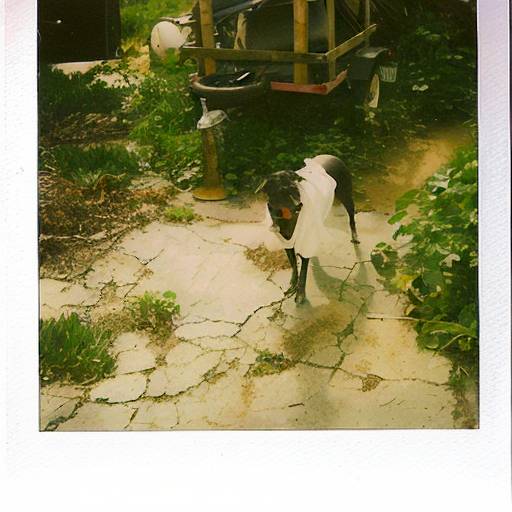}\\%

        \raisebox{0.5ex}{\scriptsize \shortstack[l]{$\pinf$ \\ $(\sigma_T = 80)$}} &
        \includegraphics[width=0.095\linewidth]{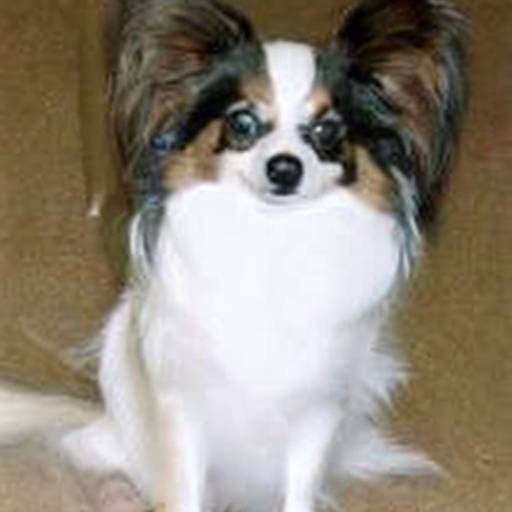}&%
        \includegraphics[width=0.095\linewidth]{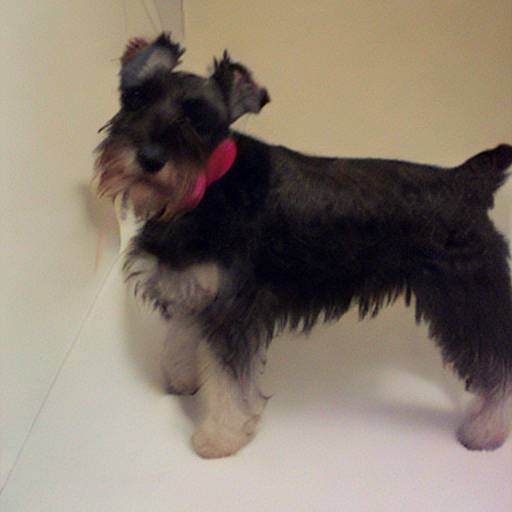}&%
        \includegraphics[width=0.095\linewidth]{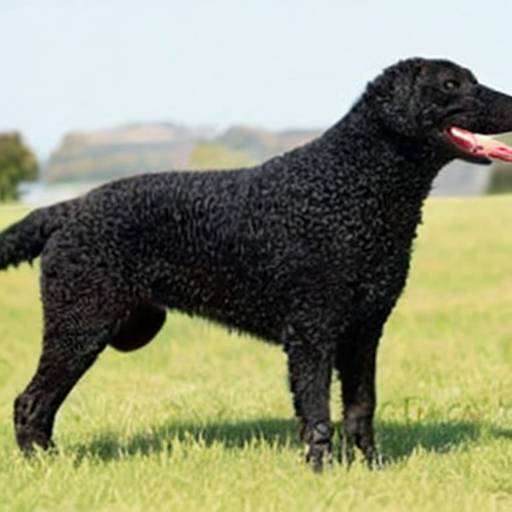}&%
        \includegraphics[width=0.095\linewidth]{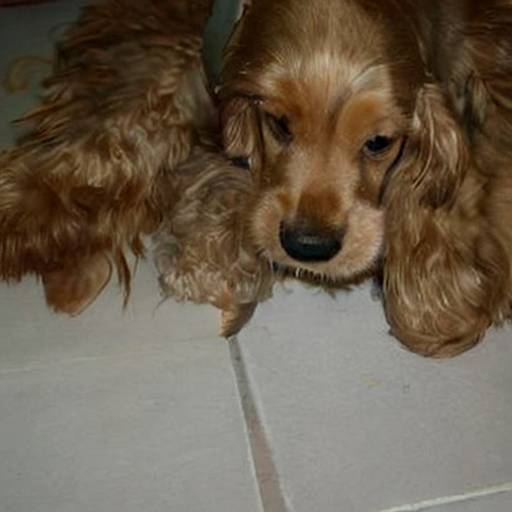}&%
        \includegraphics[width=0.095\linewidth]{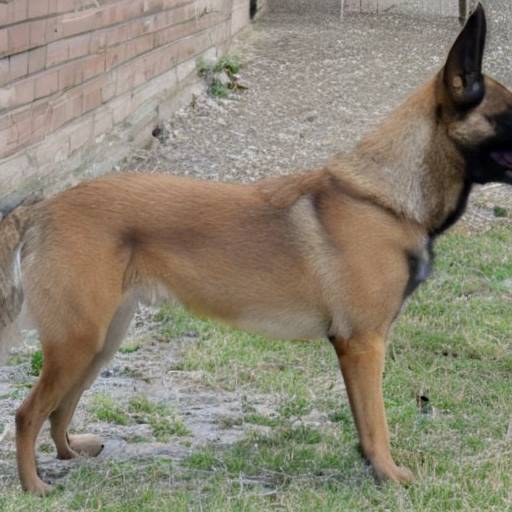}&%
        \includegraphics[width=0.095\linewidth]{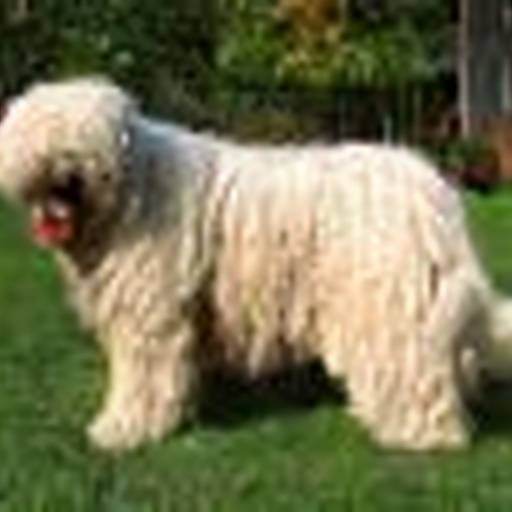}&%
        \includegraphics[width=0.095\linewidth]{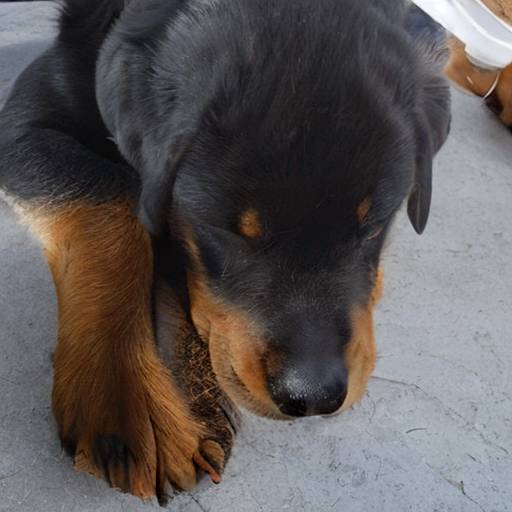}&%
        \includegraphics[width=0.095\linewidth]{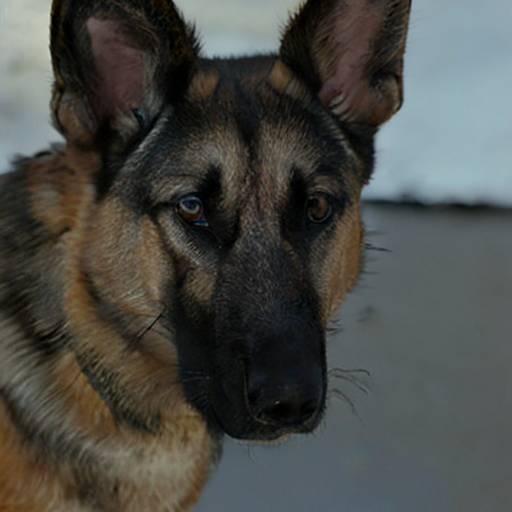}&%
        \includegraphics[width=0.095\linewidth]{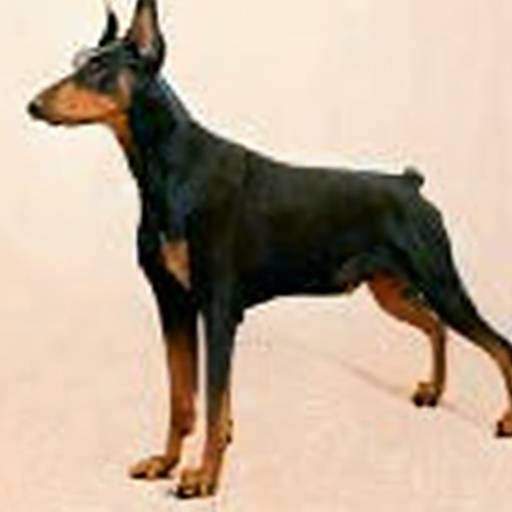}&%
        \includegraphics[width=0.095\linewidth]{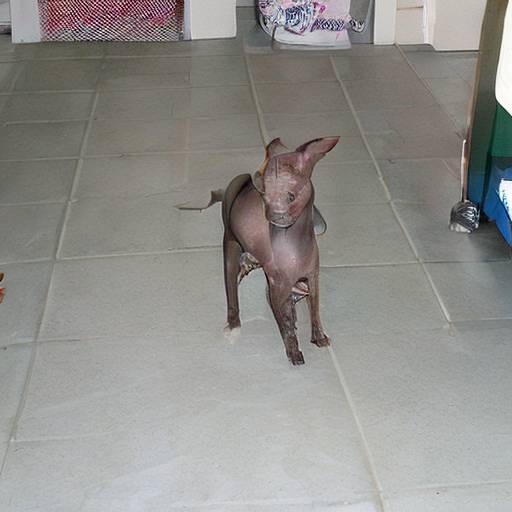}\\%
        
        \raisebox{1ex}{{\scriptsize $\discsolution{0}$ (fix)}} & 
        \includegraphics[width=0.095\linewidth]{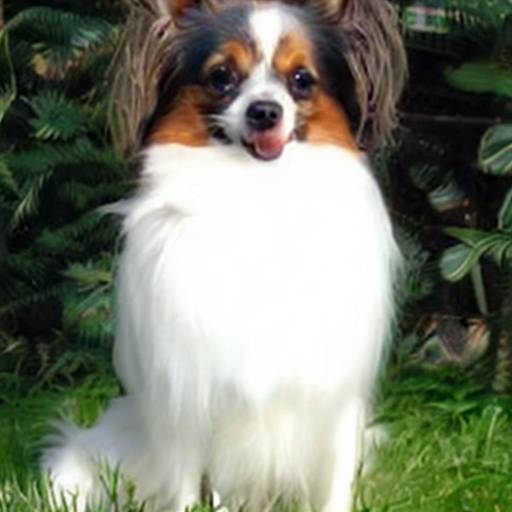}&%
        \includegraphics[width=0.095\linewidth]{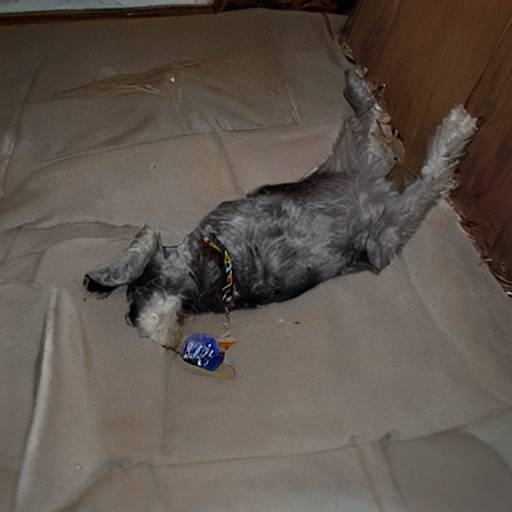}&%
        \includegraphics[width=0.095\linewidth]{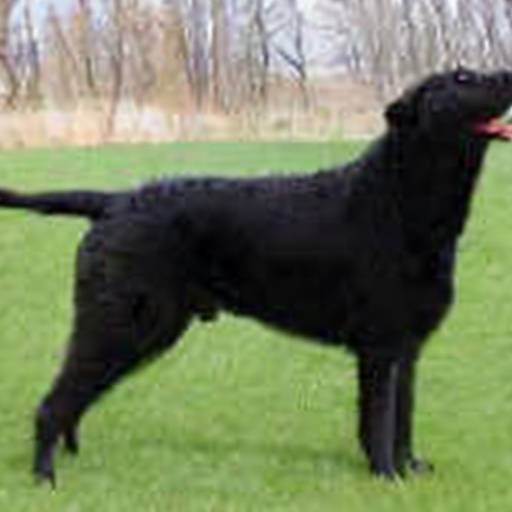}&%
        \includegraphics[width=0.095\linewidth]{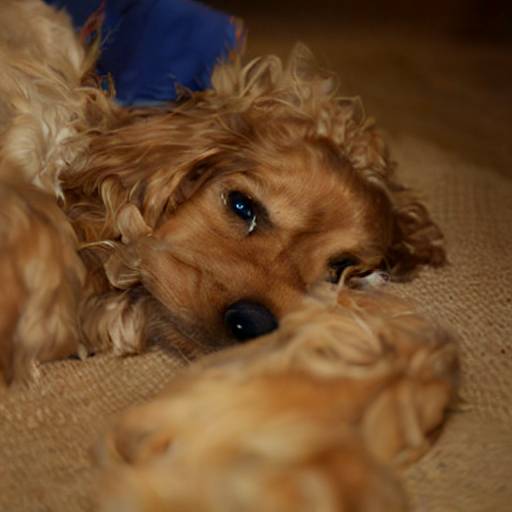}&%
        \includegraphics[width=0.095\linewidth]{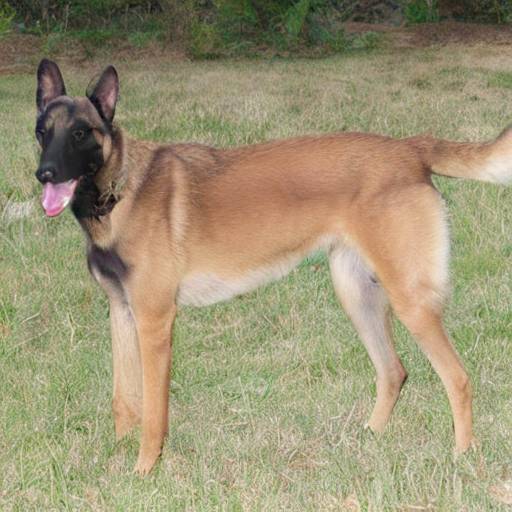}&%
        \includegraphics[width=0.095\linewidth]{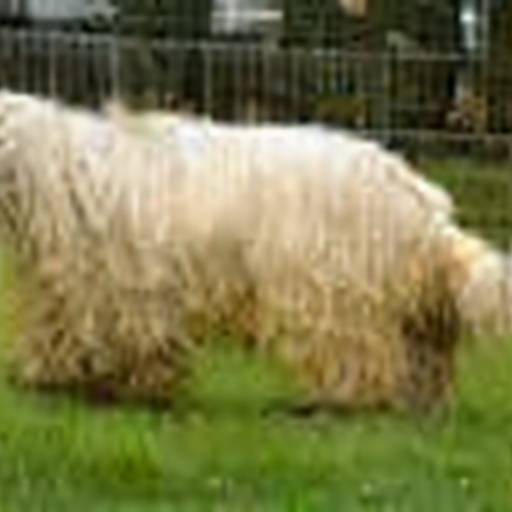}&%
        \includegraphics[width=0.095\linewidth]{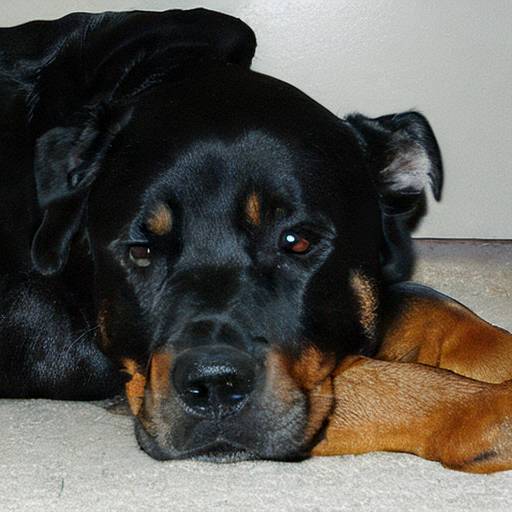}&%
        \includegraphics[width=0.095\linewidth]{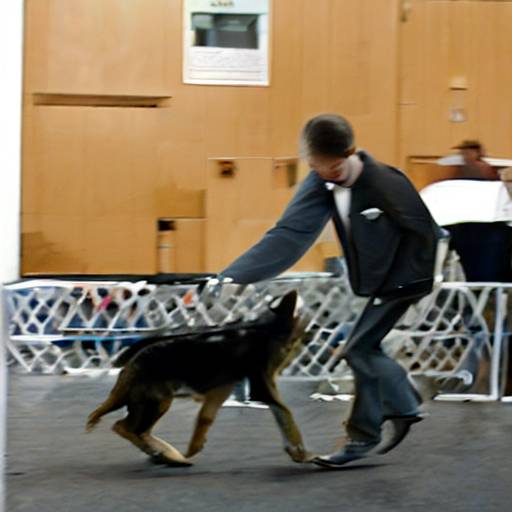}&%
        \includegraphics[width=0.095\linewidth]{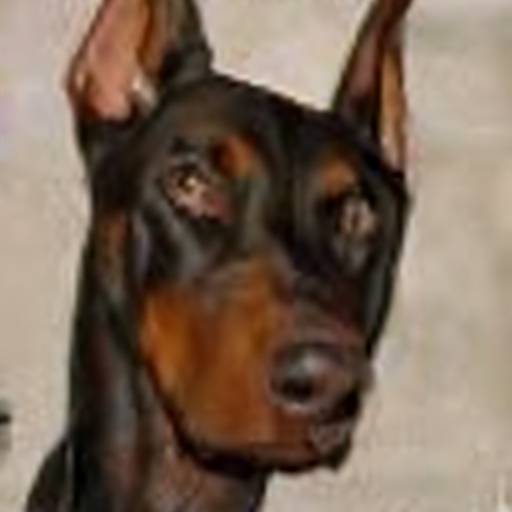}&%
        \includegraphics[width=0.095\linewidth]{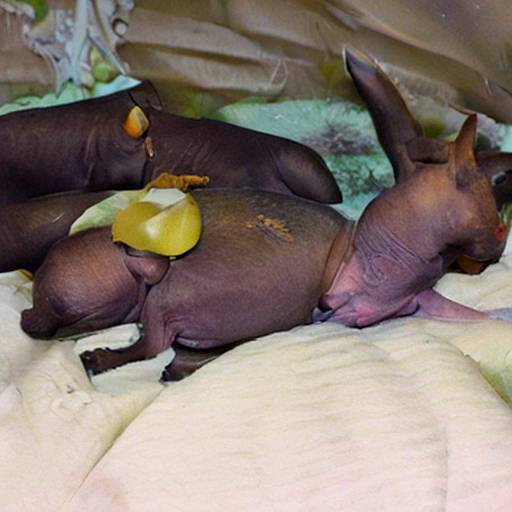}\\%
        
        \raisebox{1ex}{{\scriptsize $\discsolution{0}$ (dyn)}} & 
        \includegraphics[width=0.095\linewidth]{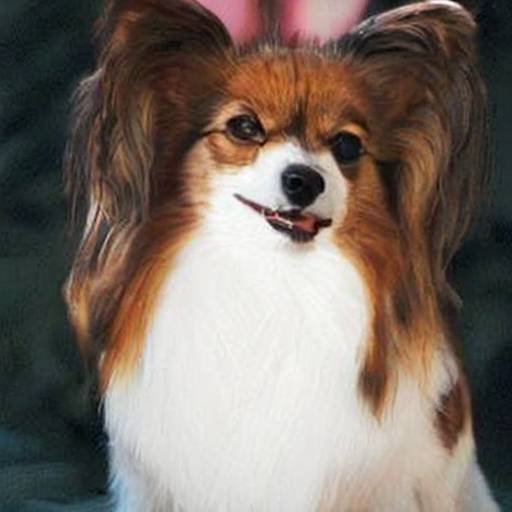}&%
        \includegraphics[width=0.095\linewidth]{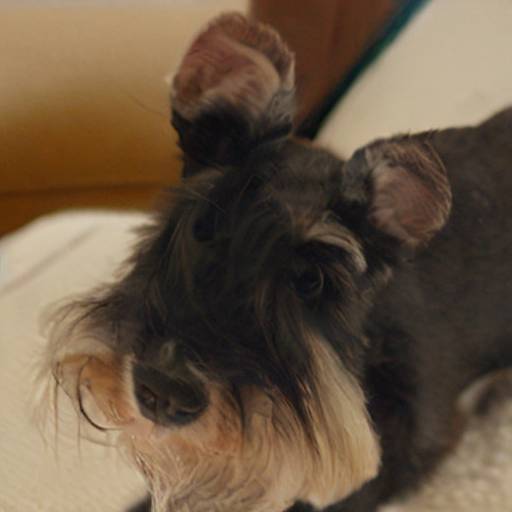}&%
        \includegraphics[width=0.095\linewidth]{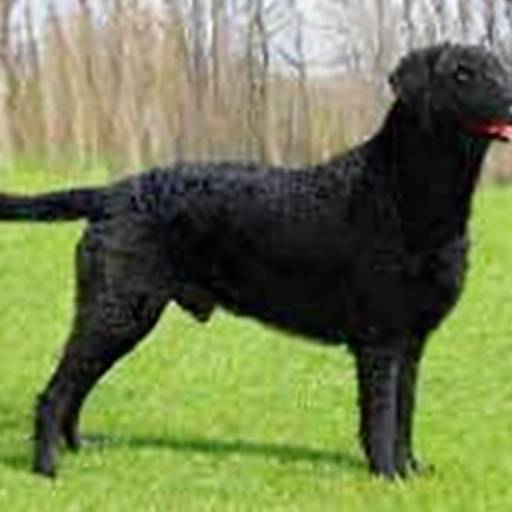}&%
        \includegraphics[width=0.095\linewidth]{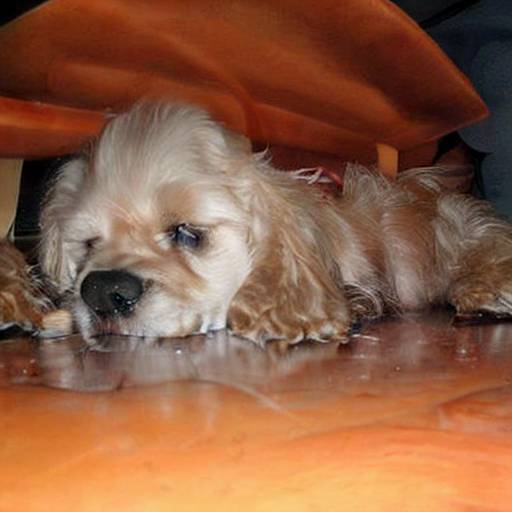}&%
        \includegraphics[width=0.095\linewidth]{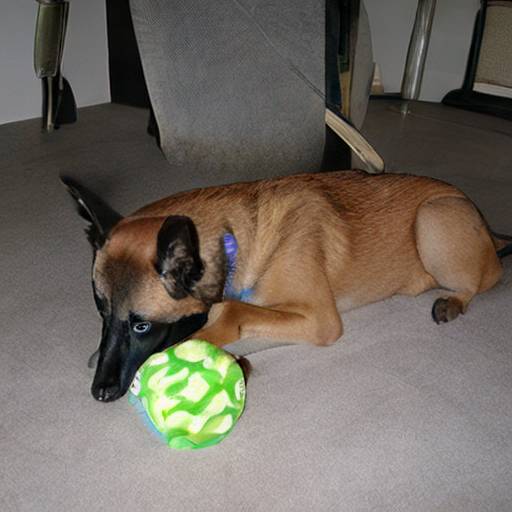}&%
        \includegraphics[width=0.095\linewidth]{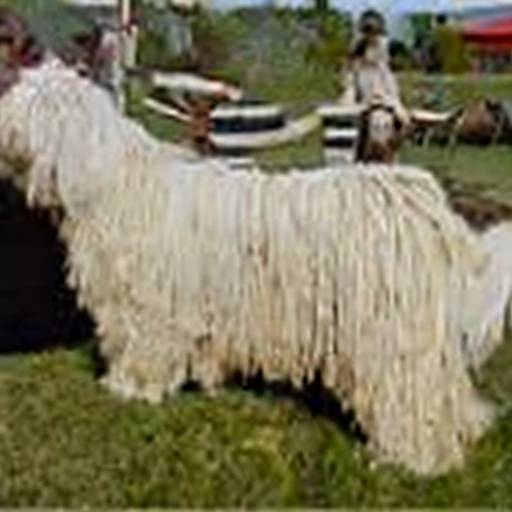}&%
        \includegraphics[width=0.095\linewidth]{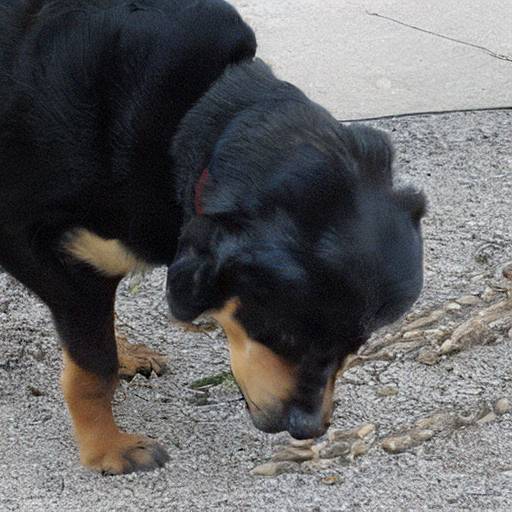}&%
        \includegraphics[width=0.093\linewidth]{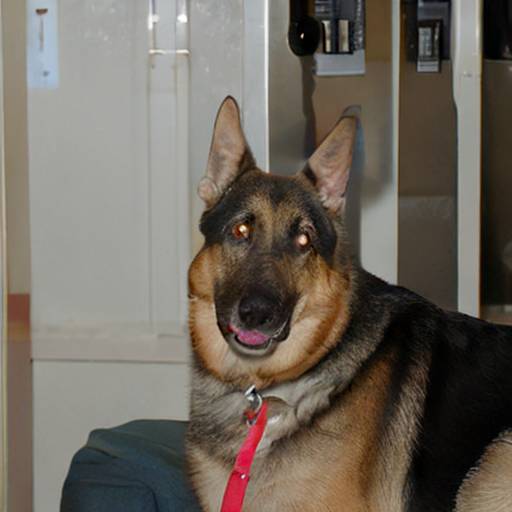}&%
        \includegraphics[width=0.095\linewidth]{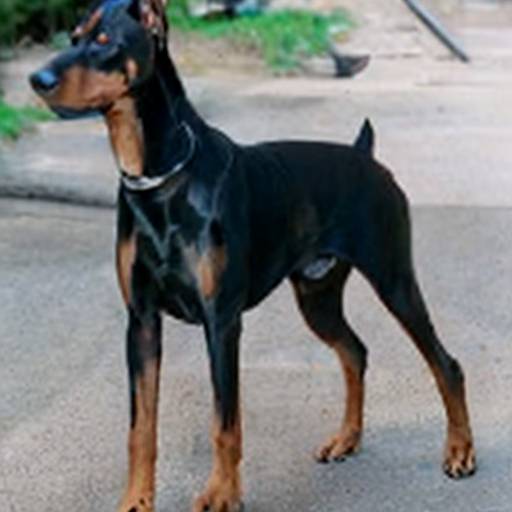}&%
        \includegraphics[width=0.095\linewidth]{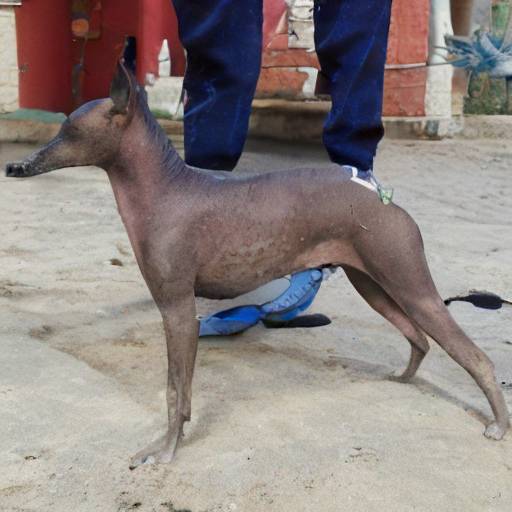}\\%

        \raisebox{1ex}{{\scriptsize $\pforward{T}$}} & 
        \includegraphics[width=0.095\linewidth]{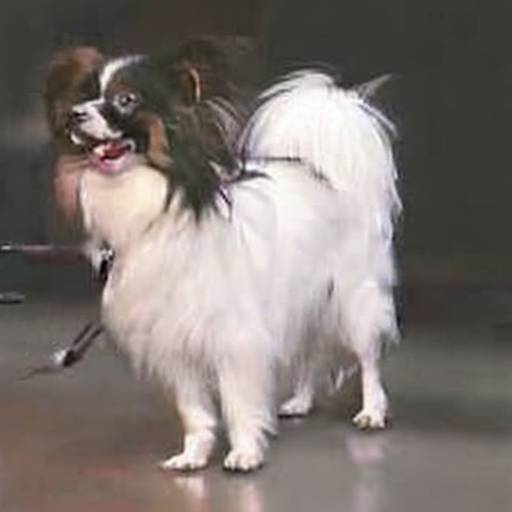}&%
        \includegraphics[width=0.095\linewidth]{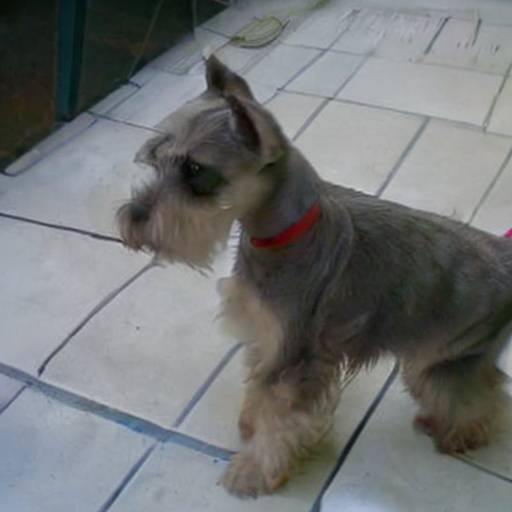}&%
        \includegraphics[width=0.095\linewidth]{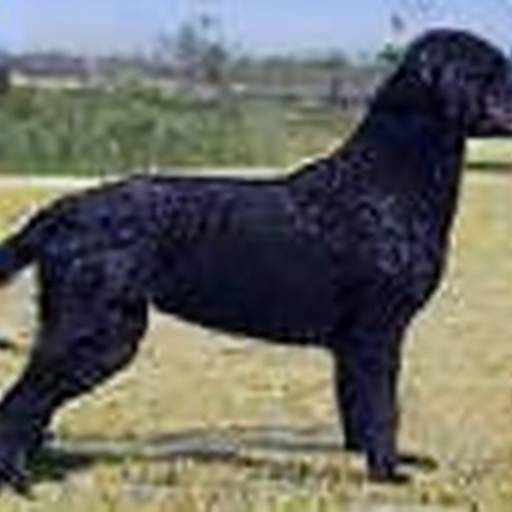}&%
        \includegraphics[width=0.095\linewidth]{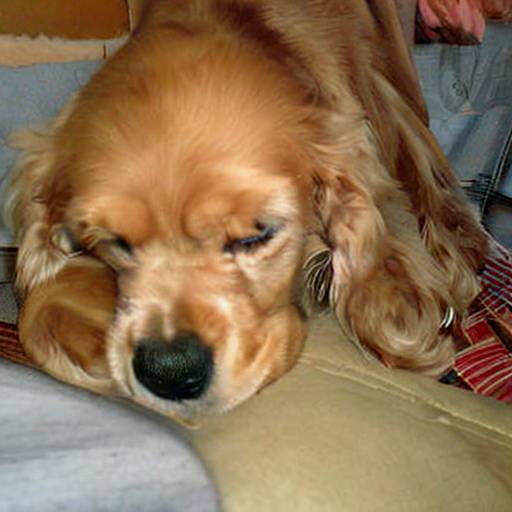}&%
        \includegraphics[width=0.095\linewidth]{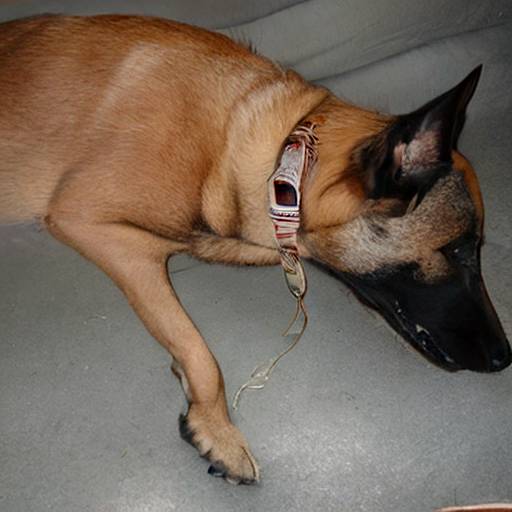}&%
        \includegraphics[width=0.095\linewidth]{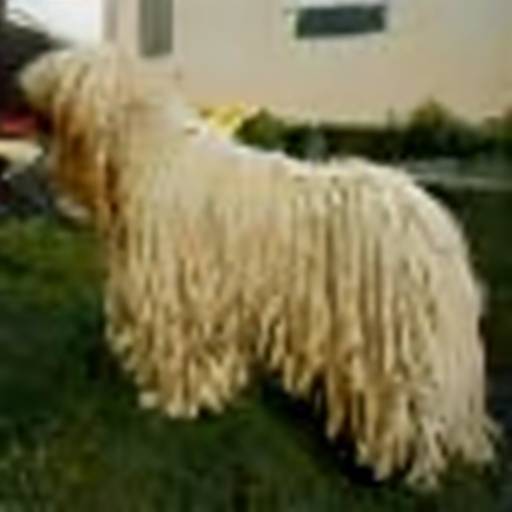}&%
        \includegraphics[width=0.095\linewidth]{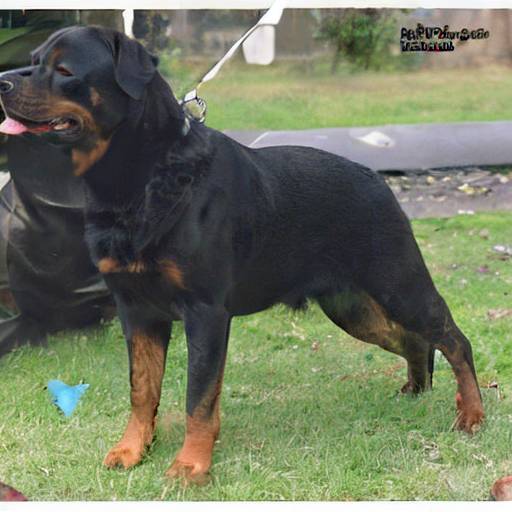}&%
        \includegraphics[width=0.095\linewidth]{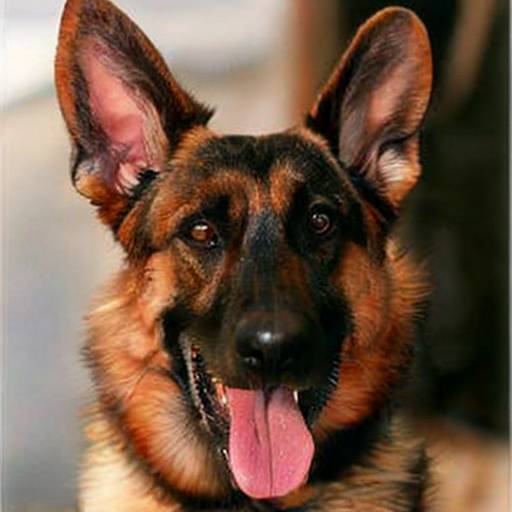}&%
        \includegraphics[width=0.095\linewidth]{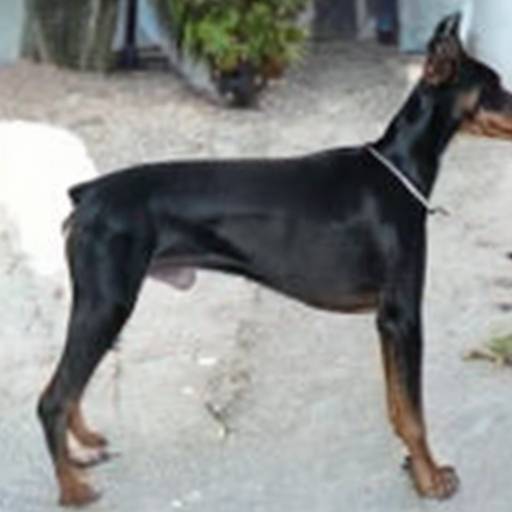}&%
        \includegraphics[width=0.095\linewidth]{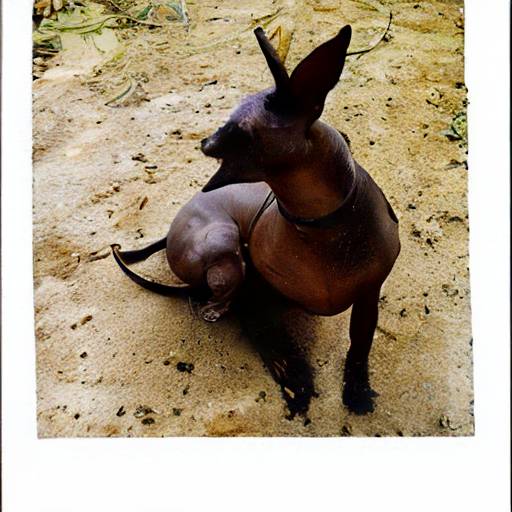}\\%

        \raisebox{0.5ex}{\scriptsize \shortstack[l]{$\pinf$ \\ $(\sigma_T = 7)$}}  & 
        \includegraphics[width=0.095\linewidth]{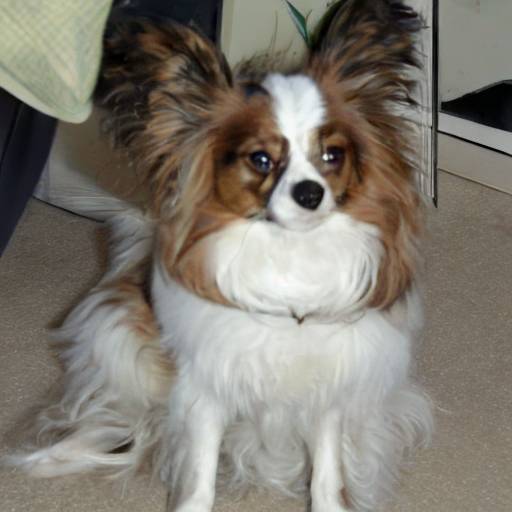}&%
        \includegraphics[width=0.095\linewidth]{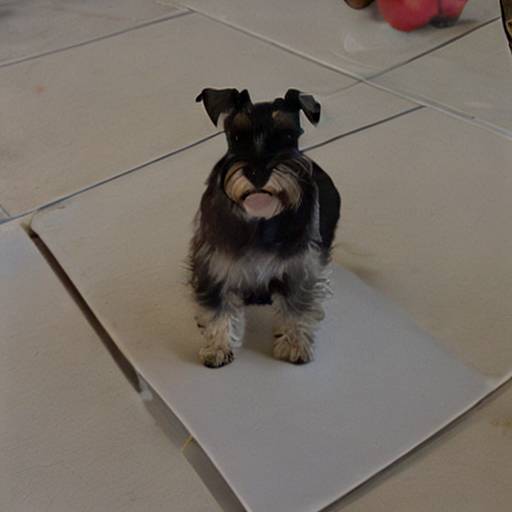}&%
        \includegraphics[width=0.095\linewidth]{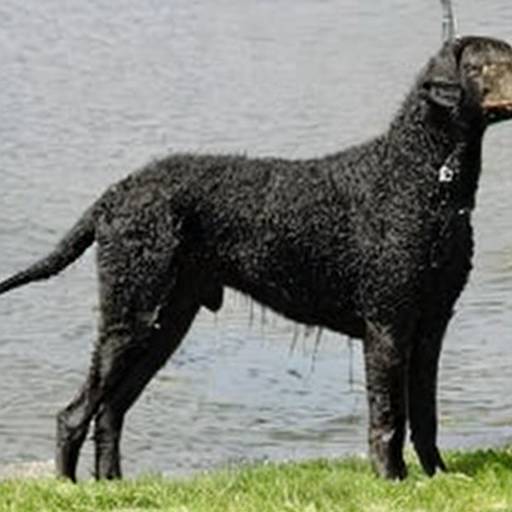}&%
        \includegraphics[width=0.095\linewidth]{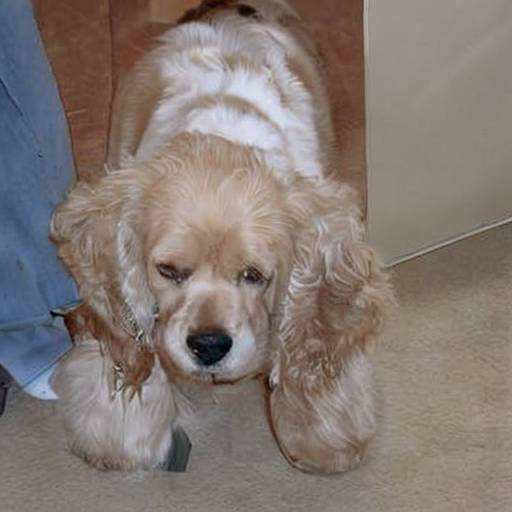}&%
        \includegraphics[width=0.095\linewidth]{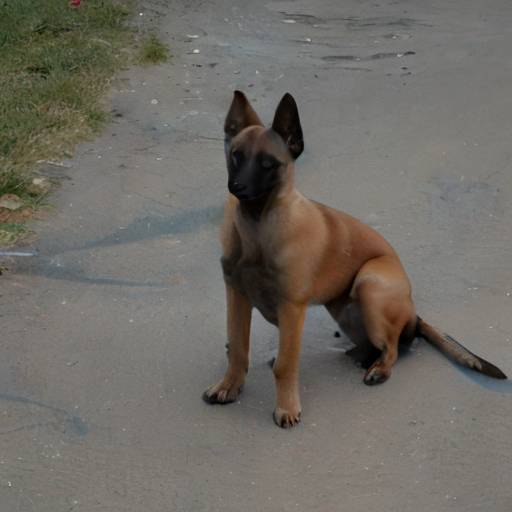}&%
        \includegraphics[width=0.095\linewidth]{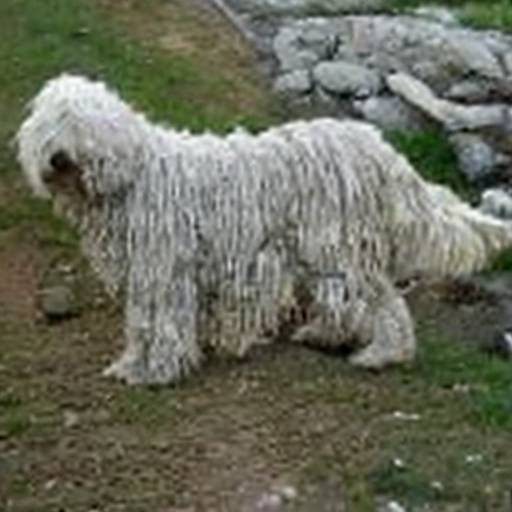}&%
        \includegraphics[width=0.095\linewidth]{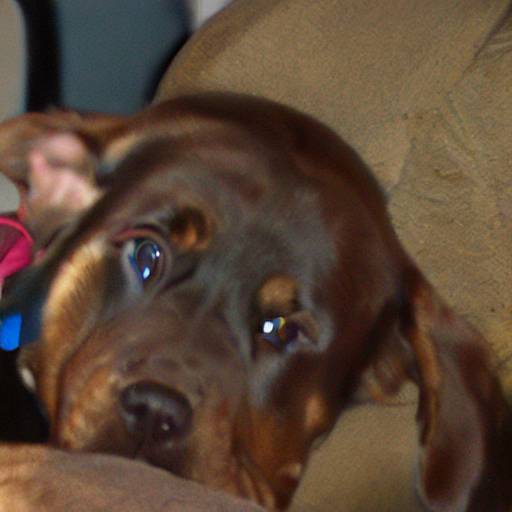}&%
        \includegraphics[width=0.095\linewidth]{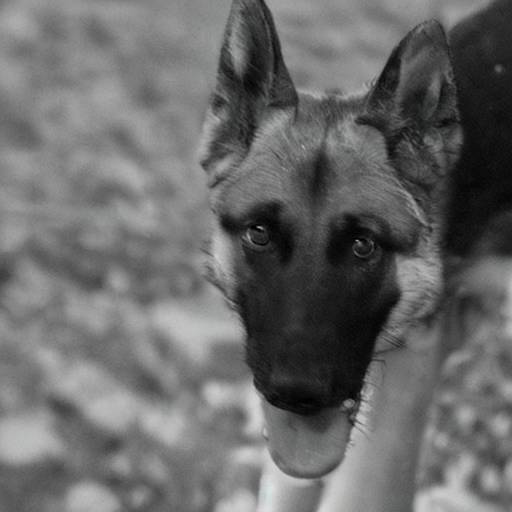}&%
        \includegraphics[width=0.095\linewidth]{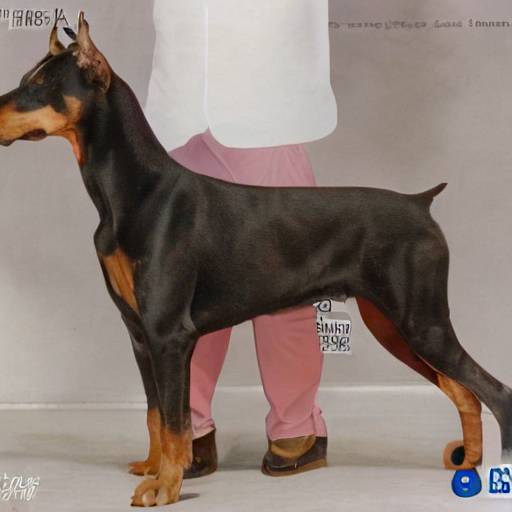}&%
        \includegraphics[width=0.095\linewidth]{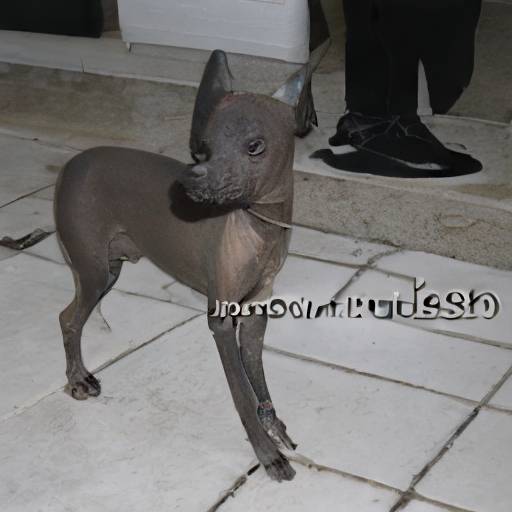}\\%
    \end{tabular}
    \caption{Nearest-neighbour grid comparison for ImageNet$_{\text{dogs}}$. Each row corresponds to a different initialization strategy; columns correspond to the queried training classes.}
    \label{fig:photos:dogs}
\end{figure}
\vfill

\clearpage
\section*{NeurIPS Paper Checklist}

\begin{enumerate}

\item {\bf Claims}
    \item[] Question: Do the main claims made in the abstract and introduction accurately reflect the paper's contributions and scope?
    \item[] Answer: \answerYes{}
    \item[] Justification: The abstract and introduction state three contributions: (i) a KL convergence result for VE-SGMs without any assumption on the data distribution, (ii) a non-asymptotic excess-risk bound for normalizing flows under mild conditions on the flow class, and (iii) a unified two-stage pipeline validated on synthetic heavy-tailed targets. Each claim corresponds to a specific theorem or experimental section, and the scope (synthetic targets in low dimension, with preliminary image experiments in the appendix) is stated explicitly.
    \item[] Guidelines:
    \begin{itemize}
        \item The answer \answerNA{} means that the abstract and introduction do not include the claims made in the paper.
        \item The abstract and/or introduction should clearly state the claims made, including the contributions made in the paper and important assumptions and limitations. A \answerNo{} or \answerNA{} answer to this question will not be perceived well by the reviewers.
    \end{itemize}

\item {\bf Limitations}
    \item[] Question: Does the paper discuss the limitations of the work performed by the authors?
    \item[] Answer: \answerYes{}
    \item[] Justification: Section 5 explicitly identifies three open directions: (i) learnability of the denoiser under heavy-tailed targets beyond the short-horizon regime, (ii) design of normalizing flows tailored to smoothed heavy-tailed distributions, and (iii) theoretical understanding of flow training on convolved distributions. The synthetic, low-dimensional nature of the experimental validation is also acknowledged.
    \item[] Guidelines:
    \begin{itemize}
        \item The answer \answerNA{} means that the paper has no limitation while the answer \answerNo{} means that the paper has limitations, but those are not discussed in the paper.
        \item The authors are encouraged to create a separate "Limitations" section in their paper.
        \item The paper should point out any strong assumptions and how robust the results are to violations of these assumptions.
        \item The authors should reflect on the scope of the claims made, e.g., if the approach was only tested on a few datasets or with a few runs.
        \item The authors should reflect on the factors that influence the performance of the approach.
        \item If applicable, the authors should discuss possible limitations of their approach to address problems of privacy and fairness.
    \end{itemize}

\item {\bf Theory assumptions and proofs}
    \item[] Question: For each theoretical result, does the paper provide the full set of assumptions and a complete (and correct) proof?
    \item[] Answer: \answerYes{}
    \item[] Justification: Theorem 2.1 is stated under H1--H2 and proved in Section 6.3, with auxiliary lemmas (6.2--6.7) establishing forward/backward regularity, a generalized Fokker-Planck equation, and the martingale property of the score norm. Theorem 3.1 is stated under H3--H4 and proved in Section 7, with Lemmas 7.2--7.3 controlling the second moment of the loss and a uniform concentration inequality, and Lemmas 7.5--7.7 supporting the covering-number argument behind Theorem 7.4. All assumptions are stated explicitly in the main text.
    \item[] Guidelines:
    \begin{itemize}
        \item All the theorems, formulas, and proofs in the paper should be numbered and cross-referenced.
        \item All assumptions should be clearly stated or referenced in the statement of any theorems.
        \item The proofs can either appear in the main paper or the supplemental material, but if they appear in the supplemental material, the authors are encouraged to provide a short proof sketch to provide intuition.
        \item Inversely, any informal proof provided in the core of the paper should be complemented by formal proofs provided in appendix or supplemental material.
        \item Theorems and Lemmas that the proof relies upon should be properly referenced.
    \end{itemize}

\item {\bf Experimental result reproducibility}
    \item[] Question: Does the paper fully disclose all the information needed to reproduce the main experimental results?
    \item[] Answer: \answerYes{}
    \item[] Justification: Appendix Section 8 reports the full target specification (mixture components, covariance structure, weights), the EDM scheduler parameters ($\sigma_{\max}$, $\sigma_{\min}$, $\rho$, number of steps), the MCMC configuration selected via the diagnostic in Figure 5, the MLP denoiser architecture (Table 2), and the coupling-flow architecture and training scheme (Table 3). The TarFlow appendix experiments report all hyperparameters in Table 6. Source code is released.
    \item[] Guidelines:
    \begin{itemize}
        \item The answer \answerNA{} means that the paper does not include experiments.
        \item If the paper includes experiments, a \answerNo{} answer to this question will not be perceived well by the reviewers: making the paper reproducible is important, regardless of whether the code and data are provided or not.
        \item If the contribution is a dataset and/or model, the authors should describe the steps taken to make their dataset or model reproducible (e.g., with respect to the data, code, and model).
        \item Depending on the contribution, reproducibility can be accomplished in various ways. For example, if the contribution is a novel architecture, describing the architecture fully might suffice, or if the contribution is a specific model and empirical evaluation, it may be necessary to either make it possible for others to replicate the model with the same dataset, or provide access to the model. In general, releasing code and data is often one good way to accomplish this, but reproducibility can also be provided via detailed instructions for how to replicate the results, access to a hosted model (e.g., in the case of a large language model), releasing of a model checkpoint, or other means that are appropriate to the research performed.
    \end{itemize}

\item {\bf Open access to data and code}
    \item[] Question: Does the paper provide open access to data and code?
    \item[] Answer: \answerYes{}
    \item[] Justification: The training and evaluation code is publicly released with reproduction instructions. All datasets used (FFHQ-64 and the two ImageNet-512 subsets in the appendix) are publicly available and are cited in the appendix; the synthetic targets are fully specified in Section 4 and Section 8.1.
    \item[] Guidelines:
    \begin{itemize}
        \item The answer \answerNA{} means that paper does not include experiments requiring code.
        \item Please see the NeurIPS code and data submission guidelines (\url{https://nips.cc/public/guides/CodeSubmissionPolicy}) for more details.
        \item While we encourage the release of code and data, we understand that this might not be possible, so "No" is an acceptable answer. Papers cannot be rejected simply for not including code, unless this is central to the contribution (e.g., for a new open-source benchmark).
        \item The instructions should contain the exact command and environment needed to run to reproduce the results. See the NeurIPS code and data submission guidelines (\url{https://nips.cc/public/guides/CodeSubmissionPolicy}) for more details.
        \item The authors should provide instructions on data access and preparation, including how to access the raw data, preprocessed data, intermediate data, and generated data, etc.
        \item The authors should provide scripts to reproduce all experimental results for the new proposed method and baselines. If only a subset of experiments are reproducible, they should state which ones are omitted from the script and why.
        \item At submission time, to preserve anonymity, the authors should release anonymized versions (if applicable).
        \item Providing as much information as possible in supplemental material (appended to the paper) is recommended, but including URLs to data and code is permitted.
    \end{itemize}

\item {\bf Experimental setting/details}
    \item[] Question: Are all training and test details specified?
    \item[] Answer: \answerYes{}
    \item[] Justification: Optimizer (Adam), learning rate, batch size, number of epochs, network width and depth, activation functions, normalization layers, and noise schedules are reported in Tables 2, 3, and 6 of the appendix. The evaluation protocol (number of generated samples, number of repetitions, projection slices for SWD/MSW, quantile levels for tail evaluation) is specified in Section 4 and Section 8.4.
    \item[] Guidelines:
    \begin{itemize}
        \item The answer \answerNA{} means that the paper does not include experiments.
        \item The experimental setting should be presented in the core of the paper to a level of detail that is necessary to appreciate the results and make sense of them.
        \item The full details can be provided either with the code, in appendix, or as supplemental material.
    \end{itemize}

\item {\bf Experiment statistical significance}
    \item[] Question: Are results accompanied by appropriate statistical information?
    \item[] Answer: \answerYes{}
    \item[] Justification: All MSW evaluations are repeated multiple times with independent reference draws, and results are reported as mean $\pm$ standard deviation. Tail quantile plots include reference bands at $\pm 2$ standard deviations.
    \item[] Guidelines:
    \begin{itemize}
        \item The answer \answerNA{} means that the paper does not include experiments.
        \item The authors should answer "Yes" if the results are accompanied by error bars, confidence intervals, or statistical significance tests, at least for the experiments that support the main claims of the paper.
        \item The factors of variability that the error bars are capturing should be clearly stated (for example, train/test split, initialization, random drawing of some parameter, or overall run with given experimental conditions).
        \item The method for calculating the error bars should be explained (closed form formula, call to a library function, bootstrap, etc.)
        \item The assumptions made should be given (e.g., Normally distributed errors).
        \item It should be clear whether the error bar is the standard deviation or the standard error of the mean.
        \item It is OK to report 1-sigma error bars, but one should state it. The authors should preferably report a 2-sigma error bar than state that they have a 96\% CI, if the hypothesis of Normality of errors is not verified.
        \item For asymmetric distributions, the authors should be careful not to show in tables or figures symmetric error bars that would yield results that are out of range (e.g. negative error rates).
        \item If error bars are reported in tables or plots, The authors should explain in the text how they were calculated and reference the corresponding figures or tables in the text.
    \end{itemize}

\item {\bf Experiments compute resources}
    \item[] Question: Are compute resources sufficiently described?
    \item[] Answer: \answerYes{}
    \item[] Answer: \answerYes{} Justification: We provide the following breakdown of compute resources used in the 
experiments. TarFlow training was performed on a single NVIDIA A100 GPU, 
with training times on the order of a few hours depending on the number of 
epochs. Sample generation and evaluation for TarFlow were performed on NVIDIA 
A100 and H100 GPUs: each generation run required between 10 and 20 hours, and 
each global evaluation required a comparable 10–20 hours. MCMC-based score 
estimation was performed on a single NVIDIA H100 GPU; generating $10^6$ samples 
with NUTS in dimension $d=2$ required approximately 10 hours per configuration. 
All remaining experiments, including the toy experiments on the GMM and HT targets 
reported in Section 4, were run on a single NVIDIA RTX A5000 GPU, with negligible 
runtimes per configuration (globally they can take some hours).
\begin{itemize}
        \item The answer \answerNA{} means that the paper does not include experiments.
        \item The paper should indicate the type of compute workers CPU or GPU, internal cluster, or cloud provider, including relevant memory and storage.
        \item The paper should provide the amount of compute required for each of the individual experimental runs as well as estimate the total compute.
        \item The paper should disclose whether the full research project required more compute than the experiments reported in the paper (e.g., preliminary or failed experiments that didn't make it into the paper).
    \end{itemize}

\item {\bf Code of ethics}
    \item[] Question: Does the research conform to the NeurIPS Code of Ethics?
    \item[] Answer: \answerYes{}
    \item[] Justification: The work is theoretical and methodological, uses publicly available datasets (FFHQ, ImageNet) consistent with their standard research use, and raises no specific ethical concerns. The authors have reviewed the NeurIPS Code of Ethics and confirm compliance.
    \item[] Guidelines:
    \begin{itemize}
        \item The answer \answerNA{} means that the authors have not reviewed the NeurIPS Code of Ethics.
        \item If the authors answer \answerNo{}, they should explain the special circumstances that require a deviation from the Code of Ethics.
        \item The authors should make sure to preserve anonymity (e.g., if there is a special consideration due to laws or regulations in their jurisdiction).
    \end{itemize}

\item {\bf Broader impacts}
    \item[] Question: Does the paper discuss societal impacts?
    \item[] Answer: \answerYes{}
    \item[] Justification: The introduction motivates heavy-tailed generative modeling through applications in climate science, insurance, and finance, where extreme-event modeling has direct societal relevance. The work is foundational and theoretical, so concrete downstream impacts are deferred to future application-specific studies.
    \item[] Guidelines:
    \begin{itemize}
        \item The answer \answerNA{} means that there is no societal impact of the work performed.
        \item If the authors answer \answerNA{} or \answerNo{}, they should explain why their work has no societal impact or why the paper does not address societal impact.
        \item Examples of negative societal impacts include potential malicious or unintended uses (e.g., disinformation, generating fake profiles, surveillance), fairness considerations (e.g., deployment of technologies that could make decisions that unfairly impact specific groups), privacy considerations, and security considerations.
        \item The conference expects that many papers will be foundational research and not tied to particular applications, let alone deployments. However, if there is a direct path to any negative applications, the authors should point it out.
        \item The authors should consider possible harms that could arise when the technology is being used as intended and functioning correctly, harms that could arise when the technology is being used as intended but gives incorrect results, and harms following from (intentional or unintentional) misuse of the technology.
        \item If there are negative societal impacts, the authors could also discuss possible mitigation strategies (e.g., gated release of models, providing defenses in addition to attacks, mechanisms for monitoring misuse, mechanisms to monitor how a system learns from feedback over time, improving the efficiency and accessibility of ML).
    \end{itemize}

\item {\bf Safeguards}
    \item[] Question: Are safeguards described for high-risk releases?
    \item[] Answer: \answerNA{}
    \item[] Justification: The paper does not release pretrained generative models or datasets that pose a misuse risk.
    \item[] Guidelines:
    \begin{itemize}
        \item The answer \answerNA{} means that the paper poses no such risks.
        \item Released models that have a high risk for misuse or dual-use should be released with necessary safeguards to allow for controlled use of the model, for example by requiring that users adhere to usage guidelines or restrictions to access the model or implementing safety filters.
        \item Datasets that have been scraped from the Internet could pose safety risks. The authors should describe how they avoided releasing unsafe images.
        \item We recognize that providing effective safeguards is challenging, and many papers do not require this, but we encourage authors to take this into account and make a best faith effort.
    \end{itemize}

\item {\bf Licenses for existing assets}
    \item[] Question: Are external assets properly credited and licensed?
    \item[] Answer: \answerYes{}
    \item[] Justification: All external assets used --- the EDM denoisers of Karras et al. (2022, 2024), the Stable VAE encoder, the TarFlow architecture of Zhai et al. (2025), and the FFHQ and ImageNet datasets --- are cited in the main text or appendix, and used in accordance with their respective licenses for non-commercial research purposes.
    \item[] Guidelines:
    \begin{itemize}
        \item The answer \answerNA{} means that the paper does not use existing assets.
        \item The authors should cite the original paper that produced the code package or dataset.
        \item The authors should state which version of the asset is used and, if possible, include a URL.
        \item The name of the license (e.g., CC-BY 4.0) should be included for each asset.
        \item For scraped data from a particular source (e.g., website), the copyright and terms of service of that source should be provided.
        \item If assets are released, the license, copyright information, and terms of use in the package should be provided. For popular datasets, \url{paperswithcode.com/datasets} has curated licenses for some datasets. Their licensing guide can help determine the license of a dataset.
        \item For existing datasets that are re-packaged, both the original license and the license of the derived asset (if it has changed) should be provided.
        \item If this information is not available online, the authors are encouraged to reach out to the asset's creators.
    \end{itemize}

\item {\bf New assets}
    \item[] Question: Are new assets documented?
    \item[] Answer: \answerNA{}
    \item[] Justification: The paper does not introduce new datasets, pretrained models, or benchmark assets intended for public release beyond the reproduction code.
    \item[] Guidelines:
    \begin{itemize}
        \item The answer \answerNA{} means that the paper does not release new assets.
        \item Researchers should communicate the details of the dataset/code/model as part of their submissions via structured templates. This includes details about training, license, limitations, etc.
        \item The paper should discuss whether and how consent was obtained from people whose asset is used.
        \item At submission time, remember to anonymize your assets (if applicable). You can either create an anonymized URL or include an anonymized zip file.
    \end{itemize}

\item {\bf Crowdsourcing and human subjects}
    \item[] Question: Does the paper include human subject experiments?
    \item[] Answer: \answerNA{}
    \item[] Justification: The paper does not involve crowdsourcing or human subject research.
    \item[] Guidelines:
    \begin{itemize}
        \item The answer \answerNA{} means that the paper does not involve crowdsourcing nor research with human subjects.
        \item Including this information in the supplemental material is fine, but if the main contribution of the paper involves human subjects, then as much detail as possible should be included in the main paper.
        \item According to the NeurIPS Code of Ethics, workers involved in data collection, curation, or other labor should be paid at least the minimum wage in the country of the data collector.
    \end{itemize}

\item {\bf IRB approvals}
    \item[] Question: Are IRB approvals required and described?
    \item[] Answer: \answerNA{}
    \item[] Justification: The paper does not involve human subjects research and no IRB approval is required.
    \item[] Guidelines:
    \begin{itemize}
        \item The answer \answerNA{} means that the paper does not involve crowdsourcing nor research with human subjects.
        \item Depending on the country in which research is conducted, IRB approval (or equivalent) may be required for any human subjects research. If you obtained IRB approval, you should clearly state this in the paper.
        \item We recognize that the procedures for this may vary significantly between institutions and locations, and we expect authors to adhere to the NeurIPS Code of Ethics and the guidelines for their institution.
        \item For initial submissions, do not include any information that would break anonymity (if applicable), such as the institution conducting the review.
    \end{itemize}

\item {\bf Declaration of LLM usage}
    \item[] Question: Is LLM usage declared when relevant?
    \item[] Answer: \answerYes{}
    \item[] Justification: LLMs were used as auxiliary tools for code scaffolding, debugging, and language editing of the manuscript. 
    \item[] Guidelines:
    \begin{itemize}
        \item The answer \answerNA{} means that the core method development in this research does not involve LLMs as any important, original, or non-standard components.
        \item Please refer to our LLM policy (\url{https://neurips.cc/Conferences/2026/LLM}) for what should or should not be described.
    \end{itemize}

\end{enumerate}

\end{document}